\newcommand{\alglinelabel}{%
  \addtocounter{ALC@line}{-1}
  \refstepcounter{ALC@line}
  \label
}
\theoremstyle{plain}
\newtheorem{theorem}{Theorem}[section]
\newtheorem{lemma}[theorem]{Lemma}
\theoremstyle{definition}
\newtheorem{example}[theorem]{Example}
\newtheorem{assumption}[theorem]{Assumption}
\theoremstyle{remark}
\newtheorem{remark}[theorem]{Remark}
\newcommand{\Bstar}{B_{\star}}
\newcommand{\Tstar}{T_{\star}}
\newcommand{\pistar}{\pi^{\star}}
\newcommand{\sinit}{s_{\textnormal{init}}}
\newcommand{\sgn}{\textnormal{sgn}}
\icmltitlerunning{Learning Stochastic Shortest Path with Linear Function Approximation}
\begin{document}

\twocolumn[
\icmltitle{Learning Stochastic Shortest Path with Linear Function Approximation}



\icmlsetsymbol{equal}{*}

\begin{icmlauthorlist}
\icmlauthor{Yifei Min}{sds}
\icmlauthor{Jiafan He}{uclacs}
\icmlauthor{Tianhao Wang}{sds}
\icmlauthor{Quanquan Gu}{uclacs}
\end{icmlauthorlist}

\icmlaffiliation{sds}{Department of Statistics and Data Science, Yale University, CT 06520, USA}
\icmlaffiliation{uclacs}{Department of Computer Science, University of California, Los Angeles, CA 90095, USA}

\icmlcorrespondingauthor{Quanquan Gu}{qgu@cs.ucla.edu}

\icmlkeywords{Machine Learning, ICML}

\vskip 0.3in
]



\printAffiliationsAndNotice{}  

\begin{abstract}
We study the stochastic shortest path (SSP) problem in reinforcement learning with linear function approximation, where the transition kernel is represented as a linear mixture of unknown models. We call this class of SSP problems as linear mixture SSPs. We propose a novel algorithm with Hoeffding-type confidence sets for learning the linear mixture SSP, which can attain an $\tilde{\mathcal{O}}(d B_{\star}^{1.5}\sqrt{K/c_{\min}})$ regret. Here $K$ is the number of episodes, $d$ is the dimension of the feature mapping in the mixture model, $B_{\star}$ bounds the expected cumulative cost of the optimal policy, and $c_{\min}>0$ is the lower bound of the cost function.
Our algorithm also applies to the case when $c_{\min} = 0$, and an $\tilde{\mathcal{O}}(K^{2/3})$ regret is guaranteed. To the best of our knowledge, this is the first algorithm with a sublinear regret guarantee for learning linear mixture SSP. 
Moreover, we design a refined Bernstein-type confidence set and propose an improved algorithm, which provably achieves an $\tilde{\mathcal{O}}(d B_{\star}\sqrt{K/c_{\min}})$ regret.
In complement to the regret upper bounds, we also prove a lower bound of $\Omega(dB_{\star} \sqrt{K})$. 
Hence, our improved algorithm matches the lower bound up to a $1/\sqrt{c_{\min}}$ factor and poly-logarithmic factors, achieving a near-optimal regret guarantee.
\end{abstract}

\section{Introduction}
The Stochastic Shortest Path (SSP) model refers to a type of reinforcement learning (RL) problems where an agent repeatedly interacts with a stochastic environment and aims to reach some specific goal state while minimizing the cumulative cost. 
Compared with other popular RL settings such as episodic and infinite-horizon Markov Decision Processes (MDPs), the horizon length in SSP is random, varies across different policies, and can potentially be infinite because the interaction only stops when arriving at the goal state.
Therefore, the SSP model includes both episodic and infinite-horizon MDPs as special cases, and is comparably more general and of broader applicability. 
In particular, many goal-oriented real-world problems fit better into the SSP model, such as navigation and GO game \citep{andrychowicz2017hindsight, nasiriany2019planning}.

In recent years, there emerges a line of works on developing efficient algorithms and the corresponding analyses for learning SSP. 
Most of them consider the episodic setting, where the interaction between the agent and the environment proceeds in $K$ episodes \citep{cohen2020near, tarbouriech2020no}.  
For tabular SSP models where the sizes of the action and state space are finite, \citet{cohen2021minimax} developed a finite-horizon reduction algorithm that achieves the minimax regret $\tilde\cO(\Bstar\sqrt{SAK})$, where $\Bstar$ is the largest expected cost of the optimal policy starting from any state, $S$ is the number of states and $A$ is the number of actions.
In a similar setting, \citet{tarbouriech2021stochastic} proposed the first algorithm that is minimax optimal, parameter-free and horizon-free at the same time. 
However, the algorithms  mentioned above only apply to tabular SSP problems where the state and action space are small. In order to deal with SSP problems with large state and action spaces, function approximation techniques  \citep{yang2019sample,jin2020provably,jia2020model,zhou2021provably,wang2020optimism,wang2020reinforcement} are needed. 


Following the recent line of work on model-based reinforcement learning with linear function approximation \citep{modi2020sample, jia2020model,ayoub2020model,zhou2021provably}, we consider a linear mixture SSP model, which extends the tabular SSP.
More specifically, we assume that the transition probability is parametrized by $\PP(s'|s,a) = \langle\bphi(s'|s,a),\btheta^*\rangle$ for all triplet $(s,a,s')\in\cS\times\cA\times\cS$, where $\cS$ is the state space and $\cA$ is the action space. 
Here we assume that $\bphi\in \RR^d$ is a known ternary feature mapping, and $\btheta^*\in\RR^d$ is an \emph{unknown} model parameter vector that needs to be learned. 
Such a setting has been previously studied for episodic MDPs \citep{modi2020sample, jia2020model,ayoub2020model,cai2020provably} and infinite-horizon discounted MDPs \citep{zhou2021provably}. 
Nevertheless, algorithms developed in these works do not apply to SSP since the horizon length is random as mentioned above. 

To tackle the challenge of varying horizon length, we propose a model-based optimistic algorithm with linear function approximation, dubbed \texttt{LEVIS}, for learning the linear mixture SSP. At the core of our algorithm are a confidence set of the model parameters and a Damped Extended Value Iteration (\texttt{DEVI}) subroutine for computing the optimistic estimate of the value function, which together guarantee that the algorithm will reach the goal state in every episode. 
Compared with the EVI subroutine developed for infinite-horizon discounted MDPs \citep{zhou2021provably}, we introduce a shrinking factor $q\approx 1/t$ in our \texttt{DEVI} with $t$ being the cumulative number of time steps, which guarantees the convergence of \texttt{DEVI}. 
To compensate for the bias introduced by this shrinking factor,
our algorithm performs lazy policy update, which is triggered by the doubling of the time interval between two policy updates or the doubling of the determinant of the covariance matrix. 
With all these algorithmic designs, our algorithm with Hoeffding-type bonus is guaranteed to achieve a $\tilde O(d B_\star^{1.5}\sqrt{K/c_{\min}})$ regret when $c_{\min}>0$. To the best of our knowledge, this is the first algorithm that enjoys a sublinear regret for linear mixture SSP. 
Moreover, we provide an improved algorithm $\texttt{LEVIS}^{\texttt{+}}$ based on Bernstein-type bonus that achieves an $\tilde O(d B_\star\sqrt{K/c_{\min}})$ regret, which nearly matches our regret lower bound $\Omega(d\Bstar \sqrt{K})$ in terms of $d, \Bstar$ and $K$.
This lower bound is proved via the construction of a hard-to-learn linear mixture SSP instance.

It is worth noting that a recent work by \citet{vial2021regret} studied a different linear SSP model that is similar to the linear MDP \citep{yang2019sample, jin2020provably}, where both the underlying transition probability and the cost function are linear in a known $d$-dimensional feature mapping $\bpsi \in \RR^d$, i.e., $\PP(s'|s,a) = \langle\bpsi(s,a),\bmu(s)\rangle$ and $c(s,a)=\langle\bpsi(s,a),\btheta\rangle$, and $\bmu(\cdot)$ and $\btheta$ are unknown.
For this model, their proposed algorithms can achieve an $\tilde \cO(\sqrt{K})$ regret. Their algorithms are either computationally inefficient or under stronger assumptions such as orthonormal feature mappings.
Their results are recently improved by \citet{chen2021improved} via a reduction to the finite-horizon MDP, yielding an efficient algorithm with an $\tilde\cO(\sqrt{K})$ regret, and a computationally inefficient but ``horizon free'' algorithm.
The linear SSP model is different from our linear mixture SSP model, and we refer the readers to Appendix \ref{subsec:linear mixture ssp} for a detailed discussion. \citet{chen2021improved} also proposed algorithms for linear mixture SSPs via a reduction to learning finite-horizon linear mixture MDPs.

Our contributions are summarized as follows:
\begin{itemize}
    \item We propose to study a linear mixture SSP model, and devise a novel and simple algorithm, dubbed \textbf{L}ower confidence \textbf{E}xtended \textbf{V}alue \textbf{I}teration for \textbf{S}SP (\texttt{LEVIS}), for learning SSP with linear function approximation. 
    
    \item We prove that \texttt{LEVIS} achieves a regret of order $\tilde\cO(d \Bstar^{1.5} \sqrt{K/c_{\min}})$ when $c_{\min}>0$ and the agent has an order-accurate estimate $B \geq \Bstar$\footnote{We say $B$ is an order-accurate estimate of $B^*$, if there exists some unknown constant $\kappa \geq 1$ such that $\Bstar \leq B \leq \kappa \Bstar$. }. 
    For the general case where $c_{\min} = 0$, our algorithm can achieve an $\tilde\cO(K^{2/3})$ regret guarantee by using a cost perturbation trick \citep{tarbouriech2021stochastic}. 
    
    \item We further propose an improved version of \texttt{LEVIS} called $\texttt{LEVIS}^{\texttt{+}}$ using Bernstein-type bonus, and prove that $\texttt{LEVIS}^{\texttt{+}}$ achieves an $\tilde\cO(d\Bstar \sqrt{K/c_{\min}})$ regret. 
    
    \item We prove that for linear mixture SSP, the regret of any learning algorithms is at least $\Omega(d \Bstar \sqrt{K})$.
    Hence, our $\texttt{LEVIS}^{\texttt{+}}$ algorithm nearly achieves the lower bound.
\end{itemize}

\textbf{Notation} 
We use lower case letters to denote scalars, and use lower and upper case bold face letters to denote vectors and matrices respectively. For any positive integer $n$, we denote by $[n]$ the set $\{1,\dots,n\}$. 
For a vector $\xb\in \RR^d$ , we denote by $\|\xb\|_1$ the Manhattan norm and denote by $\|\xb\|_2$ the Euclidean norm. For a vector $\xb\in \RR^d$ and matrix $\bSigma\in \RR^{d\times d}$, we define $\|\xb\|_{\bSigma}=\sqrt{\xb^\top\bSigma\xb}$. For two sequences $\{a_n\}$ and $\{b_n\}$, we write $a_n=O(b_n)$ if there exists an absolute constant $C$ such that $a_n\leq Cb_n$. We use $\tilde O(\cdot)$ to hide the logarithmic factors. 

\section{Related Work}
\noindent\textbf{Online learning in SSP} 
SSP problems can be dated back to \citep{bertsekas1991analysis,bertsekas2013stochastic,bertsekas2012dynamic}, but it is until recently that the regret minimization in online learning of SSP has been studied.
In the tabular case, \citet{tarbouriech2020no} proposed the first algorithm achieving an $\tilde\cO(D^{3/2}S\sqrt{AK/c_{\min}})$ regret where $D$ is the diameter of SSP\footnote{The diameter of an SSP is defined as the longest possible shortest path from any initial state to the goal state.}.
The regret was further improved to $\tilde O(B_\star S\sqrt{AK})$ by \citet{rosenberg2020near,cohen2020near}, with an extra $\sqrt{S}$ factor compared with the $\Omega(B_\star \sqrt{SAK})$ lower bound \citep{rosenberg2020near}. 
More recently, the $\tilde O(B_\star \sqrt{SAK})$ minimax optimal regret were obtained by \citet{cohen2021minimax} and \citet{tarbouriech2020improved} independently using different approaches.
Specifically,  \citet{cohen2021minimax} reduced SSP to a finite-horizon MDP with a large terminal cost assuming $B_\star$ is known;
while \citet{tarbouriech2021stochastic} avoid such requirement by adaptively estimating $B_\star$ with a doubling trick, together with a value iteration sub-routine ensuring the optimistic estimate of the value function. 
Our proposed method shares a similar spirit with the latter approach, but for learning SSP with linear function approximation.

The above algorithms are all model-based. 
Very recently, \citet{chen2021implicit} developed the first model-free algorithm for SSP which achieves the minimax optimal regret when the minimum cost among all state-action pairs $c_{\min}$ is strictly positive. 
Their method is motivated by the \texttt{UCB-ADVANTAGE} algorithm \citep{zhang2020almost}. \citet{chen2022policy} proposed the first policy optimization algorithm for tabular SSP.
For other settings of SSP,  \citep{rosenberg2020stochastic, chen2021finding, chen2021minimax} studied  the case of adversarial costs. 
Also, the pioneering work by \citep{bertsekas1991analysis} studied the pure planning problem in SSP where the agent has full knowledge of all the model parameters, and is followed by a series of works \citep{bonet2007speed,kolobov2011heuristic,bertsekas2013stochastic,guillot2020stochastic}. 
On the other hand, \citet{tarbouriech2021sample} studied the sample complexity of SSP assuming the access to a generative model. 
\citet{jafarnia2021online} proposed the first posterior sampling algorithm for SSP. 
Multi-goal SSPs have also been studied by \citet{lim2012autonomous, tarbouriech2020improved}. 

\noindent\textbf{Linear function approximation} 
Linear MDP is one of the most widely studied models for RL with linear function approximation, which assumes both the transition probability and reward functions are linear functions of a known feature mapping \citep{yang2019sample, jin2020provably}. 
Representative work in this direction include \citet{du2019good, zanette2020frequentist,wang2020reinforcement,fei2021risk,he2021logarithmic}, to mention a few. 

Another popular model for RL with linear function approximation is the so-called linear mixture MDP/linear kernel MDP \citep{yang2020reinforcement, modi2020sample, jia2020model, ayoub2020model, cai2020provably,min2021variance, zhou2021provably, zhou2020nearly}. 
For the finite-horizon setting, \citet{jia2020model} proposed a \texttt{UCRL-VTR} algorithm that achieves a $\tilde\cO(d\sqrt{H^3T})$ regret bound. 
\citet{zhou2020nearly} further improve the result by proposing a \texttt{UCRL-VTR+} algorithm that attains the nearly minimax optimal regret $\tilde\cO(dH\sqrt{T})$  based on a novel Bernstein-type concentration inequality. 
For the discounted infinite horizon setting, \citet{zhou2021provably} proposed a \texttt{UCLK} algorithm with an $\tilde\cO(d\sqrt{T}/(1-\gamma)^2)$ regret, and also give a $\tilde\cO(d\sqrt{T}/(1-\gamma)^{1.5})$ lower bound. The lower bound is later matched up to logarithmic factors by the \texttt{UCLK+} algorithm \citep{zhou2020nearly}. The SSP model studied in this paper can be seen as an extension of linear mixture MDPs.


\section{Preliminaries}\label{sec: prelim}

\paragraph{Stochastic Shortest Path} An SSP instance is an MDP $M\coloneqq \{ \cS,\cA, \PP, c, \sinit,g \}$, where $\cS$ and $\cA$ are the finite state space and action space respectively. 
Here $\sinit$ denotes the initial state and $g \in \cS$ is the goal state. 
We denote the cost function by $c:\cS \times \cA \to [0,1]$, where $c(s,a)$ is the immediate cost of taking action $a$ at state $s$. 
The goal state $g$ incurs zero cost, i.e., $c(g,a) = 0$ for all $a\in\cA$. 
For any $(s',s,a)\in\cS\times\cA\times\cS$, $\PP(s'|s,a)$ is the probability to transition to $s'$ given the current state $s$ and action $a$ being taken.
The goal state $g$ is an absorbing state, i.e., $\PP(g|g,a)=1$ for all action $a\in\cA$. 

\textbf{Linear mixture SSP} In this work, we assume the transition probability function $\PP$ to be a linear mixture of some basis kernels \citep{modi2020sample,ayoub2020model,zhou2020nearly}. 
\begin{assumption}\label{assump: linear kernel mdp}
Assume the feature mapping $\bphi:\cS\times\cA\times\cS\to\RR^d$ is known. 
There exists an \emph{unknown} vector $\btheta^* \in \RR^d$ with $\|\btheta^*\|_2\leq \sqrt{d}$ such that $\PP(s'|s,a) = \langle \bphi(s'|s,a) , \btheta^* \rangle$ for any state-action-state triplet $(s,a,s') \in \cS \times \cA \times \cS$. 
Moreover, for any bounded function $V:\cS\to[0, B]$, it holds that $\|\bphi_V(s,a)\|_2 \leq B\sqrt{d}$ for all $(s,a)\in\cS\times\cA$, where $\bphi_V(s,a) \coloneqq \sum_{s'\in\cS}\bphi(s'|s,a)V(s')$.
\end{assumption}
For simplicity, for any function $V: \cS \rightarrow \RR$, we denote $\PP V(s,a)=\sum_{s'} \PP(s'|s,a) V(s')$ for all $(s,a)\in\cS\times\cA$. 
Therefore, under Assumption \ref{assump: linear kernel mdp}, we have
\begin{align*}
    \PP V(s,a) & =  \sum_{s'\in\cS} \PP(s'|s,a) V(s') 
    \\ & = \sum_{s' \in \cS} \langle \bphi(s'|s,a) , \btheta^* \rangle V(s') 
    \\ & = \langle \bphi_V(s,a) , \btheta^* \rangle. 
\end{align*}

\textbf{Proper policies} A stationary and deterministic policy is a mapping $\pi: \cS \to \cA$ such that the action $\pi(s)$ is taken given the current state $s$. 
We denote by $T^\pi(s)$ the expected time that it takes by following $\pi$ to reach the goal state $g$ starting from $s$. 
We say a policy $\pi$ is proper if $T^\pi(s)<\infty$ for any $s \in \cS$ (otherwise it is improper). 
We denote by $\Pi_{\textnormal{proper}}$ the set of all stationary, deterministic and proper policies. 
We assume that $\Pi_{\textnormal{proper}}$ is non-empty, which is the common assumption in previous works on online learning of SSP \citep{rosenberg2020near,rosenberg2020stochastic,cohen2021minimax,tarbouriech2021stochastic,jafarnia2021online,chen2021implicit}.

\begin{assumption}\label{assump: existence proper policy}
The set of all stationary, deterministic and proper policies is non-empty, i.e., $\Pi_{\textnormal{proper}} \neq \varnothing$. 
\end{assumption}

\begin{remark}
The above assumption is weaker than Assumption 1 in \citet{vial2021regret} which requires that all stationary policies are proper. 
\end{remark} 
For any policy $\pi$, we define the cost-to-go function (a.k.a., value function) as 
\begin{align*}
    V^\pi(s) \coloneqq \lim_{T \to +\infty} \EE \left[ \sum_{t=1}^T c(s_t,\pi(s_t)) \middle| s_1 = s \right], 
\end{align*}where $s_{t+1}\sim \PP\left(\cdot|s_{t},\pi(s_t)\right)$.
$V^\pi(s)$ can possibly be infinite if $\pi$ is improper. 
The corresponding action-value function of policy $\pi$ is defined as 
\begin{align*}
    & Q^{\pi}(s,a) 
    \\ &\coloneqq \lim_{T \to \infty} \EE \left[ c(s_1,a_1) + \sum_{t=2}^T c(s_t,\pi(s_t)) \middle| s_1=s,\ a_1=a \right],
\end{align*}

where $s_{2}\sim \PP(\cdot|s_{1},a_1)$ and $s_{t+1}\sim \PP(\cdot|s_{t},\pi(s_t))$ for all $t\geq 2$. 
Since $c(\cdot,\cdot) \in [0,1]$, for any proper policy $\pi \in \Pi_{\textnormal{proper}}$, $V^\pi$ and $Q^\pi$ are both bounded functions. 

\textbf{Bellman optimality} For any function $V:\cS \to \RR$, we define the optimal Bellman operator $\cL$ as
\begin{align}\label{eq: bellman operator}
    \cL V(s) \coloneqq \min_{a\in \cA}\{ c(s,a) + \PP V(s,a) \}.
\end{align} 
\vspace{-0.6cm}

Intuitively speaking, we want to learn the optimal policy $\pistar$ such that $V^\star(\cdot) \coloneqq V^{\pistar}(\cdot)$ is the unique solution to the Bellman optimality equation $V = \cL V$ and $\pistar$ minimizes the value function $V^{\pi}(s)$ component-wise over all policies. 
It is known that, in order for such $\pistar$ to exist, one sufficient condition is Assumption \ref{assump: existence proper policy} together with an extra condition that any improper policy $\pi$ has at least one infinite-value state, i.e., for any $\pi\notin \Pi_{\textnormal{proper}}$, there exists some $s\in \cS$ s.t. $V^{\pi}(s) = +\infty$ \citep{bertsekas1991analysis,bertsekas2013stochastic,tarbouriech2021stochastic}. 
Note that this additional condition is satisfied in the case of strictly positive cost , where for any state $s\neq g$ and $a\in\cA$, it holds that $c(s,a) \geq c_{\min}$. 
To deal with the case of general cost function, one can adopt the cost perturbation trick \citep{tarbouriech2021stochastic} and consider a modified problem with cost function $c_{\rho}(s,a) \coloneqq \max\{c(s,a), \rho\}$ for some $\rho>0$.
This will introduce an additional cost of order $ \cO(\rho T)$ to the regret of the original problem, where $T$ is the total number of steps.
Therefore, the second condition can be avoided, and we can assume the existence of $\pi^\star$. 

Throughout the paper, we denote by $\Bstar$ the upper bound of the optimal value function $V^\star$, i.e., $\Bstar \coloneqq \max_{s\in\cS}V^\star(s)$. 
Also, we define $\Tstar \coloneqq \max_{s\in\cS} T^{\pistar} (s)$, which is finite under Assumption \ref{assump: existence proper policy}. 
Since the cost is bounded by 1, we have $\Bstar \leq \Tstar < + \infty$. 
Without loss of generality, we assume that $\Bstar\geq 1$. 
Furthermore, we denote the corresponding optimal action-value function by $Q^\star \coloneqq Q^{\pistar}$ which satisfies the following Bellman equation for all $(s,a)\in\cS \times \cA$:
\begin{align}\label{eq: bellman optimal condition}
    Q^\star(s,a) & = c(s,a) + \PP V^\star(s,a), \notag
    \\  V^\star(s) & = \min_{a\in\cA} Q^\star(s,a).
\end{align}

\textbf{Learning objective} 
Under Assumption \ref{assump: linear kernel mdp}, we assume $c$ to be known for the ease of presentation.
We study the episodic setting where each episode starts from a fixed initial state $\sinit$ and ends only if the agent reaches the goal state $g$. 
Given the total number of episodes, $K$, the objective of the agent is to minimize the regret over $K$ episodes defined as
\begin{align}\label{eq: regret def}
    R_K \coloneqq \sum_{k=1}^K \sum_{i=1}^{I_k} c_{k,i} - K \cdot V^\star(\sinit),
\end{align}
where $I_k$ is the length of the $k$-th episode and $c_{k,i} = c(s_{k,i}, a_{k,i})$ is the cost triggered at the $i$-th step in the $k$-th episode. 
Note that $R_K$ might be infinite if some episode never ends.

\section{An Algorithm with Hoeffding-type Bonus}\label{sec: algorithm}

In this section, we propose a model-based algorithm for learning linear mixture SSPs, which is displayed in Algorithm~\ref{alg: linear kernel ssp}. 
\texttt{LEVIS} is inspired by the \texttt{UCLK}-type of algorithms originally designed for discounted linear mixture MDPs \citep{zhou2020nearly, zhou2021provably}. 
Our algorithm takes a multi-epoch form, where each episode is divided into epochs of different lengths \citep{jaksch2010near, lattimore2012pac}. 
Within each epoch, the agent executes the greedy policy induced by an optimistic estimator of the optimal Q-function. 
The switch between any two epochs is triggered by a doubling criterion, and then the estimated Q-function is updated through a Discounted Extend Value Iteration (\texttt{DEVI}) sub-routine (Algorithm \ref{alg: EVI}).
We now give a detailed description of Algorithm \ref{alg: linear kernel ssp}.  


\begin{algorithm}[t]
	\caption{\texttt{LEVIS}}
	\label{alg: linear kernel ssp}
	\begin{algorithmic}[1]
	\STATE {\bfseries Input:} regularization parameter $\lambda$, confidence radius $\{\beta_t\}$, cost perturbation $\rho \in [0,1]$, an estimate $B \geq \Bstar$
	\STATE {\bfseries Initialize:} set $t\leftarrow 1$, $j\leftarrow 0$, $t_0=0$, $\bSigma_0 \leftarrow \lambda \Ib$, $ \bbb_0 \leftarrow \zero$, $Q_0(s,\cdot), V_0(s) \leftarrow 1$ $\forall s \neq g$ and $0$ otherwise  \alglinelabel{algline: initialization}
	\FOR{$k=1,\dots,K$}
	\STATE Set $s_t = \sinit$ \alglinelabel{algline: start from s init}
	\WHILE{$s_t \neq g$} \alglinelabel{algline: while not g}
	\STATE Take action $a_t = \argmin_{a\in\cA} Q_j(s_t,a)$, receive cost $c_t = c(s_t,a_t)$ and next state $s_{t+1}\sim \PP(\cdot|s_t,a_t)$ \alglinelabel{algline: take one step}
	\STATE Set $\bSigma_{t} \leftarrow \bSigma_{t-1}+\bphi_{V_j}(s_t,a_t)\bphi_{V_j}(s_t,a_t)^\top$ \alglinelabel{algline: bSigma update}
	\STATE Set $\bbb_t \leftarrow \bbb_{t-1} +  \bphi_{V_j}(s_t,a_t) V_j(s_{t+1})$ \alglinelabel{algline: b update}
	\IF{$\det (\bSigma_t)\geq 2 \det(\bSigma_{t_j})$ \textbf{or} \textcolor{blue}{$t \geq 2 t_j$} } \alglinelabel{algline: trigger condition}
	\STATE Set $j \leftarrow j+1$ \alglinelabel{algline: j update}
	\STATE Set $t_j \leftarrow t$,  \textcolor{blue}{$\epsilon_j \leftarrow \frac{1}{t_j}$} \alglinelabel{algline: def t_j}
	\STATE Set $\hat{\btheta}_j \leftarrow \bSigma_t^{-1} \bbb_t$ \alglinelabel{algline: hat btheta}
	\STATE Set $
	    \cC_j \leftarrow \left\{ \btheta: \ \| \bSigma_{t_j}^{1/2} (\btheta - \hat{\btheta}_j)\|_2 \leq \beta_{t_j} \right\}
	$ \alglinelabel{algline: confidence ellipsoid}
	\STATE Set $Q_j(\cdot,\cdot) \leftarrow \texttt{DEVI}(\cC_j, \textcolor{blue}{\epsilon_j}, \textcolor{blue}{\frac{1}{t_j}},\rho)$ \alglinelabel{algline: discount factor 1 over t_j}
	\STATE Set $V_j (\cdot) \leftarrow \min_{a\in\cA} Q_j (\cdot,a)$ \alglinelabel{algline: V_j equal min Q_j}
	\ENDIF
	\STATE Set $t\leftarrow t+1$ 
	\ENDWHILE
	\ENDFOR
	\end{algorithmic}
\end{algorithm}

\begin{algorithm}[t]
	\caption{\texttt{DEVI}}
	\label{alg: EVI}
	\begin{algorithmic}[1]
	\STATE {\bfseries Input:} confidence set $\cC$, error parameter $\epsilon$, transition bonus $q$, cost perturbation $\rho \in [0,1]$
	\STATE {\bfseries Initialize:} $i \leftarrow 0$, and $Q^{(0)}(\cdot,\cdot),\ V^{(0)}(\cdot) = 0$, and $V^{(-1)}(\cdot) = + \infty$
	\STATE Set $ Q(\cdot,\cdot) \leftarrow Q^{(0)} (\cdot,\cdot) $ 
	\IF{$\cC \cap \cB \neq \phi$}
	\WHILE{$\| V^{(i)} - V^{(i-1)} \|_\infty \geq \epsilon $} \label{algline: EVI updating criterion}
    \STATE
	\small{\begin{align}
        & Q^{(i+1)}(\cdot,\cdot)  \leftarrow c_\rho (\cdot,\cdot)  
        + \textcolor{blue}{(1-q)}\min_{\btheta \in \cC\cap \cB} \langle \btheta, \bphi_{V^{(i)}} (\cdot,\cdot) \rangle  \label{eq: EVI value iteration}\\ 
        & V^{(i+1)} (\cdot) \leftarrow \min_{a\in\cA} Q^{(i+1)} (\cdot, a) \label{eq: EVI V is min Q}
	\end{align}}
	\STATE Set $i \leftarrow i+1$
	\ENDWHILE
	\STATE $Q(\cdot,\cdot) \leftarrow Q^{(i+1)}(\cdot,\cdot)$ 
	\ENDIF
	\STATE {\bfseries Output:} $Q(\cdot,\cdot)$
	\end{algorithmic}
\end{algorithm}


In Algorithm \ref{alg: linear kernel ssp}, we maintain two global indices. 
Index $t$ represents the total number of steps, and index $j$ tracks the number of calls to the \texttt{DEVI} sub-routine, where the output of \texttt{DEVI} is an optimistic estimator of the optimal action-value function. 
Each episode starts from a fixed initial state $\sinit$ (Line \ref{algline: start from s init}), ends when the goal state $g$ is reached (Line \ref{algline: while not g}) and is decomposed into epochs indexed by the global index $j$.
Within epoch $j$, the agent repeatedly executes the policy induced by the current estimate $Q_j$ (Line \ref{algline: take one step}) and updates $\bSigma_t$ and $\bbb_t$ (Lines \ref{algline: bSigma update} and~\ref{algline: b update}).
Each epoch ends when the either criterion in Line \ref{algline: trigger condition} is triggered, and the \texttt{DEVI} subroutine performs an optimistic planning to update the action-value function estimator (Lines~\ref{algline: j update} to~\ref{algline: V_j equal min Q_j}). 


\subsection{Updating Criteria: Coupling Features with Time}\label{sec: updating criteria in algorithm}
As mentioned before, Algorithm \ref{alg: linear kernel ssp} runs in epochs indexed by $j$, and one epoch ends when either of the two updating criteria is triggered (Line \ref{algline: trigger condition}). 
The first updating criterion is satisfied once the determinant of $\bSigma_t$ is at least doubled compared to its determinant at the end of the previous epoch. 
This is called lazy policy update that has been used in the linear bandits and RL literature \citep{abbasi2011improved,zhou2021provably,wang2021provably},
which reflects the diminishing return of learning the underlying transition. 
The intuition behind the determinant doubling criterion is that the determinant can be viewed as a surrogate measure of the exploration in the feature space. Thus, one only updates the policy when there is enough exploration being made since last update.
Moreover, this update criterion reduces the computational cost as the total number of epochs would be bounded by $\cO(\log T)$. Here $T$ is the total number of steps through all $K$ episodes.
The doubling visitation criterion used in tabular SSP \citep{jafarnia2021online,tarbouriech2021stochastic} can be viewed as a special case of this doubling determinant-based criterion.

However, the above criterion alone cannot guarantee finite length for each epoch as we lack the boundedness of $\|\bphi_V(\cdot,\cdot)\|$,
which holds for tabular SSP naturally since at most $|\cS||\cA| \max_{s\in \cS, a\in \cA}n(s,a)$ 
steps suffice to double $n(s,a)$ for at least one pair $(s,a)$ by the pigeonhole principle. To address this issue, we introduce an extra triggering criterion: $t \geq 2 t_j$. 
It turns out that despite of being extremely simple this criterion endows the algorithm with several nice properties. 
First, together with the \texttt{DEVI} error parameter $\epsilon_j = 1/t_j$, we can bound the cumulative error from value iterations in epoch $j$ by a constant, i.e., $(2 t_j-t_j) \cdot \epsilon_j = 1$. 
Second, it will not increase the total number of epochs since the time step doubling can happen at most $\cO(\log T)$ times, which is consistent with the first criterion.
These two properties together enable us to bound the total error from value iteration by $\cO(\log T)$. 
Finally, this criterion is fairly easy to implement and has negligible time and space complexity. 

In summary, our two updating criteria reflect a fine-grained characterization of the extent of exploration by coupling the  feature space and time interval. 

\subsection{Optimistic Planning: Contraction via Perturbation}  
The optimism of Algorithm \ref{alg: linear kernel ssp} is realized by the construction of the confidence set $\cC_j$ (Line \ref{algline: def t_j}), which is fed into the \texttt{DEVI} subroutine. We now describe the estimation of the Q-function in the \texttt{DEVI} sub-routine (Algorithm \ref{alg: EVI}). 
\texttt{DEVI} requires the access to a confidence region $\cC_j$ that contains the true model parameter $\btheta^*$ with high probability (Line~\ref{algline: confidence ellipsoid}). 
We construct the confidence region $\cC_j$ as an ellipsoid centered at $\hat\btheta_j$ (Line~\ref{algline: hat btheta}) which can be viewed as an estimate of $\btheta^*$.
The radius of the confidence region $\cC_j$ is specified by a parameter $\beta_t$~(Line \ref{algline: confidence ellipsoid}).
Since not every $\btheta\in \cC_j$ defines a valid probability transition, we further take the intersection between $\cC_j$ and a constraint set $\cB$ defined as
\vspace{-0.2cm}
\begin{align*}
    \cB \coloneqq \Big\{\btheta: \ & \forall (s,a), \langle \bphi(\cdot|s,a) , \btheta \rangle \ \textnormal{is a probability} \\ &\textnormal{distribution}  \ \textnormal{and} \ \langle \bphi(s'|g,a), \btheta \rangle = \ind\{s'=g\}  \Big\} . \notag
\end{align*} 
Then $\cC_j \cap \cB$ is still a confidence region containing the true model parameter $\btheta^*$ with high probability since $\btheta^*\in\cB$. 
Algorithm \ref{alg: EVI} requires two additional inputs: the optimality gap $\epsilon_j$ and the discount factor $q$.
The use of $\epsilon_j$ is standard, while the use of the discount factor is new and the key to ensuring convergence of \texttt{DEVI}.

Specifically, \eqref{eq: EVI value iteration} in Algorithm \ref{alg: EVI} repeatedly performs one-step value iteration by applying the Bellman operator to the set $\cC_j \cap \cB$. 
This is motivated by the Bellman optimality equation in \eqref{eq: bellman optimal condition}, and uses $\min_{\btheta \in \cC\cap \cB} \langle \btheta, \bphi_{V^{(i)}} \rangle$ as an optimistic estimate for $\PP V^*$. 
However, using this estimate alone cannot guarantee the convergence of \texttt{DEVI} because $\langle \cdot, \bphi_{V^{(i)}} \rangle$ is not a contractive map, which holds for free in the discounted setting \citep{jaksch2010near,zhou2021provably}, but not in SSPs. 
More specifically, in the \texttt{EVI} algorithm for the discounted MDP (e.g., Algorithm 2 in \citep{zhou2021provably}), there is an intrinsic discount factor $0<\gamma<1$, which ensures that the Bellman operator is a contraction. Consequently, the value iteration converges within a finite number of iterations. In contrast, the Bellman equation of SSP does not have such a discount factor.
To address this issue, in \eqref{eq: EVI value iteration}, we introduce a $1-q$ discount factor to ensure the contraction property. 
Although this causes an additional bias to the estimated transition probability function, we can alleviate it by choosing $q$ properly.
In particular, for each epoch $j$ we set $q = 1/t_j$ (Line \ref{algline: discount factor 1 over t_j}), and we can show that this bias will only cause an additive term of order $\cO(\log T )$ in the final regret bound.

Besides the convergence guarantee, the $1-q$ discount factor also brings an additional benefit that it biases the estimated transition kernel towards the goal state $g$, further encouraging optimism. 
Similar design can also be found in the \texttt{VISGO} algorithm proposed by \citet{tarbouriech2021stochastic}. 
The intuition behind is to guarantee the existence of proper policies under the estimated transition probability function. 
As a result, the output of the value iteration, which solves $V = \tilde\cL V$ approximately for the Bellman operator $\tilde\cL$ induced by the estimated transition,
can induce a greedy policy that is proper under the estimated transition. 

The main computational overhead of \texttt{LEVIS} is from \texttt{DEVI}, which is quite efficient to implement. We discuss this in Appendix~\ref{sec: discuss DEVI computation}.

\section{Regret Bound of \texttt{LEVIS}}\label{sec: main results}
We present the main theoretical results for Algorithm \ref{alg: linear kernel ssp} by giving regret upper bounds for both positive and general cost functions.

\subsection{Upper Bound: Positive Cost Functions}\label{sec: positive cost}

We first consider a special case where the cost is strictly positive (except for the goal state $g$). 
\begin{assumption}\label{assump: c_min}
We assume there exists an \emph{unknown} constant $c_{\min}\in(0,1)$ such that $c(s,a)\geq c_{\min}$ for all $s\in \cS\setminus \{g\}$ and $a\in \cA$. 
\end{assumption}
Let $T$ be the total number of steps in Algorithm \ref{alg: linear kernel ssp}, then the above assumption allows us to lower bound the total cumulative cost after the $K$ episodes by $c_{\min}\cdot T$.
Note that this provides a relation between the deterministic $K$ and the random quantity $T$.
For simplicity, we assume the agent has access to $B$, an order-accurate estimate of $\Bstar$ satisfying $\Bstar \leq B \leq \kappa \Bstar$ for some unknown constant $\kappa \geq 1$. 
Similar assumptions have also been imposed in previous works \citep{tarbouriech2021stochastic,vial2021regret}.


\begin{theorem}\label{thm: regret upper bound with c_min}
    Under Assumptions \ref{assump: linear kernel mdp}, \ref{assump: existence proper policy} and \ref{assump: c_min}, for any $\delta>0$, let $\rho=0$ and $\beta_t = B \sqrt{d \log \left( 4(t^2+t^3 B^2 /\lambda)/\delta \right)} + \sqrt{\lambda d}$ for all $t\geq 1$, where $B \ge \Bstar$ and $\lambda \geq 1$.
    Then with probability at least $1-\delta$, the regret of Algorithm \ref{alg: linear kernel ssp} satisfies
    \begin{align}\label{eq:main_thm_bound}
        R_K = \cO\bigg( & B^{1.5} d\sqrt{ K/c_{\min}} \cdot \log^2\left( \frac{ K B d }{c_{\min} \delta} \right) \notag 
        \\ & + \frac{B^2 d^2}{c_{\min}} \log^2\left( \frac{ K B d }{c_{\min} \delta} \right) \bigg). 
    \end{align}
    \vspace{-0.5cm}
\end{theorem}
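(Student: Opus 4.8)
The plan is to run the standard optimism-in-the-face-of-uncertainty argument, decomposing $R_K$ into a dominant estimation (bonus) term, a martingale term, and several $\cO(\log T)$ bias terms, and then to convert the resulting bound, which is expressed in the random horizon $T$ (the total number of steps over all $K$ episodes), into a bound in $K$ via a self-bounding inequality that exploits Assumption~\ref{assump: c_min}. I begin by certifying the confidence set: applying a self-normalized concentration inequality for vector-valued martingales \citep{abbasi2011improved} to the noise sequence $V_j(s_{t+1})-\langle\bphi_{V_j}(s_t,a_t),\btheta^*\rangle$, together with the stated choice of $\beta_t$ and the bound $\|\bphi_{V_j}\|_2\le B\sqrt d$ from Assumption~\ref{assump: linear kernel mdp}, shows that $\btheta^*\in\cC_j\cap\cB$ for every epoch $j$ on an event of probability at least $1-\delta/2$.

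Next I would analyze \texttt{DEVI}. Writing $q_j:=1/t_j>0$, the damped backup in \eqref{eq: EVI value iteration} is an $\|\cdot\|_\infty$-contraction with modulus at most $1-q_j$, so the iteration converges geometrically and the stopping rule is met after finitely many steps. On the event $\{\btheta^*\in\cC_j\cap\cB\}$, an induction on the iteration index using $\min_{\btheta\in\cC_j\cap\cB}\langle\btheta,\bphi_{V^{(i)}}\rangle\le\langle\btheta^*,\bphi_{V^{(i)}}\rangle=\PP V^{(i)}$ and the monotonicity $V^{(i)}\le V^\star$ yields $V^{(i+1)}\le V^\star$; hence the output satisfies $V_j\le V^\star$ up to the tolerance $\epsilon_j$. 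The same contraction, which biases the estimated kernel toward $g$, is what I would use to show that the induced greedy policy reaches the goal in finite expected time, and thus that every episode terminates and $T<\infty$ on the good event.

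For the decomposition, using that $a_t$ is greedy and that $V_j$ nearly solves the damped Bellman equation, I would show that for each step $t$ in epoch $j$,
\begin{align*}
c(s_t,a_t) &\le V_j(s_t)-V_j(s_{t+1})+q_j V_j(s_{t+1}) \\
&\quad +2\beta_{t_j}\|\bphi_{V_j}(s_t,a_t)\|_{\bSigma_{t_j}^{-1}}+\xi_t+\epsilon_j,
\end{align*}
where $\xi_t:=V_j(s_{t+1})-\PP V_j(s_t,a_t)$ is a bounded martingale difference, and the bonus arises from $\langle\btheta^*,\bphi_{V_j}\rangle-\min_{\btheta\in\cC_j\cap\cB}\langle\btheta,\bphi_{V_j}\rangle\le2\beta_{t_j}\|\bphi_{V_j}\|_{\bSigma_{t_j}^{-1}}$. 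Summing over all $T$ steps, the telescoping term collapses to $\sum_k V_{j_k}(\sinit)+\cO(B\log T)$, where the correction accounts for the $\cO(\log T)$ epoch switches (each shifting $V_j$ by at most $B$); optimism gives $\sum_k V_{j_k}(\sinit)\le K\,V^\star(\sinit)$, which cancels the $-K\,V^\star(\sinit)$ in $R_K$. The discount-bias sum $\sum_t q_j V_j(s_{t+1})$ is $\cO(B\log T)$, since each epoch contributes at most $(t_{j+1}-t_j)\,t_j^{-1}B\le B$ by the time-doubling rule $t\le 2t_j$, and the same rule forces $\sum_t\epsilon_j=\cO(\log T)$.

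Two substantive terms remain. The martingale sum is $\cO(B\sqrt{T\log(1/\delta)})$ by Azuma--Hoeffding, and for the bonus the determinant-doubling rule gives $\|\bphi_{V_j}\|_{\bSigma_{t_j}^{-1}}\le\sqrt2\,\|\bphi_{V_j}\|_{\bSigma_t^{-1}}$ inside epoch $j$, so Cauchy--Schwarz and the elliptical potential lemma \citep{abbasi2011improved} bound it by $\cO(\beta_T\sqrt{dT}\,\mathrm{polylog})=\cO(Bd\sqrt T\,\mathrm{polylog})$, which dominates. This gives $R_K=\cO(Bd\sqrt T\,\mathrm{polylog})$. The final, $c_{\min}$-dependent step is self-bounding: since $c_{\min}T\le\sum_t c_t=R_K+K\,V^\star(\sinit)\le R_K+KB$, I substitute to obtain $c_{\min}T\le \cO(Bd\sqrt T\,\mathrm{polylog})+KB$, a quadratic inequality in $\sqrt T$ whose solution is $\sqrt T=\cO(\sqrt{KB/c_{\min}}+Bd\,\mathrm{polylog}/c_{\min})$; plugging this back into $R_K=\cO(Bd\sqrt T)$ produces exactly the two terms $B^{1.5}d\sqrt{K/c_{\min}}$ and $B^2d^2/c_{\min}$ of \eqref{eq:main_thm_bound}. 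I expect the principal obstacle to be the circularity between termination and the self-bounding step: one must establish $T<\infty$ and control the $\cO(\log T)$ error terms before $T$ is bounded, using only the optimistic discounted model, and then close the loop without any of the intermediate constants secretly depending on the unbounded $T$.
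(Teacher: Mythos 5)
Your proposal is correct and follows essentially the same route as the paper: a Hoeffding-type confidence set, optimism and geometric convergence of \texttt{DEVI} via the $1-q_j$ contraction, a decomposition into a dominant elliptical-potential bonus term, an Azuma martingale term, and $\cO(\log T)$ bias/telescoping terms controlled by the two doubling rules, followed by the self-bounding inequality $c_{\min}T\le R_K+KB$ to convert $T$ into $K$. The only point you gloss over is that $V_{j}$ is random and only bounded by $B_\star$ on the optimism event, so before invoking Azuma--Hoeffding one should replace it by the truncated sequence $\min\{B_\star,V_{j}\}$ (which coincides with $V_j$ on the good event), exactly as the paper does in its treatment of the martingale term.
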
 
If $B=O(\Bstar)$, Algorithm \ref{alg: linear kernel ssp} attains an $\tilde O({\Bstar}^{1.5}d\sqrt{K/c_{\min}})$ regret.
The dominating term in~\eqref{eq:main_thm_bound} has a dependency on $1/c_{\min}$. 
For the tabular SSP, \citet{ cohen2021minimax,jafarnia2021online, tarbouriech2021stochastic} avoid such a dependency by using a more delicate analysis. 
However, it remains an open question whether a similar result can be achieved for the linear mixture SSP.
\begin{remark}
Set the parameter $\delta$ in Theorem \ref{thm: regret upper bound with c_min} as $\delta=1/K$ and define the high probability event $\Omega$ as Theorem~\ref{thm: regret upper bound with c_min} holds. 
Then, we can obtain the expected regret bound:
\begin{align*}
    \EE[R_K]&\leq \EE\big[R_K|\Omega\big]\Pr[\Omega]+K \Pr[\bar{\Omega}]\notag\\
    &=\cO\bigg( B^{1.5} d\sqrt{ K/c_{\min}} \cdot \log^2\left( \frac{ K B d }{c_{\min} } \right) 
    \\ & \qquad \quad + \frac{B^2 d^2}{c_{\min}} \log^2\left( \frac{ K B d }{c_{\min} } \right) \bigg),
\end{align*}
which implies an $\tilde O({\Bstar}^{1.5}d\sqrt{K/c_{\min}})$ expected regret.
\end{remark}
The proof of Theorem \ref{thm: regret upper bound with c_min} is in Appendix \ref{sec:proof_specific_cost}.




\subsection{Upper Bound: General Cost Functions}\label{sec: general cost}

Without Assumption \ref{assump: c_min}, an $\tilde O(K^{2/3})$ regret can be achieved by running Algorithm \ref{alg: linear kernel ssp} with $\rho = K^{-1/3}$.


\begin{theorem}\label{thm: corollary general cost}
Under Assumptions \ref{assump: linear kernel mdp} and \ref{assump: existence proper policy}, for any $\delta>0$, let $\rho=K^{-1/3}$ and $\beta_t = B \sqrt{d \log \left( 4(t^2+t^3 B^2 /\lambda)/\delta \right)} + \sqrt{\lambda d}$ for all $t\geq 1$, where $B \ge \Bstar $ and $\lambda \geq 1$. 
Then with probability at least $1-\delta$, the regret of Algorithm \ref{alg: linear kernel ssp} satisfies 
\begin{align*}
    R_K = \cO \left( \tilde{B}^{1.5} d K^{2/3} \cdot \chi + \Tstar K^{2/3} + \tilde{B}^2 d^2 K^{1/3} \cdot \chi \right),
\end{align*}
where $\tilde{B} = B + \Tstar/K^{1/3}$ and $\chi = \log^2\big( (B+\Tstar )Kd/\delta \big)$.
\end{theorem}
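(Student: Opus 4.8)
The plan is to reduce the general-cost problem to the positive-cost result of Theorem~\ref{thm: regret upper bound with c_min} via the cost-perturbation trick of \citet{tarbouriech2021stochastic}. Running Algorithm~\ref{alg: linear kernel ssp} with $\rho = K^{-1/3}$ means that the \texttt{DEVI} planning is carried out with the perturbed cost $c_\rho(s,a) = \max\{c(s,a),\rho\}$, i.e.\ the algorithm effectively solves the perturbed SSP instance $M_\rho$ obtained from $M$ by replacing $c$ with $c_\rho$. Since $M_\rho$ shares the transition kernel of $M$, Assumptions~\ref{assump: linear kernel mdp} and~\ref{assump: existence proper policy} remain valid, and now $c_\rho(s,a)\ge\rho$ for every $s\neq g$, so Assumption~\ref{assump: c_min} holds for $M_\rho$ with $c_{\min}=\rho$. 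Consequently Theorem~\ref{thm: regret upper bound with c_min} can be invoked on $M_\rho$, provided we supply a valid upper estimate of its optimal value bound.

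First I would control the perturbed value bound $\Bstar^{\rho}\coloneqq\max_{s}V^\star_\rho(s)$ in terms of the original quantities. Treating $\pistar$ as a (generally suboptimal) policy in $M_\rho$, the per-step overcharge obeys $c_\rho-c\le\rho$ and $\pistar$ reaches $g$ in expected time $T^{\pistar}(s)\le\Tstar$, so for every $s$,
\begin{align*}
V^\star_\rho(s) \le V^{\pistar}_\rho(s) \le V^{\pistar}(s) + \rho\, T^{\pistar}(s) \le \Bstar + \rho\Tstar.
\end{align*}
Hence $\Bstar^{\rho} \le \Bstar + \rho\Tstar \le B + \rho\Tstar = \tilde B$, so $\tilde B$ is a legitimate input satisfying $\tilde B\ge\Bstar^{\rho}$ for Theorem~\ref{thm: regret upper bound with c_min} on $M_\rho$; note that the algorithm itself uses only $\rho$ and $B$, while $\tilde B$ enters purely through the analysis.

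Next I would decompose the true regret. Because $c\le c_\rho$ pointwise, the realized costs satisfy $\sum_{k,i} c_{k,i} \le \sum_{k,i} c_\rho(s_{k,i},a_{k,i})$ along the actually generated trajectory. Writing $R_K^\rho \coloneqq \sum_{k,i} c_\rho(s_{k,i},a_{k,i}) - K V^\star_\rho(\sinit)$ for the regret measured in $M_\rho$, we get
\begin{align*}
R_K \le R_K^\rho + K\big(V^\star_\rho(\sinit) - V^\star(\sinit)\big) \le R_K^\rho + K\rho\Tstar,
\end{align*}
where the last step reuses $V^\star_\rho(\sinit)-V^\star(\sinit)\le\rho T^{\pistar}(\sinit)\le\rho\Tstar$. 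With $\rho=K^{-1/3}$ this bias term equals $\Tstar K^{2/3}$. Applying Theorem~\ref{thm: regret upper bound with c_min} to bound $R_K^\rho$ with $c_{\min}=\rho=K^{-1/3}$ and $B=\tilde B$, the two dominant terms scale as $\tilde B^{1.5} d\sqrt{K/\rho} = \tilde B^{1.5} d K^{2/3}$ and $\tilde B^2 d^2/\rho = \tilde B^2 d^2 K^{1/3}$, while the logarithmic factor becomes $\log^2\!\big(K^{4/3}\tilde B d/\delta\big)=\cO(\chi)$ using $\tilde B \le B+\Tstar$. Summing the three contributions yields the claimed bound.

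The reduction is conceptually clean, so the delicate points—rather than a genuine obstacle—are the following. (i) The value-difference inequalities must be justified rigorously; this rests on the finiteness of $T^{\pistar}(s)\le\Tstar$ (Assumption~\ref{assump: existence proper policy}) together with $c_\rho-c\le\rho$. (ii) One must observe that the trajectory produced by Algorithm~\ref{alg: linear kernel ssp} depends only on the transitions and on the $c_\rho$-based \texttt{DEVI} planning, not on the realized costs $c_{k,i}$; hence it coincides with an execution on the genuine SSP instance $M_\rho$ (where the environment charges $c_\rho$), so the high-probability event and the $\cO(\log T)$ bookkeeping of Theorem~\ref{thm: regret upper bound with c_min} transfer verbatim to bound $R_K^\rho$. (iii) The logarithmic factors must be collapsed into $\chi$ via $\tilde B\le B+\Tstar$.
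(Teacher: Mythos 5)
Your proposal is correct and follows essentially the same route as the paper: apply Theorem~\ref{thm: regret upper bound with c_min} to the perturbed instance with $c_{\min}=\rho=K^{-1/3}$, bound its optimal value by $\tilde B$, and pay the bias $\Tstar\rho K=\Tstar K^{2/3}$. You in fact justify more carefully than the paper does why $\Bstar^{\rho}\le \Bstar+\rho\Tstar$ and why the realized costs transfer (the paper's proof even writes $\tilde B=B+\Tstar/\rho$ where it means $B+\rho\Tstar$), so no gap.
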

In Theorem \ref{thm: corollary general cost}, the regret depends on $\tilde{B}$ instead of $B$. 
Note that $\tilde{B}$ is approximately equal to $\Bstar$ when $K = \Omega(\Tstar^3)$ and $B=O(\Bstar)$. 
Here $\Tstar$ is defined in Section \ref{sec: prelim} as the maximum expected time it takes for the optimal policy to reach the goal state starting from any state.

The cost perturbation $\rho$ is a common trick to deal with the case of general cost functions in the SSP literature \citep{tarbouriech2020no, cohen2020near, tarbouriech2021stochastic}. Similar to \citet{tarbouriech2020no}, the term $c_{\min}^{-1}$ is multiplicative with $K$ in our regret bound given by Theorem \ref{thm: regret upper bound with c_min}, leading to an $\tilde{\cO}(K^{2/3})$ regret in the case of general cost functions. 
Similarly, the regret bound for linear SSP in \citet{vial2021regret} also has a multiplicative $c_{\min}^{-1}$. Some later work on tabular SSP \citep{cohen2020near, tarbouriech2021stochastic} has shown that it is possible to make the term $c_{\min}^{-1}$ additive and improve the regret to $\tilde{\cO}(K^{1/2})$ for general cost functions. How to get an additive $c_{\min}^{-1}$ term for the linear mixture SSP is an interesting future direction. 

For the choice of the other parameters in Algorithm \ref{alg: linear kernel ssp}, by Theorems \ref{thm: regret upper bound with c_min} and \ref{thm: corollary general cost}, we can set $\lambda = 1$ in both the positive and general cost cases. 
It is also not uncommon to assume a known upper bound $B \geq \Bstar$ in existing SSP literature \citep{cohen2021minimax, vial2021regret}. 
While is possible to deal with unknown $B$ with a doubling trick for tabular SSP \citep{rosenberg2020near, tarbouriech2021stochastic}, it remains an open question for the linear SSP setting.

\section{An Improved Algorithm with Bernstein-type Bonus}\label{sec: bernstein type algorithm}

Despite its simple form, the major drawback of Algorithm \ref{alg: linear kernel ssp} is that it only uses Hoeffding-type confidence sets, which possibly costs too much exploration as it ignores the variance information. 
Consequently, in the regret bound in Theorem~\ref{thm: regret upper bound with c_min}, the dependence on $\Bstar$ is not optimal.
This is comparable to the situation for episodic linear mixture MDPs \citep{jia2020model,ayoub2020model}, where the size of Hoeffding-type confidence sets loosely scales with the horizon length $H$. It has been show that Bernstein-type bonus can sharpen the dependence on $H$ for both tabular MDPs \citep{lattimore2012pac, azar2017minimax,jin2018q, zhang2019regret,zhang2020almost,he2021nearly} and linear mixture MDPs \citep{zhou2020nearly,wu2021nearly,he2021nearly,zhang2021improved}. 
Therefore, to improve over the previous algorithm, we further incorporate the variance information and develop a improved algorithm called $\texttt{LEVIS}^{\texttt{+}}$~(i.e., Algorithm~\ref{alg: linear kernel ssp bernstein}).

\begin{algorithm}[ht!]
	\caption{$\texttt{LEVIS}^{\texttt{+}}$}
	\label{alg: linear kernel ssp bernstein}
	\begin{algorithmic}[1]
	\STATE {\bfseries Input:} regularization parameter $\lambda$, confidence radius $\{\hat\beta_t, \ \check\beta_t, \ \tilde\beta_t \}$, cost perturbation $\rho \in [0,1]$, an estimate $B \geq \Bstar$
	\STATE {\bfseries Initialize:} set $t\leftarrow 1$, $j\leftarrow 0$, $t_0=0$, $\bSigma_0 \leftarrow \lambda \Ib$, $\tilde\bSigma_0 \leftarrow \lambda \Ib$, $ \bbb_0 \leftarrow \zero$, $ \tilde\bbb_0 \leftarrow \zero$, $\hat\btheta_0 \leftarrow 0$, $\tilde\btheta_0 \leftarrow 0$
	, $Q_0(s,\cdot), V_0(s) \leftarrow 1$ for all $s \neq g$ and $0$ otherwise  
	\FOR{$k=1,\dots,K$}
	\STATE Set $s_t = \sinit$ 
	\WHILE{$s_t \neq g$} 
	\STATE Take action $a_t = \argmin_{a\in\cA} Q_j(s_t,a)$, receive cost $c_t = c(s_t,a_t)$ and next state $s_{t+1}\sim \PP(\cdot|s_t,a_t)$ \alglinelabel{algline: take action and observe bernstein}
	\STATE Set $[\hat{\VV}_t V_j](s_t, a_t)$ and $E_t$ by \eqref{eq: VV and E} \alglinelabel{algline:variance}
	\STATE Set $\hat{\sigma}_t^2 \leftarrow \max\{B^2/d, [\hat{\VV}_t V_j](s_t, a_t) + E_t \}$\alglinelabel{algline: hat sigma}
	\STATE Set $\bSigma_{t} \leftarrow \bSigma_{t-1}+\hat{\sigma}_t^{-2} \bphi_{V_j}(s_t,a_t)\bphi_{V_j}(s_t,a_t)^\top$ \alglinelabel{algline: weighted hat bSigma}
	\STATE Set $\bbb_t \leftarrow \bbb_{t-1} +  \hat{\sigma}_t^{-2} \bphi_{V_j}(s_t,a_t) V_j(s_{t+1})$ \alglinelabel{algline: weighted hat b}
	\STATE Set $\hat{\btheta}_t \leftarrow \bSigma_t^{-1} \bbb_t $ \alglinelabel{algline: weighted hat btheta}
	\STATE Set $\tilde\bSigma_{t} \leftarrow \tilde\bSigma_{t-1}+ \bphi_{V_j^2}(s_t,a_t)\bphi_{V_j^2}(s_t,a_t)^\top$ \alglinelabel{algline: weighted tilde bSigma}
	\STATE Set $\tilde\bbb_t \leftarrow \tilde\bbb_{t-1} + \bphi_{V_j^2}(s_t,a_t) V_j^2(s_{t+1})$ 
	\STATE Set $\tilde{\btheta}_t \leftarrow \tilde\bSigma_t^{-1} \tilde\bbb_t $ \alglinelabel{algline: tilde theta}
	\IF{$\det (\bSigma_t)\geq 2 \det(\bSigma_{t_j})$ \textbf{or} $t \geq 2 t_j$ } \alglinelabel{algline:bernstein_criterion}
	\STATE Set $j \leftarrow j+1$ 
	\STATE Set $t_j \leftarrow t$,  $\epsilon_j \leftarrow \frac{1}{t_j}$, $q_j \leftarrow \frac{1}{t_j}$ \alglinelabel{algline: def t_j bernstein}
    \STATE Set confidence ellipsoid $\hat\cC_j \leftarrow \left\{ \btheta: \ \| \bSigma_{t_j}^{1/2} (\btheta - \hat{\btheta}_{t_j})\|_2 \leq \hat\beta_{t_j} \right\}$ \alglinelabel{algline:confidence_set_bernstein}
	\STATE Set $Q_j(\cdot,\cdot) \leftarrow \texttt{DEVI}(\hat\cC_{t_j}, \epsilon_j, q_j,\rho)$ 
	\STATE Set $V_j (\cdot) \leftarrow \min_{a\in\cA} Q_j (\cdot,a)$ \alglinelabel{algline: bernstein V is min Q}
	\ENDIF
	\STATE Set $t\leftarrow t+1$ 
	\ENDWHILE
	\ENDFOR
	\end{algorithmic}
\end{algorithm}

\subsection{The $\texttt{LEVIS}^{\texttt{+}}$ Algorithm}

Our $\texttt{LEVIS}^{\texttt{+}}$ algorithm is presented in Algorithm~\ref{alg: linear kernel ssp bernstein}.
Compared with the previous \texttt{LEVIS} algorithm,  $\texttt{LEVIS}^{\texttt{+}}$ shares the same structure but employs a more complicated estimation procedure which involves estimation of the variance of the value function (Lines~\ref{algline:variance} to \ref{algline: hat sigma}).
Based on the estimated variance, we then apply weighted ridge regression to obtain estimate $\hat\btheta_j$ of the model parameter (Lines~\ref{algline: weighted hat bSigma} to \ref{algline: weighted hat btheta}),
where each data point $(s_t, a_t, s_{t+1})$ is weighted by  $\hat\sigma_t^2$
which upper bounds the conditional variance of $V_t(s_{t+1})$ given $(s_t, a_t)$.
In contrast to the (unweighted) ridge regression used in Algorithm~\ref{alg: linear kernel ssp}, here we have a tighter concentration of $\hat\btheta_j$ around the true parameter $\btheta^*$ as long as $\hat\sigma_t^2$ improves upon the crude upper bound $B^2$, and hence obtain a tighter confidence set. 
Intuitively, the variance of the state-action pairs can be viewed as a surrogate measure of the data quality. Hence by weighting the data points using their (estimated) variance in the regression, the algorithm is able to learn the model more efficiently. 

Due to the space limit, we give a detailed introduction of the algorithm design in Appendix~\ref{sec: detail of bernstein design}.

\subsection{Regret Bound of $\texttt{LEVIS}^{\texttt{+}}$}

We now introduce the theoretical result for Algorithm \ref{alg: linear kernel ssp bernstein}. 
The following theorem establishes the regret upper bound of Algorithm~\ref{alg: linear kernel ssp bernstein}.
Compared to our lower bound in Theorem~\ref{thm: lower bound}, it indicates that Algorithm~\ref{alg: linear kernel ssp bernstein} achieves a near-optimal regret with  an appropriate choice of the parameters $\{\check\beta_t, \tilde\beta_t, \hat\beta_t\}_{t \geq 1}$. 

Specifically, one should choose $\check\beta_t = \tilde\cO\left( d \right)$, $\tilde\beta_t = \tilde\cO( \sqrt{dB^4})$, and $\hat{\beta}_t = \tilde\cO(\sqrt{d})$, where $\tilde\cO(\cdot)$ hides logarithmic terms.
The detailed choice of $\{\check\beta_t, \tilde\beta_t, \hat\beta_t\}_{t \geq 1}$ is given by \eqref{eq: choices of 3 betas} in Appendix~\ref{sec: detail of bernstein design}.

\begin{theorem}\label{thm: bernstein regret upper bound with cmin}
    Under Assumptions \ref{assump: linear kernel mdp}, \ref{assump: existence proper policy} and \ref{assump: c_min}, for any $\delta >0$, let $\rho=0$, $\lambda = 1/B^2$ and $\{\check\beta_t, \tilde\beta_t, \hat\beta_t\}_{t \geq 1}$ be given by \eqref{eq: choices of 3 betas}. Then with probability at least $1-7\delta$, for any sufficiently large $K$, the regret of Algorithm \ref{alg: linear kernel ssp bernstein} satisfies
    \begin{align*}
        R_K = \tilde\cO\left( d^2B + dB\sqrt{K} + \sqrt{d}B^{1.5} \sqrt{\frac{K}{c_{\min}}}\right) , 
    \end{align*} where $\tilde\cO(\cdot)$ hides a factor polynomial in $\log(KB/(\lambda \delta c_{\min}))$. 
\end{theorem}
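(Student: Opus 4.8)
The plan is to run the optimism-based regret decomposition for SSP, but to replace every crude $\Bstar$-scale magnitude bound by its variance-aware counterpart and then to close the argument with a self-bounding inequality. Throughout I write $C_K \coloneqq \sum_{k=1}^K\sum_{i=1}^{I_k} c_{k,i}$ for the realized total cost and $T$ for the total number of steps, so that $R_K = C_K - K V^\star(\sinit)$ and Assumption~\ref{assump: c_min} supplies the crucial link $C_K \geq c_{\min} T$, i.e.\ $T \leq C_K/c_{\min}$.

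\emph{Step 1 (valid confidence sets and variance estimates).} First I would show that on an event of probability $1-\cO(\delta)$, simultaneously over all epochs $j$, the truth lies in $\hat\cC_j$ and the estimated variance dominates the true one, $\hat\sigma_t^2 \geq [\VV V_j](s_t,a_t)$. The former follows from a Bernstein-type self-normalized concentration bound applied to the weighted ridge estimator $\hat\btheta_t$ of Lines~\ref{algline: weighted hat bSigma}--\ref{algline: weighted hat btheta}; this weighting is exactly what makes the radius $\hat\beta_t=\tilde\cO(\sqrt d)$ rather than $\tilde\cO(B\sqrt d)$. The latter follows by controlling the second-moment regression $\tilde\btheta_t$ (radius $\tilde\beta_t$) together with the auxiliary concentration governed by $\check\beta_t$, so that the correction $E_t$ built in Lines~\ref{algline:variance}--\ref{algline: hat sigma} absorbs the estimation error in $[\hat\VV_t V_j]$.

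\emph{Step 2 (optimism, properness, termination).} Next I would prove by induction on the \texttt{DEVI} iterates that $V_j \leq V^\star$ pointwise, using $\btheta^*\in\hat\cC_j\cap\cB$ (from Step~1), the monotonicity of the damped Bellman operator in \eqref{eq: EVI value iteration}, and the fact that $\min_{\btheta\in\hat\cC_j\cap\cB}\langle\btheta,\bphi_{V^{(i)}}\rangle$ is an optimistic (lower) estimate of $\PP V^\star$. The $1-q_j$ damping makes the operator a contraction, so \texttt{DEVI} converges and its greedy policy is proper under the estimated kernel, guaranteeing that every episode ends and $T<\infty$. I would also bound the number of epochs by $\tilde\cO(d)$: the determinant criterion in Line~\ref{algline:bernstein_criterion} fires $\cO(d\log T)$ times and the time-doubling criterion $\cO(\log T)$ times.

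\emph{Step 3 (regret decomposition).} Using the approximate Bellman equation satisfied by \texttt{DEVI}'s output and $a_t=\argmin_a Q_j(s_t,a)$ (so $Q_j(s_t,a_t)=V_j(s_t)$), I would decompose each cost as
\begin{align*}
 c_t = \big(V_j(s_t)-V_j(s_{t+1})\big) + m_t + e_t + b_t + \iota_t,
\end{align*}
where $m_t=V_j(s_{t+1})-\PP V_j(s_t,a_t)$ is a martingale difference, $e_t=\langle\btheta^*-\btheta_j,\bphi_{V_j}(s_t,a_t)\rangle$ is the estimation error with $\btheta_j$ the minimizer in \eqref{eq: EVI value iteration}, $b_t=\cO(q_jB)$ is the discount bias, and $\iota_t$ is the per-iteration \texttt{DEVI} error. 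Telescoping the first bracket within each episode (with $V_j(g)=0$) and invoking optimism from Step~2 turns the accumulated $V_j(\sinit)$ terms into a quantity $\leq K V^\star(\sinit)$, cancelling the $-KV^\star(\sinit)$ in $R_K$; the choices $q_j=\epsilon_j=1/t_j$ and the epoch count make $\sum_t b_t+\sum_t\iota_t=\tilde\cO(B)$.

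\emph{Step 4 (Bernstein analysis and self-bounding).} Bounding $|e_t|\leq 2\hat\beta_{t_j}\|\bphi_{V_j}(s_t,a_t)\|_{\bSigma_{t_j}^{-1}}$ (both $\btheta^*,\btheta_j\in\hat\cC_j$) and writing the feature in weighted form, Cauchy--Schwarz gives $\sum_t|e_t|\leq\tilde\cO(\sqrt d)\,\big(\sum_t\hat\sigma_t^2\big)^{1/2}\big(\sum_t\|\hat\sigma_t^{-1}\bphi_{V_j}\|_{\bSigma_{t_j}^{-1}}^2\big)^{1/2}$; the last factor is $\tilde\cO(\sqrt d)$ by the elliptical potential lemma, since the weighted features have norm $\leq d$ thanks to the floor $\hat\sigma_t^2\geq B^2/d$. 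A law-of-total-variance argument controls the middle factor, $\sum_t\hat\sigma_t^2\lesssim B\,C_K+(B^2/d)\,T+(\text{variance-estimation error})$, and $\sum_t m_t$ is bounded by Freedman's inequality through the same variance proxy, yielding $\tilde\cO(\sqrt{BC_K})$. Collecting terms gives $R_K\leq\tilde\cO(d\sqrt{BC_K})+\tilde\cO(\sqrt d\,B\sqrt T)+\tilde\cO(d^2B)$, where the second summand comes from the portion of $\sum_t\hat\sigma_t^2$ pinned at the floor. Substituting $C_K\leq KB+R_K$ and $T\leq C_K/c_{\min}$ and solving the resulting quadratic inequality for $R_K$ produces $d^2B+dB\sqrt K$ from the first summand and $\sqrt d\,B^{1.5}\sqrt{K/c_{\min}}$ from the second. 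The main obstacle is the law-of-total-variance bound in the SSP setting, where horizons are random and policy-dependent and the value functions $V_j$ change across epochs: one must certify that the estimated variances remain valid upper bounds throughout while still extracting the tight $\sum_t\hat\sigma_t^2\lesssim B\,C_K$ scaling, and then carefully separate the two error scales (true variance versus floor) that give rise to the two distinct terms in the final bound.
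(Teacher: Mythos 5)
Your proposal follows essentially the same route as the paper: Bernstein-type confidence sets with variance domination, optimism via the damped \texttt{DEVI} operator, a decomposition into a telescoping part, a martingale part and an estimation-error part, a variance-weighted elliptical-potential bound, a total-variance bound of the form $\sum_t\hat\sigma_t^2\lesssim B\,C_K+(B^2/d)T$, and a final self-bounding substitution of $C_K\leq KB+R_K$ and $T\leq C_K/c_{\min}$. The correspondence with the paper's Lemma~\ref{lem: key bern lemma}, Theorem~\ref{thm: bernstein regret upper bound with T}, Lemma~\ref{lem: total variance} and Lemma~\ref{lem: M relation with cost and J} is step for step.

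There is, however, one concrete gap, and it sits exactly at the point you yourself flag as ``the main obstacle.'' You telescope the bracket $V_j(s_t)-V_j(s_{t+1})$ \emph{within each episode} and then invoke a law-of-total-variance argument to get $\sum_t\VV V_j\lesssim B\,C_K$. But the identity behind that argument is
\begin{align*}
\EE\Big[\sum_h \VV V_j(s_h,a_h)\Big]=\EE\Big[\Big(\sum_h\big(\PP V_j(s_h,a_h)-V_j(s_{h+1})\big)\Big)^2\Big]
=\EE\Big[\Big(V_j(s_{\mathrm{first}})-V_j(s_{\mathrm{last}})-\textstyle\sum_h c_h+\text{err}\Big)^2\Big],
\end{align*}
and if the summation block is a whole SSP episode, the right-hand side is of order $(B+C_{\mathrm{episode}})^2$, which summed over episodes gives $\sum_k C_k^2$ --- potentially far larger than $B\,C_K$ since episode costs are unbounded. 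The missing device is the paper's \emph{third} interval-splitting rule introduced only for the Bernstein analysis: an interval is also terminated whenever its accumulated cost exceeds $B$. This caps each interval's cost at $2B$ and (via $c\geq c_{\min}$) its length at $2B/c_{\min}$, so each interval contributes $O(B^2)$ to the total variance (Lemma~\ref{lem: sum of variance interval}), and the number of intervals satisfies $M\leq C_K/B+K+J$ (Lemma~\ref{lem: M relation with cost and J}), which is what converts $B^2M$ into $B\,C_K+B^2K$ and ultimately yields the $dB\sqrt K$ term. Without this cost-capped decomposition (or an equivalent recursion), the $\sum_t\hat\sigma_t^2\lesssim B\,C_K$ estimate you rely on in Step~4 does not follow, and the self-bounding step would not close. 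The remainder of your outline --- including the separation of the floor term $B^2T/d$ giving $\sqrt d\,B\sqrt T\to\sqrt d\,B^{1.5}\sqrt{K/c_{\min}}$ from the true-variance term giving $d\sqrt{BC_K}\to dB\sqrt K+d^2B$ --- is correct and matches the paper.
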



\begin{remark}
Compared to the confidence sets $\{\hat\cC_j\}_{j\in[J]}$ used in Algorithm \ref{alg: linear kernel ssp bernstein}, the confidence sets $\{\cC_j\}_{j\in[J]}$ in Algorithm \ref{alg: linear kernel ssp} are too conservative, as they only use the crude upper bound $B^2$ on the variance of the estimated value functions that appear in the algorithm. In fact, the variance of these functions can be significantly smaller than the crude upper bound. As a result, by using the variance information of the data, $\{\hat\cC_j\}_{j\in[J]}$ are tighter confidence sets and still contain the true model parameter with high probability.
\end{remark}

\begin{remark}
From Theorem \ref{thm: bernstein regret upper bound with cmin}, we can see that if $d \geq B$ and $B$ is an order-accurate estimate of $\Bstar$, i.e., $B = \cO(\Bstar)$, then the regret can be simplified to $\tilde\cO(d\Bstar\sqrt{K/c_{\min}})$, matching the lower bound in Theorem \ref{thm: lower bound} up to $1/\sqrt{c_{\min}}$ and poly-logarithmic factors. As a comparison, the concurrent result for linear mixture SSPs in \citet{chen2021improved} achieves an $\tilde\cO(\Bstar\sqrt{d T_{\star} K} + \Bstar d \sqrt{K})$ regret, where $T_{\star}$ is the expected time it takes the optimal policy to reach the goal state maximized over any initial state.
    Using the fact that $T_{\star}\leq \Bstar/c_{\min}$ and Theorem \ref{thm: lower bound}, their bound is near-optimal when $d\geq T_{\star}$. 
\end{remark}

\section{Lower Bound}\label{sec: lower bound}
We provide a hardness result for learning linear mixture SSPs by proving a lower bound for the expected regret suffered by any deterministic learning algorithm. For this purpose, we construct a class of hard-to-learn SSP instances with $|\cS|=2$ and a large action set, and show that any deterministic algorithm will at least suffer from one instance in the class. The detail is deferred to Appendix \ref{sec: lower bound details}.

\begin{theorem}\label{thm: lower bound}
    Under Assumption \ref{assump: linear kernel mdp}, suppose $d \geq 2$, $\Bstar \geq 2$ and $K > (d-1)^2/2^{12}$. 
    Then for any possibly non-stationary history-dependent policy $\pi$, there exists a linear mixture SSP instance with parameter $\btheta^*$ such that $\EE_{\pi, \btheta^*} \left[ R_K \right] \geq d \Bstar \sqrt{K}/1024$.
\end{theorem}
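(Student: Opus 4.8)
The plan is to build a family of hard instances indexed by a hidden sign vector and reduce the problem to $d-1$ parallel coin-bias estimation problems, then combine the per-coordinate hardness with an Assouad-type averaging. Concretely, I would take $\cS = \{x, g\}$ with $x = \sinit$ the only transient state, unit cost $c(x, a) = 1$ for all $a$, and a large action set $\cA = \{-1, +1\}^{d-1}$. For a hidden parameter $\mu \in \{-1,+1\}^{d-1}$ I set $\btheta^* = (\theta_1, \delta\mu_1, \dots, \delta\mu_{d-1})$ and $\bphi(g \mid x, a) = (p_0/\theta_1,\, a_1, \dots, a_{d-1})$, so that $\PP(g \mid x, a) = p_0 + \delta\sum_{i} a_i \mu_i$ and $\PP(x\mid x,a) = 1 - \PP(g\mid x,a)$, with the goal $g$ absorbing. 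Choosing $p_0 = \Theta(1/\Bstar)$ and $\delta = \Theta\big(1/(\Bstar\sqrt{K})\big)$, the hypothesis $K > (d-1)^2/2^{12}$ guarantees $\delta(d-1) \lesssim 1/\Bstar \asymp p_0$, so every $\PP(g\mid x,a)$ stays in $(0,1)$ and the norm bounds of Assumption~\ref{assump: linear kernel mdp} hold; I would verify these constants explicitly. The optimal action is $a^\star = \mu$, and calibrating $p_0$ so that $p^\star \coloneqq \max_a \PP(g\mid x,a) = 1/\Bstar$ yields $V^\star(x) = 1/p^\star = \Bstar$ exactly.

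Next I would convert regret into counts of coordinate mistakes. Since $x$ is the only transient state, the SSP regret decomposition expresses $R_K$ as the expected sum, over every time step spent at $x$, of the advantage $A(x, a_t) = Q^\star(x, a_t) - V^\star(x) = (\,p^\star - \PP(g\mid x,a_t)\,)\,\Bstar = 2\delta\Bstar\sum_{i} \ind\{a_{t,i}\neq\mu_i\}$, using $1 - a_i\mu_i \in \{0,2\}$. Summing gives $R_K = 2\delta\Bstar\sum_{i=1}^{d-1}\EE_{\pi,\btheta^*}[N_i]$, where $N_i$ is the total number of steps at which the played action disagrees with $\mu$ in coordinate $i$. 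It therefore suffices to show $\sum_i \EE[N_i] = \Omega(d\, \EE[T])$ with $T$ the total number of steps, together with $\EE[T] = \Omega(K\Bstar)$; the latter is immediate because every episode has expected length at least $1/\max_a\PP(g\mid x,a) = \Bstar$ regardless of the policy.

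For the per-coordinate hardness I would run a two-point (Le Cam) argument: compare the environment $\mu$ against $\mu^{\oplus i}$, which flips only coordinate $i$. A single transition from $x$ is a Bernoulli observation whose mean moves by $2\delta$ when coordinate $i$ is flipped, so the per-step KL is $O(\delta^2\Bstar)$, and by the divergence-decomposition (chain rule) over the interaction, $\mathrm{KL}(\PP_\mu \,\|\, \PP_{\mu^{\oplus i}}) = O(\EE[T]\,\delta^2\Bstar) = O(K\Bstar^2\delta^2)$. With $\delta = \Theta(1/(\Bstar\sqrt{K}))$ this KL, hence the total variation distance, is at most a small constant, so Le Cam forces $\EE_\mu[N_i] + \EE_{\mu^{\oplus i}}[N_i] = \Omega(\EE[T]) = \Omega(K\Bstar)$ for every $i$. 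Averaging over $\mu$ uniform and summing over $i$ yields $\tfrac{1}{2^{d-1}}\sum_\mu R_K(\mu) = 2\delta\Bstar\cdot\Omega\big((d-1)K\Bstar\big) = \Omega\big((d-1)\Bstar\sqrt{K}\big)$ after substituting $\delta$, so some instance attains this regret; I would then track constants to reach the stated $d\Bstar\sqrt{K}/1024$.

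The main obstacle I anticipate is the information-theoretic step under a random, policy-dependent horizon. Because a well-performing agent reaches $g$ faster and thus collects fewer samples, the number of informative observations is itself a stopping time correlated with the algorithm's choices, so the KL computation cannot treat the sample size as fixed. I would handle this with the sequential divergence-decomposition lemma applied to the random-length trajectory (bounding the per-step conditional KL and invoking an optional-stopping/Wald-type control of $\EE[T]$), and I would double-check that the lower bound $\EE[T] = \Omega(K\Bstar)$ holds uniformly over history-dependent non-stationary policies, since this is what makes the counts $N_i$ simultaneously large across all coordinates. Verifying the probability-validity and norm constraints of Assumption~\ref{assump: linear kernel mdp} under the chosen $\delta, p_0$ (which is exactly where the hypothesis $K > (d-1)^2/2^{12}$ enters) is the other place demanding care.
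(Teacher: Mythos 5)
Your construction and overall strategy coincide with the paper's: the same two-state instance with hypercube action set $\cA=\{-1,1\}^{d-1}$, escape probability $\delta+\langle \ab,\btheta\rangle$ calibrated so that $V^\star(\sinit)=\Bstar$, the identical reduction of regret to $\frac{2\Delta\Bstar}{d-1}\sum_j\EE[N_j]$ counting per-coordinate sign mistakes, and the same Assouad averaging over sign flips with a per-coordinate Pinsker/KL comparison between $\btheta$ and $\btheta^j$. Your parameter choices ($p_0\asymp 1/\Bstar$, per-coordinate perturbation $\asymp 1/(\Bstar\sqrt{K})$) match the paper's $\delta+\Delta=1/\Bstar$ and $\Delta=(d-1)\sqrt{\delta}/(64\sqrt{K\Bstar})$, and the role of $K>(d-1)^2/2^{12}$ in keeping the transition probabilities valid is correctly identified.

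The one step that, as written, would fail is the Le Cam/Pinsker application. You correctly flag the random horizon as the main obstacle, but your proposed remedy --- divergence decomposition over the random-length trajectory plus ``optional-stopping/Wald-type control of $\EE[T]$'' --- controls the KL, not the statistic. Pinsker gives $|\EE_{\mu}[N_i]-\EE_{\mu^{\oplus i}}[N_i]|\leq \|N_i\|_\infty\sqrt{\mathrm{KL}/2}$, and without truncation $\|N_i\|_\infty=\infty$ (an episode can last arbitrarily long), so the inequality is vacuous no matter how well you bound $\EE[T]$. Moreover the KL bound itself scales with $\EE_\mu[N]$, which is policy-dependent and need not be $O(K\Bstar)$. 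The paper resolves both issues at once with the capping trick of \citet{cohen2020near}: restrict attention to the first $T=2K\Bstar$ steps, which only lowers the regret, makes $N_i^-\leq T$ deterministically bounded so Pinsker applies with $D=T$, caps the KL by $\frac{16\Delta^2}{(d-1)^2\delta}T$, and still guarantees $\EE[N^-]\geq K\Bstar/4$ (their Lemma C.2) so the main term survives. With that substitution your argument goes through and reproduces the paper's bound; without it, the two-point comparison does not close.
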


\begin{remark}
The expectation in the lower bound is over the trajectory induced by policy $\pi$ in the SSP instance parameterized by $\btheta^*$. 
Note that we allow the policy $\pi$ to be non-stationary and history-dependent. 
This is equivalent to assuming a deterministic learning algorithm, which is sufficient for establishing a lower bound \citep{cohen2020near}. 
\end{remark}
\begin{remark}
Our instance for the lower bound can be also adapted to a linear SSP instance \citep{vial2021regret,chen2021improved}, which yields a same $\Omega(d\Bstar\sqrt{K})$ lower bound. (See Remark \ref{rmk:linear_lower_bound} for a detailed discussion.)

\end{remark}

\section{Overview of the Proof Technique}\label{sec: proof sketch}

We explain the proof of the main results in Section \ref{sec: main results}. 
We first present a unifying master theorem that underlies Theorems \ref{thm: regret upper bound with c_min} and \ref{thm: corollary general cost}, and then discuss the technical difficulty and corresponding proof techniques to deal with it.

\subsection{A Master Theorem}
The following master theorem serves as a backbone for proving the regret bound for specific classes of cost functions.

\begin{theorem}\label{thm: regret upper bound with T}
    Under Assumptions \ref{assump: linear kernel mdp} and \ref{assump: existence proper policy}, for any $\delta>0$, let $\rho=0$ and $\beta_t = B \sqrt{d \log \left( 4(t^2+t^3 B^2 /\lambda)/\delta \right)} + \sqrt{\lambda d}$ for all $t\geq 1$, where $B \geq \Bstar$ and $\lambda \geq 1$. 
    Then with probability at least $1-\delta$, the regret of Algorithm \ref{alg: linear kernel ssp} satisfies
    \begin{align*}
        R_K & \leq  6 \beta_T \sqrt{dT \log\left( 1 + \frac{T\Bstar^2}{\lambda}\right)}  + 7d\Bstar \cdot \iota , 
    \end{align*}
   where $T$ is the total number of steps, $\iota = \log( T + \frac{T^2 \Bstar^2 d}{\lambda})$.
\end{theorem}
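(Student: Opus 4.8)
The plan is to pass from the realized per-step cost to the optimistic value functions via an approximate Bellman identity, and then to control the resulting error terms using a self-normalized confidence argument, optimism, and the two doubling criteria. Throughout, let $j(t)$ denote the epoch active at step $t$, let $\cF_t$ be the history up to the choice of $a_t$, and set $\zeta_t \coloneqq V_{j(t)}(s_{t+1}) - \PP V_{j(t)}(s_t,a_t) = V_{j(t)}(s_{t+1}) - \langle\btheta^*,\bphi_{V_{j(t)}}(s_t,a_t)\rangle$, a martingale difference. First I would set up the good events. Applying the self-normalized bound for vector-valued martingales \citep{abbasi2011improved} to the ridge estimates $\hat\btheta_j=\bSigma_{t_j}^{-1}\bbb_{t_j}$ with noise $\zeta_t$ (bounded by $\Bstar$ once optimism gives $0\le V_{j(t)}\le\Bstar$), the stated choice of $\beta_t$ guarantees that on an event $\cE_1$ of probability at least $1-\delta/2$ one has $\btheta^*\in\cC_j$ for every epoch $j$; since $\btheta^*\in\cB$ always, $\btheta^*\in\cC_j\cap\cB\neq\varnothing$, so \texttt{DEVI} runs to completion. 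Optimism then follows structurally: the damped operator $\tilde\cL_j V(s)\coloneqq\min_a\{c(s,a)+(1-q_j)\min_{\btheta\in\cC_j\cap\cB}\langle\btheta,\bphi_V(s,a)\rangle\}$ satisfies $\tilde\cL_j V\le\cL V$ pointwise for all $V\ge 0$; since \texttt{DEVI} starts at $V^{(0)}=0\le V^\star$ and iterates a monotone operator dominated by $\cL$, induction yields $V^{(i)}\le V^\star$, hence $V_j\le V^\star$ \emph{exactly}, independently of the iteration tolerance.

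Next I would derive the decomposition. On $\cE_1$, since $a_t$ is greedy and $V_{j(t)}=\min_a Q_{j(t)}$, the \texttt{DEVI} output gives $c(s_t,a_t)=V_{j(t)}(s_t)-(1-q_{j(t)})\langle\tilde\btheta_t,\bphi_{V_{j(t)}}(s_t,a_t)\rangle+\xi_t$, where $\tilde\btheta_t\in\cC_{j(t)}\cap\cB$ is the inner minimizer and $|\xi_t|\le\epsilon_{j(t)}$ accounts for replacing the penultimate iterate by $V_{j(t)}$ (the probability-distribution property of $\cB$ converts the $\|\cdot\|_\infty$ tolerance into the same bound). Substituting $\langle\btheta^*,\bphi_{V_{j(t)}}(s_t,a_t)\rangle=\PP V_{j(t)}(s_t,a_t)=V_{j(t)}(s_{t+1})-\zeta_t$ and peeling off $q_{j(t)}$ yields
\[
c_t = \big[V_{j(t)}(s_t)-V_{j(t)}(s_{t+1})\big] + \zeta_t + q_{j(t)}\,\PP V_{j(t)}(s_t,a_t) + (1-q_{j(t)})\langle\btheta^*-\tilde\btheta_t,\bphi_{V_{j(t)}}(s_t,a_t)\rangle + \xi_t .
\]
The fourth term is nonnegative by optimality of $\tilde\btheta_t$ and, since $\btheta^*,\tilde\btheta_t\in\cC_{j(t)}$, at most $2\beta_{t_{j(t)}}\|\bphi_{V_{j(t)}}(s_t,a_t)\|_{\bSigma_{t_{j(t)}}^{-1}}$ by Cauchy--Schwarz.

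Summing over $t=1,\dots,T$ and subtracting $K V^\star(\sinit)$, I would bound the five groups as follows. The telescoping term splits into $\sum_t[V_{j(t)}(s_t)-V_{j(t+1)}(s_{t+1})]+\sum_t[V_{j(t+1)}(s_{t+1})-V_{j(t)}(s_{t+1})]$; the first collapses episode-by-episode (using $V(g)=0$) to the sum of $V_{j}(\sinit)$ over episode starts, which is $\le KV^\star(\sinit)$ by optimism, while the second is nonzero only at the $\cO(d\log(1+T\Bstar^2/\lambda))$ epoch boundaries, each contributing at most $\Bstar$ since $0\le V_j\le\Bstar$. The martingale $\sum_t\zeta_t$ is handled by Azuma on a second event $\cE_2$ of probability $1-\delta/2$, giving $\cO(\Bstar\sqrt{T\iota})$, which the bonus dominates. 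The damping bias $\sum_t q_{j(t)}\PP V_{j(t)}$ and the tolerance $\sum_t\xi_t$ are bounded per epoch by $(t_{j+1}-t_j)\Bstar/t_j\le\Bstar$ and $(t_{j+1}-t_j)/t_j\le 1$ respectively, using the time-doubling $t_{j+1}\le 2t_j$, so with the epoch count they total $\cO(d\Bstar\log(\cdots))$. For the bonus, the determinant-doubling criterion gives $\|\bphi\|_{\bSigma_{t_j}^{-1}}\le\sqrt2\,\|\bphi\|_{\bSigma_t^{-1}}$ within an epoch; capping each summand at $\Bstar$ (as $V_{j(t)}\le\Bstar$) and using $\beta_T\ge\Bstar$ to replace norms by $\min(\cdot,1)$, Cauchy--Schwarz over the $T$ steps together with the elliptical potential lemma $\sum_t\min(1,\|\bphi\|_{\bSigma_t^{-1}}^2)\le 2d\log(1+T\Bstar^2/\lambda)$ gives the leading term $6\beta_T\sqrt{dT\log(1+T\Bstar^2/\lambda)}$. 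Collecting the remaining contributions into $7d\Bstar\iota$ yields the claim on $\cE_1\cap\cE_2$.

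The main obstacle is the simultaneous control of the staleness from the lazy update and the bias from the damping. The bonus lives in the frozen metric $\bSigma_{t_j}$ whereas the potential argument is natural in the running metric $\bSigma_t$; the determinant-doubling criterion is precisely what bridges the two, and it must be combined with the cap at $\Bstar$ so that large-norm steps do not escape the elliptical potential bound. Equally delicate is showing that the damping bias and value-iteration tolerance aggregate to only $\cO(\log)$: this is exactly where the time-doubling criterion $t\ge 2t_j$ and the coupled choices $q_j=\epsilon_j=1/t_j$ are essential, making the per-epoch contributions constant while keeping the epoch count logarithmic. Finally, securing \emph{exact} optimism $V_j\le V^\star$ in the absence of an intrinsic discount, through the domination $\tilde\cL_j\le\cL$ and monotone iteration from $0$, is the conceptual crux that keeps the iteration tolerance out of the leading order.
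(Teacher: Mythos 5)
Your proposal follows essentially the same route as the paper: optimism of the damped value iteration established by induction inside \texttt{DEVI}, a per-step Bellman-error decomposition into a telescoping term, a martingale term, the damping/tolerance bias, and a confidence-width term, with the determinant-doubling criterion bridging the frozen and running metrics for the elliptical potential lemma and the time-doubling criterion making the per-epoch bias contributions constant. The only organizational difference is that you telescope step-by-step while the paper introduces an explicit interval decomposition (Lemma \ref{lem: regret decomposition}); these are equivalent.

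One step as written is circular and needs the paper's device to repair it. You invoke the self-normalized concentration bound with noise "bounded by $\Bstar$ once optimism gives $0\le V_{j(t)}\le\Bstar$" to obtain the event $\cE_1$ on which \emph{all} confidence sets are valid, and then derive optimism from $\cE_1$. But the estimator $\hat\btheta_j$ is built from data generated using $V_0,\dots,V_{j-1}$, so the sub-Gaussianity of the regression noise at epoch $j$ presupposes optimism of the earlier iterates, which in turn presupposes validity of the earlier confidence sets. The paper resolves this in the proof of Lemma \ref{lem: confidence set optimism} by an induction over epochs, applying the concentration inequality to an auxiliary truncated sequence $\tilde V_i=\min\{\Bstar,V_i\}$ (which is almost surely bounded) and observing that on the inductive good event $\tilde V_i=V_i$, so the truncated and actual confidence sets coincide; a union bound over the at most $J$ epochs (with the $t_j(t_j+1)$ weights, which is where the extra $t$ factors in $\beta_t$ come from) completes the argument. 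The same truncation is needed for your Azuma step on $\sum_t\zeta_t$, since $V_{j(t)}$ is only bounded by $\Bstar$ on the good event rather than almost surely (Lemma \ref{lem: E2 bound}). With these two repairs your argument goes through and recovers the stated constants.
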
 

Theorem \ref{thm: regret upper bound with T} gives an $\tilde \cO(\sqrt{T})$ regret upper bound with respect to the total number of steps $T$.
However, for SSP problems, the horizon of each episode is unknown and $T$ can be far greater than $K$, so we need to translate the dependence on $T$ to $K$. 
For positive cost functions, this can be achieved by using Assumption \ref{assump: c_min}, which imposes an implicit constraint that the total cost of an episode is bounded by $c_{\min}$ times the length of the episode.
On the other hand, for general cost functions, it suffices to pick $\rho = K^{-1/3}$ and we  can then apply the same reasoning as in the previous case.
Please see Appendix \ref{sec:proof_specific_cost} for detailed calculations.

We discuss the proof of this master theorem below, which is one of our main technical contributions.

\subsection{Proof Sketch of Theorem \ref{thm: regret upper bound with T}}

\textbf{Regret Decomposition}
In our analysis, instead of dealing with the regret in \eqref{eq: regret def} directly, we first implicitly group the time steps into intervals and index them by $m=1,\ldots,M$ as in Lemma \ref{lem: regret decomposition}. The basic idea here is to group all the time steps into disjoint intervals of which the end points are either the end of an episode or the time steps when the \texttt{DEVI} subroutine is called\footnote{The interval decomposition is implicit and 
only for the purpose of analysis. This is in contrast to the epoch decomposition, which is explicit and indexed by $j$ in Algorithm \ref{alg: linear kernel ssp}. The difference is that an epoch ends when \texttt{DEVI} is called, while an interval ends when either \texttt{DEVI} is called or the goal state $g$ is reached (i.e., an episode ends). }. The purpose of such a decomposition is to guarantee that within each interval the optimistic action-value function  remains the same and so the induced policy. 
This has been used in several existing works on SSP \citep{rosenberg2020near,rosenberg2020stochastic,tarbouriech2021stochastic} as well. 


    

\begin{lemma}
\label{lem: regret decomposition}
Assume the event in Lemma \ref{lem: confidence set optimism} holds, then the following holds for the regret defined in~\eqref{eq: regret def}\footnote{$R(M)$ is the same as $R_K$. We use a different notation to emphasize the interval decomposition.}: 
\begin{align}\label{eq: regret decomposition}
    & R(M) \notag
    \\ &\leq \underbrace{\textstyle{\sum_{m=1}^M \sum_{h=1}^{H_m}} \left[ c_{m,h} + \PP V_{j_m}(s_{m,h},a_{m,h}) - V_{j_m} (s_{m,h}) \right]}_{E_1}  \notag 
    \\ & \quad \ + \underbrace{\textstyle{\sum_{m=1}^M \sum_{h=1}^{H_m}} \left[ V_{j_m}(s_{m,h+1}) - \PP V_{j_m} (s_{m,h},a_{m,h}) \right]}_{E_2} \notag
    \\ & \quad \ + \textstyle{\big[ \sum_{m=1}^M\big( \sum_{h=1}^{H_m} V_{j_m}(s_{m,h}) - V_{j_m}(s_{m,h+1}) \big)} \notag
    \\ & \qquad \ \underbrace{ \qquad   -\textstyle{\sum_{m \in \cM(M)} V_{j_m}(\sinit) \big]} \qquad \qquad \quad }_{E_3} + 1,
\end{align} 
where $j_m=j$ is the index of the value function estimate $V_j$ used in the $m$-th interval\footnote{This is well-defined since the same $V_j$ is used at all time steps within one interval.} and $H_m$ is the length of the $m$-th interval.
\end{lemma}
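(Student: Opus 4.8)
The plan is to establish \eqref{eq: regret decomposition} as a purely algebraic identity combined with a single application of optimism at the initial state. First I would re-index the total realized cost over the interval decomposition: since each interval lies entirely within one episode and the intervals partition all the time steps of the $K$ episodes, we have $\sum_{k=1}^K\sum_{i=1}^{I_k} c_{k,i} = \sum_{m=1}^M \sum_{h=1}^{H_m} c_{m,h}$ exactly. This rewrites the first term of $R_K$ without loss, and reduces the problem to relating $\sum_{m,h} c_{m,h}$ and $-K V^\star(\sinit)$ to the three groups on the right-hand side.

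Next I would insert telescoping ``ghost'' terms: for every time step in interval $m$ I add and subtract both $\PP V_{j_m}(s_{m,h},a_{m,h})$ and $V_{j_m}(s_{m,h})$, splitting $\sum_{m,h} c_{m,h}$ into exactly the group $E_1$, the group $E_2$, and the residual $\sum_{m,h}\big[V_{j_m}(s_{m,h}) - V_{j_m}(s_{m,h+1})\big]$. The residual is precisely the first half of $E_3$ before the $-\sum_{m\in\cM(M)} V_{j_m}(\sinit)$ correction. It does \emph{not} collapse to a clean global telescope because consecutive intervals within one episode use different value-function estimates $V_{j_m}\neq V_{j_{m+1}}$; this is exactly why $E_3$ must be retained as a standalone term (to be bounded later via the value-function drift across \texttt{DEVI} calls and the discount bias). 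Up to this point everything is an identity,
$$\sum_{m,h} c_{m,h} = E_1 + E_2 + \sum_{m}\big[V_{j_m}(s_{m,1}) - V_{j_m}(s_{m,H_m+1})\big].$$

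The only inequality enters through optimism. Let $\cM(M)$ be the set of intervals that begin a new episode, so that $s_{m,1}=\sinit$ for $m\in\cM(M)$ and $|\cM(M)|=K$ (each of the $K$ episodes contributes exactly one such interval, and intervals ending at the goal satisfy $V_{j_m}(g)=0$). For every episode-initial interval with $j_m\geq 1$, Lemma \ref{lem: confidence set optimism} gives $V_{j_m}(\sinit)\leq V^\star(\sinit)$; summing and negating yields $-K V^\star(\sinit) \leq -\sum_{m\in\cM(M)} V_{j_m}(\sinit)$, and folding $-\sum_{m\in\cM(M)}V_{j_m}(\sinit)$ into the residual produces exactly $E_3$. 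Combined with the identity above this gives $R(M)\leq E_1+E_2+E_3$ for the portion governed by optimism.

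The one point requiring care — and the source of the additive ``$+1$'' — is the very first interval of the first episode, during which the trivial initialization $V_0\equiv 1$ (on non-goal states) is still in force before any call to \texttt{DEVI}. Optimism cannot be invoked for $V_0$, so for this single interval I would instead use the crude bound $V_0(\sinit)=1 \leq V^\star(\sinit)+1$, valid since $V^\star(\sinit)\geq 0$. This replaces one optimistic term by its trivial estimate and contributes the extra $+1$, while the remaining $K-1$ episode-initial intervals use $V_{j_m}$ with $j_m\geq 1$ and are covered by optimism. Assembling the pieces yields $R(M)\leq E_1+E_2+E_3+1$. The main obstacle is thus not analytic but careful bookkeeping: matching interval boundaries correctly (tracking which intervals start an episode, which terminate at the goal, and which are cut short by a \texttt{DEVI} trigger), verifying $|\cM(M)|=K$, and isolating the single $V_0$ interval so that the optimism bound is applied only where it is valid.
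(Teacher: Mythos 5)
Your proposal is correct and follows essentially the same route as the paper's proof: an exact telescoping identity over the interval decomposition yielding $E_1+E_2$ plus the within-interval residual, followed by optimism $V_{j_m}(\sinit)\leq V^\star(\sinit)$ at the $K$ episode-initial intervals with $j_m\geq 1$, and the crude bound $V_0(\sinit)=1\leq V^\star(\sinit)+1$ for the single initial interval governed by the non-optimistic initialization, which produces the additive $+1$. The only cosmetic difference is the order of operations (the paper applies the optimism inequality first and then expands algebraically, whereas you expand first), which changes nothing.
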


\textbf{Bounding $E_1$.} 
Controlling term $E_1$ is the essential and most difficult part.
Roughly speaking, $E_1$ is the accumulated Bellman error of the \texttt{DEVI} outputs $Q_j(\cdot,\cdot)$ on the sample state-action trajectory. 
The ordinary method is to bound the sum of width of the confidence regions $C_j$. 
However, we face unique challenges. 
Recall from \eqref{eq: EVI value iteration} that our value iteration sub-routine \texttt{DEVI} includes a transition bonus $q$ for the sake of convergence. 
Such bonus leads to a biased Bellman operator, and the biases would accumulate over time and become an additive term in the regret bound of the order $\cO(\sum_{m} q_{j_m} H_m)$. 
To bound the bias, we need to bound $H_m$ and choose an appropriate $q_{j_m}$. 

For $H_m$, we have $H_m = \cO( t_{j_{m+1}} - t_{j_m})$, where $t_{j_{m+1}} - t_{j_m}$ is the length of epoch $j$, i.e., the number of time steps between two consecutive \texttt{DEVI} calls. 
As mentioned in Section~\ref{sec: updating criteria in algorithm}, while the classical determinant-based criterion alone cannot guarantee finite epoch length, the additional time step doubling criterion fixes this issue by enforcing $t_{j_{m+1}} \leq 2 t_{j_m}$. 
Furthermore, by picking each $q_j = \frac{1}{t_j}$, the total bias can be upper bounded as 
\begin{align*}
    \textstyle{\cO\left( \sum_{j=1}^J t_j^{-1} \cdot ( 2 t_j - t_j) \right) = \cO(J)}.
\end{align*}
In this way, it suffices to bound the total number of calls to \texttt{DEVI} to bound the accumulative bias. 

For the rest part of $E_1$, we can show that every time  when \texttt{DEVI} is called, the output is an optimistic action-value function estimator with high probability (by Lemma \ref{lem: confidence set optimism}). 
Finally, we need to bound the total difference between the estimated functions and the optimal action-value function. 
This follows from the elliptical potential lemma and the determinant-based doubling criterion. 
The details of bounding $E_1$ is deferred to Appendix \ref{sec: E1 bound}. 

\textbf{Bounding $E_2$ and $E_3$.} Since $E_2$ is the sum of a martingale difference sequence, it can be bounded by $\cO(\sqrt{T \log(T/\delta)})$ using standard martingale concentration inequality. 

The term $E_3$ can be transformed into a telescoping sum under the interval decomposition. After the transformation, it can be shown that only those terms with $j_m \neq j_{m+1}$ would contribute to the final regret. Since the number of intervals with $j_m \neq j_{m+1}$ are at most $\cO(J)$, the problem again reduces to bounding the number of \texttt{DEVI} calls.

\textbf{Analysis of \textnormal{\texttt{DEVI}}.} By the algorithmic design we elaborated in Section \ref{sec: algorithm}, \texttt{DEVI} guarantees optimism and finite-time convergence, as summarized in Lemma~\ref{lem: confidence set optimism} below. 

\begin{lemma}\label{lem: confidence set optimism}
For all $t\geq 1$, let $\rho=0$ and $\beta_t = B \sqrt{d \log \left( 4(t^2+t^3 B^2 /\lambda)/\delta \right)} + \sqrt{\lambda d}$, where $B \geq \Bstar$. Then with probability at least $1-\delta/2$, for all $j \geq 1$, \texttt{DEVI} converges in finite time, and it holds that $\btheta^* \in \cC_j \cap \cB, 0 \leq  Q_j(\cdot,\cdot) \leq Q^\star(\cdot,\cdot)$, and $0 \leq  V_j(\cdot) \leq V^\star(\cdot)$.
\end{lemma}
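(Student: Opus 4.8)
The plan is to prove the three assertions—parameter membership $\btheta^*\in\cC_j\cap\cB$, finite-time convergence of \texttt{DEVI}, and two-sided optimism $0\le Q_j\le Q^\star$, $0\le V_j\le V^\star$—in that order, all on a single high-probability concentration event and tied together by an induction on the epoch index $j$. For the membership claim I would apply the self-normalized concentration inequality for vector-valued martingales \citep{abbasi2011improved} to the noise sequence $\eta_t \coloneqq V_{j(t)}(s_{t+1}) - \PP V_{j(t)}(s_t,a_t)$, where $j(t)$ denotes the epoch active at step $t$. This sequence is a martingale difference with respect to $\cF_t=\sigma(s_1,a_1,\dots,s_t,a_t)$: both $V_{j(t)}$ and $\bphi_{V_{j(t)}}(s_t,a_t)$ are $\cF_t$-measurable (the current value function was produced at the previous \texttt{DEVI} call), so $\EE[\eta_t\mid\cF_t]=0$. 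Provided $\|V_{j(t)}\|_\infty\le B$, the noise is $B$-sub-Gaussian, and combining the self-normalized bound with the determinant bound obtained from $\|\bphi_{V_j}\|_2\le B\sqrt d$ (Assumption \ref{assump: linear kernel mdp}) reproduces exactly the stated radius $\beta_t$, giving $\|\hat\btheta_j-\btheta^*\|_{\bSigma_{t_j}}\le\beta_{t_j}$ for all $j$ with probability at least $1-\delta/2$, i.e.\ $\btheta^*\in\cC_j$. Since the true transition $\langle\bphi(\cdot|s,a),\btheta^*\rangle$ is a genuine distribution and $g$ is absorbing, $\btheta^*\in\cB$, hence $\btheta^*\in\cC_j\cap\cB$.

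For convergence I would show that the damped update operator $\tilde\cL V(s)=\min_{a}\{c_\rho(s,a)+(1-q)\min_{\btheta\in\cC\cap\cB}\langle\btheta,\bphi_V(s,a)\rangle\}$ is a $(1-q)$-contraction in $\|\cdot\|_\infty$. The key observation is that for $\btheta\in\cB$ the quantity $\langle\btheta,\bphi_V(s,a)\rangle=\sum_{s'}\langle\bphi(s'|s,a),\btheta\rangle V(s')$ is an expectation of $V$ under a valid probability distribution, hence non-expansive in $V$; the inner $\min_{\btheta}$ and the outer $\min_a$ both preserve non-expansiveness, and the factor $1-q<1$ supplies the contraction modulus. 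On the good event $\cC_j\cap\cB\ne\varnothing$ (it contains $\btheta^*$), so the conditional branch is entered, and Banach's fixed-point theorem yields a unique fixed point together with geometric convergence $\|V^{(i)}-V^{(i-1)}\|_\infty\le(1-q)^{i-1}\|V^{(1)}\|_\infty$; thus the stopping test $\|V^{(i)}-V^{(i-1)}\|_\infty<\epsilon_j$ triggers after $\cO(q^{-1}\log(1/\epsilon_j))$ iterations.

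For optimism, the lower bounds $Q_j,V_j\ge0$ are immediate because $c_\rho\ge0$ and $\min_{\btheta}\langle\btheta,\bphi_V\rangle\ge0$ whenever $\btheta\in\cB$ and $V\ge0$. For the upper bound I would use monotonicity of $\tilde\cL$ together with $\tilde\cL V^\star\le V^\star$: on the good event $\btheta^*\in\cC_j\cap\cB$ gives $\min_{\btheta}\langle\btheta,\bphi_{V^\star}\rangle\le\langle\btheta^*,\bphi_{V^\star}\rangle=\PP V^\star$, whence $\tilde\cL V^\star\le\min_a\{c+(1-q)\PP V^\star\}\le\min_a\{c+\PP V^\star\}=\cL V^\star=V^\star$, using $\PP V^\star\ge0$ and the Bellman optimality equation \eqref{eq: bellman optimal condition}. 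Starting \texttt{DEVI} from $V^{(0)}=0\le V^\star$ and iterating the monotone map keeps every iterate $\le V^\star$, so the returned $V_j\le V^\star$; the analogous chain with the positivity of $\PP$ gives $Q_j\le Q^\star$.

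The hard part, and what the induction is designed to handle, is the circular dependence between concentration and boundedness: the concentration step requires $\|V_j\|_\infty\le B$ to make $\eta_t$ sub-Gaussian, yet $\|V_j\|_\infty\le B$ follows only from optimism, which itself presupposes $\btheta^*\in\cC_j$. I would break this by an induction on $j$ carried out entirely on the $(1-\delta/2)$ concentration event, exploiting that $V_j$ is constant within epoch $j$: the inductive hypothesis $\|V_{j'}\|_\infty\le B$ for all $j'<j$ supplies exactly the noise bound needed to validate the self-normalized inequality up to time $t_j$, which yields $\btheta^*\in\cC_j\cap\cB$; optimism then gives $V_j\le V^\star\le\Bstar\le B$, closing the step. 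The base case is $V_0=\ind\{\cdot\ne g\}\le B$, and unconditionally $V_j$ is finite by the contraction argument, so no iterate escapes to infinity before the bound is established.
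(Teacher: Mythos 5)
Your overall architecture matches the paper's: self-normalized concentration for $\btheta^*\in\cC_j$, a $(1-q)$-contraction argument for finite-time convergence, a monotone iteration starting from $V^{(0)}=0$ combined with $\btheta^*\in\cC_j\cap\cB$ for optimism, and an outer induction on the epoch index to tie boundedness and concentration together. The convergence and optimism steps are correct as written and essentially identical to the paper's.

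The gap is in the step you yourself flag as the hard part. You propose to break the circularity by "an induction on $j$ carried out entirely on the $(1-\delta/2)$ concentration event," using the inductive hypothesis $\|V_{j'}\|_\infty\le B$ for $j'<j$ to "validate the self-normalized inequality up to time $t_j$." This does not work as stated: the self-normalized martingale inequality (Theorem 1 of Abbasi-Yadkori et al.) requires $\eta_t\mid\cF_{t-1}$ to be $B$-sub-Gaussian \emph{almost surely}, as a hypothesis, before any probability can be assigned to the concentration event. Since $\|V_{j'}\|_\infty\le B$ holds only on a random event (the very event whose probability you are trying to lower-bound), you cannot invoke the inequality conditionally on that event — conditioning destroys the martingale structure and the sub-Gaussian premise is simply not satisfied on the whole space. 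Finiteness of each $V_j$ (which you note follows from the contraction) is not enough; you need the specific a.s.\ bound $B$ to get sub-Gaussianity with parameter $B$. The paper resolves this with a truncation device: it defines $\tilde V_i\coloneqq\min\{\Bstar,V_i\}$ and the corresponding truncated noise, Gram matrix, estimator and confidence set $\tilde\cC_j$, which are a.s.\ bounded so the concentration lemma applies \emph{unconditionally} with probability $1-\delta/(t_j(t_j+1))$ per epoch; it then observes that on the inductive good event $\cE_{j-1}$ the truncated and untruncated objects coincide, so $\cE_j=\cE_{j-1}\cap\tilde\cE_j$, and a union bound $\sum_j\delta/(t_j(t_j+1))\le\delta$ over epochs (which is also where the extra $t^2,t^3$ factors inside the logarithm of $\beta_t$ come from) completes the argument. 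Your proof needs this truncation (or an equivalent device such as multiplying the noise by the indicator of the good event) to be rigorous.
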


Note that in Lemma \ref{lem: confidence set optimism} the optimism only holds for the \texttt{DEVI} output, i.e., $V_j$ for any $j\geq 1$. The initialization $V_0$ in Line \ref{algline: initialization} of the main Algorithm \ref{alg: linear kernel ssp} is not necessarily satisfy optimistic since it is possible that $V^\star(s)< 1$ for some $s$. 
Still, such an initialization satisfies  $\|V_0\|_\infty = 1 \leq \Bstar$, which is sufficient to establish the optimism for $j\geq 1$. 

\section{Conclusions}
In this paper, we propose a novel algorithm for linear mixture SSP and prove its regret upper and lower bounds. 
For future work, there are several important directions. 
First, there is a $\Bstar^{0.5}$ gap between the current upper and lower bounds. 
Second, it remains open to prove an $\tilde{\cO}(\sqrt{K})$ regret bound for linear mixture SSP for general cost functions.

\section*{Acknowledgements}
We thank the anonymous reviewers for their helpful comments. 
JH and QG are partially supported by the National Science Foundation CAREER Award 1906169. The views and conclusions contained in this paper are those of the authors and should not be interpreted as representing any funding agencies.

\bibliography{reference}
\bibliographystyle{icml2022}

\newpage
\appendix
\onecolumn


\section{Additional Discussions}
\subsection{Discussion on the Linear Mixture MDPS}
\label{subsec:linear mixture ssp}
The linear mixture MDP \citep{modi2020sample,ayoub2020model,zhou2021provably} is a commonly considered model for linear function approximation, where one assumes the transition probability function $\PP$ to be a linear mixture of some basis kernels. The linear mixture MDP covers several important MDP models studied in the literature. We briefly discuss them here. 




\begin{example}[Tabular MDPs]\label{exp: tabular}
For a tabular MDP $M(\cS, \cA, \gamma, r, \PP)$ with $|\cS|, |\cA| \leq \infty$, the transition probability kernel can be represented by $|\cS|^2|\cA|$ \emph{unknown} parameters. 
The tabular MDP is a special case of linear mixture MDPs with the  feature mapping $\bphi(s'|s,a) = \eb_{(s,a,s')} \in \RR^d$ and parameter vector $\btheta = [\PP(s'|s,a)]\in\RR^d$, where $d = |\cS|^2|\cA|$ and $\eb_{(s,a,s')}$ denotes the corresponding natural basis in the $d$-dimensional Euclidean space.
\end{example}

\begin{example}[Linear combination of base models, \citealt{modi2020sample}]\label{example:basemodel}
For an MDP $M(\cS, \cA, \gamma, r, \PP)$, suppose there exist $m$ base transition probability kernels $\{p_i(s'|s,a)\}_{i=1}^{m}$, a feature mapping $\bpsi(s,a): \cS \times \cA \rightarrow \Delta^{d'}$ where $\Delta^{d'}$ is a $(d'-1)$-dimensional simplex, and an \emph{unknown} matrix $\Wb \in \RR^{m \times d'} \in [0,1]^{m\times d'}$ such that $\PP(s'|s,a) = \sum_{k=1}^{m} [\Wb\bpsi(s,a)]_k p_k(s'|s,a)$. Then it is a special case of linear mixture MDPs with feature mapping $\bphi(s'|s,a) = \text{vec}(\pb(s'|s,a)\bpsi(s,a)^\top)\in \RR^d$ and parameter vector $\btheta = \text{vec}(\Wb)\in \RR^d$ where $d = md'$, $\text{vec}(\cdot)$ is the vectorization operator, and $\pb(s'|s,a) = [p_k(s'|s,a)] \in \RR^{m}$. 
\end{example}

\begin{example}[linear-factored MDP, \citealt{yang2019sample}] 
For an MDP $M(\cS, \cA, \gamma, r, \PP)$, suppose that there exist feature mappings $\bpsi_1(s,a): \cS \times \cA \rightarrow \RR^{d_1}$ satisfying $\|\bpsi_1(s,a)\|_2 \leq \sqrt{d_1}$, $\bpsi_2(s'): \cS \rightarrow \RR$ satisfying for any $V: \cS \rightarrow [0, R]$, $\|\sum_s V(s) \bpsi_2(s)\|_2 \leq R$ and an \emph{unknown} matrix $\Mb \in \RR^{d_1 \times d_2}$ satisfying $\|\Mb\|_F \leq \sqrt{d_1}$ such that $\PP(s'|s,a) = \bpsi_1(s,a)^\top \Mb\bpsi_2(s')$. Then it is a special case of linear mixture MDPs with  feature mapping $\bphi(s'|s,a) = \text{vec}\big(\bpsi_2(s')\bpsi_1(s,a)^\top\big) \in \RR^d$ and parameter vector $\btheta = \text{vec}(\Mb) \in \RR^d$, where $d = d_1 d_2$.
\end{example}

For more discussions, please refer to, for example, Section 2 in \citet{ayoub2020model}, or Section 3 in \citet{zhou2021provably}.

\subsection{Computational Complexity of \texttt{DEVI}}\label{sec: discuss DEVI computation}

In \texttt{DEVI}, we need to solve a sequence of optimization problems as given by \eqref{eq: EVI value iteration} and \eqref{eq: EVI V is min Q}. The computational complexity of solving \eqref{eq: EVI value iteration} dominates that of \eqref{eq: EVI V is min Q}. Fortunately, the objective function in \eqref{eq: EVI value iteration} is convex and the constraint set $\cC\cap\cB$ is a convex set, so we can use projected gradient descent~\citep{boyd2004convex} to solve it.

\texttt{DEVI} involves a number of iterations (i.e., the while-loop). The total number of iterations is $\cO(t\log t)$, where $t$ is the time step at which the \texttt{DEVI} is called. To see this, note that according to the stopping criterion $\|V^{(i)} -V^{(i-1)}\|_\infty \geq \epsilon$ and the update rule \eqref{eq: EVI value iteration}, we solve for $n$ such that $(1-q)^n \leq \epsilon$. Then since $\epsilon = 1/t$ and $q = 1/t$, we have $n = \cO(t\log t)$. 

The cost of each \texttt{DEVI} iteration is $\cO(dRB|\cA|)$, where $d$ is the feature dimension, $R$ is the computational cost for calculating $\bphi_V$, and $B$ is the cost for solving an optimization problem of the form $\min_{\btheta\in\cC\cap\cB}\langle \bphi_V, \xb\rangle$.
The optimization problem can be solved by using project gradient descent, as mentioned above. 
To calculate $\bphi_V$, \citet{zhou2021provably} proposed using Monte Carlo sampling to estimate $\bphi_V$ with high accuracy, which turns out to be quite efficient. We refer interested readers to Appendix~B of \citet{zhou2021provably} for more detail.

\subsection{Disscussion on Future Directions}

There are many promising future directions for the SSP problem. We discuss a few. 

First, our current algorithm design is based on the principle of Optimism-in-Face-of-Uncertainty (OFU).  
It is possible to develop Thompson Sampling (TS) type algorithms by following the well-known \texttt{PSRL} algorithm~\citep{osband2013more}.
\citet{jafarnia2021online} studied this topic under the tabular case and gave an $\tilde{\cO}(\Bstar S\sqrt{AK})$ regret bound.
The empirical advantage of TS-type algorithms in this setting is still under-studied. In the bandit setting, \citet{chapelle2011empirical} showed that TS is more robust to delayed feedback and can outperform OFU-type algorithms. In the reinforcement learning setting, such a comprehensive empirical evaluation does not exist. We believe it is an important future research direction. 

Furthermore, most existing works on SSP focus on the online episodic setting. The recent work by \citet{yin2022offline} gives the first theoretical result for SSP in the offline setting. They studied both the policy evaluation problem and the policy learning problem.
While they considered the tabular case, it is natural to extend offline SSP to the linear and general function approximation cases. 


Also, while existing SSP works all focus on the regret minimization setting, it would be very meaningful if we can incorporate other useful factors into this framework. 
One example is the risk-sensitive RL, where the goal is to consider certain risk of the decisions made and minimize the regret simultaneously \citep{fei2020risk, fei2021exponential,jaimungal2022robust, fei2022cascaded,greenberg2022efficient}. This problem setting is very helpful for applications where the risk is a crucial factor, such as finance and AI medicine. 
Another example is the corruption-robust RL, where the goal is to find efficient algorithms in an environment with corrupted reward signals and transitions (e.g., rewards and transitions are picked by an adversary) \citep{lykouris2021corruption,he2022nearly,zhang2022corruption}. 
This setting is very useful because data
corruption is a big threat against the security of
many ML systems.

\section{Numerical Simulations}\label{sec: experiment}

\begin{figure}[ht!]
	\centering
	\begin{subfigure}{0.41\textwidth}
		\centering
		\includegraphics[width=\linewidth]{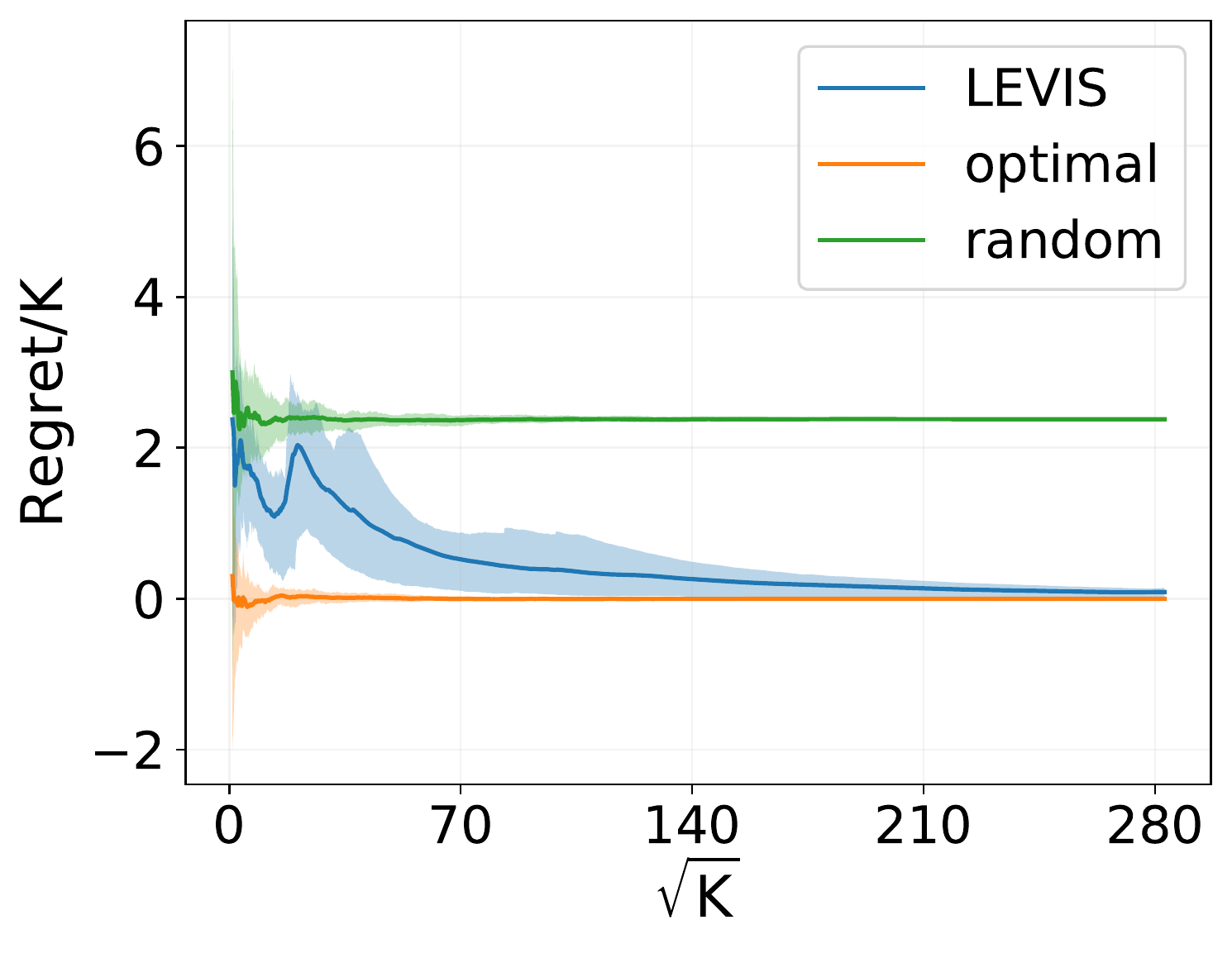}
		\caption{Average regret versus $\sqrt{K}$.}
		\label{fig: sqrt regret 5 3}
	\end{subfigure}
	\begin{subfigure}{0.43\textwidth}
		\centering
		\includegraphics[width=\linewidth]{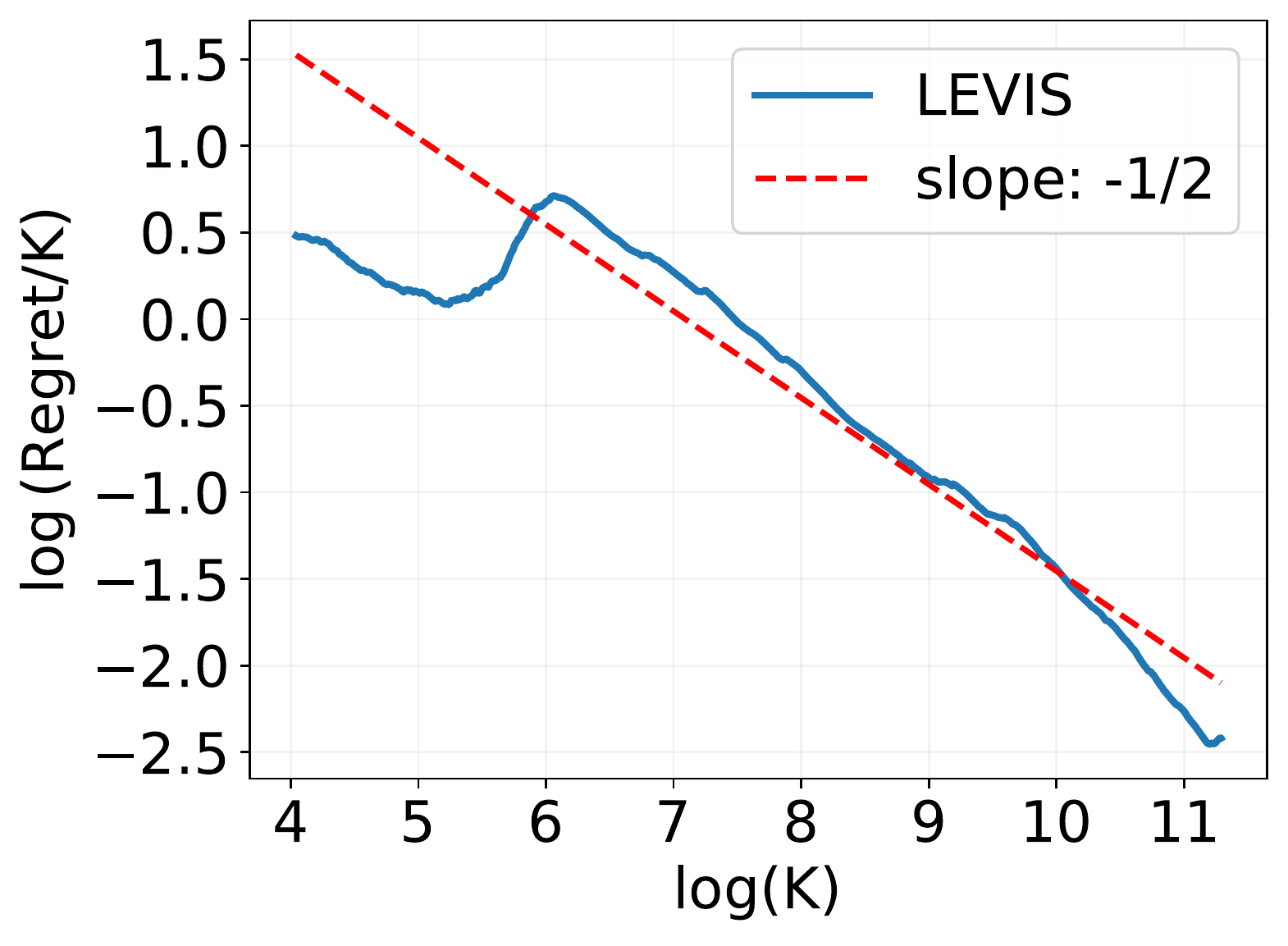}
		\caption{Log-log plot of average regret versus $K$. }
		\label{fig: log log}
	\end{subfigure}
	\caption{The left plot shows the average regret (i.e. $R_K/K$) of implementing Algorithm \ref{alg: linear kernel ssp} on the SSP instance described in Appendix \ref{sec: experiment} with $\lambda=1$, $\rho = 0$ and failing probability $0.01$. The curve is the average of 40 trials. Colored areas indicate empirical [10\%,90\%] confidence intervals. The right plot is the log-log plot of $R_K/K$ and $K$. The red dotted line has a slope equal to $-1/2$. It is clear that the curve has a slope very close to $-1/2$.}\label{fig: regret}
\end{figure}


In this section, we present some results from numerical simulations, which corroborate our theory. 
We construct an SSP instance based on the example used in the proof of the lower bound. Specifically, we have the action space $\cA = \{-1,1\}^{d-1}$ with $|\cA| = 2^{d-1}$. The state space is $\cS = \{\sinit, g\}$. We choose $\delta, \Delta$ and $\Bstar$ such that $\delta + \Delta = 1/\Bstar$ and $\delta > \Delta$. The true model parameter $\btheta^*$ is given by 
\begin{align*}
    \btheta^* = \left[ \frac{\Delta}{d-1},\cdots, \frac{\Delta}{d-1},1 \right]^\top \in \RR^d. 
\end{align*} The feature mapping is defined as 
\begin{align*}
    \bphi(\sinit|\sinit,\ab) & = [-\ab,1-\delta]^\top,
    \\ \bphi(\sinit|g,\ab) & = \mathbf{0},
    \\ \bphi(g|\sinit,\ab) & = [\ab,\delta]^\top,
    \\ \bphi(g|g,\ab) & = [\mathbf{0}_{d-1},1]^\top.
\end{align*}
Here we use $\ab$ instead of $a$ to emphasize that the action is vector-valued. One can verify that this is indeed a linear mixture SSP with the following transition function:
\begin{align*}
    \PP(\sinit|\sinit,\ab) & = 1-\delta - \langle \ab,\btheta \rangle, 
    \\ \PP(g|\sinit,\ab) & = \delta + \langle \ab,\btheta \rangle , 
    \\ \PP(g|g,\ab) & = 1,
    \\ \PP(\sinit|g,\ab) & = 0,
\end{align*}for all $\ab \in \cA$. For more details about this SSP instance, please refer to Appendix \ref{sec: lower bound details}. Note that this is a very hard SSP instance since it is difficult to distinguish between different actions, as we  will later show in the proof of the lower bound. 

The experimental results are shown in Fig. \ref{fig: regret}. We compare the performance of \texttt{LEVIS} with that of the optimal policy and the random policy. Here the optimal policy always chooses $\ab = \mathbf{1}_{d-1}$ to maximize the probability of reaching $g$ from $\sinit$ by the construction of the SSP, and the random policy picks $\ab \in \cA$ uniformly at random. 
We set $d=5$ and $\Bstar = 3$ in the simulation.
In Fig. \ref{fig: sqrt regret 5 3}, we plot the average regret $R_K/K$ versus $\sqrt{K}$. It is evident that \texttt{LEVIS} has a sublinear regret, as opposed to the linear regret of the random policy. 
To further verify that the cumulative regret $R_K$ indeed grows at an $\tilde\cO(\sqrt{K})$ rate, in Fig. \ref{fig: log log}, we make the log-log plot of $R_K/K$ and $K$. 
The red dotted line has a slope equal to $-1/2$. 
We see from Fig.~\ref{fig: log log} that the slope of the curve is very close to $-1/2$. 
This verifies the $\tilde\cO(\sqrt{K})$ regret of \texttt{LEVIS}. These results corroborate our theoretical findings.

\section{Proof of Regret Decomposition}\label{sec: proof o fregret decomposition}

In this section, we prove the regret decomposition given by Lemma \ref{lem: regret decomposition}. 

\begin{proof}[Proof of Lemma \ref{lem: regret decomposition}]

We first explain the details of the interval decomposition. The first interval begin at $t=1$, and an interval ends once either one of the two conditions is met: (1) the \texttt{DEVI} sub-routine is triggered (i.e., either the determinant of the covariance matrix or the time index is doubled); (2) the goal state $g$ is reached, i.e., the current episode ends. We remark that this interval decomposition is only implicit since it is not implemented by the algorithm explicitly. Note that by the two conditions described above, each interval has bounded length almost surely. Indeed, even if the goal state is never reached or the determinant is never doubled due to $\bphi_V$ having small norm, the time step only requires 
the number of iterations to be doubled. 

We index the intervals by $m = 1,2,\cdots$, and denote by $M$ as the total number of intervals, which is possibly infinite. The length of the $m$-th interval is denoted by $H_m$. With a slight abuse of notation, we denote the trajectory for the $m$-th interval as $(s_{m,1},a_{m,1}, \cdots, s_{m,H_m},a_{m,H_m}, s_{m,H_m+1})$, where we have $s_{m,H_m+1} = g$ if interval $m$ ends with condition (2) being met, and $s_{m,H_m+1} = s_{m+1,1}$ otherwise. We denote by $\cM(M)\subseteq[M]$ the set of intervals which are the first interval of their corresponding episodes. We define the mapping $j_m$, such that for each $m\in[M]$, $j_m = j$ is the index of the value function estimate $V_j$ used in the $m$-th interval.

Now let's see how the regret can be expressed under the interval decomposition introduced above. 
The regret can be written as
\begin{align}\label{eq: proof of regret decomposition}
    R(M) &\leq \sum_{m=1}^M \sum_{h=1}^{H_m} c_{m,h} - \sum_{m\in\cM(M)} V_{j_m}(\sinit) + 1 \notag\\ 
    & = \sum_{m=1}^M \sum_{h=1}^{H_m} c_{m,h} + \sum_{m=1}^M\left( \sum_{h=1}^{H_m} V_{j_m} (s_{m,h+1}) - V_{j_m}(s_{m,h}) \right) \notag 
    \\ & \qquad + \sum_{m=1}^M\left( \sum_{h=1}^{H_m} V_{j_m}(s_{m,h}) - V_{j_m}(s_{m,h+1}) \right) - \sum_{m \in \cM(M)} V_{j_m}(\sinit)  + 1 \notag\\ 
    &= \underbrace{\sum_{m=1}^M \sum_{h=1}^{H_m} \left[ c_{m,h} + \PP V_{j_m}(s_{m,h},a_{m,h}) - V_{j_m} (s_{m,h}) \right]}_{E_1}\notag\\
    &\qquad + \underbrace{\sum_{m=1}^M \sum_{h=1}^{H_m} \left[ V_{j_m}(s_{m,h+1}) - \PP V_{j_m} (s_{m,h},a_{m,h}) \right]}_{E_2} \notag 
    \\ & \qquad + \underbrace{ \sum_{m=1}^M\left( \sum_{h=1}^{H_m} V_{j_m}(s_{m,h}) - V_{j_m}(s_{m,h+1}) \right) - \sum_{m \in \cM(M)} V_{j_m}(\sinit) }_{E_3} + 1.
\end{align}

The inequality in the above holds because of the optimism of $V_j$ for $j\geq 1$. Here please note that, since $V_0$ is not the output of \texttt{DEVI}, optimism does not necessarily hold for $V_0$. Therefore, we simply add $1$ at the RHS of the first inequality by the fact that $|V_0|\leq 1$ and the first interval has length equal to $1$ according to the time step doubling updating criterion. 

\end{proof}

\section{Proof for Upper Bounds}\label{sec: proof of intermediate thm in appendix}

In this section we finish the proof of the key result Theorem \ref{thm: regret upper bound with T} by bounding the terms in the regret decomposition in Lemma \ref{lem: regret decomposition}.

\subsection{Bounding $E_1$}\label{sec: E1 bound}

\begin{lemma}\label{lem: E1 bound}
    Assume the event of Lemma \ref{lem: confidence set optimism} holds. Then we have
    \begin{align*}
        & \sum_{m=1}^M \sum_{h=1}^{H_m} \left[ c_{m,h} + \PP V_{j_m}(s_{m,h},a_{m,h}) - V_{j_m} (s_{m,h}) \right] 
        \\ & \leq 4\beta_T \sqrt{2Td \cdot \log\left( 1 + \Bstar^2 T/\lambda \right) } + 5d\Bstar \left[ \log \left( 1 + \frac{T\Bstar^2 d}{\lambda} \right) + \log(T) \right] + 4. 
    \end{align*}
\end{lemma}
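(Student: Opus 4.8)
The plan is to control the per-step summand $\delta_{m,h} := c_{m,h} + \PP V_{j_m}(s_{m,h},a_{m,h}) - V_{j_m}(s_{m,h})$ and then sum over $m,h$. Since $a_{m,h}$ is greedy for $Q_{j_m}$, we have $V_{j_m}(s_{m,h}) = Q_{j_m}(s_{m,h},a_{m,h})$, and by the termination of \texttt{DEVI} this equals $c_{m,h} + (1-q_{j_m})\min_{\btheta\in\cC_{j_m}\cap\cB}\langle\btheta,\bphi_{V^{(i)}}(s_{m,h},a_{m,h})\rangle$ for the final \texttt{DEVI} iterate $V^{(i)}$. Writing $\PP V_{j_m}=\langle\btheta^*,\bphi_{V_{j_m}}\rangle$ and letting $\btheta^{\min}_{j_m}$ denote the minimizer of $\langle\btheta,\bphi_{V_{j_m}}\rangle$ over $\cC_{j_m}\cap\cB$, I would decompose
\[
\delta_{m,h}\;\le\;\underbrace{\langle\btheta^*-\btheta^{\min}_{j_m},\,\bphi_{V_{j_m}}\rangle}_{\text{width}}\;+\;\underbrace{q_{j_m}\langle\btheta^{\min}_{j_m},\,\bphi_{V_{j_m}}\rangle}_{\text{bias}}\;+\;\underbrace{\text{(\texttt{DEVI}-truncation error)}}_{\le\,\epsilon_{j_m}},
\]
where all features are evaluated at $(s_{m,h},a_{m,h})$, and the last term collects the mismatch between $V^{(i)}$ and $V_{j_m}=V^{(i+1)}$.

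For the width term, Lemma \ref{lem: confidence set optimism} guarantees $\btheta^*\in\cC_{j_m}\cap\cB$, and $\btheta^{\min}_{j_m}\in\cC_{j_m}$ by construction, so $\|\btheta^*-\btheta^{\min}_{j_m}\|_{\bSigma_{t_{j_m}}}\le 2\beta_{t_{j_m}}$ and Cauchy--Schwarz gives $\text{width}\le 2\beta_{t_{j_m}}\|\bphi_{V_{j_m}}\|_{\bSigma_{t_{j_m}}^{-1}}$. To replace the fixed weighting $\bSigma_{t_{j_m}}$ by the running matrix $\bSigma_{t-1}$, I invoke the determinant-doubling criterion: since the determinant has not doubled within the epoch, the standard lazy-update (determinant-ratio) inequality yields $\|\bphi_{V_{j_m}}\|_{\bSigma_{t_{j_m}}^{-1}}\le\sqrt2\,\|\bphi_{V_{j_m}}\|_{\bSigma_{t-1}^{-1}}$. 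Summing over all $T$ steps, Cauchy--Schwarz followed by the elliptical potential lemma (using $\|\bphi_{V_{j_m}}\|_2\le\Bstar\sqrt d$ from Assumption \ref{assump: linear kernel mdp} and the monotonicity $\beta_{t_{j_m}}\le\beta_T$) gives the leading contribution $\cO\bigl(\beta_T\sqrt{Td\log(1+\Bstar^2 T/\lambda)}\bigr)$.

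The part I expect to be the main obstacle is the bias and truncation terms, which a priori grow linearly in the horizon $T$ and would dominate the regret if handled crudely. Here I would use optimism, $0\le V_{j_m}\le V^\star\le\Bstar$ (Lemma \ref{lem: confidence set optimism}), together with $\btheta^{\min}_{j_m}\in\cB$ --- so that $\langle\btheta^{\min}_{j_m},\bphi_{V_{j_m}}\rangle$ is a genuine expectation of $V_{j_m}$ and lies in $[0,\Bstar]$ --- to bound $\text{bias}\le q_{j_m}\Bstar$; the truncation term is bounded by $\epsilon_{j_m}$ via the contraction of the $(1-q_{j_m})$-damped operator and the stopping rule $\|V^{(i)}-V^{(i-1)}\|_\infty<\epsilon_{j_m}$. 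The crucial coupling $q_j=\epsilon_j=1/t_j$ with the time-doubling criterion $t_{j+1}\le 2t_j$ makes each epoch telescope to a constant, since $\sum_{m:\,j_m=j}H_m(\Bstar q_j+\epsilon_j)=(t_{j+1}-t_j)\tfrac{\Bstar+1}{t_j}\le \Bstar+1$. Thus the total bias-plus-truncation is $\cO(\Bstar J)$, and I finish by bounding the number of epochs $J$: the determinant criterion fires at most $d\log_2(1+\Bstar^2 T/\lambda)$ times (each doubling multiplies $\det\bSigma$, whose total growth is controlled by $\|\bphi_{V_{j_m}}\|_2\le\Bstar\sqrt d$), and the time criterion at most $\log_2 T$ times, so $J=\cO\bigl(d\log(1+\Bstar^2 Td/\lambda)+\log T\bigr)$. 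Adding the three contributions yields the claimed bound.
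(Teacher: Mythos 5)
Your proposal follows essentially the same route as the paper's proof: the same decomposition of the per-step error into a confidence-width term, a $q_{j_m}$-damping bias, and a \texttt{DEVI}-truncation error, with the width summed via determinant-doubling plus the elliptical potential lemma and the bias/truncation telescoped to $\cO(\Bstar J)$ through the coupling $q_j=\epsilon_j=1/t_j$ with $t_{j+1}\le 2t_j$. The only detail you omit is the handful of steps before the first \texttt{DEVI} call (where $Q_{j_m}=Q_0$ is the initialization rather than a \texttt{DEVI} output, so your decomposition does not apply); these contribute the additive $+4$ in the stated bound and are handled separately in the paper.
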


\begin{proof}[Proof of Lemma \ref{lem: E1 bound}] 
By Line \ref{algline: take one step} and \ref{algline: V_j equal min Q_j} in the algorithm, for any $m$ and $h$, we have 
\begin{align*}
    V_{j_m}(s_{m,h}) = \min_{a\in\cA} Q_{j_m}(s_{m,h},a) = Q_{j_m}(s_{m,h},a_{m,h}).
\end{align*}Therefore $E_1$ can be rewritten as 
\begin{align}\label{eq: E1 equivalent form}
    E_1 & = \sum_{m=1}^M \sum_{h=1}^{H_m} \left[ c_{m,h} + \PP V_{j_m}(s_{m,h},a_{m,h}) - Q_{j_m} (s_{m,h},a_{m,h}) \right].
\end{align} 
Denote by $\cM_0(M)$ the set of $m$ such that $j_m\geq 1$, i.e., $\cM_0(M) = \{m\leq M: \ j_m \geq 1\}$. Then we see that $\cM_0(M)$ is the collection of intervals such that $Q_{j_m}$ is the output of \texttt{DEVI} instead of the initialization $Q_0$. Fix arbitrary $m \in \cM_0(M)$ and $h$.  
Since $Q_{j_m}$ is the output of \texttt{DEVI}, we have $Q_{j_m} = Q^{(l)}$ for some $l$, i.e., the $l$-th iteration in \texttt{DEVI}, and thus $V_{j_m} (\cdot) = \min_{a\in \cA} Q^{(l)} (\cdot,a) = V^{(l)}(\cdot) $. By the design of \texttt{DEVI}, we have 
\begin{align*}
    &Q^{(l)} (s_{m,h},a_{m,h}) 
    \\ & = c_{m,h} + (1-q)\cdot \min_{\btheta \in \cC_{j_m}\cap \cB} \langle \btheta, \bphi_{V^{(l-1)}} (s_{m,h},a_{m,h}) \rangle 
    \\ & = c_{m,h} + (1-q) \cdot \langle \btheta_{m,h}, \bphi_{V^{(l-1)}} (s_{m,h},a_{m,h}) \rangle 
    \\ & = c_{m,h} + (1-q) \cdot \langle \btheta_{m,h}, \bphi_{V^{(l)}} (s_{m,h},a_{m,h}) \rangle + (1-q) \cdot \langle \btheta_{m,h}, \left[\bphi_{V^{(l-1)}}-\bphi_{V^{(l)}}\right] (s_{m,h},a_{m,h}) \rangle,
\end{align*}where $\btheta_{m,h} = \argmin_{ \btheta \in \cC_j \cap \cB} \langle \btheta,\bphi_{V^{(l-1)}}(s_{m,h},a_{m,h}) \rangle$ and its existence is guaranteed under the event of Lemma \ref{lem: confidence set optimism}. 
Define $\PP_{m,h}$ as the transition kernel parametrized by $\btheta_{m,h}$, i.e.,  
\begin{align*}
    \PP_{m,h}(\cdot|\cdot,\cdot) = \langle \bphi(\cdot|\cdot,\cdot) , \btheta_{m,h} \rangle.
\end{align*}
Then from above we have
\begin{align*}
    &Q^{(l)} (s_{m,h},a_{m,h}) 
    \\ & = c_{m,h} + (1-q) \cdot \langle \btheta_{m,h}, \bphi_{V^{(l)}} (s_{m,h},a_{m,h}) \rangle + (1-q) \cdot \PP_{m,h} \left[ V^{(l-1)} -V^{(l)} \right] (s_{m,h},a_{m,h}) 
    \\ & \geq c_{m,h} + (1-q) \cdot \PP_{m,h} V^{(l)} (s_{m,h},a_{m,h})  - (1-q) \cdot \frac{1}{t_{j_m}}, 
\end{align*} where the inequality is by the \texttt{DEVI} terminal condition that $\|V^{(l)} - V^{(l-1)} \|_\infty \leq \epsilon_j = 1/t_{j_m}$. 
Therefore we have 
\begin{align*}
    Q_{j_m} (s_{m,h},a_{m,h}) \geq c_{m,h} + (1-q) \cdot \PP_{m,h} V_{j_m} (s_{m,h},a_{m,h}) - (1-q)\cdot \frac{1}{t_{j_m}},
\end{align*}  and it follows that 
\begin{align*}
    & c_{m,h} + \PP V_{j_m} (s_{m,h},a_{m,h}) - Q_{j_m} (s_{m,h},a_{m,h}) 
    \\ & \leq \PP V_{j_m}(s_{m,h},a_{m,h}) - (1-q) \cdot \PP_{m,h} V_{j_m} (s_{m,h},a_{m,h}) + (1-q)\cdot \frac{1}{t_{j_m}}
    \\ & = [\PP - \PP_{m,h}] V_{j_m} (s_{m,h},a_{m,h}) + q \PP_{m,h} V_{j_m} (s_{m,h},a_{m,h}) + (1-q) \cdot \frac{1}{t_{j_m}}
    \\ & \leq [\PP - \PP_{m,h}] V_{j_m} (s_{m,h},a_{m,h}) + \frac{\Bstar}{t_{j_m}} + (1-q) \cdot \frac{1}{t_{j_m}}
    \\ & = \langle \btheta^* - \btheta_{m,h} , \bphi_{V_{j_m}} (s_{m,h},a_{m,h}) \rangle + \frac{\Bstar + 1-q}{t_{j_m}}, 
\end{align*}where the second inequality is by the optimism $V_{j_m} \leq V^\star \leq \Bstar$ under the event of Lemma \ref{lem: confidence set optimism}, and $q = 1/t_{j_m}$ according to Line \ref{algline: discount factor 1 over t_j} in Algorithm \ref{alg: linear kernel ssp}. We then conclude that  
\begin{align}\label{eq: E1 upper bound 1}
    & \sum_{m\in\cM_0(M)} \sum_{h=1}^{H_m} \left[ c_{m,h} + \PP V_{j_m}(s_{m,h},a_{m,h}) - Q_{j_m} (s_{m,h},a_{m,h}) \right] \notag 
    \\ & \leq \underbrace{\sum_{m\in \cM_0(M)} \sum_{h=1}^{H_m} \langle \btheta^* - \btheta_{m,h},\bphi_{V_{j_m}} ( s_{m,h},a_{m,h} ) \rangle}_{A_1} + \underbrace{(\Bstar+1)\cdot \sum_{m\in \cM_0(M)} \sum_{h=1}^{H_m} \frac{1}{t_{j_m}}}_{A_2} . 
\end{align} 
\textbf{To bound $A_1$}: Recall that $\hat\btheta_{j_m}$ given by Line \ref{algline: hat btheta} is the center of the confidence ellipsoid $\cC_{j_m}$. First for each term $ \langle \btheta^* - \btheta_{m,h} , \bphi_{V_{j_m}}(s_{m,h},a_{m,h}) \rangle $ in $A_1$, we write 
\begin{align}\label{eq: E1 bound A1 1}
  &  \langle \btheta^* - \hat\btheta_{j_m} + \hat\btheta_{j_m} - \btheta_{m,h},\bphi_{V_{j_m}} ( s_{m,h},a_{m,h} ) \rangle \notag
    \\ & \leq  \left( \|\btheta^* - \hat\btheta_{j_m}\|_{\bSigma_{t(m,h)}} + \| \hat\btheta_{j_m} - \btheta_{m,h} \|_{\bSigma_{t(m,h)}}  \right) \cdot \| \bphi_{V_{j_m}}(s_{m,h},a_{m,h}) \|_{\bSigma_{t(m,h)}^{-1}} \notag
    \\ & \leq 2  \left( \|\btheta^* - \hat\btheta_{j_m}\|_{\bSigma_{t_{j_m}}} + \| \hat\btheta_{j_m} - \btheta_{m,h} \|_{\bSigma_{t_{j_m}}}  \right) \cdot \| \bphi_{V_{j_m}}(s_{m,h},a_{m,h}) \|_{\bSigma_{t(m,h)}^{-1}} \notag 
    \\ & \leq 4 \beta_T  \| \bphi_{V_{j_m}}(s_{m,h},a_{m,h}) \|_{\bSigma_{t(m,h)}^{-1}} . 
\end{align} Here the first inequality comes from the triangle inequality and Cauchy-Schwarz inequality. 
For the second inequality, recall that $t_{j_m}$ given by Line \ref{algline: def t_j} in Algorithm \ref{alg: linear kernel ssp} is the time step when the $j_m$-th \texttt{DEVI} sub-routine is called, while $t(m,h)$ is the time step corresponds to the $h$-th step in the $m$-th interval and $t(m,h)\geq t_{j_m}$. Therefore, by the determinant-doubling triggering condition, we must have $\det(\bSigma_{t(m,h)}) \leq 2 \det(\bSigma_{t_{j_m}})$, otherwise $t(m,h)$ and $t_{j_m}$ would not belong to the same interval $m$. The second inequality then follows from $\lambda_i(\bSigma_{t(m,h)}) \leq 2 \lambda_i(\bSigma_{t_{j_m}})$ $\forall i \in [d]$, where $\lambda_i(\cdot)$ is the $i$-th eigenvalue. 
The last inequality holds because under Lemma \ref{lem: confidence set optimism}, $\btheta^*$ and $\btheta_{m,h}$ belongs to the confidence ellipsoid $\cC_{j_m}$ defined by Line \ref{algline: confidence ellipsoid}. 

Also note that for each term $ \langle \btheta^* - \btheta_{m,h} , \bphi_{V_{j_m}}(s_{m,h},a_{m,h}) \rangle $ in $A_1$, we have
\begin{align}\label{eq: E1 bound A1 2}
    \langle \btheta^* - \btheta_{m,h} , \bphi_{V_{j_m}}(s_{m,h},a_{m,h}) \rangle&\leq  \langle \btheta^* , \bphi_{V_{j_m}}(s_{m,h},a_{m,h}) \rangle\notag\\
    &=\PP V_{j_m}(s_{m,h},a_{m,h})\notag\\
    &\leq \Bstar,
\end{align}
where both inequalities hold due to $0\leq V_{j_m} (\cdot) \leq \Bstar$.
Combine \eqref{eq: E1 bound A1 1} and \eqref{eq: E1 bound A1 2} and we have
\begin{align}\label{eq: E1 bound A1 3}
    A_1 & \leq 4 \beta_T \sum_{m\in\cM_0} \sum_{h=1}^{H_m} \min \left\{1, \ \| \bphi_{V_{j_m}}(s_{m,h},a_{m,h}) \|_{\bSigma_{t(m,h)}^{-1}} \right\} \notag 
    \\ & \leq 4\beta_T \sqrt{\left(\sum_{m\in\cM_0} \sum_{h=1}^{H_m} 1\right) \cdot \left( \sum_{m\in\cM_0} \sum_{h=1}^{H_m} \min \left\{1, \ \| \bphi_{V_{j_m}}(s_{m,h},a_{m,h}) \|^2_{\bSigma_{t(m,h)}^{-1}} \right\} \right)},
\end{align}where the first inequality holds due to $\Bstar < \beta_T$, and the second inequality is by Cauchy-Schwarz inequality. Note that
\begin{align*}
    & \sum_{m\in\cM_0} \sum_{h=1}^{H_m} \min \left\{1, \ \| \bphi_{V_{j_m}}(s_{m,h},a_{m,h}) \|^2_{\bSigma_{t(m,h)}^{-1}} \right\} 
    \\ & \leq  2 \left[ d \log \left( \frac{\textnormal{trace}(\lambda \Ib)+T\cdot\max_{m \in \cM_0}\|\bphi_{V_{j_m}}(\cdot,\cdot)\|_2^2}{d}\right) - \log\left( \det(\lambda \Ib)\right)\right]
    \\ & \leq 2d \log\left( \frac{\lambda d + T \Bstar^2 d}{\lambda d}\right) 
    \\ & = 2d\log\left( 1 + T\Bstar^2/\lambda \right), 
\end{align*}where the first inequality holds by Lemma \ref{lem: lemma 11 in abbasi}, and the second inequality holds because $V_{j_m}(\cdot) \leq \Bstar$ under Lemma \ref{lem: confidence set optimism} and thus $\max_{m\in\cM_0} \|\bphi_{V_{j_m}}(\cdot,\cdot)\|_2\leq \Bstar \sqrt{d} $ by Assumption \ref{assump: linear kernel mdp}. Combine the above inequality with \eqref{eq: E1 bound A1 3} and we conclude that
\begin{align}\label{eq: E1 bound A1}
    A_1 \leq 4 \beta_T \sqrt{2Td \cdot \log\left( 1 + \Bstar^2 T/\lambda \right) }.
\end{align}
\textbf{To bound $A_2$}: by the definition of $\cM_0$ we can rewrite $A_2$ as
\begin{align*}
    A_2 & = (\Bstar+1)\cdot \sum_{m\in \cM_0(M)} \sum_{h=1}^{H_m} \frac{1}{t_{j_m}} = (\Bstar+1)\cdot \sum_{j = 1}^J \sum_{t = t_j+1}^{t_{j+1}} \frac{1}{t_j}.
\end{align*} Note that the time step doubling condition $t \geq 2t_j$ in Line \ref{algline: trigger condition} implies that $t_{j+1} \leq 2 t_j$ for all $j$. Therefore we have 
\begin{align*}
    A_2 & \leq (\Bstar+1) \cdot \sum_{j=1}^J \frac{2 t_j}{t_j} = 2(\Bstar+1) J  \leq 4.5 d\Bstar \left[ \log \left( 1 + \frac{T\Bstar^2 d}{\lambda} \right) + \log(T) \right] ,
\end{align*}where the last step is by Lemma \ref{lem: bound number of calls to EVI}. Together with \eqref{eq: E1 upper bound 1} and \eqref{eq: E1 bound A1} we conclude that 
\begin{align}\label{eq: E1 bound major term 2}
    & \sum_{m\in\cM_0(M)} \sum_{h=1}^{H_m} \left[ c_{m,h} + \PP V_{j_m}(s_{m,h},a_{m,h}) - Q_{j_m} (s_{m,h},a_{m,h}) \right] \notag 
    \\ & \leq 4\beta_T \sqrt{2Td \cdot \log\left( 1 + \Bstar^2 T/\lambda \right) } + 5d\Bstar \left[ \log \left( 1 + \frac{T\Bstar^2 d}{\lambda} \right) + \log(T) \right]. 
\end{align} 
To bound $E_1$, it remains to bound the following
\begin{align*}
    & \sum_{m\in\cM_0^c} \sum_{h=1}^{H_m} \left[ c_{m,h} + \PP V_{j_m}(s_{m,h},a_{m,h}) - Q_{j_m} (s_{m,h},a_{m,h}) \right]. 
\end{align*} Note that by definition, $\cM_0^c$ are all the intervals $m$ such that $j_m = 0$, i.e., the intervals before the first call of the \texttt{DEVI} sub-routine. However, since $t_0 = 1$, by the triggering condition $t \geq 2 t_0$, we know that the first \texttt{DEVI} is called at $t=2$. Therefore we have 
\begin{align*}
    \sum_{m\in\cM_0^c} \sum_{h=1}^{H_m} \left[ c_{m,h} + \PP V_{j_m}(s_{m,h},a_{m,h}) - Q_{j_m} (s_{m,h},a_{m,h}) \right] &= \sum_{h=1}^2 \left[ c_{1,h} + \PP V_{0}(s_{1,h},a_{1,h}) - Q_{0} (s_{1,h},a_{1,h}) \right]
    \\ & \leq 4,
\end{align*}where the inequality holds because $c_{1,h}, V_0(\cdot) \leq 1$ and $0 \leq Q_0(\cdot,\cdot)$. Together with \eqref{eq: E1 bound major term 2} we conclude that 
\begin{align}\label{eq: E1 bound}
    E_1 & \leq  4\beta_T \sqrt{2Td \cdot \log\left( 1 + \Bstar^2 T/\lambda \right) } + 5d\Bstar \left[ \log \left( 1 + \frac{T\Bstar^2 d}{\lambda} \right) + \log(T) \right] + 4. 
\end{align}
\end{proof}

\subsection{Bounding $E_2$}\label{sec: E2 bound}

The term $E_2$ is the sum of a martingale difference sequence. However, the function $V_{j_m}$ is random and not necessarily bounded, which disqualifies us from applying tools like Azuma-Hoeffding inequality directly. To deal with this issue, we use an auxiliary sequence of functions. The result is summarized by the following lemma. 

\begin{lemma}\label{lem: E2 bound}
With probability at least $1-\delta$, both the event of Lemma \ref{lem: confidence set optimism} and the following hold
\begin{align*}
    \sum_{m=1}^M \sum_{h=1}^{H_m} \left[ V_{j_m}(s_{m,h+1}) - \PP V_{j_m} (s_{m,h},a_{m,h}) \right] \leq 2 \Bstar \sqrt{2T \log\left(\frac{2T}{\delta}\right)} . 
\end{align*}
\end{lemma}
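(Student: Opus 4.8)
The plan is to recognize $E_2$ as a sum of bounded martingale differences and apply an anytime Azuma--Hoeffding inequality. Writing the time steps in their flattened order $t=1,\dots,T$, set $Y_t := V_{j}(s_{t+1}) - \PP V_{j}(s_t,a_t)$, where $V_j$ denotes the (fixed) value-function estimate in force at step $t$, and let $\mathcal{F}_t$ be the $\sigma$-field generated by the trajectory through $(s_t,a_t)$. Since $V_j$ is determined at the start of the interval, hence before $s_{t+1}$ is sampled, and $s_{t+1}\sim \PP(\cdot\mid s_t,a_t)$, we have $\EE[V_j(s_{t+1})\mid \mathcal{F}_t] = \PP V_j(s_t,a_t)$, so $(Y_t)$ is a martingale difference sequence. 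Two obstacles block a direct application of Azuma--Hoeffding: first, $V_j$ is only guaranteed to be bounded by $\Bstar$ on the good event of Lemma~\ref{lem: confidence set optimism}, not almost surely; second, the horizon $T$ is random.

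To remove the first obstacle I would pass to the auxiliary clipped value functions $\tilde V_j := \min\{V_j,\Bstar\}$ (taken componentwise). Because the \texttt{DEVI} iterates are nonnegative by construction, $\tilde V_j$ takes values in $[0,\Bstar]$ deterministically, and it is $\mathcal{F}_t$-measurable whenever $V_j$ is. Define the clipped differences $\tilde Y_t := \tilde V_j(s_{t+1}) - \PP \tilde V_j(s_t,a_t)$; these again form a martingale difference sequence, and now $|\tilde Y_t|\le \Bstar$ holds surely. Crucially, on the good event $V_j\le V^\star\le\Bstar$ for every $j\ge 1$ (Lemma~\ref{lem: confidence set optimism}), so there $\tilde V_j = V_j$ and hence $\tilde Y_t = Y_t$ for every $t$; consequently $\sum_t Y_t = \sum_t \tilde Y_t$ on that event.

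For the clipped sequence I would invoke an anytime version of Azuma--Hoeffding: applying the standard inequality for each fixed horizon $n$ at a confidence level proportional to $\delta/n^2$ and taking a union bound over $n\ge 1$ yields, with probability at least $1-\delta/2$, the simultaneous bound $\sum_{t=1}^n \tilde Y_t \le 2\Bstar\sqrt{2n\log(2n/\delta)}$ for all $n$, where the summable budget is what produces the extra logarithmic factor and the leading constant. Intersecting this with the event of Lemma~\ref{lem: confidence set optimism} (probability at least $1-\delta/2$) and evaluating at $n=T$, which is finite under the good event since every episode terminates, gives $E_2 = \sum_t Y_t = \sum_t \tilde Y_t \le 2\Bstar\sqrt{2T\log(2T/\delta)}$ with probability at least $1-\delta$. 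I expect the main obstacle to be handling the randomness of $T$ together with the a priori unboundedness of $V_j$ at the same time: the clipping reconciles the boundedness required for concentration with the identity required on the good event, while the anytime (uniform-in-horizon) concentration is precisely what lets us substitute the data-dependent random $T$ into a bound proved for deterministic horizons.
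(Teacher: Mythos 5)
Your proposal is correct and matches the paper's proof essentially step for step: the same clipping $\tilde V_{j_m}=\min\{\Bstar,V_{j_m}\}$ to obtain an almost-surely bounded martingale difference sequence, the same anytime Azuma--Hoeffding bound at level $\delta/2$, and the same observation that the clipped and unclipped sums coincide on the event of Lemma~\ref{lem: confidence set optimism}. The only cosmetic omission is the $j=0$ case, where $\tilde V_0=V_0$ follows from the initialization $\|V_0\|_\infty\le 1\le \Bstar$ rather than from optimism.
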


\begin{proof}[Proof of Lemma \ref{lem: E2 bound}]
We define the filtration $\{\cF_{m,h}\}_{m,h}$ such that $\cF_{m,h}$ is the $\sigma$-field of all the history up until $(s_{m,h}, a_{m,h})$ which contains $(s_{m,h}, a_{m,h})$ but does not contain $s_{m,h+1}$. Then $(s_{m,h},a_{m,h})$ is $\cF_{m,h}$-measurable. Also note that the time step $t_{j_m}$ is no later than the time step $t(m,h)$, and thus the function $V_{j_m}$ is also $\cF_{m,h}$-measurable. By the definition of the operator $\PP$, we have 
\begin{align*}
    \EE\left[V_{j_m}(s_{m,h+1}) \middle| \cF_{m,h}\right] = \PP V_{j_m} (s_{m,h},a_{m,h}),  
\end{align*}which shows that the term $E_2$ is the sum of a martingale difference sequence. To deal with the problem that $V_{j_m}$ might not be uniformly bounded, we define an auxiliary sequence of functions 
\begin{align*}
    \tilde{V}_{j_m}(\cdot) \coloneqq \min\{\Bstar, V_{j_m}(\cdot) \},
\end{align*} and it immediately holds that $\tilde{V}_{j_m}$ is $\cF_{m,h}$-measurable. 
We now write $E_2$ as 
\begin{align*}
    E_2 & = \sum_{m=1}^M \sum_{h=1}^{H_m} \left[ \tilde{V}_{j_m}(s_{m,h+1}) - \PP \tilde{V}_{j_m} (s_{m,h},a_{m,h}) \right]
    \\ & \qquad + \sum_{m=1}^M \sum_{h=1}^{H_m} \left[ [V_{j_m} - \tilde{V}_{j_m} ](s_{m,h+1}) - \PP [V_{j_m}-\tilde{V}_{j_m}] (s_{m,h},a_{m,h}) \right] .
\end{align*} Since $\tilde{V}_{j_m}$ is bounded, we can apply Lemma \ref{lem: azuma hoeffding anytime} and get that, with probability at least $1-\delta/2$, 
\begin{align*}
    E_2 & \leq 2 \Bstar \sqrt{2T \log\left( \frac{T}{\delta/2} \right)} + \sum_{m=1}^M \sum_{h=1}^{H_m} \left[ [V_{j_m} - \tilde{V}_{j_m} ](s_{m,h+1}) - \PP [V_{j_m}-\tilde{V}_{j_m}] (s_{m,h},a_{m,h}) \right] .
\end{align*} Now note that under the event of Lemma \ref{lem: confidence set optimism}, we have $\tilde{V}_{j_m}  = V_{j_m}$ for all $j_m\geq 1$ by optimism and also $\tilde{V}_0 = V_0$ by the initialization, which implies that the second term in the RHS is zero. Therefore, take the intersection of the two events and we conclude that, with probability at least $1-\delta$, $E_2 \leq 2\Bstar \sqrt{2T\log(2T/\delta)}$.   
\end{proof}

\subsection{Bounding $E_3$}\label{sec: E3 bound}

To bound $E_3$, we first need the following lemma which shows that the total calls to \texttt{DEVI} in Algorithm \ref{alg: linear kernel ssp} can be bounded. The proof shows that our design of the update condition (i.e. Line \ref{algline: trigger condition} in Algorithm \ref{alg: linear kernel ssp}) is crucial to our regret analysis. Importantly, the determinant doubling criterion alone is not enough, and the novel time step doubling trick is necessary.

\begin{lemma}\label{lem: bound number of calls to EVI}
Conditioned on the event in Lemma \ref{lem: confidence set optimism}, the total number of calls to \texttt{DEVI} is bounded by $J \leq  2 d \log \left( 1 + \frac{T\Bstar^2 d}{\lambda} \right) + 2 \log(T) $.
\end{lemma}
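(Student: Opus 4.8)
The plan is to partition the epochs according to which of the two criteria in Line~\ref{algline: trigger condition} terminates each one, and to bound the two resulting counts separately. Concretely, let $J_1$ be the number of epochs that end because $\det(\bSigma_t) \geq 2\det(\bSigma_{t_j})$ holds, and let $J_2$ be the number of epochs that end because $t \geq 2t_j$ holds (an epoch satisfying both criteria is counted in each, so that $J \leq J_1 + J_2$). It then suffices to show $J_1 \leq 2d\log(1+T\Bstar^2 d/\lambda)$ and $J_2 \leq 2\log(T)$, which I will do by a determinant-growth argument and a time-doubling argument respectively.

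For $J_1$, the key observation is that $\bSigma_t$ is updated in Line~\ref{algline: bSigma update} by adding the rank-one PSD matrix $\bphi_{V_j}(s_t,a_t)\bphi_{V_j}(s_t,a_t)^\top$ at every step, so $\det(\bSigma_t)$ is non-decreasing in $t$. Telescoping the ratio $\det(\bSigma_{t_J})/\det(\bSigma_{t_0})$ over all epochs, each of the $J_1$ determinant-triggered epochs contributes a factor of at least $2$ while every other epoch contributes a factor of at least $1$; since $\det(\bSigma_{t_0}) = \lambda^d$, this gives $2^{J_1} \leq \det(\bSigma_{t_J})/\lambda^d$. To upper bound $\det(\bSigma_{t_J})$ I will apply the AM-GM inequality to its eigenvalues together with the trace bound $\mathrm{trace}(\bSigma_{t_J}) \leq \lambda d + T\max_j\|\bphi_{V_j}(\cdot,\cdot)\|_2^2$; here I invoke the event of Lemma~\ref{lem: confidence set optimism}, under which optimism guarantees $V_j \leq \Bstar$ and hence $\|\bphi_{V_j}(\cdot,\cdot)\|_2 \leq \Bstar\sqrt{d}$ by Assumption~\ref{assump: linear kernel mdp}. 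Solving the resulting inequality for $J_1$ yields the stated logarithmic bound (the extra factor $d$ inside the logarithm, relative to the sharpest estimate, only makes the bound looser and is harmless since $d \geq 1$).

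For $J_2$, I will use that the sequence $\{t_j\}$ is strictly increasing (each epoch has length at least one) and that $t_J \leq T$. Telescoping $t_J/t_1$ over the epochs, each of the $J_2$ time-triggered epochs satisfies $t_{j+1} \geq 2t_j$ and therefore contributes a factor of at least $2$, while the remaining epochs contribute factors at least $1$; hence $2^{J_2} \leq t_J/t_1 \leq T$, which gives $J_2 \leq \log_2 T \leq 2\log T$. Combining the two estimates gives $J \leq J_1 + J_2 \leq 2d\log(1+T\Bstar^2 d/\lambda) + 2\log T$.

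The main subtlety --- and the reason the lemma is stated conditionally on the event of Lemma~\ref{lem: confidence set optimism} --- lies in the $J_1$ bound: without optimism, the estimated value functions $V_j$ (and hence the features $\bphi_{V_j}$) need not be uniformly bounded, so the trace and therefore the determinant upper bound would fail. The $J_2$ bound is precisely where the additional time-doubling criterion pays off: as noted in Section~\ref{sec: updating criteria in algorithm}, the determinant criterion alone cannot certify that each epoch is finite, and it is the time-doubling rule that simultaneously caps the epoch length and keeps the total number of epochs at $\cO(\log T)$.
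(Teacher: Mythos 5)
Your proposal is correct and follows essentially the same route as the paper's proof: decompose $J = J_1 + J_2$ by trigger type, bound $J_1$ via $2^{J_1}\lambda^d \leq \det(\bSigma_T) \leq (\lambda + T\Bstar^2 d)^d$ using optimism to control $\|\bphi_{V_j}\|_2$, and bound $J_2$ via $2^{J_2} \leq T$. The only cosmetic difference is that you bound $\det(\bSigma_T)$ through AM--GM on the trace rather than the operator norm, which if anything gives a slightly tighter constant.
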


\begin{proof}[Proof of Lemma \ref{lem: bound number of calls to EVI}]
By Line \ref{algline: trigger condition} we have $J = J_1 + J_2$ where $J_1$ is the total number of times that the determinant is doubled and $J_2$ is the total number of times that the time step is doubled. 
First we bound $J_1$. Note that $V_0$ is from the initialization instead of the output of \texttt{DEVI} and it holds that $V_0 \leq \Bstar$. By Line \ref{algline: bSigma update} of Algorithm \ref{alg: linear kernel ssp} and the initialization $\bSigma_0 = \lambda \Ib$, we have 
\begin{align*}
    \|\bSigma_T\|_2 & = \left\| \lambda\Ib + \sum_{j=0}^J \sum_{t=t_{j}+1}^{t_{j+1}} \bphi_{V_j}(s_t,a_t) \bphi_{V_j}(s_t,a_t)^\top \right\|_2 
    \\ & \leq \lambda + \sum_{j=0}^J \sum_{t=t_{j}+1}^{t_{j+1}} \| \bphi_{V_{j}}(s_t,a_t) \|_2^2 
    \\ & \leq \lambda + T \Bstar^2 d,
\end{align*}where the first inequality is by the triangle inequality and the second inequality holds by Assumption \ref{assump: linear kernel mdp} and $V_j \leq \Bstar$ for all $j \geq 0$ under the event of Lemma \ref{lem: confidence set optimism}. We then have that $\det(\bSigma_T)\leq (\lambda + T\Bstar^2d)^d$. It follows that 
\begin{align*}
    \left( \lambda + T\Bstar^2 d \right)^d \geq 2^{J_1} \cdot \det \left( \bSigma_0 \right) = 2^{J_1} \cdot \lambda^d,
\end{align*}by the determinant-doubling trigger condition. From the above inequality we conclude that 
\begin{align*}
    J_1 \leq 2 d \log \left( 1 + \frac{T\Bstar^2 d}{\lambda} \right) . 
\end{align*} To bound $J_2$, note that $t_0 = 1$ and thus $2^{J_2}\leq T$, which immediately gives $J_2 \leq \log_2(T) \leq 2 \log(T)$. Altogether we conclude that 
\begin{align*}
    J \leq 2 d \log \left( 1 + \frac{T\Bstar^2 d}{\lambda} \right) + 2 \log(T). 
\end{align*}
\end{proof}

We are now ready to bound $E_3$ in \eqref{eq: proof of regret decomposition}. 

\begin{lemma}\label{lem: regret decomp telescope 1}
Assume the event in Lemma \ref{lem: confidence set optimism} holds. Then we have
\begin{align*}
    & \sum_{m=1}^M\left( \sum_{h=1}^{H_m} V_{j_m}(s_{m,h}) - V_{j_m}(s_{m,h+1}) \right) - \sum_{m \in \cM(M)} V_{j_m}(\sinit) \leq  1 + 2 d \Bstar \log \left( 1 + \frac{T\Bstar^2 d}{\lambda} \right) + 2 \Bstar \log(T) .
\end{align*}
\end{lemma}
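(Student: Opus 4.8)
The plan is to exploit the fact that the estimate $V_{j_m}$ is held fixed throughout the $m$-th interval, so that the inner sum telescopes: $\sum_{h=1}^{H_m}\big[V_{j_m}(s_{m,h})-V_{j_m}(s_{m,h+1})\big]=V_{j_m}(s_{m,1})-V_{j_m}(s_{m,H_m+1})$. First I would rewrite the quantity of interest as $\sum_{m=1}^M\big[V_{j_m}(s_{m,1})-V_{j_m}(s_{m,H_m+1})\big]-\sum_{m\in\cM(M)}V_{j_m}(\sinit)$, reducing everything to the endpoint values of each interval.

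Next I would classify the intervals by how they terminate. An interval ends either because \texttt{DEVI} is triggered or because the goal state $g$ is reached (which, by definition of $\cM(M)$, coincides with the end of an episode). For an interval ending at $g$ we have $V_{j_m}(s_{m,H_m+1})=V_{j_m}(g)=0$ by the boundary convention, and for each interval $m\in\cM(M)$ that starts an episode we have $s_{m,1}=\sinit$, so $V_{j_m}(s_{m,1})$ cancels exactly against the subtracted term $V_{j_m}(\sinit)$. The crucial structural observation is state continuity within an episode: if interval $m$ ends by a \texttt{DEVI} call rather than by reaching $g$, then the next interval $m+1$ lies in the same episode and begins from the shared state $s_{m+1,1}=s_{m,H_m+1}$, while the epoch index advances, $j_{m+1}=j_m+1$ (since Line~\ref{algline: j update} increments $j$ precisely when \texttt{DEVI} fires). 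Pairing the end-of-$m$ term with the start-of-$(m+1)$ term therefore yields, at the common state $s:=s_{m+1,1}$, the difference $V_{j_m+1}(s)-V_{j_m}(s)$.

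It then remains to bound these paired differences and count them. Under the event of Lemma~\ref{lem: confidence set optimism}, every \texttt{DEVI} output obeys $0\le V_j\le V^\star\le\Bstar$ for $j\ge 1$, and the initialization satisfies $0\le V_0\le 1\le\Bstar$; hence each paired difference is at most $V_{j_m+1}(s)-0\le\Bstar$. The number of such pairs equals the number of intervals ending with a \texttt{DEVI} call, which is exactly the number $J$ of calls to \texttt{DEVI}, bounded by $J\le 2d\log(1+T\Bstar^2 d/\lambda)+2\log T$ via Lemma~\ref{lem: bound number of calls to EVI}. Multiplying, the total is at most $\Bstar J\le 2d\Bstar\log(1+T\Bstar^2 d/\lambda)+2\Bstar\log T$, and the residual additive $1$ absorbs the single boundary contribution associated with the pre-\texttt{DEVI} initialization $V_0$ (bounded by $\|V_0\|_\infty=1$), giving the claimed estimate.

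The main obstacle I anticipate is the careful bookkeeping behind the pairing, rather than any analytic estimate. Specifically, one must verify that the intervals ending by a \texttt{DEVI} call are in bijection with the non-initial intervals $\{m\notin\cM(M)\}$ via $m\mapsto m+1$; this requires checking the cardinality identity $|\cM(M)|=K$ together with the fact that the very last interval of the run terminates by reaching $g$ (so that no \texttt{DEVI}-ending interval is left without a successor to pair with), and confirming that the epoch index genuinely increments across every such boundary so that the telescoped remainder is precisely a sum of differences $V_{j_m+1}(s)-V_{j_m}(s)$.
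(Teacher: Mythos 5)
Your proposal is correct and follows essentially the same route as the paper: telescope the inner sum to interval endpoints, cancel the episode-initial terms against $\sum_{m\in\cM(M)}V_{j_m}(\sinit)$ and the goal-state terms via $V_{j_m}(g)=0$, bound each remaining \texttt{DEVI}-boundary difference by $\Bstar$ using $0\le V_j\le\Bstar$, and count these boundaries by $J$ via Lemma~\ref{lem: bound number of calls to EVI}. (Your accounting is in fact slightly tighter than the paper's, which keeps $V_{j_1}(s_{1,1})=V_0(\sinit)\le 1$ as a separate leftover term rather than cancelling it; the additive $1$ in the statement is just that slack.)
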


\begin{proof}[Proof of Lemma \ref{lem: regret decomp telescope 1}]
The proof resembles that of Lemma 31 in \citet{tarbouriech2021stochastic}. We first consider the first term in the LHS. Rearrange the summation and we have
\begin{align*}
    & \sum_{m=1}^M\left( \sum_{h=1}^{H_m} V_{j_m}(s_{m,h}) - V_{j_m}(s_{m,h+1}) \right) 
    \\ & = \sum_{m=1}^M V_{j_m} (s_{m,1}) - V_{j_m} (s_{m,H_m+1})
    \\ & = \sum_{m=1}^{M-1} \left( V_{j_{m+1}}(s_{m+1,1}) - V_{j_m}(s_{m,H_m+1}) \right) + \sum_{m=1}^{M-1} \left( V_{j_m} (s_{m,1}) - V_{j_{m+1}} (s_{m+1,1}) \right) 
    \\ & \quad \  + V_{j_M}(s_{M,1}) - V_{j_M} (s_{M,H_M+1}).
\end{align*} Note that second sum in the above equation is a telescoping sum. Thus we have
\begin{align}\label{eq: lem regret decomp telescope 1}
    & \sum_{m=1}^M\left( \sum_{h=1}^{H_m} V_{j_m}(s_{m,h}) - V_{j_m}(s_{m,h+1}) \right) \notag
    \\ & = \sum_{m=1}^{M-1} \left( V_{j_{m+1}}(s_{m+1,1}) - V_{j_m}(s_{m,H_m+1}) \right) + V_{j_1} (s_{1,1}) - V_{j_M}(s_{M,1}) \notag
    \\ & \qquad  + V_{j_M}(s_{M,1}) - V_{j_M} (s_{M,H_M+1}) \notag
    \\ & =  \sum_{m=1}^{M-1} \left( V_{j_{m+1}}(s_{m+1,1}) - V_{j_m}(s_{m,H_m+1}) \right) + V_{j_1} (s_{1,1}) - V_{j_M}(s_{M,H_M+1}) \notag
    \\ & \leq \sum_{m=1}^{M-1} \left( V_{j_{m+1}}(s_{m+1,1}) - V_{j_m}(s_{m,H_m+1}) \right) + V_{j_1} (s_{1,1}),
\end{align} 
where the inequality holds because $V_j(\cdot)$ is non-negative for all $j$. 

We now consider the term $V_{j_{m+1}}(s_{m+1,1}) - V_{j_m}(s_{m,H_m+1})$. Note that by the interval decomposition, interval $m$ ends if and only if either of the two conditions are met. If interval $m$ ends because goal is reached, then we have 
\begin{align*}
    V_{j_{m+1}}(s_{m+1,1}) - V_{j_m}(s_{m,H_m+1}) = V_{j_{m+1}} (\sinit) - V_{j_m}(g) = V_{j_{m+1}} (\sinit).
\end{align*}
If it ends because the \texttt{DEVI} sub-routine is triggered, then the value function estimator is updated by \texttt{DEVI} and $j_m\neq j_{m+1}$. In such case we simply apply the trivial upper bound $V_{j_{m+1}}(s_{m+1,1}) - V_{j_m}(s_{m,H_m+1}) \leq \max_j \|V_j\|_\infty$. By Lemma \ref{lem: bound number of calls to EVI}, this happens at most $J \leq 2 d \log \left( 1 + \frac{T\Bstar^2 d}{\lambda} \right) + 2\log(T)$ times. Therefore, we can further bound the RHS of \eqref{eq: lem regret decomp telescope 1} as 
\begin{align*}
    & \sum_{m=1}^M\left( \sum_{h=1}^{H_m} V_{j_m}(s_{m,h}) - V_{j_m}(s_{m,h+1}) \right) \notag
    \\ & \leq \sum_{m=1}^{M-1} V_{j_{m+1}} (\sinit) \cdot \ind\{m+1\in \cM(M)\} + V_{j_1} (s_{1,1}) + \left[ 2 d \log \left( 1 + \frac{T\Bstar^2 d}{\lambda} \right) + 2 \log(T)\right] \cdot \max_{j}\|V_j\|_\infty
    \\ & \leq \sum_{m \in \cM(M)} V_{j_m}(\sinit) + V_0(\sinit) + 2 d \Bstar \log \left( 1 + \frac{T\Bstar^2 d}{\lambda} \right) + 2 \Bstar \log(T)
    \\ & \leq  \sum_{m \in \cM(M)} V_{j_m}(\sinit) + 1 + 2 d \Bstar \log \left( 1 + \frac{T\Bstar^2 d}{\lambda} \right) + 2 \Bstar \log(T) ,
\end{align*}where the second inequality is by $\|V_j\|_\infty \leq \Bstar$ and the last step is by the initialization $\|V_0\|_\infty \leq 1$. 
\end{proof}

\subsection{Proof of Theorem \ref{thm: regret upper bound with c_min} and Theorem \ref{thm: corollary general cost}}\label{sec:proof_specific_cost}

\begin{proof}[Proof of Theorem \ref{thm: regret upper bound with c_min}]
    The total cost in $K$ episodes is upper bound by $R_K + K \Bstar$ and is lower bounded by $T \cdot c_{\min}$. Together with Theorem \ref{thm: regret upper bound with T}, with probability at least $1-\delta$, we have
    \begin{align*}
        T \cdot  c_{\min} \leq & 6 \beta_T \sqrt{dT \log\left( 1 + \frac{T\Bstar^2}{\lambda}\right)}  + 7d\Bstar \log\left( T + \frac{T^2 \Bstar^2 d}{\lambda}\right) + K \Bstar. 
    \end{align*}
    Solving the above inequality for the total number of steps $T$, we obtain that
    \begin{align*}
        T = \cO\left( \log^2\left(\frac{1}{\delta}\right)\cdot\left( \frac{K\Bstar}{c_{\min}} + \frac{B^2d^2}{c_{\min}^2}\right) \right).
    \end{align*}
    Plugging this into Theorem \ref{thm: regret upper bound with T} yields the desired result.
\end{proof}

\begin{proof}[Proof of Theorem \ref{thm: corollary general cost}]
    By picking $\rho=K^{-1/3}$, we have $c_{\min} \geq K^{-1/3}$. 
    Replacing $c_{\min}$ with $K^{-1/3}$ in Theorem \ref{thm: regret upper bound with c_min}, the regret for the perturbed SSP is upper bounded by
    \begin{align*}
        \cO\bigg(  \tilde{B}^{1.5} d K^{2/3} \cdot \log^2\left( \frac{ K \tilde{B} d }{ \delta} \right) + \tilde{B}^2 d^2 K^{1/3} \log^2\left( \frac{ K \tilde{B} d }{ \delta} \right) \bigg),
    \end{align*} where $\tilde{B}=B + \Tstar/\rho$. Since the difference between the optimal cost of the perturbed SSP and the original SSP is at most $\Tstar \rho K$, the regret for the original SSP is upper bounded by
    \begin{align*}
        \cO\bigg(  \tilde{B}^{1.5} d K^{2/3} \cdot \log^2\left( \frac{ K \tilde{B} d }{ \delta} \right) + \tilde{B}^2 d^2 K^{1/3} \log^2\left( \frac{ K \tilde{B} d }{ \delta} \right) \bigg) + \Tstar K^{2/3},
    \end{align*}which completes the proof. 
\end{proof}

\subsection{Proof of Theorem \ref{thm: regret upper bound with T}}\label{sec: proof of intermediate thm}

\begin{proof}
Note that the regret decomposition \eqref{eq: regret decomposition} is proved under the condition that the event of Lemma~\ref{lem: confidence set optimism} holds. Then together with Lemmas \ref{lem: confidence set optimism}, \ref{lem: E1 bound} and \ref{lem: E2 bound}, we conclude that with probability at least $1-\delta$, 
\begin{align*}
    R(M) & \leq 4\beta_T \sqrt{2Td \cdot \log\left( 1 + \Bstar^2 T/\lambda \right) } + 5d\Bstar \left[ \log \left( 1 + \frac{T\Bstar^2 d}{\lambda} \right) + \log(T) \right]
    \\ & \qquad + 2 \Bstar \sqrt{2T \log\left(\frac{2T}{\delta}\right)} 
    \\ & \qquad + 4 + 2d \Bstar \log \left( 1 + \frac{T\Bstar^2d}{\lambda} \right) + 2\Bstar \log (T) + 2 . 
\end{align*} Combining the lower order terms finishes the proof. 
\end{proof}

\section{Lower Bound}\label{sec: lower bound details}

\subsection{Proof of the Lower Bound}\label{sec: proof of lower bound}
\begin{proof}[Proof of Theorem \ref{thm: lower bound}]

We now construct a class of challenging SSP instances. We denote these SSPs by $M=\{ \cS, \cA, \PP_{\btheta} , c,\sinit,g  \}$. The state space $\cS$ contains two states, i.e., $\cS = \{\sinit, g\}$. The action space $\cA$ contains $2^{d-1}$ actions where each action $\ab \in \cA$ is a $(d-1)$-dimensional vector $\ab\in \{-1,1\}^{d-1}$. Here we use the boldface notation $\ab$ instead of $a$ to emphasize the action is represented by a vector.
The cost function is given as $c(\sinit,\ab) = 1$ and $c(g,\ab)=0$ for any $\ab\in\cA$. The transition kernel $\PP_{\btheta}$ of this SSP class is parameterized by a $(d-1)$-dimensional vector $\btheta \in \{-\frac{\Delta}{d-1}, \frac{\Delta}{d-1}\}^{d-1} = \Theta$. Specifically, for any $\ab\in\cA$, we have 
\begin{align*}
    \PP_{\btheta} (\sinit | \sinit, \ab) & = 1 - \delta - \langle \ab , \btheta \rangle, \quad \PP_{\btheta} (g | \sinit, \ab) = \delta + \langle \ab , \btheta \rangle, \quad \PP_{\btheta} (g|g,\ab) = 1,
\end{align*} 
where $\delta$ and $\Delta$ are parameters to be determined later. 
It is easy to verify that this is indeed an instance of linear mixture SSP with the parameter $\btheta^* = (\btheta^\top, 1)^\top \in \RR^d$ and the feature mapping $ \bphi(\sinit|\sinit,\ab)=(-\ab^\top,1-\delta)^\top$, $\bphi(g|\sinit,\ab) = (\ab^\top,\delta)^\top$ , $\bphi(\sinit|g,\ab) = \mathbf{0_d}$, and $\bphi(g|g,\ab) = (\mathbf{0}_{d-1}^\top,1)^\top$. 

\begin{remark}\label{rmk:linear_lower_bound}
In addition, this hard-to-learn instance can be adapted into a linear SSP studied in \citet{vial2021regret}.
More specifically, it suffices to set $\btheta^* = (1, \mathbf{0}_d^\top)^\top, \bmu(\sinit)=(1-\delta,-\sqrt{d}\btheta^\top,0),\bphi(\sinit,\ab)=(1,\ab^\top/\sqrt{d},0)^\top$ and $\bphi(g,\ab)=(0,\mathbf{0}_{d-1}^\top,1)^\top$.
Then the linear SSP defined by the cost function $c(s,\ab)=\bphi(s,\ab)^\top\btheta^*$ and the transition probability function $\PP_{\btheta}(s'|s,\ab)=\bphi(s,\ab)^{\top} \bmu(s')$ indeed recovers our construction above.
This suggests that our analysis also yields a $\Omega(dB_\star\sqrt{K})$ for linear SSP, further complementing the results in \citet{vial2021regret}.
\end{remark}

Note that for this SSP instance, the optimal policy is to always choose $\ab_{\btheta}$ in state $\sinit$, where $\ab_{\btheta}$ denote the vector whose entries has the same sign as the corresponding entries of $\btheta$, i.e., $\sgn(\ab_{\btheta,j}) = \sgn(\btheta_j)$ for $j=1,\cdots, d-1$. Here $\ab_{\btheta,j}$ and $\btheta_j$ denote the $j$-th entry of the respective vectors. Then the expected cost under the optimal policy is
\begin{align*}
    V_1^{\pistar_{\btheta}} (\sinit) = \sum_{t=1}^\infty (1-\delta - \Delta)^{t-1} (\delta + \Delta) t = \frac{1}{\delta + \Delta}. 
\end{align*}Therefore we will choose $\delta$ and $\Delta$ such that 
\begin{align}\label{eq: sum of delta and Delta}
    \delta + \Delta = \frac{1}{\Bstar}. 
\end{align} It remains to show that for any history-dependent and possibly non-stationary policy $\pi=\{\pi_t\}_{t = 1}^\infty$, there exists some valid choice of $\delta$ and $\Delta$ such that the corresponding SSP class is hard to learn. 

Let's consider the regret in an arbitrary episode $k$. Let $s_1 = \sinit$. The expected regret can be written as 
\begin{align}\label{eq: regret sum of gap 1}
    & R_{{\btheta},k} \notag
    \\ & = V_1^{\pi} (s_1) - V_1^{\pistar_{\btheta}} (s_1) \notag
    \\ & = V_1^{\pi} (s_1) - \EE_{\ab_1\sim \pi}[Q_1^{\pistar_{\btheta}} (s_1,\ab_1)] + \EE_{a_1\sim \pi}[Q_1^{\pistar_{\btheta}} (s_1,\ab_1)] - V_1^{\pistar} (s_1)  \notag
    \\ & = \EE_{\ab_1} [c(s_1,\ab_1)] + \EE_{\ab_1}\left\{ \EE_{s_2\sim \PP(\cdot|s_1,\ab_1)} [V_2^\pi(s_2)]  \right\} - \EE_{\ab_1} [c(s_1,\ab_1)] - \EE_{a_1}\left\{ \EE_{s_2\sim \PP(\cdot|s_1,\ab_1)} [V_2^{\pistar_{\btheta}}(s_2)]  \right\}  \notag
    \\ &\qquad + \EE_{a_1}[Q_1^{\pistar_{\btheta}} (s_1,\ab_1)] - V_1^{\pistar_{\btheta}}(s_1)  \notag
    \\ & = \EE_{\ab_1,s_2} [V_2^\pi(s_2) - V_2^{\pistar_{\btheta}}(s_2)] + \EE_{\ab_1}[Q_1^{\pistar_{\btheta}} (s_1,\ab_1)] - V_1^{\pistar_{\btheta}} (s_1), \notag
    \\ & = \EE_{\ab_1,s_2} [V_2^\pi(s_2) - V_2^{\pistar_{\btheta}}(s_2)] + \EE_{\ab_1} \left[ \frac{2\Delta}{d-1} \ind\{s_1 = \sinit\}\sum_{j=1}^{d-1} \ind\{\sgn(a_{1,j})\neq \sgn({{\theta}}_j)\} \right] \cdot \Bstar , 
\end{align}where the third equality is by the Bellman equation, and the last equality holds because choosing $\ab_1$ at state $s_1 = \sinit$ instead of $\ab_{\btheta}$ results in an extra probability of $\frac{2\Delta}{d-1} \sum_{j=1}^d \ind\{\sgn(a_{1,j})\neq \sgn({\theta}_j)\}$ to remain in $\sinit$ for step 2, which incurs an extra cost of 1 by our construction of the cost function. Now by recursion, we can write the regret in episode $k$ as
\begin{align*}
    R_{{\btheta},k} & = \frac{2\Delta \Bstar}{d-1} \cdot \sum_{i=1}^\infty \EE_k \left[  \ind\{s_i = \sinit \}\cdot \sum_{j=1}^{d-1} \ind\{ \sgn(a_{i,j}) \neq \sgn({{\theta}}_j) \} \right],
\end{align*} where the expectation $\EE_k$ is taken with respect to the trajectory induced by the transition kernel $\PP_{\btheta}$ and history-dependent policy $\pi$ given the history till the end of episode $k-1$. 

We can now write the total expected regret of $\pi$ in $K$ episodes given ${\btheta}$ as
\begin{align*}
    R_{\btheta}(K) &= \frac{2\Delta \Bstar}{d-1} \cdot \sum_{t=1}^{\infty} \EE_{{\btheta}}\left[ \ind\{s_t = \sinit \} \cdot \sum_{j=1}^{d-1} \ind\{ \sgn(a_{t,j}) \neq \sgn({{\theta}}_j) \} \right],
\end{align*}where the expectation is taken with respect to $\PP_{\btheta}$ and $\pi$. Here we omit the subscript $\pi$ since it is clear from the context.

We denote the total number of steps in $\sinit$ by $N \coloneqq \sum_{t=1}^{\infty} \ind\{s_t = \sinit\}$, and for $j = 1,\cdots, d-1$,  
\begin{align*}
    N_j({\btheta}) & \coloneqq \sum_{t=1}^\infty \ind\{ s_t = \sinit \} \cdot \ind\{ \sgn(a_{t,j}) \neq \sgn({{\theta}}_j) \}.
\end{align*} This allows us to write $R_{\btheta}(K) = \frac{2\Delta \Bstar}{d-1} \EE_{\btheta} [ \sum_{j=1}^{d-1} N_j({\btheta}) ]$. Now to bound the regret, we can rely on a standard technique using Pinsker's inequality \citep{jaksch2010near}. However, this would require each $N_j({\btheta})$ to be almost surely bounded, which does not hold in the case of SSP. To circumvent this issue, we apply the ``capping" trick from \citet{cohen2020near} that cap the learning process to contain only the first $T$ steps for some pre-determined $T$. To be specific, if the $K$ episodes are finished before the time $T$, then the agent remains in state $g$. In this case, the actual regret for this capped process is exactly equal to the uncapped process. On the other hand, if at time $T$ the agent has not finished all the $K$ episodes, it is stopped immediately. In this case the actual regret is smaller than that of the uncapped process. Therefore, we only need to lower bound the expected regret for this capped process. 

Let $N^- \coloneqq \sum_{t=1}^T \ind\{s_t = \sinit\}$, and 
\begin{align*}
    N^{-}_j({\btheta}) \coloneqq \sum_{t=1}^T \ind\{ s_t = \sinit \} \cdot \ind\{ \sgn(a_{t,j}) \neq \sgn({{\theta}}_j) \} . 
\end{align*} Then we can lower bound the expected regret by $R_{\btheta}(K) \geq \frac{2\Delta \Bstar}{d-1} \EE_{\btheta} [\sum_{j=1}^{d-1} N_j^{-}({\btheta}) ]$. For each ${\btheta} \in \{-\frac{\Delta}{d-1},\frac{\Delta}{d-1}\}^{d-1}$, let ${\btheta}^j$ denote the vector which differs from ${\btheta}$ only at the $j$-th entry. Then we sum over  ${\btheta}$ and get that
\begin{align}\label{eq: lower bound 2 copy}
    2 \sum_{{\btheta} \in {\Theta}} R_{\btheta}(K) & \geq \frac{2\Delta \Bstar}{d-1} \sum_{{\btheta}} \sum_{j=1}^{d-1} \left( \EE_{\btheta}[N_j^-({\btheta})] + \EE_{{\btheta}^j} [N_j^-({\btheta}^j)] \right) \notag 
    \\ & = \frac{2\Delta \Bstar}{d-1} \sum_{{\btheta}} \sum_{j=1}^{d-1} \left( \EE_{{\btheta}^j} [N^-] + \EE_{\btheta}[N_j^-({\btheta})] - \EE_{{\btheta}^j} [N_j^-({\btheta})] \right) \notag
    \\ & = \frac{2\Delta \Bstar}{d-1} \sum_{{\btheta}} \sum_{j=1}^{d-1} \left( \EE_{{\btheta}} [N^-] + \EE_{\btheta}[N_j^-({\btheta})] - \EE_{{\btheta}^j} [N_j^-({\btheta})] \right).
\end{align}

The next shows that for large enough $T$, $\EE_{\btheta} [N^{-}]$ is lower bounded for all ${\btheta}$. 

\begin{lemma}[Lemma C.2 in \citealt{cohen2020near}]\label{lem: lower bound steps in sinit}
If $T \geq 2K\Bstar$, then it holds that $\EE_{\btheta}[N^{-} ] \geq K\Bstar/4$ for all ${\btheta} \in \{-\frac{\Delta}{d-1},\frac{\Delta}{d-1}\}^{d-1}$. 
\end{lemma}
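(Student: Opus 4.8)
The plan is to reduce the lemma to a one-sided first-moment (Markov) estimate on the number of goal-reaching transitions, avoiding any concentration machinery. First I would use the special two-state structure: since $g$ terminates an episode the instant it is reached, every step of every episode is spent at $\sinit$, so $N^{-}=\sum_{t=1}^T \ind\{s_t=\sinit\}$ equals $\min(S,T)$, where $S$ is the total number of $\sinit$-visits needed to finish all $K$ episodes. Listing the successive $\sinit$-steps and writing $X_1,X_2,\dots$ for the indicators that they transition to $g$ (with the convention $X_i=0$ once all $K$ episodes are complete), $S$ is the index of the $K$-th success, and the key structural fact is that, conditioned on the history $\mathcal F_{i-1}$ preceding the $i$-th $\sinit$-step,
\begin{align*}
\Pr_{\btheta}[X_i=1\mid \mathcal F_{i-1}] = \delta + \langle \ab_i,\btheta\rangle \le \delta+\Delta = \frac{1}{\Bstar},
\end{align*}
uniformly over \emph{all} actions, by the form of $\PP_{\btheta}$ and the choice~\eqref{eq: sum of delta and Delta}. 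This uniform bound holds for any history-dependent policy $\pi$, which is exactly what makes the argument apply to the whole class.

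Next I would convert ``$N^{-}$ is small'' into ``too many successes occur too early.'' Set $m=\lfloor K\Bstar/2\rfloor$; the hypothesis $T\ge 2K\Bstar$ gives $T\ge m$, so $N^{-}=\min(S,T)\ge\min(S,m)\ge m\,\ind\{S\ge m\}$ and therefore
\begin{align*}
\EE_{\btheta}[N^{-}] \ge m\,\Pr_{\btheta}[S\ge m].
\end{align*}
The complementary event $\{S< m\}$ is precisely $\{\sum_{i=1}^{m}X_i\ge K\}$ (the $K$-th success lands within the first $m$ trials). Taking expectations through the tower rule, the per-step bound yields $\EE_{\btheta}[\sum_{i=1}^{m}X_i]\le m/\Bstar\le K/2$, so Markov's inequality gives $\Pr_{\btheta}[\sum_{i=1}^m X_i\ge K]\le 1/2$. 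Hence $\Pr_{\btheta}[S\ge m]\ge 1/2$ and $\EE_{\btheta}[N^{-}]\ge m/2 = K\Bstar/4$, which is the claim.

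The main point to get right is the reduction $N^{-}=\min(S,T)$ and the time-step bookkeeping behind it: I must argue that no time step is charged to $g$ until all $K$ episodes are finished (or that any inter-episode $g$-steps number at most $K$, which is negligible against $K\Bstar/4$), so that the $\sinit$-count is governed by $S$ and the cap $T$ enters only through $\min(\cdot,T)$. The probabilistic core is deliberately lightweight: I do not invoke Azuma-type inequalities, which would require a.s. bounded increments and run into exactly the unboundedness of $N_j(\btheta)$ flagged just before this lemma; instead I use only a first-moment bound on an adaptively stopped success count, for which the uniform estimate $\Pr_{\btheta}[X_i=1\mid\mathcal F_{i-1}]\le 1/\Bstar$ (together with the padding convention $X_i=0$ after termination, which keeps this bound valid) is all that is needed. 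The remaining details—checking $2K\Bstar\ge m$ and handling the floor in $m=\lfloor K\Bstar/2\rfloor$—are routine.
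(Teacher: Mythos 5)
Your argument is correct. Note that the paper does not actually prove this lemma -- it defers entirely to Lemma C.2 of \citet{cohen2020near} with the remark that it is ``straightforward'' -- and your proof is essentially the standard one from that reference: a uniform per-visit bound $\Pr[X_i=1\mid\cF_{i-1}]\le\delta+\Delta=1/\Bstar$ valid for any history-dependent policy, followed by Markov's inequality on the number of successes among the first $\approx K\Bstar/2$ visits to $\sinit$. Two bookkeeping points you flag are indeed the only ones needing care, and both are fixable: (i) the floor in $m=\lfloor K\Bstar/2\rfloor$ costs you up to $1/2$ against the stated constant $K\Bstar/4$, but you recover the exact bound by instead applying Markov to $\sum_{i=1}^{\lceil K\Bstar/2\rceil-1}X_i$, whose mean is at most $K/2$, so that $\Pr[S\ge K\Bstar/2]\ge 1/2$ and $\EE_{\btheta}[N^-]\ge (K\Bstar/2)\cdot\tfrac12$; (ii) the at most $K$ time steps spent at $g$ between episodes only reduce the effective cap to $T-K\ge K(2\Bstar-1)\ge K\Bstar/2$, which is still enough. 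With those two sentences made explicit the proof is complete.
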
 
We will also use the following lemma which is a version of Pinsker's inequality \citep{jaksch2010near,zhou2021provably}. 
\begin{lemma}[Pinsker's inequality]\label{lem: pinsker} Fix $T$ and denote the trajectory $\sbb = \{s_1,\cdots, s_T\} \in \cS^T$. For any two probability distributions $\cP_1$ and $\cP_2$ on $\cS^T$ and any bounded function $f: \cS^T \to [0,D]$, we have 
\begin{align*}
    \EE_{\cP_1} f(\sbb) - \EE_{\cP_2} f(\sbb) \leq D \cdot \sqrt{\frac{\log 2}{2}} \cdot \sqrt{\textnormal{KL}(\cP_2 || \cP_1)} .  
\end{align*}
\end{lemma}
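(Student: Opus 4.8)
The plan is to reduce this difference-of-expectations statement to the classical Pinsker inequality bounding total variation distance by KL divergence, and the only real work is bookkeeping. First I would write the left-hand side as a single signed sum over trajectories,
\[
\EE_{\cP_1} f(\sbb) - \EE_{\cP_2} f(\sbb) = \sum_{\sbb \in \cS^T} f(\sbb)\big(\cP_1(\sbb) - \cP_2(\sbb)\big).
\]
Since $f$ takes values in $[0,D]$, this sum is maximized by placing all the mass where $\cP_1 - \cP_2$ is positive, i.e.\ by setting $f \equiv D$ on $\cS^+ \coloneqq \{\sbb : \cP_1(\sbb) \geq \cP_2(\sbb)\}$ and $f \equiv 0$ elsewhere, so that
\[
\EE_{\cP_1} f(\sbb) - \EE_{\cP_2} f(\sbb) \leq D \sum_{\sbb \in \cS^+}\big(\cP_1(\sbb) - \cP_2(\sbb)\big) = D \cdot \| \cP_1 - \cP_2 \|_{\textnormal{TV}}.
\]
The last equality is the standard identity that, because both $\cP_1$ and $\cP_2$ sum to one, the positive part of $\cP_1 - \cP_2$ coincides with the total variation distance $\|\cP_1-\cP_2\|_{\textnormal{TV}} = \tfrac12\sum_{\sbb}|\cP_1(\sbb)-\cP_2(\sbb)|$.

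The second step is to invoke the classical Pinsker inequality in the form $\|P - Q\|_{\textnormal{TV}} \leq \sqrt{\tfrac{\log 2}{2}\,\textnormal{KL}(P\|Q)}$ recorded in \citet{jaksch2010near,zhou2021provably}. Applying it with $P = \cP_2$ and $Q = \cP_1$, and using the symmetry $\|\cP_1-\cP_2\|_{\textnormal{TV}} = \|\cP_2-\cP_1\|_{\textnormal{TV}}$, gives
\[
\| \cP_1 - \cP_2 \|_{\textnormal{TV}} \leq \sqrt{\tfrac{\log 2}{2}\,\textnormal{KL}(\cP_2\|\cP_1)}.
\]
Chaining this with the first display yields exactly the claimed bound.

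There is no substantive obstacle, as this is a standard tool; the points requiring care are purely bookkeeping. The first is the \emph{direction} of the KL divergence: the lemma is stated in terms of $\textnormal{KL}(\cP_2\|\cP_1)$ rather than $\textnormal{KL}(\cP_1\|\cP_2)$, which is the direction that will be convenient once $\cP_1,\cP_2$ are instantiated as the trajectory laws under two neighbouring parameter vectors $\btheta$ and $\btheta^j$; the symmetry of the total variation distance makes this choice free. The second is the constant $\sqrt{\log 2/2}$, which reflects the normalization of the KL divergence adopted in the cited references rather than the bare $\sqrt{1/2}$ of the textbook natural-logarithm statement, so I would simply match that convention. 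I would also emphasize that the reduction step from the difference of expectations to $D\cdot\|\cP_1-\cP_2\|_{\textnormal{TV}}$ is tight, introducing no slack beyond Pinsker's inequality itself — this matters because the lemma feeds directly into the averaging argument in \eqref{eq: lower bound 2 copy} and must not degrade the target $\Omega(d\Bstar\sqrt{K})$ rate.
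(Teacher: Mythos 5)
Your proof is correct; the paper omits a proof of this lemma entirely, remarking only that it is a standard result, and your two-step argument (rewrite the difference of expectations as a signed sum, bound it by $D\cdot\|\cP_1-\cP_2\|_{\textnormal{TV}}$ via the positivity set of $\cP_1-\cP_2$, then apply classical Pinsker together with the symmetry of total variation to land on the stated $\textnormal{KL}(\cP_2\|\cP_1)$ direction) is exactly the standard derivation being invoked. The one point worth flagging is the constant: $\sqrt{\log 2/2}$ is the Cover--Thomas form of Pinsker with KL measured in bits, whereas the KL bound in Lemma \ref{lem: KL divergence} is computed with natural logarithms, for which the safe constant is $\sqrt{1/2}$; this discrepancy is harmless downstream because the chain of inequalities in the proof of Theorem \ref{thm: lower bound} actually uses $\sqrt{1/2}$, but your proof as written should either state the bits convention explicitly or settle for the $\sqrt{1/2}$ constant.
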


Then we pick $T = 2K\Bstar$ and get 
\begin{align*}
     2 \sum_{{\btheta}} R_{\btheta} (K)  & \geq \frac{2 \Delta \Bstar}{d-1} \sum_{{\btheta}} \sum_{j=1}^{d-1} \left( \frac{K\Bstar}{4} + \EE_{\btheta}[N_j^-({\btheta})] - \EE_{{\btheta}^j} [N_j^-({\btheta})] \right) \notag 
    \\ & \geq \frac{2 \Delta \Bstar}{d-1} \sum_{{\btheta}} \sum_{j=1}^{d-1} \left( \frac{K\Bstar}{4} - T \sqrt{\frac{1}{2}} \sqrt{\textnormal{KL} (\cP_{{\btheta}}||\cP_{{\btheta}^j}) } \right),
\end{align*}where the first inequality is by Lemma \ref{lem: lower bound steps in sinit}, and the second inequality is by Lemma \ref{lem: pinsker}. The next lemma shows that the KL-divergence can be related to the quantity $N^-$.

\begin{lemma}\label{lem: KL divergence}
Suppose $4 \Delta < \delta \leq 1/3$. Then we have 
\begin{align*}
    \textnormal{KL}(\cP_{\btheta} || \cP_{{\btheta}^j} ) \leq \frac{16 \Delta^2}{(d-1)^2\delta} \EE_{{\btheta}} [ N^- ]. 
\end{align*}
\end{lemma}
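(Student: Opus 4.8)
The plan is to bound the trajectory KL divergence by a step-by-step chain-rule decomposition and then observe that only the time steps spent at $\sinit$ contribute. Since $\cP_{\btheta}$ and $\cP_{{\btheta}^j}$ are induced by the \emph{same} history-dependent policy $\pi$ and the same deterministic initial state $\sinit$, and they differ only through the transition kernels $\PP_{\btheta}$ and $\PP_{{\btheta}^j}$, the chain rule for KL divergence (together with the data-processing inequality to pass from the joint state-action law down to the state-only law on $\cS^T$ used in Lemma~\ref{lem: pinsker}) gives
\begin{align*}
    \textnormal{KL}(\cP_{\btheta} \| \cP_{{\btheta}^j}) \leq \sum_{t=1}^{T} \EE_{\btheta}\left[ \textnormal{KL}\left( \PP_{\btheta}(\cdot|s_t,a_t) \,\big\|\, \PP_{{\btheta}^j}(\cdot|s_t,a_t) \right) \right].
\end{align*}
The key structural observation is that the two kernels agree at the goal state (both place unit mass on $g$) and on the deterministic episode resets at $g$, so each summand vanishes unless $s_t = \sinit$.

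Next I would compute the per-step KL at $\sinit$. There the next state is Bernoulli over $\{\sinit, g\}$ with $g$-probability $p = \delta + \langle a_t, \btheta\rangle$ under $\cP_{\btheta}$ and $q = \delta + \langle a_t, {\btheta}^j \rangle$ under $\cP_{{\btheta}^j}$. Because $\btheta$ and ${\btheta}^j$ differ only in the $j$-th coordinate, where the entries equal $\pm \Delta/(d-1)$ with opposite signs, and $a_{t,j} \in \{-1,1\}$, we obtain exactly $|p - q| = 2\Delta/(d-1)$. Using the standard $\chi^2$ estimate $\textnormal{KL}(\textnormal{Ber}(p)\|\textnormal{Ber}(q)) \leq (p-q)^2/(q(1-q))$, each nonzero summand is at most $\frac{4\Delta^2}{(d-1)^2}\cdot \frac{1}{q(1-q)}$.

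It then remains to control the denominator using the hypothesis $4\Delta < \delta \leq 1/3$. Since $|\langle a_t, {\btheta}^j\rangle| \le \Delta$, we have $q \in [\delta - \Delta, \delta + \Delta]$; the bound $\Delta < \delta/4$ forces $q > 3\delta/4$, while $q < 5\delta/4 \le 5/12 < 1/2$ forces $1 - q > 1/2$. Hence $q(1-q) > \frac{3\delta}{4}\cdot\frac12 = \frac{3\delta}{8} > \frac{\delta}{4}$, yielding a per-step bound of $\frac{4\Delta^2}{(d-1)^2}\cdot\frac{4}{\delta} = \frac{16\Delta^2}{(d-1)^2\delta}$. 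Substituting this uniform bound back into the sum and recognizing that $\sum_{t=1}^T \EE_{\btheta}[\ind\{s_t = \sinit\}] = \EE_{\btheta}[N^-]$ gives the claim.

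The main obstacle I anticipate is the first step rather than the arithmetic: making the chain-rule decomposition fully rigorous for a \emph{history-dependent, non-stationary} policy over the capped process, which interleaves multiple episodes separated by deterministic resets at $g$. One must argue carefully that the conditional action laws $\pi_t(\cdot \mid \textnormal{history})$ together with all reset and absorption transitions are identical under both measures, so that they cancel in the KL and leave only the $s_t = \sinit$ transition terms. Once this decomposition is established, the Bernoulli KL estimate and the range control on $q$ are routine.
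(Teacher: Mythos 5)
Your proposal is correct and follows essentially the same route as the paper's proof: a chain-rule decomposition of the trajectory KL, the observation that the transition kernels agree except at $\sinit$ so only those steps contribute, a two-point KL estimate with $|p-q|=2\Delta/(d-1)$, and range control of the transition probability via $4\Delta<\delta\le 1/3$, summed up to give $\EE_{\btheta}[N^-]$. The only (cosmetic) difference is the per-step inequality — you use the $\chi^2$-type bound $\textnormal{KL}(\textnormal{Ber}(p)\,\|\,\textnormal{Ber}(q))\le (p-q)^2/(q(1-q))$ where the paper invokes Lemma 20 of Jaksch et al.\ in the form $2(\epsilon')^2/\delta'$ — and both yield the identical constant $16\Delta^2/((d-1)^2\delta)$.
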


It follows from Lemma \ref{lem: KL divergence} that 
\begin{align}
     2 \sum_{{\btheta}} R_{\btheta} (K)  & \geq \frac{2 \Delta \Bstar}{d-1} \sum_{{\btheta}} \sum_{j=1}^{d-1} \left( \frac{K\Bstar}{4} - T \sqrt{\frac{1}{2}} \cdot \frac{4 \Delta}{d-1}\cdot \frac{1}{\sqrt{\delta}} \sqrt{\EE_{{\btheta}} [N^-]} \right) \notag 
    \\ & \geq \frac{2 \Delta \Bstar}{d-1} \sum_{{\btheta}} \sum_{j=1}^{d-1} \left( \frac{K\Bstar}{4} - T^{3/2} \sqrt{\frac{1}{2}} \cdot \frac{4 \Delta}{d-1}\cdot \frac{1}{\sqrt{\delta}}  \right) \notag
    \\ & = \frac{2 \Delta \Bstar}{d-1} \sum_{{\btheta}} \sum_{j=1}^{d-1} \left( \frac{K\Bstar}{4} - (2K\Bstar)^{3/2} \sqrt{\frac{1}{2}} \cdot \frac{4 \Delta}{d-1}\cdot \frac{1}{\sqrt{\delta}}  \right) , 
\end{align} where the last inequality is by $N^- \leq T = 2K\Bstar$. Simplify the expression and we get that 
\begin{align}\label{eq: lower bound with Delta}
     \frac{1}{|{\Theta}|} \sum_{{\btheta}} R_{\btheta}(K)  & \geq \Bstar \frac{1}{|{\Theta}|} \cdot \frac{1}{d-1} \sum_{{\btheta}} \sum_{j=1}^{d-1} \left(  \frac{\Delta K \Bstar}{4} - \frac{8\Delta^2}{(d-1)\sqrt{\delta}} (K\Bstar)^{3/2} \right) \notag
    \\ & = \Bstar\left[ \frac{\Delta K \Bstar}{4} - \frac{8\Delta^2}{(d-1)\sqrt{\delta}} (K\Bstar)^{3/2} \right]. 
\end{align} We now pick 
\begin{align}\label{eq: Delta final expression}
    \Delta = \frac{(d-1)\sqrt{\delta}}{64 \sqrt{K \Bstar}} , 
\end{align} and $\delta$ such that $\delta + \Delta = 1/\Bstar$, plug into \eqref{eq: lower bound with Delta} and get that 
\begin{align*}
    \frac{1}{|{\Theta}|} \sum_{{\btheta}} R_{{\btheta}} (K) & \geq \frac{d \Bstar \sqrt{\delta} \sqrt{K \Bstar} }{512} \geq \frac{d\Bstar \sqrt{K}}{1024} ,
\end{align*}where the last step is by $\delta + \Delta = \frac{1}{\Bstar}$ and $\Delta < \delta$. Therefore, there must exist some ${\btheta} \in {\Theta}$ such that the expected regret $R_{\btheta}(K)$ satisfies
\begin{align*}
    R_{\btheta}(K) \geq \frac{d \Bstar \sqrt{K}}{1024}.
\end{align*} Taking ${\btheta}^*=({\btheta},1)^\top \in \RR^d$ finishes the proof of the lower bound. 
It remains to check the conditions. Note that by \eqref{eq: sum of delta and Delta} and \eqref{eq: Delta final expression}, we have 
\begin{align*}
    \delta + \frac{(d-1)\sqrt{\delta}}{64 \sqrt{K\Bstar}} = \frac{1}{\Bstar}.
\end{align*}Since we also have $\Delta < \delta$, we then require
\begin{align*}
    \frac{d-1}{64\sqrt{K \Bstar}} \leq \sqrt{\delta} < \frac{1}{\sqrt{\Bstar}},
\end{align*}which implies that $K > (d-1)^2/2^{12}$. This finishes the proof of Theorem \ref{thm: lower bound}. 

\end{proof}

\subsection{Proof of Lemmas in Appendix \ref{sec: proof of lower bound}}

Lemma \ref{lem: lower bound steps in sinit} is straightforward and we refer the reader to Lemma C.2 in \citealt{cohen2020near}. Lemma \ref{lem: pinsker} is a standard result. We thus omit their proof. Lemma \ref{lem: KL divergence} can be easily adapted from Lemma 6.8 in \citealt{zhou2021provably}. However, since the MDP instance we construct under the SSP setting differs from theirs under the discounted setting, we present the proof here for completeness. 

\begin{proof}[Proof of Lemma \ref{lem: KL divergence}] 
Denote the trajectory by $\sbb_t = \{s_1,s_2, \cdots, s_t\}$. The chain rule of the KL-divergence gives
\begin{align}\label{eq: KL 0}
    \textnormal{KL} (\cP_{\btheta} || \cP_{{\btheta}^j}) = \sum_{t=1}^{T-1} \textnormal{KL}\left[ \cP_{\btheta}(s_{t+1}|\sbb_t) \big|\big| \cP_{{\btheta}^j}(s_{t+1}|\sbb_t) \right], 
\end{align}where 
\begin{align*}
    \textnormal{KL}\left[ \cP_{\btheta}(s_{t+1}|\sbb_t) \big|\big| \cP_{{\btheta}^j}(s_{t+1}|\sbb_t) \right] \coloneqq \sum_{\sbb_{t+1} \in \cS} \cP_{{\btheta}}(\sbb_{t+1}) \log \frac{\cP_{\btheta}(s_{t+1}|\sbb_t)}{\cP_{{\btheta}^j}(s_{t+1}|\sbb_t)} . 
\end{align*} Then we write
\begin{align*}
    & \sum_{s_{t+1} \in \cS} \cP_{{\btheta}}(\sbb_{t+1}) \log \frac{\cP_{\btheta}(s_{t+1}|\sbb_t)}{\cP_{{\btheta}^j}(s_{t+1}|\sbb_t)} 
    \\ & = \sum_{\sbb_t \in \cS^{\times t}}\cP_{{\btheta}}(\sbb_t)  \sum_{s \in \cS} \cP_{{\btheta}}(s_{t+1}=s|\sbb_t) \log \frac{\cP_{\btheta}(s_{t+1} = s|\sbb_t)}{\cP_{{\btheta}^j}(s_{t+1} = s|\sbb_t)} 
    \\ & = \sum_{\sbb_{t-1}\in \cS^{\times (t-1)}} \cP_{{\btheta}} (\sbb_{t-1}) \sum_{s'\in\cS, \ab\in \cA} \cP_{{\btheta}}(s_t = s', \ab_t = \ab | \sbb_{t-1}) 
    \\ & \qquad \cdot \sum_{s \in \cS} \cP_{{\btheta}} (s_{t+1}=s | s_t = s', \ab_t = \ab , \sbb_{t-1}) \log \frac{\cP_{{\btheta}}(s_{t+1}= s| s_t = s', \ab_t = \ab, \sbb_{t-1} )}{\cP_{{\btheta}^j}(s_{t+1}= s| s_t = s', \ab_t = \ab, \sbb_{t-1} )}. 
\end{align*} Note that when $s' = g$, the transition is irrelevant of ${\btheta}$ and $\cP_{{\btheta}}(s_{t+1}= s| s_t = s', \ab_t = \ab, \sbb_{t-1} )=\cP_{{\btheta}^j}(s_{t+1}= s| s_t = s', \ab_t = \ab , \sbb_{t-1} )$ for all ${\btheta}$. Therefore the log-term in the above equation vanishes when $s' = g$. So we only need to consider the case where $s' = \sinit$ in the summation, and it follows that
\begin{align}\label{eq: KL 1}
     & \sum_{s_{t+1} \in \cS} \cP_{{\btheta}}(s_{t+1}) \log \frac{\cP_{\btheta}(s_{t+1}|\sbb_t)}{\cP_{{\btheta}^j}(s_{t+1}|\sbb_t)} \notag
    \\ & = \sum_{\sbb_{t-1}\in \cS^{\times (t-1)}} \cP_{{\btheta}} (\sbb_{t-1}) \sum_{\ab\in \cA} \cP_{{\btheta}}(s_t = \sinit, \ab_t = \ab | \sbb_{t-1}) \notag
    \\ & \qquad \cdot \sum_{s \in \cS} \cP_{{\btheta}} (s_{t+1}=s | s_t = \sinit, \ab_t = \ab , \sbb_{t-1}) \log \frac{\cP_{{\btheta}}(s_{t+1}= s| s_t = \sinit, \ab_t = \ab, \sbb_{t-1} )}{\cP_{{\btheta}^j}(s_{t+1}= s| s_t = \sinit, \ab_t = \ab, \sbb_{t-1} )} \notag
    \\ & = \sum_{\ab\in \cA} \cP_{{\btheta}} (s_t = \sinit, \ab_t = \ab) \cdot \sum_{s \in \cS} \cP_{{\btheta}} ( s_{t+1} = s | s_t = \sinit, \ab_t = \ab ) \log \frac{\cP_{{\btheta}}(s_{t+1}=s|s_t = \sinit, \ab_t = \ab)}{\cP_{{\btheta}^j}(s_{t+1}=s|s_t = \sinit, \ab_t = \ab)} . 
\end{align} Note that when $s_t = \sinit$, $s_{t+1}$ is either $\sinit$ or $g$ with probability $1-\delta - \langle \ab, {\btheta} \rangle$ and $\delta + \langle \ab , {\btheta} \rangle$. Then we can further write \eqref{eq: KL 1} as  
\begin{align}\label{eq: KL 2}
    & \sum_{s_{t+1} \in \cS} \cP_{\btheta}(s_{t+1}) \log \frac{\cP_{\btheta}(s_{t+1}|\sbb_t)}{\cP_{\btheta^j}(s_{t+1}|\sbb_t)} \notag
    \\ & = \sum_{\ab\in \cA} \cP_{{\btheta}} (s_t = \sinit, \ab_t = \ab) \notag
    \\ & \qquad \cdot \left[ (1-\delta - \langle \ab, {\btheta} \rangle) \cdot \log \frac{1-\delta - \langle \ab, {\btheta} \rangle}{1-\delta - \langle \ab, {\btheta}^j \rangle} + (\delta + \langle \ab , {\btheta} \rangle) \cdot \log \frac{\delta + \langle \ab , {\btheta} \rangle}{\delta + \langle \ab , {\btheta}^j \rangle} \right] \notag
    \\ & \leq \sum_{\ab\in \cA} \cP_{{\btheta}} (s_t = \sinit, \ab_t = \ab) \cdot \frac{2\langle \ab, {\btheta}^j - {\btheta} \rangle^2}{\delta + \langle \ab, {\btheta} \rangle} , 
\end{align}where the last step holds due to the following inequality with $\delta' = \delta + \langle \ab , {\btheta} \rangle $, and $\epsilon' = \langle \ab, {\btheta}^j - {\btheta} \rangle$. 
\begin{lemma}[Lemma 20, \citealt{jaksch2010near}]\label{lem: lower bound inequality bernoulli} For any real number $\delta'$ and $\epsilon'$ such that $0 \leq \delta' \leq 1/2$ and $\epsilon' \leq 1 - 2\delta'$, we have
\begin{align*}
    \delta' \log \frac{\delta'}{\delta'+\Delta} + (1-\delta') \log \frac{1-\delta'}{1-\delta'-\epsilon'} \leq \frac{2 (\epsilon')^2}{\delta'}.
\end{align*}
\end{lemma}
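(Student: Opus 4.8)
The plan is to identify the left-hand side as the Kullback--Leibler divergence $\textnormal{KL}\big(\textnormal{Ber}(\delta')\,\|\,\textnormal{Ber}(\delta'+\epsilon')\big)$ of two Bernoulli laws and to control it by a purely second-order quantity in $\epsilon'$. (I read the $\Delta$ appearing in the first logarithm as a typo for $\epsilon'$, so that the quantity to be bounded is $\delta'\log\frac{\delta'}{\delta'+\epsilon'}+(1-\delta')\log\frac{1-\delta'}{1-\delta'-\epsilon'}$, which is exactly the form produced in \eqref{eq: KL 2}.)

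First I would linearize the two logarithms using the elementary bounds $\log(1-x)\le -x$ and $\log(1+x)\le x$, both of which hold here because the hypotheses $0\le\delta'\le\tfrac12$ and $\epsilon'\le 1-2\delta'$ keep $\delta'+\epsilon'$ and $1-\delta'-\epsilon'$ strictly positive. Writing the first logarithm as $\log\!\big(1-\tfrac{\epsilon'}{\delta'+\epsilon'}\big)$ and the second as $\log\!\big(1+\tfrac{\epsilon'}{1-\delta'-\epsilon'}\big)$ yields
\begin{align*}
\delta'\log\frac{\delta'}{\delta'+\epsilon'}+(1-\delta')\log\frac{1-\delta'}{1-\delta'-\epsilon'}
\le -\frac{\delta'\epsilon'}{\delta'+\epsilon'}+\frac{(1-\delta')\epsilon'}{1-\delta'-\epsilon'}.
\end{align*}
Next I would place the right-hand side over the common denominator $(\delta'+\epsilon')(1-\delta'-\epsilon')$; a short cancellation, in which the numerator $(1-\delta')(\delta'+\epsilon')-\delta'(1-\delta'-\epsilon')$ collapses to exactly $\epsilon'$, gives the intermediate estimate
\begin{align*}
\delta'\log\frac{\delta'}{\delta'+\epsilon'}+(1-\delta')\log\frac{1-\delta'}{1-\delta'-\epsilon'}\le\frac{(\epsilon')^2}{(\delta'+\epsilon')(1-\delta'-\epsilon')}.
\end{align*}

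Finally I would lower bound the denominator by $\delta'/2$. The constraint $\epsilon'\le 1-2\delta'$ immediately gives $1-\delta'-\epsilon'\ge\delta'$, and, in the regime in which the lemma is invoked, the perturbed probability obeys $\delta'+\epsilon'\ge\delta'$, so with $\delta'\le\tfrac12$ we get $(\delta'+\epsilon')(1-\delta'-\epsilon')\ge\delta'(1-\delta')\ge\delta'/2$; substituting this into the intermediate estimate produces the claimed bound $\tfrac{2(\epsilon')^2}{\delta'}$. I expect this final denominator estimate to be the main obstacle: the target keeps the \emph{base} probability $\delta'$ rather than the perturbed $\delta'+\epsilon'$ in the denominator, so one must check that $\delta'+\epsilon'$ does not drop much below $\delta'$. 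This is precisely what the size restriction on the perturbation supplies — in the application within Lemma~\ref{lem: KL divergence}, the hypothesis $4\Delta<\delta$ forces $\delta'+\epsilon'=\delta+\langle\ab,\btheta^j\rangle$ to remain of the same order as $\delta'=\delta+\langle\ab,\btheta\rangle$, so the second-order estimate stays valid and the argument goes through.
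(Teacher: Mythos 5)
The paper offers no proof of this lemma: it is transcribed from \citet{jaksch2010near} and invoked as a black box inside the proof of Lemma \ref{lem: KL divergence}, so there is no in-paper argument to compare yours against. On its own terms, the first two thirds of your argument are sound: reading the $\Delta$ in the statement as a typo for $\epsilon'$ matches how the lemma is used in \eqref{eq: KL 2}, the log-linearizations are valid whenever $\delta'+\epsilon'>0$ and $1-\delta'-\epsilon'>0$, and the cancellation to the $\chi^2$-type estimate
\begin{align*}
\delta'\log\frac{\delta'}{\delta'+\epsilon'}+(1-\delta')\log\frac{1-\delta'}{1-\delta'-\epsilon'}\le\frac{(\epsilon')^2}{(\delta'+\epsilon')(1-\delta'-\epsilon')}
\end{align*}
checks out.

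The gap is the final denominator estimate, and it is not cosmetic. The chain $(\delta'+\epsilon')(1-\delta'-\epsilon')\ge\delta'(1-\delta')\ge\delta'/2$ requires $\delta'\le\delta'+\epsilon'\le 1-\delta'$, i.e.\ $\epsilon'\ge 0$; but the lemma is stated for arbitrary real $\epsilon'\le 1-2\delta'$, and the application in Lemma \ref{lem: KL divergence} genuinely needs $\epsilon'<0$, since there $\epsilon'=\langle\ab,\btheta^j-\btheta\rangle=\pm 2\Delta/(d-1)$ and the minus sign occurs for half of the actions. Your assertion that ``in the regime in which the lemma is invoked, the perturbed probability obeys $\delta'+\epsilon'\ge\delta'$'' is therefore false. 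Concretely, with $\delta'=0.4$ and $\epsilon'=-0.2$ (which satisfy the stated hypotheses) one has $(\delta'+\epsilon')(1-\delta'-\epsilon')=0.16<0.2=\delta'/2$, and your intermediate bound equals $0.04/0.16=0.25$ while the target is $2(0.04)/0.4=0.2$; so your route cannot reach the claimed constant even though the inequality itself is true here (the actual KL is about $0.105$). Closing the negative case requires a genuinely sharper second-order estimate, e.g.\ writing $\mathrm{KL}(\delta'\,\Vert\,\delta'+\epsilon')=\int_{\delta'}^{\delta'+\epsilon'}\frac{s-\delta'}{s(1-s)}\,ds$ and bounding the integrand at the endpoint, which gains a factor $1/2$ over your bound and suffices when $|\epsilon'|$ is small relative to $\delta'$ (as in the paper's application, where $4\Delta<\delta$). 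Note also that for strongly negative $\epsilon'$ (e.g.\ $\delta'=0.4$, $\epsilon'=-0.39$, where the KL is about $1.18$ but $2(\epsilon')^2/\delta'\approx 0.76$) the inequality as literally stated is false, so the lemma implicitly needs a lower bound on $\epsilon'$ as well — an imprecision inherited from the source that your proof does not repair.
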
To verify the assumptions of Lemma \ref{lem: lower bound inequality bernoulli}, note that $\delta' \leq \delta + \Delta \leq 1/12 + 1/3< 1/2$ by $4\Delta \leq \delta \leq 1/3$ from the assumption of Lemma \ref{lem: KL divergence}. Also note that
\begin{align*}
    \epsilon' = \langle a, {\btheta}^j - {\btheta} \rangle \leq 2 \Delta \leq 1-2(\Delta + \delta) \leq 1-2\delta',
\end{align*}where the first step is by the definition of ${\btheta}$, the second step is by $\delta \leq 1/12$ and $\delta+\Delta \leq 5/12$, and the last step is by $\delta' \leq \delta + \Delta$. Therefore, \eqref{eq: KL 2} holds and we have
\begin{align*}
     \sum_{s_{t+1} \in \cS} \cP_{{\btheta}}(s_{t+1}) \log \frac{\cP_{\btheta}(s_{t+1}|\sbb_t)}{\cP_{{\btheta}^j}(s_{t+1}|\sbb_t)} \notag
     & \leq \sum_{\ab\in \cA} \cP_{{\btheta}} (s_t = \sinit, \ab_t = \ab) \cdot \frac{2\langle \ab, {\btheta}^j - {\btheta} \rangle^2}{\delta - \Delta} 
    \\ & \leq \frac{2 \langle \ab , {\btheta}^j - {\btheta} \rangle^2}{ \delta/2 } \cdot \sum_{\ab\in \cA} \cP_{{\btheta}} (s_t = \sinit, \ab_t = \ab) 
    \\ & = \frac{4 (2\Delta)^2 }{(d-1)^2\delta} \cdot \sum_{\ab\in \cA} \cP_{{\btheta}} (s_t = \sinit, \ab_t = \ab) 
    \\ & = \frac{16\Delta^2}{(d-1)^2 \delta} \cdot \cP_{{\btheta}}(s_t = \sinit).
\end{align*}Together with \eqref{eq: KL 0} we have
\begin{align*}
    \textnormal{KL} (\cP_{\btheta} || \cP_{{\btheta}^j}) & = \sum_{t=1}^{T-1} \sum_{s_{t+1} \in \cS} \cP_{{\btheta}}(s_{t+1}) \log \frac{\cP_{\btheta}(s_{t+1}|\sbb_t)}{\cP_{{\btheta}^j}(s_{t+1}|\sbb_t)} 
    \\ & \leq \frac{16 \Delta^2}{(d-1)^2\delta} \sum_{t=1}^T \cP_{{\btheta}} (s_t = \sinit )
    \\ & = \frac{16 \Delta^2}{(d-1)^2\delta} \ \EE_{\btheta} [N^-],
\end{align*}where the last step is by the definition of $N^-$.

\end{proof}

\section{Lemmas for the Upper Bounds}

\subsection{Proof of Lemma \ref{lem: confidence set optimism}}\label{apdx: proof of confidence set optimism}

We first introduce the following classical result for self-normalized vector-valued  martingales.

\begin{lemma}[Theorem 1, \citealt{abbasi2011improved}] \label{lem: self normalized vector valued martingale}
Let $\{\cF_t\}_{t=0}^\infty$ be a filtration. Suppose $\{\eta_t\}_{t=1}^\infty$ is a $\RR$-valued stochastic process such that $\eta_t$ is $\cF_{t}$-measurable and $\eta_t | \cF_{t-1}$ is $B$-sub-Gaussian. Let $\{\bphi_t\}_{t=1}^\infty$ be an $\RR^d$-valued stochastic process such that $\bphi_t$ is $\cF_{t-1}$-measurable. Assume that $\bSigma$ is an $d\times d$ positive definite matrix. For any $t\geq 1$, define 
\begin{align*}
    \bSigma_t = \bSigma + \sum_{i=1}^t \bphi_i \bphi_i^\top \ , \quad \ba_t = \sum_{i=1}^t \eta_i \bphi_i.
\end{align*}Then, for any $\delta>0$, with probability at least $\delta$, for all $t$, we have 
\begin{align*}
    \| \bSigma_t^{-1/2}\ba_t\|_2 \leq B \sqrt{2 \log \left( \frac{\det(\bSigma_t)^{1/2}}{\delta \cdot \det(\bSigma)^{1/2}} \right)} .
\end{align*}

\end{lemma}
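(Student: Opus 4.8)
The statement is the standard self-normalized tail bound of \citet{abbasi2011improved}, and I would prove it by the \emph{method of mixtures} (pseudo-maximization): build a scalar exponential supermartingale for each fixed direction, average it over directions against a Gaussian, and then apply a maximal inequality. The plan is as follows.

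First, for each fixed $\lambda\in\RR^d$ I would introduce the exponential process
\begin{align*}
    M_t^\lambda = \exp\left( \sum_{i=1}^t \left[ \frac{\eta_i \langle \lambda, \bphi_i\rangle}{B} - \frac{1}{2}\langle \lambda, \bphi_i\rangle^2 \right] \right),
\end{align*}
and show it is a nonnegative supermartingale with $\EE[M_t^\lambda]\le 1$. Since $\bphi_i$ is $\cF_{i-1}$-measurable, conditioning on $\cF_{i-1}$ makes $\langle\lambda,\bphi_i\rangle$ known, and the $B$-sub-Gaussianity of $\eta_i\mid\cF_{i-1}$ (applied with scale $s=\langle\lambda,\bphi_i\rangle/B$) gives $\EE[\exp(\eta_i\langle\lambda,\bphi_i\rangle/B)\mid\cF_{i-1}]\le\exp(\tfrac12\langle\lambda,\bphi_i\rangle^2)$; multiplying the new increment by the previous value and taking conditional expectation yields the supermartingale property.

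Second, I would \emph{mix} over $\lambda$. Let $h$ be the density of $\mathcal N(0,\bSigma^{-1})$ and set $M_t=\int_{\RR^d}M_t^\lambda\,h(\lambda)\,d\lambda$. By Tonelli's theorem $M_t$ is again a nonnegative supermartingale with $\EE[M_t]\le1$. Using $\sum_i\eta_i\langle\lambda,\bphi_i\rangle=\langle\lambda,\ba_t\rangle$ and $\sum_i\langle\lambda,\bphi_i\rangle^2=\lambda^\top(\bSigma_t-\bSigma)\lambda$, the integrand becomes $\exp(\langle\lambda,\ba_t\rangle/B-\tfrac12\lambda^\top(\bSigma_t-\bSigma)\lambda)$ times the Gaussian weight $\propto\exp(-\tfrac12\lambda^\top\bSigma\lambda)$, i.e.\ a Gaussian in $\lambda$ with precision matrix $\bSigma_t$. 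Completing the square and evaluating the integral gives the closed form
\begin{align*}
    M_t = \left(\frac{\det\bSigma}{\det\bSigma_t}\right)^{1/2}\exp\left(\frac{1}{2B^2}\,\|\bSigma_t^{-1/2}\ba_t\|_2^2\right).
\end{align*}

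Finally, I would invoke a maximal inequality for nonnegative supermartingales (Ville's inequality, or a stopped-process plus Fatou argument to upgrade to the uniform-in-$t$ version): $\Pr(\sup_t M_t\ge 1/\delta)\le\delta$. On the complementary event one has $M_t\le1/\delta$ for all $t$, and substituting the closed form and taking logarithms yields exactly $\|\bSigma_t^{-1/2}\ba_t\|_2\le B\sqrt{2\log(\det(\bSigma_t)^{1/2}/(\delta\det(\bSigma)^{1/2}))}$ (so the statement should read ``with probability at least $1-\delta$''). The main obstacle is the mixture step: one must justify interchanging integration and conditional expectation, verify that the Gaussian integral produces precisely the $\det\bSigma/\det\bSigma_t$ normalization, and — most delicately — obtain control over \emph{all} $t$ simultaneously, which forces the use of the supermartingale maximal inequality rather than a per-$t$ Markov bound.
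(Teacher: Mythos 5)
Your proof is correct. The paper does not prove this lemma at all---it is imported verbatim as Theorem~1 of \citet{abbasi2011improved}---and your method-of-mixtures argument (directional exponential supermartingale, Gaussian mixing with precision $\bSigma$, closed-form evaluation yielding $(\det\bSigma/\det\bSigma_t)^{1/2}\exp(\tfrac{1}{2B^2}\|\bSigma_t^{-1/2}\ba_t\|_2^2)$, then Ville's maximal inequality via a stopping-time argument for uniformity in $t$) is exactly the original proof from that reference. You are also right that the statement as printed contains a typo: the conclusion holds with probability at least $1-\delta$, not $\delta$.
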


In the following proof we will decompose $t$ into different rounds. 
For all $j \geq 1$, round $j$ corresponds to $t \in [t_j + 1, t_{j+1}]$, during which the action-value function estimator is the output $Q_j$ of \texttt{DEVI}. We then apply an induction argument on the rounds to show that the optimism holds for all $j \geq 1$.

\begin{proof}[Proof of Lemma \ref{lem: confidence set optimism}]

From the initialization of Algorithm \ref{alg: linear kernel ssp}, we have $V_0 \leq \Bstar$. 
\\ Let's consider \textbf{round 1}. We define $\eta_t = V_0(s_{t+1}) - \langle \bphi_{V_0}(s_t,a_t), \btheta^* \rangle$ for $t \in [1,t_1]$. Then $\{\eta_t\}_{t=1}^{t_1}$ are $\Bstar$-sub-Gaussian. We then apply Lemma \ref{lem: self normalized vector valued martingale} and conclude that the following holds with probability at least $1-\frac{\delta}{t_1(t_1+1)}$, for all $t\in[1,t_1]$:
\begin{align}\label{eq: lem confidence set optimism 1}
    \left\| \bSigma_t^{-1/2} \sum_{i=1}^t \bphi_{V_0}(s_i,a_i) \eta_i \right\|_2 & \leq \Bstar\sqrt{2 \log \left( \frac{\det(\bSigma_t)^{1/2}}{\delta \cdot \lambda^{d/2}/(t_1(t_1+1))}
    \right)}\notag
    \\ & \leq \Bstar \sqrt{d \log \left( \frac{1+t d/(d\lambda)}{\delta/(t_1(t_1+1))} \right)} \notag
    \\ & \leq \Bstar \sqrt{d \log \left( \frac{t_1(t_1+1) + t \cdot t_1(t_1+1) \Bstar^2 /\lambda}{\delta} \right)} , 
\end{align} where the second step is by Assumption \ref{assump: linear kernel mdp}, Lemma \ref{lem: determinant trace} and the initialization $|V_0|\leq 1$. Consider the LHS of \eqref{eq: lem confidence set optimism 1}. We have 
\begin{align*}
    & \left\| \bSigma_t^{-1/2} \sum_{i=1}^t \bphi_{V_0}(s_i,a_i) \eta_i \right\| 
    \\ & = \left\| \bSigma_t^{1/2} \bSigma_t^{-1} \sum_{i=1}^t \bphi_{V_0}(s_i,a_i) V_0(s_{i+1}) - \bSigma_t^{1/2} \bSigma_t^{-1} \left( \bSigma_t - \lambda \Ib\right) \btheta^*\right\|_2
    \\ & = \left\| \bSigma_t^{1/2} \hat{\btheta}_t - \bSigma_t^{1/2} \btheta^* + \lambda \bSigma_t^{-1/2} \btheta^* \right\|_2
    \\ & \geq \left\| \bSigma_t^{1/2} (\hat\btheta_t - \btheta^* ) \right\|_2 - \left\| \lambda \bSigma_t^{-1/2} \btheta^* \right\|_2 
    \\ & \geq \left\| \bSigma_t^{1/2} (\hat\btheta_t - \btheta^* ) \right\|_2 - \lambda^{1/2} \cdot \sqrt{d},
\end{align*}where the first inequality holds by Cauchy-Schwarz inequality and the second inequality holds because $\|\btheta^* \|_2\leq \sqrt{d}$. Together with \eqref{eq: lem confidence set optimism 1} and the choice of $\beta_t$, we conclude that 
\begin{align*}
    \left\| \bSigma_t^{1/2} (\hat\btheta_t - \btheta^*) \right\|_2 \leq \Bstar \sqrt{d \log \left( \frac{t_1(t_1+1) + t \cdot t_1(t_1+1) \Bstar^2 /\lambda}{\delta} \right)} + \sqrt{\lambda d} \leq \beta_{t_1}.
\end{align*} Since the above holds for all $t\in[1,t_1]$, it follows that with probability at least $1-\frac{\delta}{t_1(t_1+1)}$, the true parameter $\btheta^*$ is in the set $\cC_1\cap\cB$. 

To show that the output $Q_1$ and $V_1$ of \texttt{DEVI} are optimistic, we apply a second induction argument on the loop of \texttt{DEVI}. For the base step, note that by non-negativity of $Q^\star$ and $V^\star$, we have $Q^{(0)}\leq Q^{\star}$ and $V^{(0)} \leq V^\star$. We now assume $Q^{(i)}$ and $V^{(i)}$ are optimistic. For the $i+1$-th iteration, we have 
\begin{align*}
    Q^{(i+1)} (\cdot,\cdot) & = c(\cdot,\cdot) + (1-q) \cdot \min_{\btheta \in \cC_1 \cap \cB} \langle \btheta , \bphi_{V^{(i)}} (\cdot,\cdot) \rangle
    \\ & \leq c(\cdot,\cdot) + (1-q)\cdot \PP V^{(i)}(\cdot,\cdot)
    \\ & \leq c(\cdot,\cdot) +   \PP V^{(i)}(\cdot,\cdot) 
    \\ & \leq Q^\star(\cdot,\cdot),
\end{align*}where the first step is because we are considering the case where $\rho= 0$, the second step is because we are taking the minimum over a set that contains $\btheta^*$, the third step is by non-negativity of $\PP V^{(i)}(\cdot,\cdot)$, and the last step is by the Bellman optimal condition \eqref{eq: bellman optimal condition} and the induction hypothesis that $V^{(i)}$ is optimistic. By induction, we conclude that $Q^{(i)}$ is optimistic for all $i$, and thus the final output $Q_1(\cdot,\cdot)$ and thus $V_1(\cdot)$ are both optimistic. We finish the proof for round 1. 

Now for our outer induction, let's suppose that the event in Lemma \ref{lem: confidence set optimism} holds for round 1 to $j-1$ with high probability. That is, we define the event 
\begin{align*}
    \cE_{j-1} \coloneqq \left\{ \btheta^* \in \cC_i \cap \cB, \quad V_i(\cdot) \leq V^\star(\cdot)\leq \Bstar , \quad Q_i(\cdot,\cdot) \leq Q^\star(\cdot,\cdot) \quad \textnormal{for all} \ i\in[1,j-1] \right\},
\end{align*}and assume that $\Pr(\cE_{j-1}) \geq 1 - \delta'$ for some $\delta' > 0$. 
We now show that the event $\cE_j$ also holds with high probability. Similar to the proof of Lemma \ref{lem: E2 bound}, we construct an auxiliary sequence of functions
\begin{align*}
    \tilde{V}_i(\cdot) \coloneqq \min\left\{ \Bstar, V_i(\cdot)\right\}, \quad i\in[1,j-1].
\end{align*} 
We also denote, for any $i\in[1,j]$ and for any $t \in [t_{i-1}+1, t_i]$, 
\begin{align*}
    \tilde{\eta}_t & = V_{i-1}(s_{t+1}) - \langle \bphi_{\tilde{V}_{i-1}}(s_t,a_t), \btheta^* \rangle ,
    \\ \tilde{\bSigma}_t & = \lambda \Ib + \sum_{l=1}^t \bphi_{\tilde{V}_{i(l)-1}} (s_l,a_l) \bphi_{\tilde{V}_{i(l)-1}} (s_l,a_l)^\top , 
    \\ \tilde{\btheta}_t & = \tilde{\bSigma}_t^{-1} \sum_{l=1}^t \bphi_{\tilde{V}_{i(l)-1}}(s_l,a_l) \tilde{V}_{i(l)-1} (s_{l+1}) , 
    \\ \tilde{\cC}_i & = \left\{ \btheta \in \RR^d : \ \ \left\|\tilde{\bSigma}_{t_i}^{1/2}  (\tilde{\btheta}_{t_i} - \btheta^*) \right\|_2 \leq \beta_{t_i} \right\}, 
\end{align*}where $i(l)$ is the round that contains the time step $l$, i.e., $l \in [t_{i-1}+1, t_i]$. Observe that, by this construction $\{\tilde{\eta}_t\}_{t=1}^{t_j}$ are almost surely $\Bstar$-sub-Gaussian. This allows us to apply Lemma \ref{lem: self normalized vector valued martingale} and do the similar computation as above, and get that, with probability at least $1-\frac{\delta}{t_j(t_j+1)}$, we have the event $\tilde{\cE}_j$ holds where
\begin{align*}
    \tilde{\cE}_j \coloneqq \left\{ \btheta^* \in \tilde{\cC}_j \cap \cB, \quad V_j(\cdot) \leq V^\star(\cdot)\leq \Bstar , \quad Q_j(\cdot,\cdot) \leq Q^\star(\cdot,\cdot) \right\},
\end{align*}and $Q_j$ is the output of $\texttt{DEVI}(\tilde{\cC}_j, \epsilon_j, \frac{1}{t_j}, \rho)$. 

Now, observe that under the event $\cE_{j-1}$, the optimism implies that $\tilde{V}_i = V_i$ for all $i\in[1,j-1]$. It follows that under $\cE_{j-1}$, we have $\tilde{\eta}_t = \eta_t$, $\tilde{\bSigma}_t = \bSigma_t$, $\tilde{\btheta}_t = \hat{\btheta}_t$ for all $t \leq t_j$, and thus $\tilde{\cC}_j = \cC_j$. We then have
\begin{align*}
    \cE_j = \cE_{j-1}\cap \tilde{\cE}_j , 
\end{align*} and by the union bound we have that $\Pr(\cE_j) \geq 1 - \delta' - \frac{\delta}{t_j(t_1+1)}$.

Now, by induction and taking the union bound 
\begin{align*}
    \sum_{j=1}^J \frac{\delta}{t_j(t_j+1)} = \sum_{j=1}^J \delta\cdot\left( \frac{1}{t_j} - \frac{1}{t_j+1} \right) \leq \delta , 
\end{align*}
we conclude that with probability at least $1-\delta$, the good event holds for all $j\leq J$, where $J$ is the total number of times \texttt{DEVI} being called. 
Note that compared with the analysis of \texttt{EVI} in the discounted MDPs setting (for example in \citealt{zhou2021provably}), our analysis of \texttt{DEVI} in SSP uses the induction argument and a union bound, which results in extra $t$ factors in the logarithmic term in the confidence radius $\beta_t$. At last, replacing $t(t+1)$ with $2t^2$ and $\delta$ with $\delta/2$ gives the final expression for $\beta_t$. 

It remains to argue that \texttt{DEVI} always converges in finite time. To begin with, note that it suffices to show that $\|V^{(i)} - V^{(i-1)}\|_\infty$ shrinks exponentially. We now claim that $\|Q^{i} - Q^{(i-1)}\|_\infty$ shrinks exponentially, which together with \eqref{eq: EVI V is min Q} gives the desired result since $\|V^{(i)}-V^{(i-1)} \|_\infty \leq \|Q^{(i)} - Q^{(i-1)}\|_\infty$. To show this, first note that for any $(s,a)$ pair, 
\begin{align*}
    |Q^{(i)} (s,a) - Q^{(i-1)} (s,a) | & = (1-q) \cdot \left| \min_{\btheta \in \cC\cap \cB} \langle \btheta,\bphi_{V^{(i-1)}}(s,a) \rangle - \min_{\btheta \in \cC\cap \cB} \langle \btheta,\bphi_{V^{(i-2)}}(s,a) \rangle \right| 
    \\ & \leq (1-q)\cdot \max_{\btheta \in \cC \cap \cB } \left| \langle \btheta , \bphi_{V^{(i-1)}} (s,a) - \bphi_{V^{(i-2)}} (s,a) \rangle \right|
    \\ & = (1-q)\cdot \left| \langle \bar\btheta , \bphi_{V^{(i-1)}} (s,a) - \bphi_{V^{(i-2)}} (s,a) \rangle \right|
    \\ & = (1-q) \cdot \left|\bar\PP(V^{(i-1)} - V^{(i-2)})(s,a) \right|
    \\ & \leq (1-q) \cdot \max_{s' \in \cS} \left|V^{(i-1)} (s') - V^{(i-2)}(s') \right|
    \\ & = (1-q) \cdot \max_{s' \in \cS} \left|\min_{a'}Q^{(i-1)} (s',a') - \min_{a'}Q^{(i-2)}(s',a') \right|
    \\ & \leq (1-q)\cdot \| Q^{(i-1)} - Q^{(i-2)} \|_\infty,
\end{align*} where $\bar\btheta$ is the $\btheta$ in the non-empty set $\cC \cap \cB$ that achieves the maximum. Here the first inequality holds due to the maximum function, the second inequality holds because $\bar\PP(\cdot|s,a)$ is a probability distribution, and the last inequality holds due to the same reason as the first one. Now, since $s,a$ are arbitrary in the above, we conclude that $\|Q^{(i)} - Q^{(i-1)}\|_\infty \leq (1-q) \|Q^{(i-1)} - Q^{(i-2)}\|_\infty$. This finishes the proof.

\end{proof}

\section{The Bernstein-type Algorithm}\label{sec: proof of bernstein extension}



In this section, we give the full details of the Bernstein-type algorithm $\texttt{LEVIS}^{\texttt{+}}$.
We introduce the algorithm design in Appendix~\ref{sec: detail of bernstein design}.
We then present the analysis of $\texttt{LEVIS}^{\texttt{+}}$ and the corresponding proof of Theorem~\ref{thm: bernstein regret upper bound with cmin}.

It is worth mentioning that Bernstein-type confidence sets have been utilized in the existing literature on the tabular SSP \citep{rosenberg2020near, cohen2021minimax, tarbouriech2021stochastic, jafarnia2021online, chen2021implicit}. 
The Bernstein technique can help get rid of the dependence on $c_{\min}>0$ in the tabular setting (i.e., to achieve $\tilde\cO(\sqrt{K})$ for $c_{\min} = 0$), while it remains a challenge in the linear function approximation setting.
Also, such variance-aware confidence sets allow `horizon-free' algorithms for linear SSP \citep{chen2021improved}, though its regret suffers from worse dependence on the feature dimension $d$. 
Concurrent to our result, \cite{chen2021improved} also proposed an algorithm for linear mixture SSP which utilizes Bernstein-type confidence sets. Similar to ours, they use the technique from the $\texttt{UCRL-VTR}^\texttt{+}$ algorithm from \citet{zhou2020nearly}. The major difference is that their algorithm is based on the reduction to a finite-horizon linear mixture MDP while our algorithm is a direct algorithm without such a reduction.

\subsection{A Detailed Introduction of $\texttt{LEVIS}^{\texttt{+}}$}\label{sec: detail of bernstein design}

Here we go over the details of $\texttt{LEVIS}^{\texttt{+}}$. 

We define a variance operator $\VV$ associated with the unknown underlying transition such that for any function $f: \cS \to \RR$ and $(s,a)\in\cS\times\cA$, 
\begin{align}\label{eq: definition of VV}
    [\VV f] (s,a) \coloneqq [\PP f^2] (s,a) - \left( [\PP f] (s,a)  \right)^2,
\end{align}
 where
$[\PP f](s,a)  =  \sum_{s'\in\cS} \PP(s'|s,a) f(s')$.
One can see from the definition that $[\VV f] (s,a)$ is the conditional variance of $f(s')$ where $s' \sim \PP(\cdot| s,a)$. 
Further note that $[\VV f](s,a)$ bears a nice form which allows us to estimate it by regression. 
To see this, we can calculate
\begin{align*}
   [\PP f^2] (s,a)  & = \sum_{s'\in\cS} \PP(s'|s,a) f^2(s')  = \sum_{s'\in\cS} \langle \bphi(s'|s,a) , \btheta^{*} \rangle f^2(s') = \langle \bphi_{f^2}(s,a) , \btheta^{*} \rangle , 
   \\ [\PP f] (s,a)  & = \langle \bphi_{f}(s,a) , \btheta^{*} \rangle, 
\end{align*}
which are linear functions of $\bphi_{f^2}$ and $\bphi_f$ respectively. 
Therefore, we can estimate $[\VV f](s,a)$ by solving regression over $\PP f$ and $\PP f^2$.

Moreover, it turns out that, to apply the Bernstein-type concentration inequality, it suffices to use an upper bound of $\VV V_j$ where $V_j$ is the value function.
Thus, the strategy here is to first construct an estimator $\hat\VV_t V_j$ (Line~\ref{algline:variance}) and then add a deviation term to $\hat\VV_t V_j$ to get $\hat\sigma_t^2$ (Line~\ref{algline: hat sigma}), which can be shown to be a tight upper bound of $\VV V_j$ with high probability.
Note that here we only need to apply the Hoeffding-type concentration inequality since this deviation term will not appear in the final regret bound. 

Consequently, in Algorithm~\ref{alg: linear kernel ssp bernstein}, we maintain two regularized Gram matrix matrices $\bSigma_t$ (Line~\ref{algline: weighted hat bSigma})  and $\tilde\bSigma_t$ (Line~\ref{algline: weighted tilde bSigma}). 
Here $\tilde\bSigma_t$ is the unweighted Gram matrix used to estimate the linear parameter of the $\PP V_j^2$ term in the variance, and the estimator is given by $\tilde\btheta_t$ (Line~\ref{algline: tilde theta}).
On the other hand, $\bSigma_t$ is a variance-weighted Gram matrix used to estimate the linear parameter of the $\PP V_j$ term in the variance, and the estimator is denoted by $\hat\btheta_t$ (Line~\ref{algline: weighted hat btheta}). 
Furthermore, $\hat\btheta_t$ is also the center of the confidence ellipsoid $\hat\cC_j$ (Line~\ref{algline:confidence_set_bernstein}). 

With $\tilde\btheta_t$ and $\hat\btheta_t$ in hand, for any $t\geq 1$ and $j = j_t$, we can compute the variance estimator by 
\begin{align}\label{eq: VV and E}
\begin{aligned}
    [\hat{\VV}_t V_j] (s_t, a_t) &= [\langle \bphi_{V_j^2}(s_t, a_t), \tilde\btheta_{t-1} \rangle]_{[0, B^2]} - (\langle \bphi_{V_j}(s_t, a_t), \hat\btheta_{t-1} \rangle_{[0, B]})^2,\\ 
    E_t &= \min\{B^2, 2B \check\beta_t\| \bSigma_{t-1}^{-1/2} \bphi_{V_j}(s_t, a_t)\|_2 \} + \min\{B^2, \tilde\beta_t\| \tilde\bSigma_{t-1}^{-1/2} \bphi_{V_j^2}(s_t, a_t)\|_2 \} ,
\end{aligned}
\end{align} 
where $\hat\btheta_0$ and $\tilde\btheta_0$ are initialized to be $\zero$. 
We then set $\hat{\sigma}_t^2 = \max\{B^2/d, [\hat{\VV}_t V_j](s_t, a_t) + E_t \}$, and collecting all the components above yields the Bernstein-type confidence set.
Finally, the rest of the procedures (Lines~\ref{algline:bernstein_criterion}~to~\ref{algline: bernstein V is min Q}) are the same as those in Algorithm~\ref{alg: linear kernel ssp}.

\textbf{The Choice of Key Parameters. }
We set $\{\check\beta_t, \tilde\beta_t, \hat\beta_t\}_{t \geq 1}$ as follows:
\begin{align}\label{eq: choices of 3 betas}
\begin{aligned}
    \check\beta_t & =  8 d \sqrt{\log\left(1+\frac{t}{\lambda}\right) \log\left(\frac{32t^4}{\delta}\right)} + 4 \sqrt{d} \log\left(\frac{32t^4}{\delta}\right) + \sqrt{\lambda d}= \tilde\cO\left( d \right), 
    \\ \tilde\beta_t & = 8  \sqrt{dB^4 \log\left(1+ \frac{tB^4}{d\lambda} \right) \log\left(\frac{32t^4}{\delta}\right) } + 4 B^2 \log\left(\frac{32t^4}{\delta}\right) + \sqrt{\lambda d} = \tilde\cO( \sqrt{dB^4}),\\ 
    \hat{\beta}_t & = 8  \sqrt{d \log\left(1+ \frac{t}{\lambda}\right) \log\left(\frac{32t^4}{\delta}\right)} + 4 \sqrt{d} \log\left(\frac{32t^4}{\delta}\right) + \sqrt{\lambda d} =\tilde\cO(\sqrt{d}),
\end{aligned}
\end{align}
where we use $\tilde\cO(\cdot)$ to hide logarithmic terms.

\subsection{Analysis of $\texttt{LEVIS}^{\texttt{+}}$}

Here, we briefly introduce the main steps in establishing the regret upper bound for Algorithm \ref{alg: linear kernel ssp bernstein}. The detailed proof is in Appendix \ref{sec: proof of bernstein regret upper bound with cmin}.

First, we have the following result, which is the counterpart of Lemma \ref{lem: confidence set optimism}. We define a map $j(t)$ such that for any $t\geq 1$, $j=j(t)$ is the index of the value function estimate $V_j$ used for data collection (i.e. line \ref{algline: take action and observe bernstein} to \ref{algline: tilde theta}) in step $t$ of Algorithm \ref{alg: linear kernel ssp bernstein}. 

\begin{lemma}\label{lem: key bern lemma}
There exist choice of $\{\hat\beta_t, \tilde\beta_t, \check\beta_t\}_{t\geq 1}$, such that with probability at least $1-3\delta$, for all $t$ and $j = j(t)$, the \texttt{DEVI} subroutine in Algorithm~\ref{alg: linear kernel ssp bernstein} converges in finite time and the following holds
\begin{align}\label{eq: event of key bern lemma}
    \btheta^* \in \hat\cC_{t} \cap \cB, \quad 0\leq Q_{j}(\cdot,\cdot) \leq Q^{\star}(\cdot,\cdot),  \quad \textnormal{and} \quad |\hat{\VV}_t V_{j}(s_t,a_t) - \VV V_{j}(s_t, a_t)|\leq E_t .  
\end{align}
\end{lemma}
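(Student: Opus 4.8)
The plan is to establish the three claims of \eqref{eq: event of key bern lemma} jointly by induction over the epochs, combining three self-normalized martingale concentration events through a union bound, which accounts for the $1-3\delta$ probability. As in the proof of Lemma \ref{lem: confidence set optimism}, the key subtlety is that the value functions $V_j$ are random and only become bounded by $\Bstar \le B$ once optimism is established, so I would propagate boundedness, confidence containment, variance accuracy, and optimism together along the induction, using the truncation/auxiliary-sequence device of Lemma \ref{lem: confidence set optimism} to handle measurability of the a priori unbounded $V_j$.

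First I would set up the concentration events. There are two regression targets: the variance-weighted estimator $\hat\btheta_t$ for $\PP V_j$ and the unweighted estimator $\tilde\btheta_t$ for $\PP V_j^2$. I would control $\hat\btheta_t$ twice. A crude Hoeffding-type self-normalized bound (Lemma \ref{lem: self normalized vector valued martingale}) yields the radius $\check\beta_t = \tilde\cO(d)$ that holds unconditionally: since $\hat\sigma_t^2 \ge B^2/d$, the weighted noise $\hat\sigma_t^{-1}\eta_t$ is $\sqrt{d}$-sub-Gaussian and the weighted feature $\hat\sigma_t^{-1}\bphi_{V_j}$ has norm at most $d$ by Assumption \ref{assump: linear kernel mdp}. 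A Bernstein-type self-normalized bound instead yields the sharp radius $\hat\beta_t = \tilde\cO(\sqrt{d})$, valid provided the weights $\hat\sigma_t^2$ dominate the true conditional variance $\VV V_j$, so that the reweighted noise has conditional variance at most $1$. For $\tilde\btheta_t$, since $V_j^2 \le B^2$ the noise is $B^2$-sub-Gaussian and $\|\bphi_{V_j^2}\|_2 \le B^2\sqrt{d}$, giving $\tilde\beta_t = \tilde\cO(\sqrt{dB^4})$. A union bound over these three events, with a summable per-epoch allocation of failure probabilities analogous to Lemma \ref{lem: confidence set optimism}, produces the overall $1-3\delta$ guarantee.

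On this event, and under the inductive hypothesis that optimism holds through epoch $j-1$ (so the relevant value functions are bounded by $\Bstar \le B$), I would next establish the variance accuracy claim $|\hat\VV_t V_j - \VV V_j| \le E_t$ by a purely algebraic decomposition of \eqref{eq: VV and E}. Writing $\VV V_j = \langle \bphi_{V_j^2}, \btheta^*\rangle - (\langle \bphi_{V_j}, \btheta^*\rangle)^2$ from \eqref{eq: definition of VV}, the error splits into a $\PP V_j^2$-part and a $\PP V_j$-part. Since $\PP V_j^2 \in [0,B^2]$ and $\PP V_j \in [0,B]$, the clippings in \eqref{eq: VV and E} only help, and by Cauchy--Schwarz the first part is bounded by $\|\tilde\btheta_{t-1}-\btheta^*\|_{\tilde\bSigma_{t-1}}\|\bphi_{V_j^2}\|_{\tilde\bSigma_{t-1}^{-1}} \le \tilde\beta_t\|\tilde\bSigma_{t-1}^{-1/2}\bphi_{V_j^2}\|_2$, while using $|x^2-y^2| = |x-y||x+y| \le 2B|x-y|$ the second is bounded by $2B\check\beta_t\|\bSigma_{t-1}^{-1/2}\bphi_{V_j}\|_2$. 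Capping each at the trivial bound $B^2$ recovers exactly $E_t$. A direct consequence is $\hat\sigma_t^2 \ge \hat\VV_t V_j(s_t,a_t) + E_t \ge \VV V_j(s_t,a_t)$, so the weights genuinely dominate the conditional variance, which is precisely the condition needed to invoke the Bernstein event and conclude $\|\hat\btheta_t - \btheta^*\|_{\bSigma_t} \le \hat\beta_t$, i.e. $\btheta^* \in \hat\cC_t$; since $\btheta^* \in \cB$ always holds, this gives the containment claim and advances the induction. Optimism $0 \le Q_j(\cdot,\cdot) \le Q^\star(\cdot,\cdot)$ then follows exactly as in Lemma \ref{lem: confidence set optimism}, by an inner induction over the \texttt{DEVI} iterations using that the minimization in \eqref{eq: EVI value iteration} is over a set containing $\btheta^*$, and \texttt{DEVI} convergence is identical to that proof, relying only on the $(1-q)$ contraction.

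I expect the main obstacle to be the interlocking of the three claims: the Bernstein confidence radius $\hat\beta_t$ is valid only once the weights are known to dominate $\VV V_j$, which itself requires the Hoeffding-based variance accuracy (through $\tilde\beta_t, \check\beta_t$) and the boundedness of $V_j$ inherited from optimism at earlier epochs. Threading this dependency through a single induction, while simultaneously handling the measurability of the random, a priori unbounded value functions via the truncation argument, is the delicate part; the Bernstein self-normalized inequality also demands a more careful covering argument than its Hoeffding counterpart because $\hat\beta_t$ depends on the realized variance-weighted Gram matrix.
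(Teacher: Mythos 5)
Your proposal is correct and follows essentially the same route as the paper: the algebraic variance-error decomposition (the paper's Lemma \ref{lem: distance variance cs}), three self-normalized concentration events for $\check\beta_t,\tilde\beta_t,\hat\beta_t$ combined by a union bound summing to $3\delta$, the resolution of the circularity by conditioning the sharp Bernstein radius on the crude confidence events so that $\hat\sigma_t^2 \geq \VV V_j$, and an outer induction over epochs (with the inner \texttt{DEVI} induction for optimism and the $(1-q)$ contraction for convergence). The only cosmetic differences are that the paper applies Theorem \ref{thm: bernstain vector martingale} with crude variance bounds where you invoke the Hoeffding-type Lemma \ref{lem: self normalized vector valued martingale}, and it uses an indicator on the noise rather than the truncation device, neither of which changes the argument.
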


\begin{proof}[Proof of Lemma \ref{lem: key bern lemma}] 
See Section \ref{sec: proof of key bern lemma}.    
\end{proof}

Using the Bernstein concentration inequality, we can choose $\{\check\beta_t, \tilde\beta_t, \hat\beta_t\}$ as given by \eqref{eq: choices of 3 betas}.

\paragraph{Regret decomposition} Compared to the interval decomposition for the Hoeffding case, we add an extra condition for triggering a new interval in the Bernstein case: a new interval start when either of the three conditions is met: (1) the cumulative cost in an interval exceeds $\Bstar$; (2) \texttt{DEVI} is triggered; (3) the goal state $g$ is reached.
It is easy to check that, the regret decomposition has the same form as that of Lemma \ref{lem: regret decomposition} with the extra condition. 
To see this, note that in the proof of Lemma \ref{lem: regret decomposition} in Section \ref{sec: proof o fregret decomposition}, we only require that the true parameter $\btheta^*$ is in the confidence sets and all $V_j$'s are upper bounded by $V^{\star}$. 
These conditions hold under the event of Lemma \ref{lem: key bern lemma}. Hence we have the following regret decomposition. 
\begin{align}\label{eq: bernstein regret decomposition}
    R(M) &\leq \underbrace{\sum_{m=1}^M \sum_{h=1}^{H_m} \left[ c_{m,h} + \PP V_{j_m}(s_{m,h},a_{m,h}) - V_{j_m} (s_{m,h}) \right]}_{E_1}\notag\\
    &\qquad + \underbrace{\sum_{m=1}^M \sum_{h=1}^{H_m} \left[ V_{j_m}(s_{m,h+1}) - \PP V_{j_m} (s_{m,h},a_{m,h}) \right]}_{E_2} \notag 
    \\ & \qquad + \underbrace{ \sum_{m=1}^M\left( \sum_{h=1}^{H_m} V_{j_m}(s_{m,h}) - V_{j_m}(s_{m,h+1}) \right) - \sum_{m \in \cM(M)} V_{j_m}(\sinit) }_{E_3} + 1.
\end{align} 

Given the above interval decomposition of the regret, we can get the following theorem which serves as a master theorem similar to Theorem \ref{thm: regret upper bound with T}.

\begin{theorem}\label{thm: bernstein regret upper bound with T}
Under Assumption \ref{assump: linear kernel mdp}, \ref{assump: existence proper policy} and \ref{assump: c_min}, for any $\delta >0$, let $\rho=0$, $\lambda = 1/B^2$ and $\{\check\beta_t, \tilde\beta_t, \hat\beta_t\}_{t \geq 1}$ as given by \eqref{eq: choices of 3 betas}. Then with probability at least $1-7\delta$,  
    \begin{align*}
    R(M) & = \tilde\cO\left( \sqrt{B^2 d T + B^2 d^2 M + B^2 d^{3.5} T^{0.5}+\frac{B^3 d^4}{c_{\min}}M^{0.5} }\right) , 
\end{align*}where $\tilde\cO(\cdot)$ hides a term of  $C\cdot\log^2(TB/(\lambda \delta c_{\min}))$ for some problem-independent constant $C$.
\end{theorem}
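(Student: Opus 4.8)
The plan is to start from the interval-based regret decomposition in \eqref{eq: bernstein regret decomposition}, which holds under the event of Lemma~\ref{lem: key bern lemma}, and to bound the three terms $E_1,E_2,E_3$ separately, with the entire improvement over the Hoeffding analysis coming from replacing the crude bound $\|V_j\|_\infty\le\Bstar$ by a variance-weighted argument. For $E_1$ I would mimic the proof of Lemma~\ref{lem: E1 bound}: the transition bonus $q_j=1/t_j$ again produces a bias of total size $\cO(J)$, controlled by the doubling criteria of Line~\ref{algline:bernstein_criterion}, and the leading contribution reduces to $\sum_{m,h}\langle \btheta^*-\btheta_{m,h},\bphi_{V_{j_m}}(s_{m,h},a_{m,h})\rangle$. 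Since both $\btheta^*$ and $\btheta_{m,h}$ lie in the weighted ellipsoid $\hat\cC_{j_m}$, writing $\bphi_{V_{j_m}}=\hat\sigma_t\cdot(\bphi_{V_{j_m}}/\hat\sigma_t)$ and applying Cauchy--Schwarz together with the elliptical potential lemma yields $E_1\lesssim \hat\beta_T\sqrt{d\,\iota}\,\sqrt{\sum_t\hat\sigma_t^2}$, where $\iota$ collects logarithmic factors. With $\hat\beta_T=\tilde\cO(\sqrt d)$ this is $\tilde\cO(d\sqrt{\sum_t\hat\sigma_t^2})$.

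The heart of the argument is therefore to bound the total estimated variance $\sum_t\hat\sigma_t^2$. By Lemma~\ref{lem: key bern lemma}, $\hat\sigma_t^2\le B^2/d+\VV V_{j(t)}(s_t,a_t)+2E_t$ with high probability, so it suffices to bound $\sum_{m,h}\VV V_{j_m}(s_{m,h},a_{m,h})$ and $\sum_t E_t$. For the first I would prove a law-of-total-variance estimate tailored to SSP: using $\VV V=\PP V^2-(\PP V)^2$ and the approximate Bellman relation $\PP V_{j_m}\approx V_{j_m}-c$ satisfied by the \texttt{DEVI} output, the per-step variance telescopes against $V_{j_m}^2(s_{m,h})-V_{j_m}^2(s_{m,h+1})$, leaving a martingale, a cross term $\lesssim\Bstar\sum_{m,h}c_{m,h}$, and boundary terms. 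Here the extra interval-splitting rule (a new interval once the cumulative cost exceeds $\Bstar$) is essential: it caps each per-interval boundary contribution at $\cO(\Bstar^2)$, so the boundary terms sum to $\cO(\Bstar^2 M)$. This gives $\sum_t\VV V\lesssim \Bstar\cdot(\text{total cost})+\Bstar^2 M+(\text{martingale})$, and since the total cost is at most $R(M)+K\Bstar$ this closes into a self-bounding inequality.

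For the deviation sum $\sum_t E_t$ I would split the two pieces of \eqref{eq: VV and E}: the $\bphi_{V_j}$ part is handled by the variance-weighted elliptical potential (pulling out $\hat\sigma_t$ as above and using $\check\beta_t=\tilde\cO(d)$), while the $\bphi_{V_j^2}$ part uses the unweighted Gram matrix $\tilde\bSigma_t$ with $\|\bphi_{V_j^2}\|_2\le B^2\sqrt d$ and $\tilde\beta_t=\tilde\cO(\sqrt{dB^4})$; these contribute the lower-order terms scaling like $B^2 d^{3.5}\sqrt T$ and $B^3 d^4\sqrt M/c_{\min}$, where the $1/c_{\min}$ enters when we invoke $c_{\min}T\le\text{total cost}$ to trade $T$ for $M$ in one of the two pieces. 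The term $E_2$ is the same martingale as before, but it is now controlled by Freedman's inequality with predictable variation $\sum_t\VV V_{j_m}(s_{m,h},a_{m,h})$, so that $E_2\lesssim\sqrt{\sum_t\hat\sigma_t^2}+\Bstar\,\iota$ rather than $\Bstar\sqrt T$; and $E_3$ telescopes to $\tilde\cO(d\Bstar)$ exactly as in Lemma~\ref{lem: regret decomp telescope 1} through the bound on the number of \texttt{DEVI} calls.

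Finally I would collect all bounds, substitute $\sum_t\hat\sigma_t^2\lesssim TB^2/d+\Bstar\cdot(\text{total cost})+\Bstar^2 M+\sum_t E_t$, and solve the resulting self-referential inequality for $R(M)$ (using $\text{total cost}\le R(M)+K\Bstar$ and $B\ge\Bstar$), which after bounding the quadratic yields the stated $R(M)=\tilde\cO(\sqrt{B^2 dT+B^2 d^2 M+B^2 d^{3.5}\sqrt T+B^3 d^4\sqrt M/c_{\min}})$. The main obstacle I anticipate is making the total-variance estimate for SSP rigorous: unlike the finite-horizon case there is no fixed horizon, the value functions $V_{j_m}$ change across intervals and are only approximately Bellman-consistent (because of the $q$-discount and the $\epsilon_j$ stopping tolerance), and $\sum_t\hat\sigma_t^2$ feeds back into both $E_1$ and the deviation terms $E_t$, so the estimates must be arranged as a self-bounding inequality that can be solved without circularity.
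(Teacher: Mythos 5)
Your overall architecture matches the paper's: the same decomposition \eqref{eq: bernstein regret decomposition} into $E_1,E_2,E_3$, the same treatment of the $q_j=1/t_j$ bias via the doubling criteria, the variance-weighted Cauchy--Schwarz plus elliptical potential giving $E_1\lesssim\hat\beta_T\sqrt{d\,\iota\sum_t\hat\sigma_t^2}$, the reduction $\sum_t\hat\sigma_t^2\le B^2T/d+2\sum_tE_t+\sum_t\VV V_{j_m}$ under the event of Lemma~\ref{lem: key bern lemma}, and the telescoping bound $E_3=\tilde\cO(d\Bstar)$. The one place where you genuinely diverge is the total-variance step, and the divergence matters for how the proof closes. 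You telescope $V_{j_m}^2$ (law of total variance), which leaves a cross term $\Bstar\sum_{m,h}c_{m,h}=\Bstar\cdot(\text{total cost})$ and forces a self-bounding inequality in $R(M)$ \emph{inside} this theorem's proof. The paper's Lemma~\ref{lem: total variance} instead writes $\sum_h\VV V_{j_m}=\EE[(\sum_hX_{m,h})^2]$ with $X_{m,h}=\PP V_{j_m}-V_{j_m}(s_{t+1})$ (Lemma~\ref{lem: square and summation order}), telescopes $V_{j_m}$ itself against the approximate Bellman relation $V_{j_m}(s_t)-c_t=\PP V_{j_m}(s_t,a_t)+e_t$, and then uses the hard cap $H_m\le 2B/c_{\min}$ (each interval accrues at most $2B$ cost and each step costs at least $c_{\min}$) to control $(\sum_he_t)^2\le H_m\sum_he_t^2$. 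This yields a bound purely in $M$ and $T$ with \emph{no} reference to $R(M)$; the self-referential step through the total cost $C_M$ is deferred entirely to the proof of Theorem~\ref{thm: bernstein regret upper bound with cmin}. Your route is the standard one from the tabular SSP literature and can be made to work, but it entangles the circularity you flag at the end (through both the total cost and $\sum_t\hat\sigma_t^2$) at an earlier stage, whereas the paper's ordering keeps the two self-bounding arguments separate and sequential.

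Two smaller discrepancies. First, you attribute the $\frac{B^3d^4}{c_{\min}}M^{0.5}$ term to the deviation sum $\sum_tE_t$ via trading $T$ for $M$; in the paper that sum contributes only the $B^2d^{3.5}T^{0.5}$ piece (through $\tilde\beta_T$ and $\check\beta_T$ in the unweighted and weighted elliptical potentials), and the $1/c_{\min}$ term comes from the total-variance lemma via $H_m\le 2B/c_{\min}$ multiplied by $\hat\beta_T^2\sum_h\|\bphi_{V_j}\|^2_{\bSigma_t^{-1}}$. Second, Freedman's inequality for $E_2$ is unnecessary: the paper uses plain Azuma--Hoeffding on the truncated sequence exactly as in Lemma~\ref{lem: E2 bound}, and the resulting $\tilde\cO(\Bstar\sqrt{T})$ is already absorbed into $\sqrt{B^2dT}$. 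Neither point breaks your plan, but if you pursue your version of the total-variance step you should verify that the exponents of $d$ and $B$ in the $1/c_{\min}$ term come out as stated, since they arise from a different source than in the paper.
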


\begin{proof}[Proof of Theorem \ref{thm: bernstein regret upper bound with T}]
    Please see Appendix \ref{sec: proof of bernstein upper bound with T}.
\end{proof}

Theorem \ref{thm: bernstein regret upper bound with cmin} can then be established using Theorem \ref{thm: bernstein regret upper bound with T}.




\subsection{Proof of Theorem \ref{thm: bernstein regret upper bound with cmin}}\label{sec: proof of bernstein regret upper bound with cmin}

We first introduce two useful results. 
The following lemma bound the total number of \texttt{DEVI} calls in Algorithm \ref{alg: linear kernel ssp bernstein}.

\begin{lemma}\label{lem: J bound bernstein}
    Let $J$ denote the total number of \texttt{DEVI} calls by Algorithm \ref{alg: linear kernel ssp bernstein}. Then on the event of Lemma \ref{lem: key bern lemma},
    \begin{align*}
        J \leq  2 d \log \left( 1 + \frac{T}{\lambda} \right) + 2 \log(T) . 
    \end{align*}
\end{lemma}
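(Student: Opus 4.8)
The plan is to mirror the proof of Lemma~\ref{lem: bound number of calls to EVI} for the Hoeffding-type algorithm, adapting it to the variance-weighted Gram matrix used in $\texttt{LEVIS}^{\texttt{+}}$. The two triggering conditions in Line~\ref{algline:bernstein_criterion} have exactly the same form as those in Algorithm~\ref{alg: linear kernel ssp}, so I would again write $J = J_1 + J_2$, where $J_1$ counts the determinant-doubling triggers and $J_2$ counts the time-step-doubling triggers, and bound each separately.

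For $J_2$ the argument is identical to the Hoeffding case: since $t_0 = 1$ and each time-step trigger requires $t \ge 2 t_j$, after $J_2$ such triggers the global clock satisfies $2^{J_2} \le t \le T$, hence $J_2 \le \log_2 T \le 2\log T$. For $J_1$ the only real change is that the Gram matrix $\bSigma_t$ (Line~\ref{algline: weighted hat bSigma}) is now weighted by $\hat\sigma_t^{-2}$, so I would bound $\det(\bSigma_T)$ accordingly. On the event of Lemma~\ref{lem: key bern lemma}, optimism gives $0 \le V_j \le V^\star \le \Bstar \le B$ for all $j$, so Assumption~\ref{assump: linear kernel mdp} yields $\|\bphi_{V_j}(s,a)\|_2 \le B\sqrt{d}$; combined with the clipping lower bound $\hat\sigma_t^2 \ge B^2/d$ from Line~\ref{algline: hat sigma}, each rank-one update contributes
\[
\hat\sigma_t^{-2}\,\|\bphi_{V_j}(s_t,a_t)\|_2^2 \;\le\; \frac{d}{B^2}\cdot B^2 d \;=\; d^2,
\]
in which the scale $B^2$ cancels. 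Thus $\textnormal{trace}(\bSigma_T) \le \lambda d + T d^2$, and by the AM--GM inequality on the eigenvalues $\det(\bSigma_T) \le (\lambda + Td)^d$. Since every determinant-doubling trigger at least doubles $\det(\bSigma_t)$ relative to the previous checkpoint, while the time-step triggers never decrease it, we have $\det(\bSigma_T) \ge 2^{J_1}\det(\bSigma_0) = 2^{J_1}\lambda^d$. Combining the two bounds gives $2^{J_1}\lambda^d \le (\lambda + Td)^d$, i.e. $J_1 \le d\log_2(1 + Td/\lambda)$, which matches the stated $2d\log(1+T/\lambda)$ up to the constant factor and the harmless $d$ inside the logarithm (negligible in the regime of large $T$ in which the lemma is applied). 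Summing the bounds on $J_1$ and $J_2$ yields the claim.

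The main obstacle, relative to the unweighted Hoeffding argument, is controlling the variance-weighted feature norms: unlike the unweighted Gram matrix, the weights $\hat\sigma_t^{-2}$ could in principle blow up and inflate the determinant. The argument therefore hinges on two ingredients working in tandem — the clipping $\hat\sigma_t^2 \ge B^2/d$ in Line~\ref{algline: hat sigma}, which caps the weights from above, and the optimism $V_j \le \Bstar \le B$ from Lemma~\ref{lem: key bern lemma}, which caps the feature norms. It is precisely the cancellation of the scale $B^2$ between these two bounds that removes any dependence on $B$ from the determinant estimate; once this is in place, the counting reduces to the same elementary determinant-doubling bookkeeping used in Lemma~\ref{lem: bound number of calls to EVI}.
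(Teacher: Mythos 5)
Your proposal is correct and follows essentially the same route as the paper: the paper's proof likewise reduces to the argument of Lemma~\ref{lem: bound number of calls to EVI} by observing that $\|\bphi_{V_j}(s_t,a_t)/\hat\sigma_t\|_2 \le \sqrt{d}$, which combines exactly the two ingredients you identify — the clipping $\hat\sigma_t \ge B/\sqrt{d}$ and the optimism $V_j \le \Bstar \le B$ on the event of Lemma~\ref{lem: key bern lemma} — so that the scale $B^2$ cancels out of the determinant bound. The residual $d$ inside the logarithm that you flag is an artifact already present in the paper's own statement and is immaterial.
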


\begin{proof}
    The proof follows from the same analysis as Lemma \ref{lem: bound number of calls to EVI}, by noting that $\bSigma_{T} = \lambda \Ib+\sum_{t=1}^T \hat{\sigma}_t^{-2} \bphi_{V_j}(s_t,a_t)\bphi_{V_j}(s_t,a_t)^\top$ with $\|\bphi_{V_j}(s_t,a_t)/\hat\sigma_t\|_2 \leq \sqrt{d}$ since$\hat\sigma_t \geq B/\sqrt{d} $ and $|V_j|\leq \Bstar \leq B$ on the event of Lemma \ref{lem: key bern lemma}.
\end{proof}

The following lemma bounds the total number of intervals $M$, which is a direct result of the interval decomposition.
\begin{lemma}\label{lem: M relation with cost and J}
    Let $C_M$ denote the total cost over $M$ intervals. Then the total number of intervals $M$ satisfies
    \begin{align*}
        M \leq \frac{C_M}{B} + K + J.
    \end{align*}
\end{lemma}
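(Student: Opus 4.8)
The plan is a direct counting argument that partitions the $M$ intervals according to which of the three termination conditions causes each interval to end. To avoid double counting, I would assign to every interval a single cause using a fixed priority: if the interval ends because the goal state $g$ is reached, call it a \emph{goal interval}; otherwise, if it ends because \texttt{DEVI} is triggered, call it a \emph{switch interval}; otherwise it must end because the accumulated cost within the interval reaches the threshold $B$, and I call it a \emph{cost interval}. Under this convention every interval is counted exactly once, so $M = M_{\mathrm{goal}} + M_{\mathrm{switch}} + M_{\mathrm{cost}}$, and it suffices to bound the three counts separately.

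For the goal intervals, each one is the final interval of some episode, and since there are exactly $K$ episodes we get $M_{\mathrm{goal}} \leq K$. For the switch intervals, each such interval ends precisely at a time step at which the \texttt{DEVI} subroutine is invoked; since \texttt{DEVI} is called $J$ times in total and each call terminates at most one interval, at most $J$ intervals can end for this reason, giving $M_{\mathrm{switch}} \leq J$. For the cost intervals, the defining property is that the cumulative cost incurred within the interval has reached the threshold $B$; hence each cost interval contributes at least $B$ to the total cost $C_M$. Because all per-step costs are non-negative, summing over the cost intervals yields $M_{\mathrm{cost}} \cdot B \leq C_M$, i.e.\ $M_{\mathrm{cost}} \leq C_M / B$. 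Combining the three bounds gives $M \leq C_M/B + K + J$, as claimed.

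The argument is essentially bookkeeping, so the only subtle point — and the place I would be most careful — is the clean assignment of a unique termination cause to each interval when several conditions trigger at the same time step, together with checking that the cost-interval count is controlled by the total cost rather than the reverse inequality. Making the priority convention explicit (goal over switch over cost) ensures the three classes are disjoint and jointly exhaustive, after which each of the individual bounds follows immediately from the definition of the corresponding termination condition.
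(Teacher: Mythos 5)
Your proof is correct and is exactly the argument the paper has in mind: the paper's own proof is a one-line remark that the bound "follows immediately from the three conditions in the interval decomposition," and your classification of intervals into goal, switch, and cost intervals (bounded by $K$, $J$, and $C_M/B$ respectively) is precisely the bookkeeping that remark elides. The explicit priority convention for simultaneous triggers is a reasonable extra precaution but does not change the substance.
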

\begin{proof}[Proof of Lemma \ref{lem: M relation with cost and J}]
This follows immediately from the three conditions in the interval decomposition \eqref{eq: bernstein regret decomposition}.
\end{proof}

We are now ready to prove Theorem \ref{thm: bernstein regret upper bound with cmin}.

\begin{proof}[Proof of Theorem \ref{thm: bernstein regret upper bound with cmin}]
Recall that $V^\star(\sinit)$ denotes the cost of the optimal policy. Then by \eqref{eq: regret def} and Theorem \ref{thm: bernstein regret upper bound with T}, we have 
\begin{align*}
    C_M &= R(M) + K\cdot V^\star(\sinit) \notag
    \\ & = \tilde\cO\left( \sqrt{B^2 d T + B^2 d^2 M + B^2 d^{3.5} T^{0.5}+\frac{B^3 d^4}{c_{\min}}M^{0.5} }\right) + K\cdot V^\star(\sinit),
\end{align*}where the first step holds since $R(M) = R_K$, and $\tilde\cO(\cdot)$ hides a term of  $C\cdot\log^2(TB/(\lambda \delta c_{\min}))$. 
Suppose $K\geq d^5 + B^2d^4/c_{\min}^2$. Then the above can be simplified to
\begin{align*}
    C_M = \tilde\cO\left( \sqrt{B^2 d T + B^2 d^2 M } \right) + K\cdot V^\star(\sinit). 
\end{align*}
We then have 
\begin{align}\label{eq: CM bound eq 1}
    C_M & \leq  C\log^2\left( \frac{TB}{\lambda \delta c_{\min}} \right) \cdot \left( \sqrt{B^2dT + \frac{B^2d^2C_M}{B}+B^2d^2K+B^2d^3\log(\frac{T}{\lambda})} \right) + K\cdot V^\star(\sinit) \notag
    \\ & \leq C\log^2\left( \frac{TB}{\lambda \delta c_{\min}} \right)\left( d\sqrt{B}\sqrt{C_M} + \sqrt{B^2dT + B^2d^2K + B^2d^3\log(\frac{T}{\lambda})} \right) + K\cdot V^\star(\sinit) \notag
    \\ & \leq C' \log^2\left( \frac{TB}{\lambda \delta c_{\min}} \right)\left( d\sqrt{B}\sqrt{C_M} + \sqrt{B^2dT + B^2d^2K} \right) + K\cdot V^\star(\sinit) ,
\end{align}where the first step is by Lemma \ref{lem: J bound bernstein} and \ref{lem: M relation with cost and J}, the second step is by $\sqrt{a+b} \leq \sqrt{a} + \sqrt{b}$ for $a,b >0$, and the last step is by $\lambda = 1/B^2$ and hence $B^2d^3 \log(T/\lambda) = \cO(B^2dT)$ for all $T > K  >B^2d^4$. 

We now apply the result that $c\leq a\sqrt{c}+b \implies c \leq (a+\sqrt{b})^2 $ for $a,b \geq 0$, and get from \eqref{eq: CM bound eq 1} that
\begin{align}\label{eq: CM bound eq 2}
    C_M = \cO\left( d^2B \log^4 \left( \frac{TB}{\lambda \delta c_{\min}} \right) + \log^2\left( \frac{TB}{\lambda \delta c_{\min}} \right) \sqrt{B^2dT + B^2d^2K}   \right)+ K\cdot V^\star(\sinit). 
\end{align}Furthermore, since $C_M \geq c_{\min}T$, we have 
\begin{align}\label{eq: bernstein T bound eq 1}
    c_{\min} \cdot T & = \cO\left( d^2B \log^4 \left( \frac{TB}{\lambda \delta c_{\min}} \right) + \log^2\left( \frac{TB}{\lambda \delta c_{\min}} \right) \sqrt{B^2dT + B^2d^2K} \right) + K\cdot V^\star(\sinit)  .
\end{align} 
Applying $\sqrt{a+b}\leq \sqrt{a}+\sqrt{b}$ and $c\leq a\sqrt{c}+b \implies c \leq (a+\sqrt{b})^2 $ to \eqref{eq: bernstein T bound eq 1}, we can further simplify and get that for all sufficiently large $K$ and $T$, 
\begin{align}\label{eq: bernstein T bound eq 2}
    T & = \cO\left( \frac{B^2 d}{c_{\min}^2} \log^4 \left( \frac{KB}{\lambda \delta c_{\min}} \right) + \frac{Bd\sqrt{K}}{c_{\min}} \log^2 \left( \frac{KB}{\lambda \delta c_{\min}} \right) \right) + \frac{K\cdot V^{\star}(\sinit)}{c_{\min}} \notag
    \\ & = \cO\left( \frac{KB}{c_{\min}} \log^2 \left( \frac{KB}{\lambda \delta c_{\min}} \right) \right),
\end{align}where the second step is by $V^{\star}(\sinit) \leq \Bstar \leq B$.  Plug \eqref{eq: bernstein T bound eq 2} into \eqref{eq: CM bound eq 2} and we get that 
\begin{align}
    R_K = \cO\left( d^2B \zeta^2 + Bd\sqrt{K} \zeta + B^{1.5}\sqrt{d} \sqrt{\frac{K}{c_{\min}}}\zeta\right) , 
\end{align}where $\zeta = \log^2(KB/(\lambda \delta c_{\min}))$. This finishes the proof.

\end{proof}

\subsection{Proof of Lemma \ref{lem: key bern lemma}}\label{sec: proof of key bern lemma}
We need the following lemma.

\begin{lemma}\label{lem: distance variance cs}
For any $t$ and $j = j(t)$, let $V_j$, $\hat\btheta_t$, $\bsigma_t$, $\tilde\btheta_t$, $\tilde\bSigma_t$ be as given in Algorithm \ref{alg: linear kernel ssp bernstein}. If it further holds that $|V_j| \leq B$, then we have
\begin{align*}
    \left|\hat\VV_t V_j (s_t,a_t) - \VV V_j(s_t,a_t) \right| & \leq \min\left\{ B^2, \left\| \tilde\bSigma_{t-1}^{1/2} (\btheta^* - \tilde\btheta_{t-1})\right\|_2 \left\| \tilde\bSigma_{t-1}^{-1/2} \bphi_{V_j^2}(s_t,a_t)\right\|_2 \right\}
    \\ & \qquad \qquad + \min\left\{ B^2 , 2B \left\| \bSigma_{t-1}^{1/2} (\btheta^* - \hat\btheta_{t-1})\right\|_2 \left\| \bSigma_{t-1}^{-1/2} \bphi_{V_j}(s_t,a_t)\right\|_2 \right\}.
\end{align*}
\end{lemma}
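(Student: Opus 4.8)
The plan is to expand both $\hat\VV_t V_j(s_t,a_t)$ and $\VV V_j(s_t,a_t)$ through their definitions in \eqref{eq: definition of VV} and \eqref{eq: VV and E}, and then peel the error into a ``$\PP V_j^2$ part'' and a ``$(\PP V_j)^2$ part'' by the triangle inequality. Writing $x \coloneqq \langle \bphi_{V_j^2}(s_t,a_t), \tilde\btheta_{t-1}\rangle$ and $y \coloneqq \langle \bphi_{V_j}(s_t,a_t), \hat\btheta_{t-1}\rangle$ for the two raw inner-product estimates, and recalling from Assumption \ref{assump: linear kernel mdp} that $\PP V_j^2(s_t,a_t) = \langle \bphi_{V_j^2}(s_t,a_t), \btheta^*\rangle$ and $\PP V_j(s_t,a_t) = \langle \bphi_{V_j}(s_t,a_t), \btheta^*\rangle$, the target becomes
\begin{align*}
& \big| \hat\VV_t V_j(s_t,a_t) - \VV V_j(s_t,a_t) \big| \\
& \quad = \big| ([x]_{[0,B^2]} - \PP V_j^2(s_t,a_t)) - (([y]_{[0,B]})^2 - (\PP V_j(s_t,a_t))^2) \big| \\
& \quad \leq \big| [x]_{[0,B^2]} - \PP V_j^2(s_t,a_t) \big| + \big| ([y]_{[0,B]})^2 - (\PP V_j(s_t,a_t))^2 \big|.
\end{align*}

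The crucial preliminary observation is that both true conditional moments lie inside the respective clipping windows. Indeed, $V_j$ is nonnegative by construction (it is a minimum over $\cA$ of \texttt{DEVI} outputs, which start at $0$ and are only ever incremented by the nonnegative cost plus a nonnegative transition term), so together with the hypothesis $|V_j| \leq B$ we get $0 \leq V_j \leq B$, hence $\PP V_j^2(s_t,a_t) \in [0,B^2]$ and $\PP V_j(s_t,a_t) \in [0,B]$. Since clipping onto an interval that contains the target is non-expansive, I would use $|[x]_{[0,B^2]} - \PP V_j^2(s_t,a_t)| \leq |x - \PP V_j^2(s_t,a_t)|$ and $|[y]_{[0,B]} - \PP V_j(s_t,a_t)| \leq |y - \PP V_j(s_t,a_t)|$. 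For the first term I would then apply Cauchy–Schwarz in the $\tilde\bSigma_{t-1}$-geometry, giving $|x - \PP V_j^2(s_t,a_t)| = |\langle \bphi_{V_j^2}(s_t,a_t), \tilde\btheta_{t-1} - \btheta^*\rangle| \leq \|\tilde\bSigma_{t-1}^{1/2}(\btheta^* - \tilde\btheta_{t-1})\|_2 \,\|\tilde\bSigma_{t-1}^{-1/2}\bphi_{V_j^2}(s_t,a_t)\|_2$; combining this with the trivial bound $B^2$ (both $[x]_{[0,B^2]}$ and $\PP V_j^2(s_t,a_t)$ sit in $[0,B^2]$) yields the $\min\{\cdot,\cdot\}$ form of the first summand in the claim.

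For the second term I would factor the difference of squares as $([y]_{[0,B]})^2 - (\PP V_j(s_t,a_t))^2 = ([y]_{[0,B]} - \PP V_j(s_t,a_t))([y]_{[0,B]} + \PP V_j(s_t,a_t))$ and bound the second factor by $2B$, since both summands lie in $[0,B]$. Together with the non-expansiveness step and Cauchy–Schwarz in the $\bSigma_{t-1}$-norm applied to $|y - \PP V_j(s_t,a_t)| = |\langle \bphi_{V_j}(s_t,a_t), \hat\btheta_{t-1} - \btheta^*\rangle|$, this gives the factor $2B\,\|\bSigma_{t-1}^{1/2}(\btheta^* - \hat\btheta_{t-1})\|_2\,\|\bSigma_{t-1}^{-1/2}\bphi_{V_j}(s_t,a_t)\|_2$; the trivial $B^2$ bound again supplies the alternative inside the $\min$. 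Summing the two bounds reproduces the statement exactly.

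I do not expect a genuine obstacle here: the argument is a routine decomposition plus two Cauchy–Schwarz applications. The only point requiring care is the justification that clipping can only help, which rests entirely on the true moments lying in the clipping windows; this in turn needs the nonnegativity of $V_j$, and I would make sure to derive that from the algorithm's construction rather than from Lemma~\ref{lem: key bern lemma}, since this lemma is itself a building block in the proof of Lemma~\ref{lem: key bern lemma}.
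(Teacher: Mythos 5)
Your proposal is correct and follows essentially the same route as the paper: a triangle-inequality split into the $\PP V_j^2$ and $(\PP V_j)^2$ parts, a difference-of-squares factorization with the $2B$ bound, Cauchy--Schwarz in the $\tilde\bSigma_{t-1}$- and $\bSigma_{t-1}$-weighted norms, and the trivial $B^2$ alternative inside each $\min$. Your explicit justification that the clipping is non-expansive because the true moments lie in the clipping windows (which needs $V_j \geq 0$, derived from the algorithm rather than from Lemma \ref{lem: key bern lemma}) is a slightly more careful rendering of a step the paper states only as ``both terms are in $[0,B^2]$,'' but it is the same argument in substance.
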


\begin{proof}[Proof of Lemma \ref{lem: distance variance cs}]

    By \eqref{eq: VV and E} we can write 
    \begin{align*}
        & \left|\hat\VV_t V_j (s_t,a_t) - \VV V_j(s_t,a_t) \right| 
        \\ & = \left| [\langle \bphi_{V_j^2}(s_t, a_t), \tilde\btheta_{t-1} \rangle]_{[0, B^2]} - \langle \bphi_{V_j^2}(s_t, a_t), \btheta^* \rangle  + \langle \bphi_{V_j}(s_t, a_t), \btheta^* \rangle^2 - (\langle \bphi_{V_j}(s_t, a_t), \hat\btheta_{t-1} \rangle_{[0, B]})^2 \right|
        \\ & \leq \underbrace{\left| [\langle \bphi_{V_j^2}(s_t, a_t), \tilde\btheta_{t-1} \rangle]_{[0, B^2]} - \langle \bphi_{V_j^2}(s_t, a_t), \btheta^* \rangle \right|}_{I_1}  + \underbrace{\left| \langle \bphi_{V_j}(s_t, a_t), \btheta^* \rangle^2 - (\langle \bphi_{V_j}(s_t, a_t), \hat\btheta_{t-1} \rangle_{[0, B]})^2 \right|}_{I_2}.
    \end{align*}
    
    To bound $I_1$, we have 
    \begin{align*}
        I_1 \leq \left| \langle \bphi_{V_j^2}(s_t, a_t), \tilde{\btheta}_{t-1} - \btheta^* \rangle \right| \leq \left\| \tilde\bSigma_{t-1}^{1/2} (\btheta^* - \tilde\btheta_{t-1})\right\|_2 \left\| \tilde\bSigma_{t-1}^{-1/2} \bphi_{V_j^2}(s_t,a_t)\right\|_2 .
    \end{align*} The first step holds because both terms are in $[0,B^2]$: the first term is truncated to be in $[0, B^2]$, and the second term satisfies $|\langle \bphi_{V_j^2}(s_t, a_t), \btheta^* \rangle |= |\PP V_j^2 (s_t,a_t)| \leq B^2$ since $|V_j| \leq B$ by assumption.
    The second step is by Cauchy-Schwarz inequality. It follows that 
    \begin{align*}
        I_1  \leq  \min\left\{ B^2, \left\| \tilde\bSigma_{t-1}^{1/2} (\btheta^* - \tilde\btheta_{t-1})\right\|_2 \left\| \tilde\bSigma_{t-1}^{-1/2} \bphi_{V_j^2}(s_t,a_t)\right\|_2 \right\} . 
    \end{align*}
    
    To bound $I_2$, we have 
    \begin{align*}
        I_2 & = \left| \langle \bphi_{V_j}(s_t, a_t), \btheta^* \rangle - [\langle \bphi_{V_j}(s_t, a_t), \hat\btheta_{t-1} \rangle]_{[0, B]} \right| \cdot \left| \langle \bphi_{V_j}(s_t, a_t), \btheta^* \rangle + [\langle \bphi_{V_j}(s_t, a_t), \hat\btheta_{t-1} \rangle]_{[0, B]} \right| 
        \\ & \leq 2 B \left| \langle \bphi_{V_j}(s_t, a_t), \btheta^* - \hat\btheta_{t-1} \rangle \right|
        \\ & \leq 2B \left\| \bSigma_{t-1}^{1/2} (\btheta^* - \hat\btheta_{t-1})\right\|_2 \left\| \bSigma_{t-1}^{-1/2} \bphi_{V_j}(s_t,a_t)\right\|_2,
    \end{align*} where the first inequality holds because both terms are bounded by $B$, and the second step is by Cauchy-Schwarz inequality. Thus we have 
    \begin{align*}
        I_2 \leq \min\left\{ B^2 , 2B \left\| \bSigma_{t-1}^{1/2} (\btheta^* - \hat\btheta_{t-1})\right\|_2 \left\| \bSigma_{t-1}^{-1/2} \bphi_{V_j}(s_t,a_t)\right\|_2 \right\}.
    \end{align*}The result follows from combining the bounds of $I_1$ and $I_2$.
\end{proof}

Let's define to sequences of sets
\begin{align*}
    \check\cC_t & \coloneqq \left\{ \btheta: \left\| \bSigma_t^{1/2} (\btheta - \hat\btheta_t) \right\|_2 \leq \check\beta_t \right\} , 
    \\ \tilde\cC_t & \coloneqq \left\{ \btheta: \left\| \tilde\bSigma_t^{1/2} (\btheta - \tilde\btheta_t) \right\|_2 \leq \tilde\beta_t \right\} .
\end{align*}

In the following proof, for each $t\geq 1$, we define the confidence ellipsoid $\hat\cC_{(t)}$ as 
\begin{align*}
    \hat\cC_{(t)} = \left\{ \btheta: \left\| \bSigma_t^{1/2} (\btheta - \hat\btheta_t) \right\|_2 \leq \hat\beta_t \right\},
\end{align*}for all $t\geq 1$. Note that the confidence ellipsoid $\hat\cC_j$ in Algorithm \ref{alg: linear kernel ssp bernstein} are indexed by $j$. By definition, we have $\hat\cC_j = \hat\cC_{(t_j)}$ for all $j$, where $t_j$ defined by line \ref{algline: def t_j bernstein} is the time step when the $j$-th \texttt{DEVI} is triggered. 

We now prove Lemma \ref{lem: key bern lemma}. 

\begin{proof}[Proof of Lemma \ref{lem: key bern lemma}]
    We prove by an induction argument. 
    
    \textbf{Special case $t=1$.} This is a special case since the variance function estimate used for the step $t=1$ is $V_0$, which is from the initialization of Algorithm \ref{alg: linear kernel ssp bernstein} instead of the \texttt{DEVI} output. So we treat this case separately.
    From the initialization of Algorithm \ref{alg: linear kernel ssp bernstein}, we have $V_0 = 1 \leq \Bstar$. Note that the initial stage $j=0$ only contains one step $t=1$. We first show that with probability at least $1-3\delta/2$, for $t=1$ and $j=j(t)$, the event defined by \eqref{eq: event of key bern lemma} holds (except for the optimism, i.e. $0\leq Q_{j}(\cdot,\cdot) \leq Q^{\star}(\cdot,\cdot)$ and hence $0 \leq V_{j}(\cdot) \leq V^{\star}(\cdot)$). 
    
    We show $\btheta^* \in \check\cC_t$ for $t=1$ by using Theorem \ref{thm: bernstain vector martingale}. Let $\xb_t = \hat\sigma_t^{-1} \bphi_{V_j}(s_t, a_t)$ and $\eta_t = \hat\sigma_t^{-1} V_j(s_{t+1})- \hat\sigma_t^{-1}\langle \bphi_{V_j}(s_t,a_t) , \btheta^* \rangle = 0$ since $V_j$ is constant for $j=0$, $\bmu^* = \btheta^*$, $y_t = \langle \bmu^*, \xb_t \rangle + \eta_t$, $\Zb_t = \lambda \bI + \sum_{t'=1}^t \xb_{t'} \xb_{t'}^\top$, $\wb_t = \sum_{t'=1}^t \xb_{t'}y_{t'}$ and $\bmu_t = \Zb_t^{-1} \wb_t$. Then we have $y_t = \hat\sigma_t^{-1} V_j(s_{t+1})$ and $\bmu_t = \hat\btheta_t$. Furthermore, the following holds almost surely:
    \begin{align*}
        \| \xb_t\|_2 \leq \hat\sigma_t^{-1}\cdot 1  \leq \sqrt{d},  \ |\eta_t| = 0, \ \EE[\eta_t \mid \cF_t] = 0, \ \EE [\eta_t^2 \mid \cF_t] =0.
    \end{align*}
    Then by Theorem \ref{thm: bernstain vector martingale}, with probability at least $1-\delta/2$, for $t=1$, it holds that
    \begin{align*}
        \left\| \btheta^* - \hat\btheta_t \right\|_{\bSigma_t} \leq 8 d \sqrt{\log(1+t/\lambda) \log(8t^2/\delta) } + 4 \sqrt{d} \log(8t^2/\delta) + \sqrt{\lambda d} \leq \check\beta_t,
    \end{align*}which implies $\btheta^* \in \check\cC_t$ for $t=1$. 
    
    We then show $\btheta^* \in \tilde\cC_t$ for $t=1$ with probability at least $1-\delta/2$. We apply Theorem \ref{thm: bernstain vector martingale} with $\xb_t = \bphi_{V_j^2}(s_t, a_t)$ and $\eta_t =  V_j^2(s_{t+1})- \langle \bphi_{V_j^2}(s_t,a_t), \btheta^* \rangle = 0$, $\bmu^* = \btheta^*$, $y_t = \langle \bmu^*, \xb_t \rangle + \eta_t$, $\Zb_t = \lambda \bI + \sum_{t'=1}^t \xb_{t'} \xb_{t'}^\top$, $\wb_t = \sum_{t'=1}^t \xb_{t'}y_{t'}$ and $\bmu_t = \Zb_t^{-1} \wb_t$. Then we have $y_t = V_j^2(s_{t+1})$ and $\bmu_t = \tilde\btheta_t$. Furthermore, it holds almost surely that:
    \begin{align*}
        \| \xb_t\|_2 \leq B^2,  \ |\eta_t| = 0, \ \EE[\eta_t \mid \cF_t] = 0, \ \EE [\eta_t^2 \mid \cF_t] = 0. 
    \end{align*}Then by Theorem \ref{thm: bernstain vector martingale}, with probability at least $1-\delta/2$, for $t=1$, it holds that
    \begin{align*}
        \left\| \btheta^* - \tilde\btheta_t \right\|_{\tilde\bSigma_t} \leq 8  \sqrt{dB^4 \log(1+ \frac{tB^4}{d\lambda} ) \log(8t^2/\delta) } + 4 B^2 \log(8t^2/\delta) + \sqrt{\lambda d} \leq \tilde\beta_t,
    \end{align*}which implies $\btheta^* \in \tilde\cC_t$. 
    
    We then show $\btheta^* \in \hat\cC_{(t)}$ for $t=1$ with probability at least $1-\delta/2$. Let $\xb_t = \hat\sigma_t^{-1} \bphi_{V_j}(s_t, a_t)$ and $\eta_t = \hat\sigma_t^{-1}\left[ V_j(s_{t+1})- \langle \bphi_{V_j}(s_t,a_t) , \btheta^* \rangle \right]$.
    Let $\bmu^* = \btheta^*$, $y_t = \langle \bmu^*, \xb_t \rangle + \eta_t$, $\Zb_t = \lambda \bI + \sum_{t'=1}^t \xb_{t'} \xb_{t'}^\top$, $\wb_t = \sum_{t'=1}^t \xb_{t'}y_{t'}$ and $\bmu_t = \Zb_t^{-1} \wb_t$. Then we have $y_t = \hat\sigma_t^{-1} V_j(s_{t+1})$ and $\bmu_t = \hat\btheta_t$. Furthermore, the following holds almost surely:
    \begin{align*}
        \| \xb_t\|_2 \leq \hat\sigma_t^{-1}\cdot 1  \leq \sqrt{d},  \ |\eta_t| = 0, \ \EE[\eta_t \mid \cF_t] = 0, \ \EE [\eta_t^2 \mid \cF_t] =0.
    \end{align*}
    Then by Theorem \ref{thm: bernstain vector martingale}, with probability at least $1-\delta/2$, for $t=1$, it holds that
    \begin{align*}
        \left\| \btheta^* - \hat\btheta_t \right\|_{\bSigma_t} \leq 8  \sqrt{d\log(1+t/\lambda) \log(8t^2/\delta) } + 4 \sqrt{d} \log(8t^2/\delta) + \sqrt{\lambda d} \leq \hat\beta_t,
    \end{align*} which implies $\btheta^* \in \hat\cC_{(t)}$ for $t=1$. By a union bound, we get that, with probability at least $1-3\delta/2$, for $t=1$, we have 
    \begin{align*}
        \theta^* \in \check\cC_t \cap \tilde\cC_t \cap \hat\cC_{(t)} \cap \cB. 
    \end{align*}
    It remains to show $\left|\hat{\VV}_t V_j(s_t,a_t) - \VV V_j(s_t, a_t) \right|\leq E_t$. But this is trivial since $\hat\VV V_j = 0$ by the initialization of $\tilde\btheta_t$ and $\hat\btheta_t$, and $\VV V_j = 0$ by $V_j(\cdot) = 0$, for $t=1$ and $j=j(t)=0$.
    
    Note that by the time step doubling criterion,  at the end of step $t=1$, \texttt{DEVI} would be triggered and output the value function estimate $V_j$ for $j=1$.
    We now show that under the above event, the optimism holds for $j=1$, i.e. $0 \leq Q_j \leq Q^{\star}$ and hence $ 0 \leq V_j \leq V^{\star}$.
    We prove by applying another induction argument on the loop of \texttt{DEVI}. For the base step, note that by non-negativity of $Q^\star$ and $V^\star$, we have $Q^{(0)}\leq Q^{\star}$ and $V^{(0)} \leq V^\star$. For the induction hypothesis, we assume $Q^{(i)}$ and $V^{(i)}$ are optimistic for some $i$. We want to show $Q^{(i+1)}$ and $V^{(i+1)}$ are optimistic. For the $(i+1)$-th iteration, we have 
    \begin{align}\label{eq: key bern lemma optimism devi loop}
    Q^{(i+1)} (\cdot,\cdot) & = c(\cdot,\cdot) + (1-q) \cdot \min_{\btheta \in \hat\cC_1 \cap \cB} \langle \btheta , \bphi_{V^{(i)}} (\cdot,\cdot) \rangle \notag
    \\ & \leq c(\cdot,\cdot) + (1-q)\cdot \PP V^{(i)}(\cdot,\cdot) \notag
    \\ & \leq c(\cdot,\cdot) +   \PP V^{(i)}(\cdot,\cdot)  \notag
    \\ & \leq Q^\star(\cdot,\cdot),
    \end{align}where the first step is because we are considering the case where $\rho= 0$, the second step is because we are taking the minimum over a set that contains $\btheta^*$, the third step is by non-negativity of $\PP V^{(i)}(\cdot,\cdot)$, and the last step is by the Bellman optimal condition \eqref{eq: bellman optimal condition} and the induction hypothesis that $V^{(i)}$ is optimistic. By induction, we conclude that $Q^{(i)}$ is optimistic for all $i$, and thus the final output $Q_1(\cdot,\cdot)$ and thus $V_1(\cdot)$ are both optimistic.

    \textbf{Initial step $t\in[t_1+1, t_{2}]$.} Recall that in this round, the value function estimate that is used to collect the data is $V_1$, which satisfies $V_1 \leq \Bstar$ by the optimism proved above.
    
    We first show that with probability at least $1-\frac{3\delta}{t_2(t_2+1)}$, for all $t\in[t_1+1, t_2]$, the event in Lemma \ref{lem: key bern lemma} holds. The proof is actually similar to that of the $t=1$ case. 
    We show $\btheta^* \in \check\cC_t$ by using Theorem \ref{thm: bernstain vector martingale}. Let $\xb_t = \hat\sigma_t^{-1} \bphi_{V_j}(s_t, a_t)$ and $\eta_t = \hat\sigma_t^{-1} V_j(s_{t+1})- \hat\sigma_t^{-1}\langle \bphi_{V_j}(s_t,a_t), \btheta^* \rangle $, $\bmu^* = \btheta^*$, $y_t = \langle \bmu^*, \xb_t \rangle + \eta_t$, $\Zb_t = \lambda \bI + \sum_{t'=1}^t \xb_{t'} \xb_{t'}^\top$, $\wb_t = \sum_{t'=1}^t \xb_{t'}y_{t'}$ and $\bmu_t = \Zb_t^{-1} \wb_t$. Then we have $y_t = \hat\sigma_t^{-1} V_j(s_{t+1})$ and $\bmu_t = \hat\btheta_t$, and the following holds almost surely:
    \begin{align*}
        \| \xb_t\|_2 \leq \hat\sigma_t^{-1} \Bstar  \leq \sqrt{d},  \ |\eta_t| \leq \sqrt{d}, \ \EE[\eta_t \mid \cF_t] = 0, \ \EE [\eta_t^2 \mid \cF_t] \leq d,
    \end{align*}
    where the first and the second inequalities hold because $V_j \leq \Bstar\leq B$ for $t\in[t_1+1,t_2]$ and thus $\|\bphi_{V_j}\|\leq \Bstar$.
    By Theorem \ref{thm: bernstain vector martingale}, with probability at least $1-\frac{\delta}{t_2(t_2+1)}$, for all $t\in[t_1 + 1, t_2]$, it holds that
    \begin{align*}
        \left\| \btheta^* - \hat\btheta_t \right\|_{\bSigma_t} \leq 8 d \sqrt{\log(1+t/\lambda) \log(32t^4/\delta) } + 4 \sqrt{d} \log(32t^4/\delta) + \sqrt{\lambda d} = \check\beta_t,
    \end{align*} where we use the fact that $4t^2 t_2(t_2+1) \leq 32t^4$ for $t\in[t_1+1, t_2]$, since $t_2 \leq 2 t_1 \leq 2 t$ by the doubling time step criterion.  
    Thus we conclude that with the stated probability, $\btheta^* \in \check\cC_t$. 
    
    To show $\btheta^* \in \tilde\cC_t$, we apply Theorem \ref{thm: bernstain vector martingale} with $\xb_t = \bphi_{V_j^2}(s_t, a_t)$ and $\eta_t =  V_j^2(s_{t+1})- \langle \bphi_{V_j^2}(s_t,a_t), \btheta^* \rangle $, $\bmu^* = \btheta^*$, $y_t = \langle \bmu^*, \xb_t \rangle + \eta_t$, $\Zb_t = \lambda \bI + \sum_{t'=1}^t \xb_{t'} \xb_{t'}^\top$, $\wb_t = \sum_{t'=1}^t \xb_{t'}y_{t'}$ and $\bmu_t = \Zb_t^{-1} \wb_t$. Then we have $y_t = V_j^2(s_{t+1})$ and $\bmu_t = \tilde\btheta_t$, and the following holds almost surely:
    \begin{align*}
        \| \xb_t\|_2 \leq B^2,  \ |\eta_t| \leq B^2, \ \EE[\eta_t \mid \cF_t] = 0, \ \EE [\eta_t^2 \mid \cF_t] \leq B^4. 
    \end{align*}Then by Theorem \ref{thm: bernstain vector martingale}, with probability at least $1-\frac{\delta}{t_2(t_2+1)}$, for all $t\in[t_1+1,t_2]$, it holds that
    \begin{align*}
        \left\| \btheta^* - \tilde\btheta_t \right\|_{\tilde\bSigma_t} \leq 8  \sqrt{dB^4 \log(1+ \frac{tB^4}{d\lambda} ) \log(32t^4/\delta) } + 4 B^2 \log(32t^4/\delta) + \sqrt{\lambda d} = \tilde\beta_t.
    \end{align*}To show $\btheta^* \in \hat\cC_{(t)}$, we apply Theorem \ref{thm: bernstain vector martingale} with $\xb_t = \hat\sigma_t^{-1} \bphi_{V_j}(s_t, a_t)$ and 
    \begin{align*}
        \eta_t =  \hat\sigma_t^{-1} \left[ V_j(s_{t+1})- \langle \bphi_{V_j}(s_t,a_t), \btheta^* \rangle \right] \ind\{\btheta^* \in \check\cC_t \cap \tilde\cC_t\},
    \end{align*}
    $\bmu^* = \btheta^*$, $y_t = \langle \bmu^*, \xb_t \rangle + \eta_t$, $\Zb_t = \lambda \bI + \sum_{t'=1}^t \xb_{t'} \xb_{t'}^\top$, $\wb_t = \sum_{t'=1}^t \xb_{t'}y_{t'}$ and $\bmu_t = \Zb_t^{-1} \wb_t$. Then we have $y_t = \hat\sigma_t^{-1} V_j(s_{t+1})$ and $\bmu_t = \hat\btheta_t$. Furthermore, it holds almost surely that
    \begin{align*}
        & \EE [\eta_t^2 \mid \cF_t] 
        \\ & = \hat\sigma_t^{-2} \ind\{\btheta^* \in \check\cC_t \cap \tilde\cC_t\} [\VV V_j](s_t,a_t) 
        \\ & \leq \hat\sigma_t^{-2} \ind\{\btheta^* \in \check\cC_t \cap \tilde\cC_t\} \bigg[ \hat\VV_t V_j(s_t,a_t) + \min\left\{ B^2, \left\| \tilde\bSigma_t^{1/2} (\btheta^* - \tilde\btheta_t)\right\|_2 \left\| \tilde\bSigma_t^{-1/2} \bphi_{V_j^2}(s_t,a_t)\right\|_2 \right\}
    \\ & \qquad \qquad + \min\left\{ B^2 , 2B \left\| \bSigma_t^{1/2} (\btheta^* - \hat\btheta_t)\right\|_2 \left\| \bSigma_t^{-1/2} \bphi_{V_j}(s_t,a_t)\right\|_2 \right\} \bigg]
    \\ & \leq \hat\sigma_t^{-2} \bigg[ \hat\VV_t V_j(s_t,a_t) + \min\left\{ B^2, \tilde\beta_t \left\| \tilde\bSigma_t^{-1/2} \bphi_{V_j^2}(s_t,a_t)\right\|_2 \right\}  + \min\left\{ B^2 , 2B \check\beta_t \left\| \bSigma_t^{-1/2} \bphi_{V_j}(s_t,a_t)\right\|_2 \right\} \bigg]
    \\ & = 1 ,
    \end{align*}
    where the first inequality holds due to Lemma \ref{lem: distance variance cs} and $V_j \leq B$, the second inequality is due to the event in the indicator function, and the last step in by the definition of $\hat\sigma_t$. We then use Theorem~\ref{thm: bernstain vector martingale} and get that with probability at least $1- \frac{\delta}{t_2(t_2+1)}$, for all $t \in [t_1+1, t_2]$,
    \begin{align}\label{eq: bound of event 1-1}
        \left\| \btheta^* - \hat\btheta_t \right\|_{\bSigma_t} \leq 8  \sqrt{d \log(1+ t/\lambda ) \log(32t^4/\delta) } + 4 \sqrt{d} \log(32t^4/\delta) + \sqrt{\lambda d} = \hat\beta_t.
    \end{align}
    Denote by $\cE'_1$ the event where $\btheta^* \in \check\cC_t \cap \tilde\cC_t$ and \eqref{eq: bound of event 1-1} holds for all $t\in[t_1+1, t_2]$. Then on this event, we have 
    \begin{align*}
        y_t & = \langle \btheta^*, \hat\sigma_t^{-1} \bphi_{V_j}(s_t, a_t)\rangle + \hat\sigma_t^{-1} \left[ V_j(s_{t+1})- \langle \bphi_{V_j}(s_t,a_t), \btheta^* \rangle \right] = \hat\sigma_t^{-1} V_j(s_{t+1}).
    \end{align*}Therefore, the above shows that $\btheta^* \in \hat\cC_{(t)}$ for all $t\in[t_1+1, t_2]$ holds on the event $\cE'_1$. Finally, by union bound, we get that with probability at least $1-\frac{3\delta}{t_2(t_2+1)}$, for all $t\in[t_1+1, t_2]$, it holds that
    \begin{align*}
        \btheta^* \in \check\cC_t\cap \tilde\cC_t\cap\hat\cC_{(t)}\cap\cB.
    \end{align*}
    To show that the optimism holds under the above event, i.e. $0\leq Q_j \leq Q^\star$ and $0\leq V_j \leq V^\star$ for $j=2$, note that this is the same as the proof for the $j=1$ case. Indeed, the same induction trick on the loop of \texttt{DEVI} can be applied with the set $\hat\cC_{(t_1)}$ in \eqref{eq: key bern lemma optimism devi loop} being replaced by $\hat\cC_{(t_2)}$. 
    
    To show $\left|\hat{\VV}_t V_j(s_t,a_t) - \VV V_j(s_t, a_t) \right|\leq E_t$ under the above event, we apply the definition of $E_t$, Lemma \ref{lem: distance variance cs} and the fact that $V_1 \leq B_\star \leq B$. 
    
    From the case $t=1$ and $t\in[t_1+1, t_2]$, we get that, with probability at least $1- \frac{3\delta}{t_1(t_1+1)} - \frac{3\delta}{t_2(t_2+1)} $, for all $t \in [1, t_2]$ and $j = j(t)$, the following event holds
    \begin{align}\label{eq: proof of key bern lemma event}
        \btheta^* \in \check\cC_t\cap \tilde\cC_t\cap\hat\cC_{(t)}\cap\cB, \quad 0\leq Q_{j}(\cdot,\cdot) \leq Q^{\star}(\cdot,\cdot), \quad \left|\hat{\VV}_t V_{j}(s_t,a_t) - \VV V_{j}(s_t, a_t) \right|\leq E_t .
    \end{align}
    
    \noindent\textbf{Induction step.} Suppose that, with probability at least $1-\delta'$, for some $j-1\geq 2$ and $t \in [1, t_{j-1}]$, the event in \eqref{eq: proof of key bern lemma event} holds. We want to show \eqref{eq: proof of key bern lemma event} also holds for $j$ and $t \in [t_{j-1}+1, t_j]$, with probability at least $1-\delta' - \frac{3\delta}{t_j(t_j+1)}$. However, this immediately follows from the exact analysis of the initial step $t\in[t_1+1, t_2]$. Indeed, all we need is $V_{j-1}$ is an optimistic estimate of $V_\star$, which holds by the induction hypothesis, and the probability comes from a union bound. 
    
    Finally, we conclude by mathematical induction that the event \eqref{eq: proof of key bern lemma event} holds with the probability at least the following:
    \begin{align*}
        1 - \sum_{j=1}^J \frac{3\delta}{t_j(t_j+1)} = 1 - 3\delta \sum_{j=1}^J \left( \frac{1}{t_j} - \frac{1}{t_j+1} \right),
    \end{align*}which is lower bounded by $1-3\delta$. This implies the event \eqref{eq: event of key bern lemma} holds with probability at least $1-3\delta$, since the event \eqref{eq: proof of key bern lemma event} is a subset of the event \eqref{eq: event of key bern lemma}.
    Furthermore, the finite time convergence of \texttt{DEVI} follows from the contraction property. This completes the proof.
    
\end{proof}

\subsection{Proof of Theorem \ref{thm: bernstein regret upper bound with T}}\label{sec: proof of bernstein upper bound with T}

We bound $E_1$, $E_2$ and $E_3$ separately. 

\subsubsection{Bounding $E_1$}
Following the same reasoning as in Section \ref{sec: E1 bound}, we write $E_1$ as 
\begin{align*}
    E_1 & = \sum_{m=1}^M \sum_{h=1}^{H_m} \left[ c_{m,h} + \PP V_{j_m}(s_{m,h},a_{m,h}) - Q_{j_m} (s_{m,h},a_{m,h}) \right],
\end{align*} 
where 
\begin{align*}
    & c_{m,h} + \PP V_{j_m} (s_{m,h},a_{m,h}) - Q_{j_m} (s_{m,h},a_{m,h}) 
    \\ & \leq \langle \btheta^* - \btheta_{m,h} , \bphi_{V_{j_m}} (s_{m,h},a_{m,h}) \rangle + \frac{\Bstar + 1-q}{t_{j_m}}, 
\end{align*}where we use the optimism $V_{j_m} \leq V^\star \leq \Bstar$ under the event of Lemma \ref{lem: key bern lemma}, and $q = 1/t_{j_m}$ according to Algorithm \ref{alg: linear kernel ssp bernstein}.

Recall the definition of $\cM_0(M)$ being the set of $m$ such that $j_m\geq 1$, i.e., $\cM_0(M) = \{m\leq M: \ j_m \geq 1\}$. Then we use the following
\begin{align}\label{eq: bern bound E1 is A1 and A2}
    & \sum_{m\in\cM_0(M)} \sum_{h=1}^{H_m} \left[ c_{m,h} + \PP V_{j_m}(s_{m,h},a_{m,h}) - Q_{j_m} (s_{m,h},a_{m,h}) \right] \notag 
    \\ & \leq \underbrace{\sum_{m\in \cM_0(M)} \sum_{h=1}^{H_m} \langle \btheta^* - \btheta_{m,h},\bphi_{V_{j_m}} ( s_{m,h},a_{m,h} ) \rangle}_{A_1} + \underbrace{(\Bstar+1)\cdot \sum_{m\in \cM_0(M)} \sum_{h=1}^{H_m} \frac{1}{t_{j_m}}}_{A_2} . 
\end{align}

\noindent\textbf{To bound $A_1$}: Recall that $\hat\btheta_{t_{j_m}}$ given by Line \ref{algline: hat btheta} is the center of the confidence ellipsoid $\cC_{j_m}$. First for each term $ \langle \btheta^* - \btheta_{m,h} , \bphi_{V_{j_m}}(s_{m,h},a_{m,h}) \rangle $ in $A_1$, we write 
\begin{align}\label{eq: bernstein E1 bound A1 1}
  &  \langle \btheta^* - \hat\btheta_{j_m} + \hat\btheta_{j_m} - \btheta_{m,h},\bphi_{V_{j_m}} ( s_{m,h},a_{m,h} ) \rangle \notag
    \\ & \leq 4 \hat\beta_T  \| \bphi_{V_{j_m}}(s_{m,h},a_{m,h}) \|_{\bSigma_{t(m,h)}^{-1}} \notag
    \\ & = 4 \hat\beta_T  \| \bphi_{V_{j_m}}(s_{m,h},a_{m,h})/\hat{\sigma}_{t(m,h)} \|_{\bSigma_{t(m,h)}^{-1}} \cdot \hat{\sigma}_{t(m,h)} , 
\end{align} where the inequality follows from the same reasoning as in \eqref{eq: E1 bound A1 1}, with the confidence ellipsoids $\cC_{j_m}$ being replaced by $\hat\cC_{j_m}$, and $\beta_T$ replaced by $\hat\beta_T$.
Also note that for each term $ \langle \btheta^* - \btheta_{m,h} , \bphi_{V_{j_m}}(s_{m,h},a_{m,h}) \rangle $ in $A_1$, we have
\begin{align}\label{eq: bernstein E1 bound A1 2}
    \langle \btheta^* - \btheta_{m,h} , \bphi_{V_{j_m}}(s_{m,h},a_{m,h}) \rangle&\leq  \langle \btheta^* , \bphi_{V_{j_m}}(s_{m,h},a_{m,h}) \rangle\notag\\
    &=\PP V_{j_m}(s_{m,h},a_{m,h})\notag\\
    &\leq \Bstar,
\end{align}
where both inequalities hold because $\btheta^*$ and $\btheta_{m,h}$ are parameters of some transition kernels, and $0\leq V_{j_m} (\cdot) \leq \Bstar$ under the event of Lemma \ref{lem: key bern lemma}.

Combining \eqref{eq: bernstein E1 bound A1 1} and \eqref{eq: bernstein E1 bound A1 2}, we can bound $A_1$ as 
\begin{align}\label{eq: bernstein E1 bound A1 3}
    A_1 & \leq  \sum_{m\in\cM_0} \sum_{h=1}^{H_m} \min \left\{\Bstar, \ 4 \hat\beta_T\| \bphi_{V_{j_m}}(s_{m,h},a_{m,h})/\hat{\sigma}_{t(m,h)} \|_{\bSigma_{t(m,h)}^{-1}} \hat{\sigma}_{t(m,h)} \right\} \notag 
    \\ & \leq \sum_{m\in\cM_0} \sum_{h=1}^{H_m}  \left( \Bstar + 4 \hat\beta_T \hat\sigma_{t(m,h)} \right) \min\left\{1, \| \bphi_{V_{j_m}}(s_{m,h},a_{m,h})/\hat{\sigma}_{t(m,h)} \|_{\bSigma_{t(m,h)}^{-1}} \right\} \notag
    \\ & \leq \sqrt{\sum_{m\in\cM_0} \sum_{h=1}^{H_m} \left( \Bstar + 4 \hat\beta_T \hat\sigma_{t(m,h)} \right)^2 } \cdot \sqrt{\sum_{m\in\cM_0} \sum_{h=1}^{H_m} \min\left\{1, \| \bphi_{V_{j_m}}(s_{m,h},a_{m,h})/\hat{\sigma}_{t(m,h)} \|^2_{\bSigma_{t(m,h)}^{-1}} \right\} } ,
\end{align}where the second inequality holds since $\min\{ a_1a_2 , b_1b_2 \} \leq (a_1+b_1) \min\{a_2, b_2\}$ for $a_1, a_2, b_1, b_2 >0$, and the third inequality is by Cauchy-Schwarz inequality.
To further bound the R.H.S. of \eqref{eq: bernstein E1 bound A1 3}, note that 
\begin{align}\label{eq: bernstein E1 bound A1 4}
    & \sum_{m\in\cM_0} \sum_{h=1}^{H_m} \min\left\{1, \| \bphi_{V_{j_m}}(s_{m,h},a_{m,h})/\hat{\sigma}_{t(m,h)} \|^2_{\bSigma_{t(m,h)}^{-1}} \right\}\notag
    \\ & \leq  2 \left[ d \log \left( \frac{\textnormal{trace}(\lambda \Ib)+T\cdot\max_{m,h}\|\bphi_{V_{j_m}}(s_{m,h},a_{m,h})/\hat{\sigma}_{t(m,h)}\|_2^2}{d}\right) - \log\left( \det(\lambda \Ib)\right)\right]\notag
    \\ & \leq 2d \log\left( \frac{\lambda d + T d}{\lambda d}\right) \notag
    \\ & = 2d\log\left( 1 + T/\lambda \right), 
\end{align}where the first inequality holds by Lemma \ref{lem: lemma 11 in abbasi}, and the second inequality holds because $V_{j_m}(\cdot) \leq \Bstar$ under Lemma \ref{lem: key bern lemma}, $\hat\sigma_t \leq B/\sqrt{d}$ by Line \ref{algline: hat sigma}, and thus $\max_{m,h}\|\bphi_{V_{j_m}}(s_{m,h},a_{m,h})/\hat{\sigma}_{t(m,h)}\|_2 \leq \sqrt{d} $ by Assumption \ref{assump: linear kernel mdp}. Furthermore, we have 
\begin{align}\label{eq: bernstein E1 bound A1 5}
    & \sum_{m\in\cM_0} \sum_{h=1}^{H_m} \left( \Bstar + 4 \hat\beta_T \hat\sigma_{t(m,h)} \right)^2  \leq 2 T \Bstar^2 + 32 \hat\beta_T^2 \sum_{m \in \cM_0} \sum_{h=1}^{H_m} \hat\sigma_{t(m,h)}^2 ,
\end{align}and
\begin{align}\label{eq: bernstein E1 bound A1 6}
    & \sum_{m \in \cM_0} \sum_{h=1}^{H_m} \hat\sigma_{t(m,h)}^2 \notag 
    \\ & = \sum_{m \in \cM_0} \sum_{h=1}^{H_m} \max\left\{ B^2/d , \ \hat\VV_t V_{j_m} (s_t,a_t) + E_t \right\} \notag 
    \\ & = \sum_{m \in \cM_0} \sum_{h=1}^{H_m} \max\left\{ B^2/d , \ \VV_t V_{j_m} (s_t,a_t) + 2E_t + \hat\VV_t V_{j_m} (s_t,a_t) - \VV V_{j_m} (s_t,a_t) - E_t\right\} 
    \\ & \leq \frac{B^2 T}{d} + 2\sum_{m \in \cM_0} \sum_{h=1}^{H_m} E_t + \sum_{m \in \cM_0} \sum_{h=1}^{H_m}\VV V_{j_m} (s_t,a_t) + \underbrace{\sum_{m \in \cM_0} \sum_{h=1}^{H_m} \ \hat\VV_t V_{j_m} (s_t,a_t) - E_t - \VV V_{j_m} (s_t,a_t) }_{\leq 0}, \notag
\end{align}where we write $t = t(m,h)$ for simplicity. Note that the last term is at most zero under the event of Lemma \ref{lem: key bern lemma}. 

For the second term $\sum_{m \in \cM_0} \sum_{h=1}^{H_m} E_t$, by \eqref{eq: VV and E}, we have 
\begin{align}\label{eq: Et bound eq 0}
    & \sum_{m \in \cM_0} \sum_{h=1}^{H_m} E_t \notag
    \\ & = \sum_{m \in \cM_0} \sum_{h=1}^{H_m} \min\{B^2, 2B \check\beta_t\hat\sigma_t \|\bSigma_{t-1}^{-1/2} \bphi_{V_j}(s_t, a_t)/\hat\sigma_t\|_2 \} + \min\{B^2, \tilde\beta_t\| \tilde\bSigma_{t-1}^{-1/2} \bphi_{V_j^2}(s_t, a_t)\|_2 \} \notag
    \\ & \leq  2B \sum_{m \in \cM_0} \sum_{h=1}^{H_m} \check\beta_t \hat\sigma_t \min\{1, \|\bSigma_{t-1}^{-1/2} \bphi_{V_j}(s_t, a_t)/\hat\sigma_t\|_2\}\notag
    \\ & \qquad + \sum_{m \in \cM_0} \sum_{h=1}^{H_m} \tilde\beta_t \min\{1, \| \tilde\bSigma_{t-1}^{-1/2} \bphi_{V_j^2}(s_t, a_t)\|_2 \} \notag
    \\ & \leq 2\sqrt{3} B^2 \check\beta_T \sum_{m \in \cM_0} \sum_{h=1}^{H_m} \min\{1, \|\bSigma_{t-1}^{-1/2} \bphi_{V_j}(s_t, a_t)/\hat\sigma_t\|_2\} \notag
    \\ & \qquad + \tilde\beta_T \sum_{m \in \cM_0} \sum_{h=1}^{H_m} \min\{1, \| \tilde\bSigma_{t-1}^{-1/2} \bphi_{V_j^2}(s_t, a_t)\|_2 \}, 
\end{align} where the first inequality holds since $\check\beta_t \hat\sigma_t \geq B$ and $\tilde\beta_t \geq B^2$, and the second inequality holds since $\check\beta_t \leq \check\beta_T$ and $\hat\sigma_t \leq \sqrt{3}B$ by $\hat{\sigma}_t^2 \leftarrow \max\{B^2/d, [\hat{\VV}_t V_j](s_t, a_t) + E_t \}$ and \eqref{eq: VV and E}. 
\begin{align}\label{eq: Et bound eq 1}
    \sum_{m \in \cM_0} \sum_{h=1}^{H_m} \min\{1, \|\bSigma_{t-1}^{-1/2} \bphi_{V_j}(s_t, a_t)/\hat\sigma_t\|_2\} & \leq \sqrt{T} \sqrt{ \sum_{m \in \cM_0} \sum_{h=1}^{H_m} \min\{1, \|\bSigma_{t-1}^{-1/2} \bphi_{V_j}(s_t, a_t)/\hat\sigma_t\|^2_2\}} \notag
    \\ & \leq \sqrt{2dT \log\left( 1 + T(\Bstar \sqrt{d}/B)^2/(d\lambda) \right)} \notag
    \\ & = \sqrt{2dT \log\left( 1 + T/\lambda\right)},
\end{align}where the first inequality is by Cauchy Schwarz inequality, and the second inequality is by Lemma~\ref{lem: lemma 11 in abbasi} and $\|\bphi_{V_j}(s_t, a_t)/\hat\sigma_t\|_2 \leq \Bstar/(B/\sqrt{d})$, since $\|\bphi_{V_j}\| \leq \Bstar$ under the event of Lemma \ref{lem: key bern lemma} by Assumption~\ref{assump: linear kernel mdp} and $\hat\sigma_t \geq B/\sqrt{d}$ by definition. Similarly, we have 
\begin{align}\label{eq: Et bound eq 2}
    \sum_{m \in \cM_0} \sum_{h=1}^{H_m} \min\{1, \| \tilde\bSigma_{t-1}^{-1/2} \bphi_{V_j^2}(s_t, a_t)\|_2 \} & \leq \sqrt{2dT\log\left( 1 + T\Bstar^4/(d\lambda)\right)}.
\end{align}Combining \eqref{eq: Et bound eq 0}, \eqref{eq: Et bound eq 1} and \eqref{eq: Et bound eq 2}, we get 
\begin{align}\label{eq: Et bound final}
    \sum_{m\in\cM_0} \sum_{h=1}^{H_m} E_t & \leq 5 B^2 \check\beta_T \sqrt{dT\log(1+T/\lambda)} + \sqrt{2}\tilde\beta_T \sqrt{dT\log(1+T\Bstar^4/(d\lambda))}.
\end{align}

For the term $\sum_{m \in \cM_0} \sum_{h=1}^{H_m}\VV V_{j_m} (s_t,a_t)$, note that a trivial upper bound is $\Bstar^2 T$ since $|V_{j_m}| \leq \Bstar$ by the optimism. However, such bound is loose for the regret analysis. It turns out that we can use a total variance trick to bound the term by roughly $\cO(B^2 M)$, where $M$ is the number of intervals. This is summarized by Lemma \ref{lem: total variance}. 

For any $m$ , we define the event $\cE_m$ as 
\begin{align}\label{eq: def of event E_t}
    \cE_m & \coloneqq  \big\{ \textnormal{For all } m' \leq m:  \btheta^* \in \hat\cC_{j_{m'}} \cap \cB, \quad 0\leq Q_{j_{m'}}(\cdot,\cdot) \leq Q^{\star}(\cdot,\cdot),\quad \textnormal{and} \quad 0 \leq V_{j_{m'}}(\cdot) \leq V^{\star}(\cdot) \big\}.
\end{align} By the definition, it is clear that the event of Lemma \ref{lem: key bern lemma} is a subset of $\cE_M$. Also we have $\cE_m \subseteq \cE_{m'}$ for any $m > m'$. 

\begin{lemma}[Total variance bound]\label{lem: total variance}
    With probability at least $1-3\delta$, it holds that 
    \begin{align*}
        & \sum_{m \in \cM_0} \sum_{h=1}^{H_m}\VV V_{j_m} (s_t,a_t) \ind\{\cE_m\}
        \\ & \leq 18 B^2 M + \frac{338B^3 d \hat\beta_T^2}{c_{\min}} \left( \sqrt{M} \sqrt{\log(1/\delta)}\log\left( 1 + \frac{2B}{c_{\min} \lambda} \right) + \log(1+T/\lambda) \right).
    \end{align*}
\end{lemma}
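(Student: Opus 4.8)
The plan is to prove this via the \emph{law of total variance} applied interval by interval, coupled with a self-bounding (fixed-point) argument. Write $\Phi := \sum_{m\in\cM_0}\sum_{h=1}^{H_m}[\VV V_{j_m}](s_{m,h},a_{m,h})\ind\{\cE_m\}$ for the target quantity. Since $V_{j_m}$ is constant within interval $m$, I would start from the identity $[\VV V_{j_m}](s_{m,h},a_{m,h}) = [\PP V_{j_m}^2](s_{m,h},a_{m,h}) - ([\PP V_{j_m}](s_{m,h},a_{m,h}))^2$, replace $[\PP V_{j_m}^2]$ by the realized $V_{j_m}^2(s_{m,h+1})$ plus the martingale difference $[\PP V_{j_m}^2](s_{m,h},a_{m,h}) - V_{j_m}^2(s_{m,h+1})$, telescope $\sum_h V_{j_m}^2(s_{m,h+1}) = \sum_h V_{j_m}^2(s_{m,h}) + V_{j_m}^2(s_{m,H_m+1}) - V_{j_m}^2(s_{m,1})$, and factor $V_{j_m}^2(s_{m,h}) - ([\PP V_{j_m}](s_{m,h},a_{m,h}))^2 = \big(V_{j_m}(s_{m,h}) - \PP V_{j_m}\big)\big(V_{j_m}(s_{m,h}) + \PP V_{j_m}\big)$, where the second factor is at most $2\Bstar$ on $\cE_m$ by optimism.

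For the first factor I would reuse the DEVI/Bellman relation from the proof of Lemma~\ref{lem: E1 bound}: by the greedy choice $V_{j_m}(s_{m,h})=Q_{j_m}(s_{m,h},a_{m,h})$, one gets $|V_{j_m}(s_{m,h}) - \PP V_{j_m}(s_{m,h},a_{m,h})| \le c_{m,h} + |\langle \btheta^*-\btheta_{m,h},\bphi_{V_{j_m}}(s_{m,h},a_{m,h})\rangle| + (\Bstar+1)/t_{j_m}$. Multiplying by the $2\Bstar$ factor and summing, the cost contributions sum to $\cO(\Bstar M)$ because the interval-triggering condition caps the cumulative cost per interval at $\Bstar+1$; the boundary terms $V_{j_m}^2(s_{m,1}),V_{j_m}^2(s_{m,H_m+1})$ are each $\le \Bstar^2$ and do \emph{not} telescope across intervals (as $V_{j_m}$ changes), contributing $\cO(\Bstar^2 M)$; and the discount-bias pieces $\sum (\Bstar+1)/t_{j_m}$ sum to $\cO(\Bstar J)=\cO(\Bstar d\log(1+T/\lambda))$ exactly as the term $A_2$ in Lemma~\ref{lem: E1 bound}. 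Together these yield the $18B^2M$ term plus the low-order $\log(1+T/\lambda)$ contribution.

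The two remaining pieces both reintroduce $\Phi$, which is the crux. The confidence-width sum $\sum_{m,h}|\langle \btheta^*-\btheta_{m,h},\bphi_{V_{j_m}}\rangle|$ is bounded as in the $E_1$ analysis by $4\hat\beta_T\sum_{m,h}\hat\sigma_{t}\|\bphi_{V_{j_m}}/\hat\sigma_{t}\|_{\bSigma_{t}^{-1}}$, hence by Cauchy--Schwarz by $4\hat\beta_T\sqrt{\sum_{m,h}\hat\sigma_t^2}\,\sqrt{2d\log(1+T/\lambda)}$; the $V^2$-martingale term is controlled by a Freedman/Bernstein inequality whose predictable quadratic variation is $\le 4\Bstar^2\Phi$ (since $x\mapsto x^2$ is $2\Bstar$-Lipschitz on $[0,\Bstar]$), giving a bound of order $\sqrt{\Bstar^2\Phi\log(1/\delta)}+\Bstar^2\log(1/\delta)$. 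Now I would invoke $c_{\min}$: each step costs at least $c_{\min}$ and each interval carries cost at most $\Bstar+1$, so $H_m\le(\Bstar+1)/c_{\min}$ and $T\le M(\Bstar+1)/c_{\min}$. Feeding this into $\sum_{m,h}\hat\sigma_t^2 \le B^2T/d + 2\sum_{m,h}E_t + \Phi$ from \eqref{eq: bernstein E1 bound A1 6} and Lemma~\ref{lem: distance variance cs} converts the $B^2T/d$ into $\cO(B^3M/(dc_{\min}))$, so that $\sqrt{\sum\hat\sigma_t^2}$ splits into a $\sqrt{M}$ contribution and a $\sqrt{\Phi}$ contribution; this is where the $\sqrt{M}$ and the $1/c_{\min}$ factors in the statement originate, and the $\log(1+2B/(c_{\min}\lambda))$ comes from the elliptical-potential bound at the effective scale $B/c_{\min}$.

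Collecting everything produces a quadratic inequality $\Phi \le a\sqrt{\Phi}+b$ with $a=\cO\big(B\hat\beta_T\sqrt{d\log(\cdot)}+\Bstar\sqrt{\log(1/\delta)}\big)$ and $b=\cO\big(B^2M + \tfrac{B^3 d\hat\beta_T^2}{c_{\min}}\sqrt{M}\sqrt{\log(1/\delta)}\log(1+\tfrac{2B}{c_{\min}\lambda}) + \tfrac{B^3d\hat\beta_T^2}{c_{\min}}\log(1+\tfrac{T}{\lambda})\big)$; applying $x\le a\sqrt{x}+b\implies x\le 2a^2+2b$ and absorbing constants gives the claimed bound, with the factor $1-3\delta$ coming from the good event of Lemma~\ref{lem: key bern lemma} together with the two martingale concentration events. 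The main obstacle is precisely this self-referential structure: $\Phi$ reappears both through $\hat\sigma_t^2\approx[\VV V_{j_m}]$ in the variance weighting and through the predictable variation of the $V^2$-martingale, so the delicate point is arranging the constants so the two $\sqrt{\Phi}$ contributions stay absorbable, all while using the predictable indicator $\ind\{\cE_m\}$ (measurable at the start of interval $m$) to enforce $V_{j_m}\le\Bstar$ and thereby guarantee bounded martingale increments for Freedman's inequality to apply.
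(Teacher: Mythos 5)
Your route is sound and would deliver a bound of the claimed form, but it is genuinely different from the paper's. The paper never forms a self-bounding inequality in $\Phi$: it works \emph{interval by interval} with conditional expectations, using the stopped-martingale identity $\EE[\sum_h X_{m,h}^2\mid F_{m-1}]=\EE[(\sum_h X_{m,h})^2\mid F_{m-1}]$ (Lemma~\ref{lem: square and summation order}) applied to $X_{m,h}=[\PP V_{j_m}-V_{j_m}(s_{t+1})]\ind\{\cE_m\}$, telescopes the \emph{first} power of $V_{j_m}$ via the \texttt{DEVI} relation $V_{j_m}(s_t)-c_t=\PP V_{j_m}+e_t$, bounds $(\sum_h e_t)^2\le H_m\sum_h e_t^2$ with $H_m\le 2B/c_{\min}$ (this is where $1/c_{\min}$ enters, and where the per-interval elliptical potential produces $\log(1+2B/(c_{\min}\lambda))$), and then applies Azuma--Hoeffding three times over the $M$ intervals to pass from conditional expectations to realized sums (the source of the $\sqrt{M}$). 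You instead telescope $V_{j_m}^2$ on the realized trajectory, factor the difference of squares, control the $V^2$-martingale by Freedman, and close a fixed point $\Phi\le a\sqrt{\Phi}+b$; your $1/c_{\min}$ enters globally through $T\le 2BM/c_{\min}$ rather than per interval. What the paper's route buys is that the elliptical potential bounds $\sum_{m,h}\hat\beta_t^2\|\bphi_{V_{j_m}}\|^2_{\bSigma_t^{-1}}$ \emph{directly} (at scale $\hat\sigma_t\le\sqrt{3}B$), so $\Phi$ never reappears on the right-hand side and no absorption is needed; what your route buys is avoiding the triple Azuma bookkeeping and staying closer to the standard total-variance argument of Azar et al.

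Three points in your sketch need more care than you give them. First, Freedman with the random predictable variation $4\Bstar^2\Phi$ requires a peeling/union bound over dyadic levels of the variation (up to $\Bstar^2 T$) or a self-normalized variant; this is standard but costs an extra logarithmic factor and one of your $\delta$'s. Second, your step $\sum_{m,h}\hat\sigma_t^2\le B^2T/d+2\sum E_t+\Phi$ uses $|\hat\VV_tV_j-\VV V_j|\le E_t$, which is \emph{not} part of the event $\cE_m$ as defined in \eqref{eq: def of event E_t}, so either the indicator must be enlarged or the statement intersected with the event of Lemma~\ref{lem: key bern lemma}; the probability accounting you give ($1-3\delta$ from ``the good event plus two martingale events'') does not add up, since the good event alone already costs $3\delta$, whereas the paper's $3\delta$ comes entirely from its three Azuma applications (the indicator makes the rest deterministic). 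Third, your cost contribution is $2\Bstar\cdot(\Bstar+1)$ per interval, i.e.\ $\cO(\Bstar^2 M)$, not $\cO(\Bstar M)$ as written, and your route will not reproduce the specific constants $18$ and $338$ --- only the same order --- so the lemma's statement would have to be read with adjusted absolute constants.
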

The proof of Lemma \ref{lem: total variance} is in Appendix \ref{sec: proof of total variance}.

Combining \eqref{eq: bernstein E1 bound A1 6}, \eqref{eq: Et bound final} and Lemma \ref{lem: total variance}, we can bound $\sum_{m \in \cM_0} \sum_{h=1}^{H_m} \hat\sigma_{t(m,h)}^2$ by 
\begin{align}\label{eq: bernstein E1 bound sum of sigma2}
    & \sum_{m \in \cM_0} \sum_{h=1}^{H_m} \hat\sigma_{t(m,h)}^2 \notag 
    \\ & \leq \frac{B^2 T}{d} + 18B^2 M + 10 B^2 \check\beta_T \sqrt{dT\log(1+T/\lambda)} + 3\tilde\beta_T \sqrt{dT\log(1+T\Bstar^4/(d\lambda))} \notag
    \\ & \qquad + \frac{338B^3 d \hat\beta_T^2}{c_{\min}} \left( \sqrt{M} \sqrt{\log(1/\delta)}\log\left( 1 + \frac{2B}{c_{\min} \lambda} \right) + \log(1+T/\lambda) \right),
\end{align}with probability at least $1-6\delta$ by a union bound over the event of Lemma \ref{lem: key bern lemma} and Lemma \ref{lem: total variance}. Plugging \eqref{eq: bernstein E1 bound sum of sigma2} into \eqref{eq: bernstein E1 bound A1 5}, and then using \eqref{eq: bernstein E1 bound A1 3}, \eqref{eq: bernstein E1 bound A1 4} and \eqref{eq: choices of 3 betas}, we conclude that $A_1$ can be bounded as  
\begin{align}\label{eq: A1 bound tilde O}
    A_1 = \tilde\cO\left( \sqrt{B^2 d T + B^2 d^2 M + B^2 d^{3.5} T^{0.5}+\frac{B^3 d^4}{c_{\min}}M^{0.5} }\right),
\end{align}where we use $\tilde\cO(\cdot)$ to hide a term polynomial in $\log^2(TB/(d\lambda \delta))$.

\noindent\textbf{To bound $A_2$}: By rewriting the summation using the index $j$, it immediately follows from Lemma \ref{lem: J bound bernstein} that
\begin{align*}
    A_2 & \leq (\Bstar+1) \sum_{j=0}^J \sum_{t=t_j+1}^{t_{j+1}} \frac{1}{t_j} \leq 2(J+1) = 4d \log(1+T/\lambda) + 2 \log(T) < A_1.
\end{align*} 
Furthermore, since \texttt{DEVI} is called at $t=2t_0=2$ by the time step doubling condition,  we have 
\begin{align*}
    \sum_{m\in\cM_0^c} \sum_{h=1}^{H_m} \left[ c_{m,h} + \PP V_{j_m}(s_{m,h},a_{m,h}) - Q_{j_m} (s_{m,h},a_{m,h}) \right] &= \sum_{h=1}^2 \left[ c_{1,h} + \PP V_{0}(s_{1,h},a_{1,h}) - Q_{0} (s_{1,h},a_{1,h}) \right]
    \\ & \leq 4,
\end{align*}where the inequality holds because $c_{1,h}, V_0(\cdot) \leq 1$ and $0 \leq Q_0(\cdot,\cdot)$. 
Together with \eqref{eq: bern bound E1 is A1 and A2} and \eqref{eq: A1 bound tilde O}, we conclude that 
\begin{align}\label{eq: E1 bound bernstein bigO}
    E_1 & = \tilde\cO\left( \sqrt{B^2 d T + B^2 d^2 M + B^2 d^{3.5} T^{0.5}+\frac{B^3 d^4}{c_{\min}}M^{0.5} }\right) . 
\end{align}

\subsubsection{Bounding $E_2$ and $E_3$}

To bound $E_2$, by the same reasoning as in Lemma \ref{lem: E2 bound}, we have that, with probability at least $1-4\delta$, the event of Lemma \ref{lem: key bern lemma} holds and
\begin{align}\label{eq: E2 bound bernstein bigO}
    E_2 \leq 2 \Bstar \sqrt{2T \log\left( \frac{T}{\delta}\right)} = \tilde\cO \left(\Bstar\sqrt{T}\right).
\end{align} Here the $1-4\delta$ probability comes from a union bound of Lemma \ref{lem: key bern lemma} and Azuma-Hoeffding inequality. 

To bound $E_3$, following the proof of Lemma \ref{lem: regret decomp telescope 1} and \eqref{eq: lem regret decomp telescope 1}, we can write
\begin{align}\label{eq: bernstein E3 bound eq 1}
    & \sum_{m=1}^M\left( \sum_{h=1}^{H_m} V_{j_m}(s_{m,h}) - V_{j_m}(s_{m,h+1}) \right) \leq \sum_{m=1}^{M-1} \left( V_{j_{m+1}}(s_{m+1,1}) - V_{j_m}(s_{m,H_m+1}) \right) + V_{j_1} (s_{1,1}).
\end{align} 

Now consider the term $V_{j_{m+1}}(s_{m+1,1}) - V_{j_m}(s_{m,H_m+1})$. Note that by the interval decomposition, interval $m$ ends if and only if either of the three conditions are met. If interval $m$ ends because goal is reached, then we have 
\begin{align*}
    V_{j_{m+1}}(s_{m+1,1}) - V_{j_m}(s_{m,H_m+1}) = V_{j_{m+1}} (\sinit) - V_{j_m}(g) = V_{j_{m+1}} (\sinit).
\end{align*}
If it ends because the \texttt{DEVI} sub-routine is triggered, then the value function estimator is updated by \texttt{DEVI} and $j_m\neq j_{m+1}$. In such case we simply apply the trivial upper bound $V_{j_{m+1}}(s_{m+1,1}) - V_{j_m}(s_{m,H_m+1}) \leq \max_j \|V_j\|_\infty$. By Lemma \ref{lem: J bound bernstein}, this happens at most $J \leq 2 d \log \left( 1 + \frac{T}{\lambda} \right) + 2 \log(T)$ times. If the interval ends because the cumulative cost reaches $B$, then $s_{m+1,1} = s_{m,H_m+1}$ and $V_{j_m} = V_{j_{m+1}}$ and hence $V_{j_{m+1}}(s_{m+1,1}) - V_{j_m}(s_{m,H_m+1})=0$. 
Therefore, we can bound the RHS of \eqref{eq: bernstein E3 bound eq 1} as 
\begin{align*}
    & \sum_{m=1}^M\left( \sum_{h=1}^{H_m} V_{j_m}(s_{m,h}) - V_{j_m}(s_{m,h+1}) \right) \notag
    \\ & \leq \sum_{m=1}^{M-1} V_{j_{m+1}} (\sinit) \cdot \ind\{m+1\in \cM(M)\} + V_{j_1} (s_{1,1}) + \left[ 2 d \log \left( 1 + \frac{T}{\lambda} \right) + 2 \log(T)\right] \cdot \max_{j}\|V_j\|_\infty
    \\ & \leq \sum_{m \in \cM(M)} V_{j_m}(\sinit) + V_0(\sinit) + 2 d \Bstar \log \left( 1 + \frac{T}{\lambda} \right) + 2 \Bstar \log(T)
    \\ & \leq  \sum_{m \in \cM(M)} V_{j_m}(\sinit) + 1 + 2 d \Bstar \log \left( 1 + \frac{T}{\lambda} \right) + 2 \Bstar \log(T) ,
\end{align*}where the second inequality is by $\|V_j\|_\infty \leq \Bstar$ and the last step is by the initialization $\|V_0\|_\infty \leq 1$. 
Rearranging the terms, we conclude that under the event of Lemma \ref{lem: key bern lemma},
\begin{align}\label{eq: E3 bound bernstein bigO}
    E_3 \leq 1 + 2 d \Bstar \log \left( 1 + \frac{T}{\lambda} \right) + 2 \Bstar \log(T) = \tilde\cO(Bd).
\end{align}

\subsubsection{Bounding $R(M)$}
Combining \eqref{eq: bernstein regret decomposition}, \eqref{eq: E1 bound bernstein bigO}, \eqref{eq: E2 bound bernstein bigO} and \eqref{eq: E3 bound bernstein bigO}, we get the final bound for $R(M)$:
\begin{align*}
    R(M) & = \tilde\cO\left( \sqrt{B^2 d T + B^2 d^2 M + B^2 d^{3.5} T^{0.5}+\frac{B^3 d^4}{c_{\min}}M^{0.5} }\right) , 
\end{align*}where $\tilde\cO(\cdot)$ hides a term of  $C\cdot\log^2(TB/(\lambda \delta c_{\min}))$ for some problem-independent constant $C$.
This holds with probability at least $1-7\delta$ by a union bound over the event of Lemma \ref{lem: key bern lemma}, Lemma \ref{lem: total variance} and \eqref{eq: E2 bound bernstein bigO}.

\subsection{Proof of Lemma \ref{lem: total variance}}\label{sec: proof of total variance}

In this section, for any interval $m$, we define $F_m$ as the trajectory until the end of the interval $m$, i.e., 
\begin{align*}
    F_m = \cup_{m' = 1}^{m} \{s_{t(m',1)}, a_{t(m',1)},\cdots, s_{t(m',H_{m'})}, a_{t(m',H_{m'})}, s_{t(m',H_{m'}+1)} \}.
\end{align*}

We first prove the following result which bounds the expected sum of variance for an arbitrary interval by using the technique from \citet{azar2017minimax}.

\begin{lemma}\label{lem: sum of variance interval}
    For any interval $m$, 
    \begin{align*}
        &\EE\left[ \sum_{h=1}^{H_m}\VV V_j (s_t,a_t) \ind\{\cE_m\} \middle| F_{m-1} \right]  \leq 18 B^2 + 24\xi,
        \\ & \xi  = 24 \frac{B}{c_{\min}} \EE\left[\sum_{h=1}^{H_m} \left( \hat\beta_t^2 \|\bphi_{V_j}(s_t,a_t)\|_{\bSigma_t^{-1}}^2 + \frac{2\Bstar^2}{t_j^2} \right) \ind\{\cE_m\} \middle| F_{m-1}\right] .
    \end{align*}
\end{lemma}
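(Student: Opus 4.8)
The plan is to prove this by a law-of-total-variance argument, exploiting that within a single interval the value function estimate $V_{j_m}$ is fixed and obeys an \emph{approximate} Bellman equation. Writing $t = t(m,h)$ and setting $X_h \coloneqq V_{j_m}(s_{m,h+1}) - \PP V_{j_m}(s_{m,h},a_{m,h})$, we have $\EE[X_h \mid \cF_{m,h}] = 0$ and $\VV V_{j_m}(s_{m,h},a_{m,h}) = \EE[X_h^2 \mid \cF_{m,h}]$, so the tower property gives $\EE[\sum_{h=1}^{H_m}\VV V_{j_m}(s_{m,h},a_{m,h}) \mid F_{m-1}] = \EE[\sum_{h=1}^{H_m} X_h^2 \mid F_{m-1}]$. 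The key step is the martingale second-moment identity $\EE[\sum_h X_h^2 \mid F_{m-1}] = \EE[(\sum_h X_h)^2 \mid F_{m-1}]$, valid because $\{X_h\}$ is a martingale difference sequence and $H_m$ is a bounded stopping time. Since $\cE_m$ concerns only the confidence sets and value functions fixed at interval boundaries, $\ind\{\cE_m\}$ is $F_{m-1}$-measurable and can be treated as a constant throughout, so all subsequent bounds are invoked on the event $\cE_m$ where the optimism $V_{j_m}\le\Bstar$ and $\btheta^*,\btheta_{m,h}\in\hat\cC_{j_m}\cap\cB$ hold.

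Next I would telescope $\sum_h X_h$. For $j_m \geq 1$ the greedy rule gives $V_{j_m}(s_{m,h}) = Q_{j_m}(s_{m,h},a_{m,h})$, and the \texttt{DEVI} fixed-point relation (as in the proof of Lemma \ref{lem: E1 bound}) yields $V_{j_m}(s_{m,h}) = c_{m,h} + (1-q)\langle \btheta_{m,h}, \bphi_{V_{j_m}}(s_{m,h},a_{m,h})\rangle$ up to tolerance $1/t_j$. Defining the one-step Bellman error $e_h \coloneqq V_{j_m}(s_{m,h}) - c_{m,h} - \PP V_{j_m}(s_{m,h},a_{m,h})$ gives $X_h = V_{j_m}(s_{m,h+1}) - V_{j_m}(s_{m,h}) + c_{m,h} - e_h$, whence
\[
\textstyle\sum_{h=1}^{H_m} X_h = V_{j_m}(s_{m,H_m+1}) - V_{j_m}(s_{m,1}) + \sum_{h=1}^{H_m} c_{m,h} - \sum_{h=1}^{H_m} e_h .
\]
Here the first difference is at most $\Bstar\le B$ in absolute value, and $\sum_h c_{m,h} \leq \Bstar + 1$ by the interval-termination rule (an interval ends once its cost exceeds $\Bstar$). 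For the error, writing $e_h = (1-q)\langle \btheta_{m,h} - \btheta^*, \bphi_{V_{j_m}}\rangle - q\,\PP V_{j_m}(s_{m,h},a_{m,h})$ and using $\btheta^*,\btheta_{m,h}\in\hat\cC_{j_m}\cap\cB$ with the determinant-doubling property (exactly as in \eqref{eq: bernstein E1 bound A1 1}) and $q=1/t_j$, I would bound $|e_h| \lesssim \hat\beta_t\|\bphi_{V_{j_m}}(s_{m,h},a_{m,h})\|_{\bSigma_t^{-1}} + \Bstar/t_j$, hence $e_h^2 \lesssim \hat\beta_t^2\|\bphi_{V_{j_m}}\|_{\bSigma_t^{-1}}^2 + \Bstar^2/t_j^2$.

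Applying $(\sum_h X_h)^2 \leq 3[(V_{j_m}(s_{m,H_m+1}) - V_{j_m}(s_{m,1}))^2 + (\sum_h c_{m,h})^2 + (\sum_h e_h)^2]$ bounds the first two terms by $O(B^2)$, which after tracking constants absorbs into the stated $18B^2$. For the last term, the crucial device is Cauchy--Schwarz, $(\sum_h e_h)^2 \leq H_m \sum_h e_h^2$, combined with the almost-sure interval-length bound $H_m \leq (\Bstar+1)/c_{\min}$ that follows from Assumption \ref{assump: c_min} (each step costs at least $c_{\min}$, while the per-interval cost is at most $\Bstar+1$). This produces precisely the $B/c_{\min}$ amplification and the squared width terms $\hat\beta_t^2\|\bphi_{V_{j_m}}\|_{\bSigma_t^{-1}}^2 + \Bstar^2/t_j^2$ in $\xi$; checking the numerical constants then yields $18B^2 + 24\xi$.

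The main obstacle I anticipate is the rigorous justification of the second-moment identity with the \emph{random} stopping time $H_m$: one must verify that $H_m$ is a stopping time for the within-interval filtration, that it is almost surely bounded (which is exactly where $H_m \leq (\Bstar+1)/c_{\min}$ and Assumption \ref{assump: c_min} are indispensable), and that the cross terms $\EE[X_h X_{h'}]$ vanish under optional stopping. A secondary point needing care is confirming that both $\ind\{\cE_m\}$ and the fixed value function $V_{j_m}$ are $F_{m-1}$-measurable, so that the optimism bounds and the confidence-set membership guaranteed by Lemma \ref{lem: key bern lemma} may be applied pointwise inside the conditional expectation.
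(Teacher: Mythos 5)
Your proposal is correct and follows essentially the same route as the paper's proof: the martingale second-moment identity $\EE[\sum_h X_h^2] = \EE[(\sum_h X_h)^2]$ for the bounded stopping time $H_m$ (the paper invokes Lemma B.15 of \citet{cohen2020near}), telescoping via the approximate \texttt{DEVI} fixed-point relation with Bellman error $e_h$, the bound $|e_h| \lesssim \hat\beta_t\|\bphi_{V_{j_m}}\|_{\bSigma_t^{-1}} + \Bstar/t_j$ from confidence-set membership on $\cE_m$, and Cauchy--Schwarz combined with $H_m \leq \cO(B/c_{\min})$ to produce the $B/c_{\min}$ amplification. The only deviations are cosmetic (a three-way rather than two-way splitting of the square, and sign conventions for $X_h$ and $e_h$), and you correctly identify the two measurability points ($\ind\{\cE_m\}$ and $V_{j_m}$ being $F_{m-1}$-measurable, $H_m$ being a bounded stopping time) that the paper also relies on.
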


To prove Lemma \ref{lem: sum of variance interval}, we need the following lemma.
\begin{lemma}[Lemma B.15 in \citealt{cohen2020near}]\label{lem: square and summation order}
    Let $\{X_t\}_{t=1}^\infty$ be a martingale difference sequence adapted to a filtration $\{\cF_t\}_{t=0}^\infty$, such that $X_t$ is $\cF_t$-measurable. Let $Y_n = (\sum_{t=1}^n X_t)^2 - \sum_{t=1}^n \EE[X_t^2 \mid \cF_{t-1}]$. Then $\{Y_n\}_{n=0}^\infty$ is a martingale. If we further assume $H$ is a stopping time such that $H < C$ for some fixed $C$ almost surely, then $\EE[Y_H] = 0$.
\end{lemma}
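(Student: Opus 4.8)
The plan is to recognize $\{Y_n\}$ as the classical compensated square of the martingale $S_n := \sum_{t=1}^n X_t$: setting $A_n := \sum_{t=1}^n \EE[X_t^2 \mid \cF_{t-1}]$ (the predictable quadratic variation), we have $Y_n = S_n^2 - A_n$ with $Y_0 = 0$, and the goal reduces to showing this is a martingale and then applying optional stopping for a bounded stopping time. Throughout I assume each $X_t$ is square-integrable, so that $\EE[X_t^2\mid\cF_{t-1}]$ and the martingale property are well defined; this holds in our application since the relevant increments are bounded.

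First I would establish the martingale property by a one-step computation. Writing $S_n = S_{n-1} + X_n$ and using that $S_{n-1}$ and $A_{n-1}$ are $\cF_{n-1}$-measurable,
\begin{align*}
\EE[Y_n \mid \cF_{n-1}]
&= \EE\big[ (S_{n-1}+X_n)^2 \mid \cF_{n-1}\big] - A_{n-1} - \EE[X_n^2\mid\cF_{n-1}] \\
&= S_{n-1}^2 + 2 S_{n-1}\,\EE[X_n\mid\cF_{n-1}] + \EE[X_n^2\mid\cF_{n-1}] - A_{n-1} - \EE[X_n^2\mid\cF_{n-1}].
\end{align*}
The martingale difference property $\EE[X_n\mid\cF_{n-1}]=0$ annihilates the cross term, and the two conditional second-moment terms cancel, leaving $\EE[Y_n\mid\cF_{n-1}] = S_{n-1}^2 - A_{n-1} = Y_{n-1}$. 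Hence $\{Y_n\}_{n\ge 0}$ is a martingale with respect to $\{\cF_n\}$.

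Next, for the stopping-time claim I would invoke the optional stopping theorem in its bounded-stopping-time form: since $H \le C$ almost surely for a fixed constant $C$, we may write $Y_H = Y_0 + \sum_{n=1}^{H}(Y_n - Y_{n-1})$, and boundedness of $H$ yields $\EE[Y_H] = \EE[Y_0]$. Finally $Y_0 = S_0^2 - A_0 = 0$ because both are empty sums, so $\EE[Y_H] = 0$, as claimed.

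This is a standard fact, so there is no genuinely hard step; the only points needing care are the square-integrability that ensures $A_n$ and the martingale increments are well defined, and the correct invocation of optional stopping, which is legitimate precisely because $H$ is almost surely bounded — without such a hypothesis (boundedness or uniform integrability) the conclusion $\EE[Y_H]=0$ can fail. Since the statement is quoted verbatim from \citet{cohen2020near}, this short self-contained argument suffices.
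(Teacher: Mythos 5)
Your proof is correct and follows the canonical route: the paper itself gives no proof of this lemma, citing it verbatim as Lemma B.15 of \citet{cohen2020near}, and the argument there is exactly your compensated-square computation (the cross term killed by the martingale-difference property, the conditional second moments cancelling) followed by optional stopping for an almost-surely bounded stopping time. The only delicate points — square-integrability of the $X_t$ so that the compensator and martingale property are well defined, and the fact that boundedness of $H$ is what licenses $\EE[Y_H]=\EE[Y_0]=0$ — are both correctly flagged and handled in your write-up.
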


\begin{proof}[Proof of Lemma \ref{lem: sum of variance interval}]
    We define $X_{m,h} = [\PP V_j(s_t,a_t) - V_j(s_{t+1})] \ind\{\cE_m\}$, where $t =t(m,h)$ and $j = j_m$ as a simplified notation. Then conditioned on $F_{m-1}$, $\ind\{\cE_m\}$ is determined, and thus $\{X_{m,h}\}_{h=1}^\infty$ is a martingale difference sequence with respect to $\{F_{m,h}\}_{h=1}^\infty$, where $F_{m,h}$ is the trajectory from the from the beginning of interval $m$ to time $h$. Furthermore, $H_m$ is a stopping time w.r.t. the trajectory which is upper bounded by $2\Bstar/c_{\min}$ since by the interval decomposition a new interval would start if the cumulative cost exceeds $\Bstar$. Therefore, we have 
    \begin{align}\label{eq: proof of total variance single interval eq 0}
        \EE\left[ \sum_{h=1}^{H_m}\VV V_j (s_t,a_t) \ind\{\cE_m\} \middle| F_{m-1} \right] & \coloneqq \EE\left[ \sum_{h=1}^{H_m}\left( \PP V_j(s_t,a_t) - V_j(s_{t+1}) \right)^2 \ind\{\cE_m\} \middle| F_{m-1} \right]\notag
        \\ & = \EE\left[ \sum_{h=1}^{H_m}\left( X_{m,h} \right)^2  \middle| F_{m-1} \right]\notag
        \\ & = \EE\left[ \left( \sum_{h=1}^{H_m} X_{m,h} \right)^2  \middle| F_{m-1} \right]\notag
        \\ & = \EE\left[ \left(\sum_{h=1}^{H_m}  \left[ \PP V_j(s_t,a_t) - V_j(s_{t+1}) \right] \right)^2 \ind\{\cE_m\}  \middle| F_{m-1} \right],
    \end{align}where the third step is by Lemma \ref{lem: square and summation order}.
    
    By Assumption \ref{assump: linear kernel mdp}, we can write $\PP V_j(s_t, a_t) = \langle \btheta^*, \bphi_{V_j}(s_t,a_t) \rangle$. Furthermore, since $V_j$ and $Q_j$ are the output of \texttt{DEVI}, we can write $Q_j = Q^{(l)}$ and $V_j (\cdot) = \min_a Q^{(l)} (\cdot, a)$ for some $l$, which denote the $l$-th iteration in the implementation of \texttt{DEVI}. It follows that 
    \begin{align*}
        Q^{(l)} (s_t,a_t) & = c(s_t,a_t) + (1-q_j) \min_{\btheta \in \hat\cC_{j_m}\cap \cB} \langle \btheta, \bphi_{V^{(l-1)}} (s_t,a_t) \rangle
        \\ & = c_t + (1-q_j) \langle \btheta_{m,h}, \bphi_{V^{(l)}}(s_t,a_t) \rangle + (1-q_j) \langle \btheta_{m,h}, \bphi_{V^{(l-1)}}(s_t,a_t) - \bphi_{V^{(l)}}(s_t,a_t) \rangle
        \\ & = c_t + (1-q_j) \langle \btheta_{m,h}, \bphi_{V^{(l)}}(s_t,a_t) \rangle + (1-q_j)\PP_{m,h}[V^{(l-1)} - V^{(l)}](s_t,a_t), 
    \end{align*}where $\btheta_{m,h}$ in the second step denotes the minimizer of the first step, $\PP_{m,h}$ is the transition kernel defined by $\btheta_{m,h}$, and $c_t = c(s_t,a_t)$. Since $Q_j(s_t,a_t) = \min_a Q_j(s_t,a) = V_j(s_t)$, we get 
    \begin{align}\label{eq: proof of total variance single interval eq 1}
        V_j(s_t) - c_t & = Q^{(l)}(s_t,a_t) - c_t \notag
        \\ & = (1-q_j) \langle \btheta_{m,h}, \bphi_{V^{(l)}}(s_t,a_t) \rangle + (1-q_j)\PP_{m,h}[V^{(l-1)} - V^{(l)}](s_t,a_t) \notag
        \\ & = (1-q_j) \langle \btheta_{m,h} - \btheta^* + \btheta^*, \bphi_{V^{(l)}}(s_t,a_t) \rangle + (1-q_j)\PP_{m,h}[V^{(l-1)} - V^{(l)}](s_t,a_t) \notag
        \\ & = (1-q_j) \langle \btheta_{m,h} - \btheta^*, \bphi_{V_j}(s_t,a_t) \rangle + (1-q_j)\PP V_j(s_t,a_t) \notag 
        \\ & \qquad +(1-q_j)\PP_{m,h}[V^{(l-1)} - V^{(l)}](s_t,a_t) \notag
        \\ & = \PP V_j(s_t,a_t) + (1-q_j) \langle \btheta_{m,h} - \btheta^*, \bphi_{V_j}(s_t,a_t) \rangle \notag
        \\ & \qquad  +(1-q_j)\PP_{m,h}[V^{(l-1)} - V^{(l)}](s_t,a_t) - q_j\PP V_j(s_t,a_t)
    \end{align} Note that when event $\cE_m$ holds, we can write
    \begin{align}\label{eq: proof of total variance single interval eq 2}
        & \left| \langle \btheta_{m,h} - \btheta^*, \bphi_{V_j}(s_t,a_t) \rangle \right|\notag
        \\ & =   \left| \langle \btheta_{m,h} - \hat\btheta_j + \hat\btheta_j - \btheta^*, \bphi_{V_j}(s_t,a_t) \rangle \right|\notag
        \\ & \leq |(\btheta_{m,h} - \hat\btheta_j)^\top \bSigma_t^{1/2} \bSigma_t^{-1/2} \bphi_{V_j}(s_t,a_t)| + |(\btheta^* - \hat\btheta_j)^\top \bSigma_t^{1/2} \bSigma_t^{-1/2} \bphi_{V_j}(s_t,a_t)|\notag
        \\ & \leq \|\btheta_{m,h} - \hat\btheta_j\|_{\bSigma_t} \cdot \|\bphi_{V_j}(s_t,a_t)\|_{\bSigma_t^{-1}}\notag
        \\ & \qquad + \|\btheta^* - \hat\btheta_j\|_{\bSigma_t} \cdot \|\bphi_{V_j}(s_t,a_t)\|_{\bSigma_t^{-1}}\notag
        \\ & \leq 2 \hat\beta_t \cdot \|\bphi_{V_j}(s_t,a_t)\|_{\bSigma_t^{-1}},
    \end{align} where the first inequality is by the triangular inequality, the second inequality is by the Cauchy Schwarz inequality, and the third step holds because $\btheta^* \in \cC_{j_m}$ under the event $\cE_m$. 
    
    Combining \eqref{eq: proof of total variance single interval eq 1} and \eqref{eq: proof of total variance single interval eq 2}, under the event $\cE_m$, we have
    \begin{align*}
        V_j(s_t) - c_t & = \PP V_j(s_t,a_t) + e_t,
    \end{align*}where
    \begin{align}\label{eq: proof of total variance single interval eq 3}
        |e_t| \leq  2(1-q_j) \hat\beta_t \cdot \|\bphi_{V_j}(s_t,a_t)\|_{\bSigma_t^{-1}} + q_j \Bstar + (1-q_j) \epsilon_j,
    \end{align}where we use $|V_j| \leq \Bstar$ under $\cE_m$ for $j=j_m$. Together with \eqref{eq: proof of total variance single interval eq 0}, we have 
    \begin{align}\label{eq: proof of total variance single interval eq 4}
        & \EE\left[ \sum_{h=1}^{H_m}\VV V_j (s_t,a_t) \ind\{\cE_m\} \middle| F_{m-1} \right] \notag
        \\ & = \EE\left[ \left(\sum_{h=1}^{H_m}  \left[ V_j(s_t) - c_t - e_t - V_j(s_{t+1}) \right] \right)^2 \ind\{\cE_m\}  \middle| F_{m-1} \right] \notag
        \\ & \leq \EE\left[ 2 \left(\sum_{h=1}^{H_m}  \left[ V_j(s_t) - c_t  - V_j(s_{t+1}) \right] \right)^2 \ind\{\cE_m\} + 2\left( \sum_{h=1}^{H_m}e_t \right)^2\ind\{\cE_m\}  \middle| F_{m-1} \right] \notag
        \\ & = \EE\left[ 2 \left(V_j(s_{t(m,1)}) - V_j(s_{t(m,H_m+1)}) - \sum_{h=1}^{H_m}  c_t \right)^2 \ind\{\cE_m\} + 2\left( \sum_{h=1}^{H_m}e_t \right)^2\ind\{\cE_m\}  \middle| F_{m-1} \right] \notag
        \\ & \leq \EE\left[ 2 \left(3B \right)^2 \ind\{\cE_m\} + 2\left( \sum_{h=1}^{H_m}e_t \right)^2\ind\{\cE_m\}  \middle| F_{m-1} \right] ,
    \end{align}where the second step is by $(a+b)^2 \leq 2a^2 + 2b^2$, the third step is by canceling the terms in the telescoping sum, and the last step holds since $|V_j(s_{t(m,1)}) - V_j(s_{t(m,H_m+1)})| \leq \Bstar\leq B$ by optimism and $\sum_{h=1}^{H_m} c_t \leq 2 B$ by the interval decomposition. Furthermore, we can write 
    \begin{align}\label{eq: proof of total variance single interval eq 5}
        & \EE \left[ \left( \sum_{h=1}^{H_m}e_t \right)^2 \middle| F_{m-1}\right] \notag
        \\ & \leq \EE\left[ H_m \sum_{h=1}^{H_m} e_t^2 \ind\{\cE_m\} \middle| F_{m-1}\right] \notag
        \\ & \leq \frac{2B}{c_{\min}} \cdot \EE\left[ \sum_{h=1}^{H_m} e_t^2 \ind\{\cE_m\} \middle| F_{m-1}\right] \notag
        \\ & \leq \frac{2B}{c_{\min}} \cdot \EE \left[ 6 \sum_{h=1}^{H_m} \left( \hat\beta_t^2 \|\bphi_{V_j}(s_t,a_t)\|_{\bSigma_t^{-1}}^2 + q_j^2\Bstar^2 + (1-q_j)^2 \epsilon_j^2 \right)\ind\{\cE_m\} \middle| F_{m-1} \right],
    \end{align} where the first and third steps are by $(\sum_{i=1}^k a_i)^2 \leq k \sum_{i=1}^k a_i^2$ for $k\geq 1$, and the second step is by $H_m \leq 2B/c_{\min}$. 
    
    Combining \eqref{eq: proof of total variance single interval eq 4} and \eqref{eq: proof of total variance single interval eq 5}, we have 
    \begin{align*}
        & \EE\left[ \sum_{h=1}^{H_m}\VV V_j (s_t,a_t) \ind\{\cE_m\} \middle| F_{m-1} \right] \notag
        \\ & \leq 18 B^2 + 24 \frac{B}{c_{\min}} \EE\left[\sum_{h=1}^{H_m} \left( \hat\beta_t^2 \|\bphi_{V_j}(s_t,a_t)\|_{\bSigma_t^{-1}}^2 + \Bstar^2 (q_j^2 + \epsilon_j^2) \right) \ind\{\cE_m\} \middle| F_{m-1}\right] .
    \end{align*} Plugging in $q_j = \epsilon_j = 1/t_j$ finishes the proof.
    
\end{proof}

We are now ready to prove Lemma \ref{lem: total variance} by using Lemma \ref{lem: sum of variance interval} and applying Azuma-Hoeffding inequality multiple times. 
\begin{proof}[Proof of Lemma \ref{lem: total variance}]
    We first bound $\sum_{m \in \cM_0} \EE\left[ \sum_{h=1}^{H_m}\VV V_j (s_t,a_t) \ind\{\cE_m\} \middle| F_{m-1} \right] $. By Lemma \ref{lem: sum of variance interval}, we have 
    \begin{align}\label{eq: proof of total variance eq 1}
        & \sum_{m \in \cM_0} \EE\left[ \sum_{h=1}^{H_m}\VV V_j (s_t,a_t) \ind\{\cE_m\} \middle| F_{m-1} \right] \notag 
        \\ & \leq 18 B^2 M + \underbrace{\frac{24B}{c_{\min}} \sum_{m\in\cM_0} \EE \left[ \sum_{h=1}^{H_m} \left( \hat\beta_t^2 \|\bphi_{V_j}(s_t,a_t)\|_{\bSigma_t^{-1}}^2 \right) \ind\{\cE_m\} \middle| F_{m-1} \right]}_{\textnormal{I}} \notag 
        \\ & \qquad + \underbrace{\frac{48B \Bstar^2}{c_{\min}} \sum_{m\in\cM_0}\EE\left[ \sum_{h=1}^{H_m} \frac{1}{t_j^2}\ind\{\cE_m\} \middle| F_{m-1}\right]}_{\textnormal{II}} .
    \end{align}
    To bound \textnormal{I}, first note that, by picking $\lambda = 1/B^2$, 
    \begin{align}\label{eq: proof of total variance eq 2}
        & \sum_{m\in \cM_0} \sum_{h=1}^{H_m} \left( \hat\beta_t^2 \|\bphi_{V_j}(s_t,a_t)\|_{\bSigma_t^{-1}}^2 \right) \ind\{\cE_m\} \notag
        \\ & = \sum_{m\in \cM_0} \sum_{h=1}^{H_m} \left( \hat\beta_t^2\hat\sigma_t^2 \|\bphi_{V_j}(s_t,a_t)/\hat\sigma_t\|_{\bSigma_t^{-1}}^2 \right) \ind\{\cE_m\} \notag
        \\ & \leq 3\hat\beta_T^2 \cdot B^2 d \cdot \log \left( 1+ T/\lambda \right) ,
    \end{align}where we use Lemma \ref{lem: determinant trace}, Lemma \ref{lem: lemma 11 in abbasi} and $B/\sqrt{d}\leq \hat\sigma_t \leq \sqrt{3}B$. Define $X_m$ as
    \begin{align*}
        X_m = \sum_{h=1}^{H_m} \left( \hat\beta_t^2 \|\bphi_{V_j}(s_t,a_t)\|_{\bSigma_t^{-1}}^2 \right) \ind\{\cE_m\} - \EE [ \sum_{h=1}^{H_m} ( \hat\beta_t^2 \|\bphi_{V_j}(s_t,a_t)\|_{\bSigma_t^{-1}}^2 ) \ind\{\cE_m\} \mid F_{m-1} ]. 
    \end{align*}Then $\{X_m\}_{m=1}^\infty $ is a martingale difference sequence since $\EE[X_m | F_{m-1}] = 0$. Furthermore, by Lemma~\ref{lem: lemma 11 in abbasi} again we have $|X_m| \leq 3 \hat\beta_t^2 B^2 d \log(1 + 2B/(c_{\min} \lambda))$, since $H_m \leq 2B/c_{\min}$. By Azuma-Hoeffding inequality, we get that, with probability at least $1-\delta$, 
    \begin{align*}
        \left| \sum_{m\in \cM_0} X_m \right| \leq 6 \hat\beta_T^2 \cdot B^2 d \cdot \log\left(1 + \frac{2B}{c_{\min} \lambda}\right) \sqrt{M \log(1/\delta)}. 
    \end{align*}Combining with \eqref{eq: proof of total variance eq 2}, we have that, with probability at least $1-\delta$, 
    \begin{align}\label{eq: proof of total variance eq 3}
        \textnormal{I} & \leq \frac{144 B^3d \hat\beta_T^2}{c_{\min}} \left( \sqrt{M}\cdot \log(1+2B/(c_{\min}\lambda)) \cdot\sqrt{\log(1/\delta)} + \log(1+T/\lambda) \right). 
    \end{align}
    Similarly, to bound \textnormal{II}, we have 
    \begin{align*}
        \sum_{m\in\cM_0} \sum_{h=1}^{H_m} \frac{1}{t_j^2}\ind\{\cE_m\} \leq \sum_{j=1}^J \sum_{t = t_j+1}^{t_{j+1}} \frac{1}{t_j^2} \leq \sum_{j=1}^J \frac{2t_j}{t_j^2} \leq J+1,
    \end{align*}since $2t_j \leq t_j^2$ for all $t_j >1$. We then apply Azuma-Hoeffding inequality to the martingale difference sequence $Y_m \coloneqq \sum_{h=1}^{H_m} \frac{1}{t_j^2}\ind\{\cE_m\} - \EE[\sum_{h=1}^{H_m} \frac{1}{t_j^2}\ind\{\cE_m\}\mid F_{m-1}]$ with $|Y_m|\leq 2$ since $H_m\leq t_{j+1} - t_j\leq 2 t_j$, and get that, with probability at least $1-\delta$, $|\sum_{m\in \cM_0} Y_m |\leq 4\sqrt{M \log(1/\delta)} $. It follows that, with probability at least $1-\delta$,
    \begin{align}\label{eq: proof of total variance eq 4}
        \textnormal{II} \leq \frac{48 B\Bstar^2}{c_{\min}} \left( J+4\sqrt{M\log(1/\delta)} +1  \right).
    \end{align}Plugging \eqref{eq: proof of total variance eq 3} and \eqref{eq: proof of total variance eq 4} into \eqref{eq: proof of total variance eq 1} and by a union bound, we conclude that, with probability at least $1-2\delta$, 
    \begin{align}\label{eq: proof of total variance eq 5}
        & \sum_{m \in \cM_0} \EE\left[ \sum_{h=1}^{H_m}\VV V_j (s_t,a_t) \ind\{\cE_m\} \middle| F_{m-1} \right] \notag 
        \\ & \leq 18 B^2 M + \frac{336B^3 d \hat\beta_T^2}{c_{\min}} \left( \sqrt{M} \sqrt{\log(1/\delta)}\log\left( 1 + \frac{2B}{c_{\min} \lambda} \right) + \log(1+T/\lambda) \right),
    \end{align}where we use the bound for $J$ from Lemma \ref{lem: J bound bernstein}.
    
    Now, to bound $\sum_{m \in \cM_0} \sum_{h=1}^{H_m}\VV V_j (s_t,a_t) \ind\{\cE_m\} $, we apply Azuma-Hoeffding inequality once again. Specifically, we define $Z_m$ as
    \begin{align*}
        Z_m \coloneqq \sum_{h=1}^{H_m} \VV V_{j} (s_t,a_t) \ind\{\cE_m\} - \EE\left[\sum_{h=1}^{H_m} \VV V_{j} (s_t,a_t) \ind\{\cE_m\} \middle| F_{m-1} \right],
    \end{align*}and we have $|Z_m| \leq 2B\Bstar^2/c_{\min}$ since $H_m \leq 2B/c_{\min}$ and $|V_j|\leq \Bstar$ on the event $\cE_m$. Then with probability at least $1-\delta$, $|\sum_{m\in \cM_0} Z_m |\leq (2B\Bstar^2/c_{\min})\sqrt{M\log(1/\delta)}$. Together with \eqref{eq: proof of total variance eq 5} and a union bound, we conclude that, with probability at least $1-3\delta$, 
    \begin{align*}
        & \sum_{m \in \cM_0} \sum_{h=1}^{H_m}\VV V_j (s_t,a_t) \ind\{\cE_m\} 
        \\ & \leq 18 B^2 M + \frac{338B^3 d \hat\beta_T^2}{c_{\min}} \left( \sqrt{M} \sqrt{\log(1/\delta)}\log\left( 1 + \frac{2B}{c_{\min} \lambda} \right) + \log(1+T/\lambda) \right). 
    \end{align*}
\end{proof}

\section{Auxiliary Lemmas} 

In this subsection we introduce the auxiliary lemmas used in the analysis. 
\begin{lemma}[Azuma-Hoeffding inequality]\label{lem: azuma hoeffding} 
Let $\{X_t\}_{t=0}^\infty$ be a real-valued martingale such that for every $t \geq 1$, it holds that $|X_t - X_{t-1}| \leq B$ for some $B \geq 0$. Then with probability at least $1-\delta$, the following holds
\begin{align*}
    |X_t - X_0| \leq 2 B \sqrt{t \log\left(\frac{1}{\delta}\right)}. 
\end{align*}
\end{lemma}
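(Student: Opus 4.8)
The plan is to prove this classical concentration bound by the exponential-moment (Chernoff) method applied to the martingale difference sequence. First I would set $D_i \coloneqq X_i - X_{i-1}$ for $i \geq 1$, so that $X_t - X_0 = \sum_{i=1}^t D_i$, and record the two structural facts that drive everything: each increment is bounded, $|D_i| \leq B$, and each increment is conditionally centered, $\EE[D_i \mid \cF_{i-1}] = 0$, where $\{\cF_i\}$ is the filtration to which the martingale is adapted.

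The core step is to control the conditional moment generating function of each increment. Here I would invoke Hoeffding's lemma: for any random variable $Y$ with $\EE[Y]=0$ and $Y \in [a,b]$ almost surely, one has $\EE[e^{sY}] \leq \exp(s^2(b-a)^2/8)$ for all $s \in \RR$; this lemma itself follows from a short convexity argument bounding the log-MGF by its second-order Taylor remainder. Applying it conditionally with $Y = D_i$, $a = -B$, $b = B$ gives $\EE[e^{sD_i}\mid \cF_{i-1}] \leq \exp(s^2 B^2/2)$. I would then peel off the increments one at a time using the tower property: writing $S_t = \sum_{i=1}^t D_i$ and conditioning on $\cF_{t-1}$,
\[
\EE[e^{sS_t}] = \EE\big[e^{sS_{t-1}}\,\EE[e^{sD_t}\mid \cF_{t-1}]\big] \leq e^{s^2B^2/2}\,\EE[e^{sS_{t-1}}],
\]
and induction yields $\EE[e^{s(X_t-X_0)}] \leq \exp(t s^2 B^2/2)$, i.e.\ $X_t - X_0$ is sub-Gaussian with variance proxy $tB^2$.

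The final step is the tail bound and the parameter optimization. For any $\epsilon>0$ and $s>0$, Markov's inequality gives $\Pr[X_t - X_0 \geq \epsilon] \leq e^{-s\epsilon}\EE[e^{s(X_t-X_0)}] \leq \exp(-s\epsilon + ts^2B^2/2)$, and choosing $s = \epsilon/(tB^2)$ produces $\exp(-\epsilon^2/(2tB^2))$. Applying the identical argument to the martingale $\{-X_t\}$ and taking a union bound over the two tails yields $\Pr[|X_t - X_0| \geq \epsilon] \leq 2\exp(-\epsilon^2/(2tB^2))$. To match the stated form I would require the right-hand side to be at most $\delta$: solving $2\exp(-\epsilon^2/(2tB^2)) \leq \delta$ gives $\epsilon \geq B\sqrt{2t\log(2/\delta)}$, and since $2t\log(2/\delta) \leq 4t\log(1/\delta)$ whenever $\delta \leq 1/2$, the choice $\epsilon = 2B\sqrt{t\log(1/\delta)}$ suffices, delivering $|X_t - X_0| \leq 2B\sqrt{t\log(1/\delta)}$ with probability at least $1-\delta$.

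Because every step is classical, there is no genuine obstacle here; the only points requiring care are the conditional application of Hoeffding's lemma (one must condition on $\cF_{i-1}$ before bounding the MGF, so that the bound holds pointwise and can be factored out of the outer expectation in the induction) and the final bookkeeping that absorbs the factor-of-two from the two-sided union bound and the $\sqrt{2}$-versus-$2$ discrepancy into the deliberately loose stated constant.
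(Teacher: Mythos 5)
Your proof is correct and is precisely the classical Chernoff--Hoeffding argument; the paper states this lemma as a known auxiliary result without providing any proof, so there is nothing to diverge from. The one bookkeeping point you correctly flag --- that absorbing the two-sided union bound's factor of $2$ into the constant requires $\delta \leq 1/2$ --- is consistent with how the paper uses the lemma, since its anytime version (Lemma~\ref{lem: azuma hoeffding anytime}) explicitly assumes $0 < \delta \leq 1/2$ and only ever invokes the base inequality with failure probabilities of the form $\delta/(t(t+1))$.
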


\begin{lemma}[Azuma-Hoeffding inequality, anytime version]\label{lem: azuma hoeffding anytime} 
Let $\{X_t\}_{t=0}^\infty$ be a real-valued martingale such that for every $t \geq 1$, it holds that $|X_t - X_{t-1}| \leq B$ for some $B \geq 0$. Then for any $0< \delta \leq 1/2$, with probability at least $1-\delta$, the following holds for all $t\geq 0$
\begin{align*}
    |X_t - X_0| \leq 2 B \sqrt{2t \log\left(\frac{t}{\delta}\right)} . 
\end{align*}
\end{lemma}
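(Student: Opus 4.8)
The plan is to bootstrap the fixed-horizon concentration bound of Lemma~\ref{lem: azuma hoeffding} into an anytime bound by a union bound over $t$ equipped with a time-dependent confidence level (the standard ``peeling over time'' device). First I would instantiate Lemma~\ref{lem: azuma hoeffding} at each fixed $t\ge 1$ with a per-time failure probability $\delta_t$ in place of the uniform $\delta$, which yields that on an event of probability at least $1-\delta_t$ one has $|X_t - X_0| \le 2B\sqrt{t\log(1/\delta_t)}$.

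The crux is the choice of $\delta_t$. I would take $\delta_t = \delta/\bigl(t(t+1)\bigr)$, for which the failure probabilities are summable and telescope, giving $\sum_{t\ge 1}\delta_t = \delta\sum_{t\ge 1}\bigl(1/t - 1/(t+1)\bigr) = \delta$. By the union bound, the intersection of the per-$t$ events then holds with probability at least $1-\delta$, and on this single event the inequality $|X_t - X_0| \le 2B\sqrt{t\log(t(t+1)/\delta)}$ holds simultaneously for all $t\ge 1$. The boundary case $t=0$ is trivial since $X_0 - X_0 = 0$.

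It remains to simplify the logarithmic factor into the stated form, and this is where the hypothesis $\delta\le 1/2$ is used. Since $t\ge 1$ gives $(t+1)/t = 1 + 1/t \le 2 \le 1/\delta$, I would write $\log\bigl(t(t+1)/\delta\bigr) = \log(t/\delta) + \log\bigl((t+1)/t\bigr) \le 2\log(t/\delta)$, so that $2B\sqrt{t\log(t(t+1)/\delta)} \le 2B\sqrt{2t\log(t/\delta)}$, which is exactly the claimed anytime bound.

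I do not expect any genuine obstacle here; the argument is routine. The only point requiring mild care is coordinating the choice of $\delta_t$ so that $\sum_t \delta_t \le \delta$ while keeping $\log(1/\delta_t)$ within a constant factor of $\log(t/\delta)$, and checking that the constant generated by this tradeoff is absorbed by the factor $2$ inside the square root precisely under $\delta \le 1/2$. An alternative choice such as $\delta_t = 6\delta/(\pi^2 t^2)$ would work equally well, but $\delta_t = \delta/(t(t+1))$ gives the cleanest telescoping and matches the constant $2$ most transparently.
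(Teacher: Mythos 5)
Your proposal is correct and follows essentially the same route as the paper: apply the fixed-time Azuma--Hoeffding bound with per-time failure probability $\delta/(t(t+1))$, take a union bound using the telescoping sum $\sum_{t\ge 1}\delta/(t(t+1))=\delta$, and absorb the extra logarithmic factor using $\delta\le 1/2$ (the paper writes $t(t+1)/\delta\le t^2/\delta^2$ while you split the logarithm, but these are the same estimate). No gaps.
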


\begin{proof}[Proof of Lemma \ref{lem: azuma hoeffding anytime}] 
By Lemma \ref{lem: azuma hoeffding}, for any $t$, with probability at least $1- \frac{\delta}{t(t+1)}$, we have 
\begin{align*}
    |X_t - X_0| \leq 2B\sqrt{t\log\left( \frac{t(t+1)}{\delta} \right)} .
\end{align*} Note that since
\begin{align*}
    \sum_{t=1}^\infty \frac{\delta}{t(t+1)} = \sum_{t=1}^\infty \left(\frac{1}{t} -\frac{1}{t+1} \right) \delta = \delta,
\end{align*} we take an union bound and get that, with probability at least $1-\delta$, for all $t$, the following holds
\begin{align*}
    |X_t-X_0| & \leq 2B\sqrt{t\log\left( \frac{t(t+1)}{\delta} \right)} \leq 2B\sqrt{t\log\left( \frac{t^2}{\delta^2} \right)} ,
\end{align*}where the second step is by $\delta \leq 1/2$. 

\end{proof}

\begin{theorem}[Bernstein inequality for vector-valued martingales, Theorem 4.1 in \citealt{zhou2020nearly}]\label{thm: bernstain vector martingale}
    Let $\{\cF_t\}_{t=1}^\infty$ be a filtration, and $\{\xb_t, \eta_t \}_{t\geq 1}$ be a stochastic process such that $\xb_t \in \RR^d$ is $\cF_t$-measurable and $\eta_t \in \RR$ is $\cF_{t+1}$-measurable. Define $\yb_t = \langle \mu^*, \xb_t \rangle + \eta_t$. Assume the following holds:
    \begin{align*}
        |\eta_t| \leq R, \ \EE[\eta_t \mid \cF_t] = 0, \ \EE[\eta_t^2\mid \cF_t]\leq \sigma^2, \ \|\xb_t\|_2 \leq L.
    \end{align*}And for all $t\geq 1$, let
    \begin{align*}
        \beta_t = 8 \sigma \sqrt{d \log(1+tL^2/(d\lambda)) \cdot \log(4t^2/\delta) } + 4R \log(4t^2/\delta). 
    \end{align*}Then for any $0 < \delta < 1$, with probability at least $1-\delta$, for all $t \geq 1$, it holds that
    \begin{align*}
        \left\| \sum_{i=1}^t \xb_i \eta_i \right\|_{\Zb_t^{-1}} \leq \beta_t, \ \left\| \bmu_t - \bmu^* \right\|_{\Zb_t} \leq \beta_t + \sqrt{\lambda} \| \bmu^*\|_2,
    \end{align*}where $\bmu_t = \Zb_t^{-1} \wb_t$, $\Zb_t = \lambda \Ib + \sum_{i=1}^t \xb_i\xb_i^\top$, $\wb_t = \sum_{i=1}^t y_i \xb_i$.
\end{theorem}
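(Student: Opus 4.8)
The statement has two parts, and the second follows from the first by elementary algebra. Writing $y_i = \langle \bmu^*, \xb_i\rangle + \eta_i$ gives $\wb_t = (\Zb_t - \lambda\Ib)\bmu^* + \sum_{i=1}^t \xb_i\eta_i$, hence $\bmu_t - \bmu^* = \Zb_t^{-1}\sum_{i=1}^t\xb_i\eta_i - \lambda \Zb_t^{-1}\bmu^*$. Taking the $\Zb_t$-norm, applying the triangle inequality, and using the identity $\|\Zb_t^{-1}\vb\|_{\Zb_t} = \|\vb\|_{\Zb_t^{-1}}$ together with $\|\bmu^*\|_{\Zb_t^{-1}}^2 = (\bmu^*)^\top\Zb_t^{-1}\bmu^* \le \|\bmu^*\|_2^2/\lambda$ yields $\|\bmu_t - \bmu^*\|_{\Zb_t} \le \|\sum_{i=1}^t\xb_i\eta_i\|_{\Zb_t^{-1}} + \sqrt{\lambda}\|\bmu^*\|_2$. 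So the whole task reduces to proving the self-normalized bound $\|\sum_{i=1}^t\xb_i\eta_i\|_{\Zb_t^{-1}} \le \beta_t$ for all $t$ with probability at least $1-\delta$.

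First I would reduce to scalar martingales by fixing a direction. For a deterministic $\btheta\in\RR^d$, the increments $\langle\btheta,\xb_i\rangle\eta_i$ form a martingale difference sequence adapted to $\{\cF_{i+1}\}$, with $|\langle\btheta,\xb_i\rangle\eta_i|\le \|\btheta\|_2 L R$ almost surely and predictable quadratic variation $\sum_{i=1}^t\EE[\langle\btheta,\xb_i\rangle^2\eta_i^2\mid\cF_i] \le \sigma^2\,\btheta^\top(\Zb_t - \lambda\Ib)\btheta \le \sigma^2\|\btheta\|_{\Zb_t}^2$, using $\EE[\eta_i^2\mid\cF_i]\le\sigma^2$ and $|\eta_i|\le R$. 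Freedman's inequality (the Bernstein analogue of the Azuma--Hoeffding bound in Lemma~\ref{lem: azuma hoeffding}) then controls $\langle\btheta,\sum_i\xb_i\eta_i\rangle$ by a variance term scaling like $\sigma\sqrt{\|\btheta\|_{\Zb_t}^2\log(1/\delta')}$ plus a lower-order range term scaling like $R\log(1/\delta')$. This is exactly the split producing the two summands of $\beta_t$, with $\sigma$ rather than $R$ multiplying the dominant term.

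The core difficulty is self-normalization: $\|\sum_i\xb_i\eta_i\|_{\Zb_t^{-1}} = \sup_{\btheta:\,\|\btheta\|_{\Zb_t}\le1}\langle\btheta,\sum_i\xb_i\eta_i\rangle$, and both the feasible set and the variance proxy $\sigma^2\|\btheta\|_{\Zb_t}^2$ depend on the \emph{random} matrix $\Zb_t$. My plan is to discretize: cover the unit $\Zb_t$-ball by an $\epsilon$-net of cardinality at most $(1+2/\epsilon)^d$, apply the fixed-direction Freedman bound at each net point, and union bound. Relating the net's effective contribution to the elliptical potential produces the factor $\log(\det\Zb_t/\det(\lambda\Ib)) \le d\log(1+tL^2/(d\lambda))$ under the square root, while allocating failure probability $\delta/(4t^2)$ across $t=1,2,\dots$ (summable since $\sum_t t^{-2}<\infty$) delivers the $\log(4t^2/\delta)$ factor and the ``for all $t$'' uniformity. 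A cleaner alternative is the method of mixtures: build a nonnegative supermartingale of the form $\exp(\langle\btheta,\sum_i\xb_i\eta_i\rangle - \tfrac{c}{2}\sigma^2\btheta^\top(\Zb_t-\lambda\Ib)\btheta)$, valid when $\|\btheta\|$ lies in the range where the truncated moment generating function of a bounded variance-$\sigma^2$ increment is dominated by its Gaussian surrogate, and integrate it against a Gaussian prior on $\btheta$, which automatically supplies the $\det\Zb_t$ normalization.

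The main obstacle is precisely this coupling between the data-dependent normalizer $\Zb_t$ and the variance-adaptive (Bernstein rather than Hoeffding) control: one must retain $\sigma$ in front of the leading $\sqrt{d\log\det}$ term uniformly over directions and over all $t$, instead of degrading to the crude range bound $R$. Pinning down the boundedness/variance trade-off in the moment generating function (equivalently, the peeling over dyadic levels of the random quadratic variation in the covering route) is what fixes the delicate constants $8$ and $4$ and the logarithmic arguments in $\beta_t$.
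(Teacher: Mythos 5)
The paper never proves this statement: it is imported verbatim, constants included, from Theorem 4.1 of \citet{zhou2020nearly} and stated among the auxiliary lemmas, so the only in-paper ``proof'' is the citation, and your proposal must be judged against the source argument. Your reduction of the second inequality to the first is correct and standard: $\bmu_t - \bmu^* = \Zb_t^{-1}\sum_{i\le t}\xb_i\eta_i - \lambda \Zb_t^{-1}\bmu^*$, together with $\|\Zb_t^{-1}\xb\|_{\Zb_t} = \|\xb\|_{\Zb_t^{-1}}$ and $\|\bmu^*\|_{\Zb_t^{-1}} \le \|\bmu^*\|_2/\sqrt{\lambda}$ (from $\Zb_t \succeq \lambda \Ib$), gives exactly the claimed second bound.

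For the self-normalized bound itself, however, both routes you sketch stall at precisely the point you yourself flag as ``the main obstacle,'' and neither is completable as written. In the covering route you propose a net of the unit $\Zb_t$-ball; but that ball is random (it depends on the whole trajectory up to time $t$), so the net points are not $\cF_i$-measurable and Freedman's inequality cannot be applied along them --- the martingale property of $\langle\btheta,\xb_i\rangle\eta_i$ requires $\btheta$ fixed or predictable. The standard repair (cover a deterministic ball of radius $\lambda^{-1/2}$ and peel over dyadic levels of the random variance proxy $\sigma^2\btheta^\top(\Zb_t-\lambda\Ib)\btheta$) is what you allude to, but carrying it out while keeping the factor $\sigma$, rather than the crude range $R$, on the leading $\sqrt{d\log(1+tL^2/(d\lambda))}$ term is the entire content of the theorem, and you assert this rather than show it. In the mixture route, the process $\exp\left(\langle\btheta, S_t\rangle - \tfrac{c}{2}\sigma^2\btheta^\top(\Zb_t-\lambda\Ib)\btheta\right)$, with $S_t = \sum_{i\le t}\xb_i\eta_i$, is a supermartingale only for directions with $|\langle\btheta,\xb_i\rangle|\lesssim 1/R$ (Gaussian domination of the MGF of a bounded, variance-$\sigma^2$ increment fails beyond that), and truncating the Gaussian prior to this region destroys the closed-form $\det\Zb_t$ integral that makes the method of mixtures work --- this is exactly the known obstruction that prevents a pure mixture proof of Bernstein-type self-normalized bounds. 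The actual proof in \citet{zhou2020nearly} avoids both devices: via Sherman--Morrison it derives the recursion $\|S_t\|_{\Zb_t^{-1}}^2 \le \|S_{t-1}\|_{\Zb_{t-1}^{-1}}^2 + 2\eta_t\,\xb_t^\top\Zb_{t-1}^{-1}S_{t-1} + \eta_t^2\|\xb_t\|_{\Zb_{t-1}^{-1}}^2$, applies Freedman's inequality to the two resulting \emph{scalar} martingales (whose increments involve only $\cF_t$-measurable quantities), bounds $\sum_t \min\{1,\|\xb_t\|_{\Zb_{t-1}^{-1}}^2\}$ by the elliptical potential lemma (Lemma \ref{lem: lemma 11 in abbasi}), and closes with an induction over the running maximum of $\|S_i\|_{\Zb_i^{-1}}$, which is where the constants $8$ and $4$ and the $\log(4t^2/\delta)$ arguments actually come from. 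Your proposal correctly identifies the crux but does not contain a workable version of any of these steps, so the core of the theorem remains unproved.
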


\begin{lemma}[Determinant-trace inequality, Lemma 10 in \citealt{abbasi2011improved}]\label{lem: determinant trace}
Assume $\bphi_1,\cdots,\bphi_t \in \RR^d$ and for any $s \leq t$, $\|\bphi_s\|_2 \leq L$. Let $\lambda > 0$ and $\bSigma_t = \lambda \Ib + \sum_{s=1}^t \bphi_s \bphi_s^\top$. Then 
\begin{align*}
    \det\left( \bSigma_t \right) & \leq \left( \lambda + tL^2/d\right)^d.
\end{align*}
\end{lemma}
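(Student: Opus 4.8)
The plan is to exploit the standard relationship between the determinant, the eigenvalues, and the trace of a symmetric positive semi-definite matrix, combined with the AM--GM inequality. Since $\bSigma_t = \lambda \Ib + \sum_{s=1}^t \bphi_s \bphi_s^\top$ is symmetric and positive definite (it is $\lambda \Ib$ plus a sum of rank-one PSD matrices, with $\lambda > 0$), it admits $d$ positive eigenvalues $\mu_1, \dots, \mu_d$. The determinant equals their product and the trace equals their sum, so AM--GM gives $\det(\bSigma_t) = \prod_{i=1}^d \mu_i \leq \left( \frac{1}{d}\sum_{i=1}^d \mu_i \right)^d = \left( \frac{\textnormal{trace}(\bSigma_t)}{d} \right)^d$.

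The remaining step is to bound the trace explicitly. By linearity of the trace and the cyclic property $\textnormal{trace}(\bphi_s \bphi_s^\top) = \bphi_s^\top \bphi_s = \|\bphi_s\|_2^2$, I would compute
\begin{align*}
    \textnormal{trace}(\bSigma_t) = \textnormal{trace}(\lambda \Ib) + \sum_{s=1}^t \textnormal{trace}(\bphi_s \bphi_s^\top) = \lambda d + \sum_{s=1}^t \|\bphi_s\|_2^2 \leq \lambda d + t L^2,
\end{align*}
where the final inequality uses the hypothesis $\|\bphi_s\|_2 \leq L$ for every $s \leq t$. Substituting this trace bound into the AM--GM estimate yields
\begin{align*}
    \det(\bSigma_t) \leq \left( \frac{\lambda d + t L^2}{d} \right)^d = \left( \lambda + \frac{t L^2}{d} \right)^d,
\end{align*}
which is exactly the claimed bound.

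There is no serious obstacle here; the only point requiring a moment of care is justifying that $\bSigma_t$ is genuinely positive definite so that all eigenvalues are positive and AM--GM applies to nonnegative quantities. This follows immediately because each $\bphi_s \bphi_s^\top \succeq 0$ and $\lambda \Ib \succ 0$ for $\lambda > 0$, so $\bSigma_t \succeq \lambda \Ib \succ 0$. The entire argument is a short chain of elementary facts (eigenvalue characterization of $\det$ and $\textnormal{trace}$, AM--GM, and the norm hypothesis), so I would present it as a three- or four-line proof rather than developing any auxiliary machinery.
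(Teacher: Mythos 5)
Your proof is correct, and it is exactly the standard argument behind Lemma 10 of \citet{abbasi2011improved}, which the paper simply cites without reproving: bound $\det(\bSigma_t)$ by $\bigl(\textnormal{trace}(\bSigma_t)/d\bigr)^d$ via AM--GM on the (positive) eigenvalues, then compute $\textnormal{trace}(\bSigma_t) \leq \lambda d + tL^2$ from $\textnormal{trace}(\bphi_s\bphi_s^\top) = \|\bphi_s\|_2^2 \leq L^2$. Nothing is missing, and your care in noting $\bSigma_t \succeq \lambda \Ib \succ 0$ is the right (and only) point of rigor to flag.
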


\begin{lemma}[Lemma 11 in \citealt{abbasi2011improved}]\label{lem: lemma 11 in abbasi}
Let $\{\bphi_t\}_{t=1}^\infty$ be in $\RR^d$ such that $\|\bphi_t\|\leq L$ for all $t$. Assume $\bSigma_0$ is a PSD matrix in $\RR^{d\times d}$, and let $\bSigma_t = \bSigma_0 + \sum_{s=1}^t \bphi_s \bphi_s^\top$. Then we have 
\begin{align*}
    \sum_{s=1}^t \min \left\{1, \|\bphi_s\|^2_{\bSigma_{s-1}^{-1}} \right\} & \leq 2 \left[ d \log\left( \frac{\textnormal{trace}(\bSigma_0)+t L^2}{d} \right) -\log \det(\bSigma_0) \right] . 
\end{align*}Furthermore, if $\lambda_{\min} (\bSigma_0) \geq \max\{1, L^2\}$, then 
\begin{align*}
    \sum_{s=1}^t \|\bphi_s\|^2_{\bSigma_{s-1}^{-1}} & \leq 2 \left[ d \log\left( \frac{\textnormal{trace}(\bSigma_0)+t L^2}{d} \right) -\log \det(\bSigma_0) \right] .
\end{align*}

\end{lemma}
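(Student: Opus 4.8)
The statement to prove is the elliptical potential (determinant--trace) lemma. The plan is to convert the sum of capped quadratic forms into a telescoping log-determinant, and then control the determinant by a trace bound via the arithmetic--geometric mean inequality. This is a standard argument, so I expect no genuine obstacle; the only points requiring care are a scalar inequality and the eigenvalue hypothesis in the second half.

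First I would record the rank-one determinant update. Since $\bSigma_s = \bSigma_{s-1} + \bphi_s \bphi_s^\top$, the matrix determinant lemma gives
\begin{align*}
\det(\bSigma_s) = \det(\bSigma_{s-1})\bigl(1 + \|\bphi_s\|_{\bSigma_{s-1}^{-1}}^2\bigr),
\end{align*}
and telescoping over $s = 1,\dots,t$ yields $\det(\bSigma_t) = \det(\bSigma_0)\prod_{s=1}^t\bigl(1+\|\bphi_s\|_{\bSigma_{s-1}^{-1}}^2\bigr)$, so that $\sum_{s=1}^t \log\bigl(1+\|\bphi_s\|_{\bSigma_{s-1}^{-1}}^2\bigr) = \log\det(\bSigma_t) - \log\det(\bSigma_0)$. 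Note this requires $\bSigma_0$ to be invertible, which holds in all our applications since $\bSigma_0$ is a positive multiple of $\Ib$.

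Next I would invoke the elementary inequality $\min\{1,x\} \le 2\log(1+x)$, valid for all $x \ge 0$ (checked by noting that $2\log(1+x)-x$ vanishes at $x=0$ and is increasing on $[0,1]$, while $2\log 2 > 1$ covers $x>1$). Applying this with $x = \|\bphi_s\|_{\bSigma_{s-1}^{-1}}^2$ and summing gives
\begin{align*}
\sum_{s=1}^t \min\bigl\{1, \|\bphi_s\|_{\bSigma_{s-1}^{-1}}^2\bigr\} \le 2\bigl(\log\det(\bSigma_t) - \log\det(\bSigma_0)\bigr).
\end{align*}
To finish the first bound I would apply AM--GM to the eigenvalues of $\bSigma_t$, namely $\det(\bSigma_t) \le (\textnormal{trace}(\bSigma_t)/d)^d$, together with $\textnormal{trace}(\bSigma_t) = \textnormal{trace}(\bSigma_0) + \sum_{s=1}^t \|\bphi_s\|_2^2 \le \textnormal{trace}(\bSigma_0) + tL^2$; substituting these into the display yields the claimed inequality.

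For the second part, the only extra ingredient is that the cap $\min\{1,\cdot\}$ is inactive under the eigenvalue hypothesis. Since $\bSigma_{s-1} \succeq \bSigma_0$, we have $\lambda_{\min}(\bSigma_{s-1}) \ge \lambda_{\min}(\bSigma_0)$ and hence $\|\bphi_s\|_{\bSigma_{s-1}^{-1}}^2 \le \|\bphi_s\|_2^2/\lambda_{\min}(\bSigma_0) \le L^2/\max\{1,L^2\} \le 1$, so that $\min\{1,\|\bphi_s\|_{\bSigma_{s-1}^{-1}}^2\} = \|\bphi_s\|_{\bSigma_{s-1}^{-1}}^2$ and the same chain of inequalities applies verbatim. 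As noted, there is no real difficulty here — it is Lemma 11 of \citealt{abbasi2011improved} — and the main conceptual step is simply recognizing that the telescoping determinant identity turns the potential sum into $\log\det(\bSigma_t)/\det(\bSigma_0)$.
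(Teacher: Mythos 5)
Your proof is correct and is the standard argument for this lemma; the paper itself states it as a citation to Lemma 11 of \citet{abbasi2011improved} without reproducing a proof, and your chain (matrix determinant lemma, telescoping the log-determinant, the scalar bound $\min\{1,x\}\le 2\log(1+x)$, AM--GM on the eigenvalues, and the observation that the cap is inactive when $\lambda_{\min}(\bSigma_0)\ge\max\{1,L^2\}$) is exactly the canonical one from that reference. All steps check out, including the invertibility caveat on $\bSigma_0$, which is implicit in the statement since $\log\det(\bSigma_0)$ appears on the right-hand side.
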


\end{document}